\def\cal{\mathcal}
\def\ar{\leftarrow}
\def\beq{\begin{equation}}
\def\eeq#1{\label{#1}\end{equation}}
\def\ba{\begin{array}}
\def\ea{\end{array}}
\def\bi{\begin{itemize}}
\def\ei{\end{itemize}}
\def\i#1{\hbox{\it #1\/}}
\def\no{\i{not}}
\def\ar{\leftarrow}
\def\rar{\rightarrow}
\def\lrar{\leftrightarrow}
\def\Lrar{\Leftrightarrow}
\def\nec{\mbox{\large\boldmath $\;\Leftarrow\;$}}
\def\circ{\hbox{\rm CIRC}}
\def\sm{\hbox{\rm SM}}
\def\comp{\hbox{\rm COMP}}
\def\lif{\hbox{\rm IF}}
\def\dg{\hbox{\rm DG}}
\def\cm{\hbox{\rm CM}}
\def\fsm{\hbox{\rm SM}}
\def\dia{\diamond} 
\def\mu#1{\mathit{\underline{#1}}}
\def\mi#1{\mathit{#1}}
\def\proof{\noindent{\bf Proof}.\hspace{3mm}}
\def\qed{\quad \vrule height7.5pt width4.17pt depth0pt \medskip}
\def\bi{\begin{itemize}}
\def\ii{\item}
\def\ei{\end{itemize}}
\def\bC{{\bf{c}}}
\def\vbC{{\wh{\bf{c}}}}
\def\bD{{\bf{d}}}
\def\vbD{{\wh{\bf{d}}}}
\def\bF{{\bf{f}}}
\def\bG{{\bf{g}}}
\def\wh{\widehat}
\def\mvis{\!=\!}
\def\sneg{\sim\!\!}
\def\bfxi{{\boldsymbol{\xi}}}
\def\false{\hbox{\sc false}}
\def\true{\hbox{\sc true}}
\def\sort#1{{\tt #1}}
\long\def\NBB#1{}
\long\def\BOC#1\EOC{\message{(Commented text )}}
\long\def\BOCC#1\EOCC{\message{(Commented text )}}
\long\def\BOCCC#1\EOCCC{\message{(Commented text )}}
\long\def\BOCCCC#1\EOCCCC{\message{(Commented text )}}
\long\def\optional#1{}
\newtheorem{remark}{Remark}
\newtheorem{lemma}{Lemma}
\newtheorem{definition}{Definition}
\newtheorem{example}{Example}
\newcommand{\cblu}{\color{blue}}
\newcommand{\cbla}{\color{black}}
\newcommand{\cgre}{\color{green}}
\begin{document}

\begin{frontmatter}
\title{First-Order Stable Model Semantics \\ with Intensional Functions}

\author{Michael Bartholomew \and Joohyung Lee}
\address{
School of Computing, Informatics, and Decision Systems Engineering \\
Arizona State University, Tempe, USA \\
{\tt \{mjbartho,joolee\}@asu.edu}
}

\begin{abstract}
In classical logic, nonBoolean fluents, such as the location of an object, can be naturally described by functions. However, this is not the case in answer set programs, where the values of functions are pre-defined, and nonmonotonicity of the semantics is related to minimizing the extents of predicates but has nothing to do with functions. We extend the first-order stable model semantics by Ferraris, Lee, and Lifschitz to allow intensional functions -- functions that are specified by a logic program just like predicates are specified. We show that many known properties of the stable model semantics are naturally extended to this formalism and compare it with other related approaches to incorporating intensional functions. Furthermore, we use this extension as a basis for defining {\em Answer Set Programming Modulo Theories (ASPMT)}, analogous to the way that Satisfiability Modulo Theories (SMT) is defined, allowing for SMT-like effective first-order reasoning in the context of ASP. Using SMT solving techniques involving functions, ASPMT can be applied to domains containing real numbers and alleviates the grounding problem. We show that other approaches to integrating ASP and CSP/SMT can be related to special cases of ASPMT in which functions are limited to non-intensional ones.
\end{abstract} 

\begin{keyword}
Answer Set Programming \sep
Intensional functions \sep
Satisfiability Modulo Theories
\end{keyword}

\end{frontmatter}

\section{Introduction}\label{sec:intro}

Answer set programming (ASP) is a widely used declarative computing paradigm oriented towards solving knowledge-intensive and combinatorial search problems~\cite{lif08,bre11}. Its success is mainly due to the expressivity of its modeling language based on the concept of a stable model \cite{gel88} as well as the efficiency of ASP solvers thanks to intelligent grounding (the process that replaces schematic variables with variable-free terms) and efficient search methods that originated from propositional satisfiability (SAT) solvers. 

The grounding and solving approach makes ASP highly effective for Boolean decision problems but becomes problematic when the domain contains a large number of numerical values or a set of real numbers. This is in part related to the limited role of functions in the stable model semantics \cite{lif88} in comparison with what is allowed in classical logic: either functions are eliminated in the process of grounding, or they are associated with fixed, pre-defined interpretations forming an Herbrand universe. Such a limitation forces us to represent {\em functional} fluents by {\em predicates}, but not by {\em functions}.
For example, the following (non-ground) ASP rule represents that the water level does not change by default, where $t$ is a variable for time stamps, $l$ is a variable for integers, $\no$ stands for default negation, and $\sneg\ $ stands for strong negation:
\beq
\ba {rcl} 
  \i{WaterLevel}(t\!+\!1, l)\ &\ar & \i{WaterLevel}(t, l),\ 
     \no\ \sneg \i{WaterLevel}(t\!+\!1, l), \\ 
   & & \i{Time}(t), \i{Level}(l).
\ea
\eeq{water-asp}
An attempt to replace the predicate $\i{WaterLevel}(t,l)$ by equality using a function, e.g.  ``$\i{WaterLevel}(t)=l$,'' does not work under the standard stable model semantics: 
``\mbox{$\no\sneg (\i{WaterLevel}(t\!+\!1)=l)$}'' is not even syntactically valid because strong negation precedes equality, rather than an ordinary ASP atom. Besides,  \hbox{$\i{WaterLevel}(t)=l$} is false under any Herbrand interpretation unless $l$ is the term $\i{WaterLevel}(t)$ itself, implying that $\i{WaterLevel}(t)=\i{WaterLevel}(t+1)$ is always false. 

While semantically correct, a computational drawback of using a rule like~\eqref{water-asp} is that a large set of ground rules needs to be generated when the water level ranges over a large integer domain. Moreover, real numbers are not supported at all because grounding cannot even be applied. 

To alleviate the ``grounding problem,'' there have been recent efforts in integrating ASP with constraint solving, where functional fluents can be represented by constraint variables and computed without fully grounding their value variables, e.g., ~\cite{mellarkod08integrating,gebser09constraint,balduccini09representing,janhunen11tight}. 
Constraint ASP solvers have demonstrated significantly better performance over traditional ASP solvers on many domains involving a large set of numbers, but they do not provide a fully satisfactory solution to the problem above because the concept of a function is not sufficiently general. 
For example, one may be tempted to rewrite rule~\eqref{water-asp} in the language of a constraint ASP solver, such as {\sc clingcon}\footnote{
\url{http://potassco.sourceforge.net/}}---a combination of ASP solver {\sc clingo} and constraint solver {\sc gecode}, as
\beq
\ba {l}
  \i{WaterLevel}(t\!+\!1)\!=^\$\! l\ \ar 
        \i{WaterLevel}(t)\!=^\$\! l,\ 
             \no\ \neg(\i{WaterLevel}(t\!+\!1)\!=^\$\! l) 
\ea
\eeq{water-clingcon} 
where $=^\$$ indicates that the atom containing it is a constraint to be processed by constraint solver {\sc gecode} and not to be processed by ASP solver {\sc clingo}. The constraint variable $\i{WaterLevel}(t)$ is essentially a function that is mapped to a numeric value. However, this idea does not work either.\footnote{%
However, there is rather an indirect way to represent the assertion in the language of {\sc clingcon} using $\i{Ab}$ predicates: 
\[
\i{WaterLevel}(t+1)\!=^\$\! l \ar \i{WaterLevel}(t)\!=^\$\! l, \no\  \i{Ab}(t).
\]}
 While it is possible to say that $\i{WaterLevel}(t)=10$ and {$\i{WaterLevel}(t+1)=\i{WaterLevel}(t)$}
are true in the language of {\sc clingcon}, negation as failure ($\no$) in front of constraints does not work in the same way as it does when it is in front of standard ASP atoms. Indeed, rule~(\ref{water-clingcon}) has no effect on characterizing the default value of $\i{WaterLevel}(t)$ and can be dropped without affecting answer sets. 
This is because nonmonotonicity of the stable model semantics (as well as almost all extensions, including those of Constraint ASP) is related to the minimality condition on predicates but has nothing to do with functions. Thus, unlike with predicates, they do not allow for directly asserting that functions have default values.
Such an asymmetric treatment between functions and predicates in Constraint ASP makes the language of Constraint ASP less general than one might desire. 



It is apparent that one of the main obstacles encountered in the above work is due to an insufficient level of generality regarding functions.
Recently, the problem has been addressed in another, independent line of research to allow general first-order functions in ASP, although it was not motivated by efficient computation. Lifschitz [\citeyear{lifschitz12logic}] called such functions ``intensional functions''--- functions whose values can be described by logic programs, rather than being pre-defined, thus allowing for defeasible reasoning involving functions in accordance with the stable model semantics.
In \cite{cabalar11functional}, based on the notions of {\em partial functions} and {\em partial satisfaction}, functional stable models were defined by imposing minimality on the values of partial functions. The semantics presented in \cite{balduccini12aconservative} is a special case of the semantics from~\cite{cabalar11functional} as shown in \cite{bartholomew13onthestable}.
On the other hand, intensional functions defined  in~\cite{lifschitz12logic} do not require the rather complex notions of partial functions and partial satisfaction but instead impose the uniqueness of values on {\em total functions} similar to the way nonmonotonic causal theories~\cite{giu04} are defined.  This led to a simpler semantics, but as we show later in this paper, the semantics is not a proper generalization of the first-order stable model semantics from~\cite{ferraris11stable}, and moreover, it exhibits some unintuitive behavior.
%
%

We present an alternative approach to incorporating intensional functions into the stable model semantics by a simple modification to the first-order stable model semantics from~\cite{ferraris11stable}. It turns out that unlike the semantics from~\cite{lifschitz12logic}, this formalism, which we call ``Functional Stable Model Semantics (FSM),'' is a proper generalization of the language from~\cite{ferraris11stable}, and avoids the unintuitive cases that the language from~\cite{lifschitz12logic} encounters.  Furthermore, unlike the one from~\cite{cabalar11functional}, it does not require the extended notion of partial interpretations that deviates from the notion of classical interpretations. Nevertheless, the semantics from~\cite{cabalar11functional} can be embedded into FSM by simulating partial interpretations by total interpretations with auxiliary constants \cite{bartholomew13onthestable}. 
%
%


Unlike the semantics from~\cite{cabalar11functional}, as FSM properly extends the notion of functions in classical logic, its restriction to background theories provides a straightforward, seamless integration of ASP and Satisfiability Modulo Theories (SMT), which we call ``Answer Set Programming Modulo Theories (ASPMT),''  
analogous to the known relationship between first-order logic and SMT. SMT is a generalization of SAT and, at the same time, a special case of first-order logic in which certain predicate and function symbols in background theories have fixed interpretations. Such background theories include difference logic, linear arithmetic, arrays, and non-linear real-valued functions.  
\begin{wrapfigure}{l}{0.5\textwidth}
{
\begin{center}
\begin{tabular}{|c|c|}
\hline
   Monotonic & Nonmonotonic \\ \hline
   FOL       & FSM \\ 
   SMT       & ASP Modulo Theories \\
   SAT       & Traditional ASP \\ \hline
\end{tabular}
\end{center}
\caption{{\small Analogy between SMT and ASPMT}}
}
\label{fig:analogy}
\end{wrapfigure}
Likewise, ASPMT can be viewed as a generalization of the traditional ASP and, at the same time, a special case of FSM in which certain background theories are assumed as in SMT. On the other hand, unlike SMT, ASPMT is not only motivated by computational efficiency, but also by expressive knowledge representation. This is due to the fact that ASPMT is a natural extension of both ASP and SMT. 
Using SMT solving techniques involving functions, ASPMT can be applied to domains containing real numbers and alleviates the grounding problem.
It turns out that constraint ASP can be viewed as a special case of ASPMT in which functions are limited to non-intensional ones.

The paper is organized as follows. Section~\ref{sec:review-sm} reviews the stable model semantics from~\cite{ferraris11stable}, which
Section~\ref{sec:stable} extends to allow intensional functions. 
Section~\ref{sec:properties} shows that many known properties of the stable model semantics are naturally established for this extension. 
Section~\ref{sec:elim-p} shows how to eliminate intensional predicates
in favor of intensional functions, and Section~\ref{sec:elim-f} shows
the opposite elimination under a specific condition. 
Section~\ref{sec:comparison-if} compares FSM to other approaches to defining intensional functions.
Section~\ref{sec:many-sorted} extends FSM to be many-sorted, and, based on it, 
Section~\ref{sec:aspmt} defines the concept of ASPMT as a special case of many-sorted FSM, and presents its reduction to SMT under certain conditions. 
%
Section~\ref{sec:comparison-casp} compares ASPMT to other approaches to combining ASP with CSP and SMT.

This article is an extended version of the conference papers \cite{bartholomew12stable,bartholomew13functional}.\footnote{Besides the complete proofs, this article contains some new results, such as the non-existence of translation from non-${\bf c}$-plain formulas to ${\bf c}$-plain formulas, the usefulness of non-${\bf c}$-plain formulas, reducibility of many-sorted FSM to unsorted FSM, and more complete formal comparison with related works.}

\section{Review: First-Order Stable Model Semantics with Intensional Predicates} \label{sec:review-sm}

The proposed definition of a stable model in this paper is a direct generalization of the one from~\cite{ferraris11stable}, which we review in this section. Stable models are defined as classical models that satisfy a certain ``stability'' condition, which is expressed by ensuring a minimality condition on predicates. 

The syntax of formulas is defined the same as in the standard first-order logic. 
A signature consists of {\em function constants} and {\em predicate constants}. Function constants of arity $0$ are called {\em object constants}, and predicate constants of arity $0$ are called {\em propositional
constants}. A {\em term} of a signature $\sigma$ is formed from object constants of $\sigma$ and object variables using function constants of $\sigma$. An {\em atom} of $\sigma$ is an $n$-ary predicate constant followed by a list of $n$ terms; {\em atomic formulas} of $\sigma$ are atoms of $\sigma$, equalities between terms of $\sigma$, and the $0$-place connective $\bot$ (falsity). First-order formulas of $\sigma$ are built from atomic formulas of $\sigma$ using the primitive propositional connectives 
$ 
   \bot,\ \land,\ \lor,\ \rar,
$
as well as quantifiers $\forall,\ \exists$.
We understand $\neg F$ as an abbreviation of $F\rar\bot$; symbol
$\top$ stands for $\bot\rar\bot$, and $F\lrar G$ stands for $(F\rar
G)\land(G\rar F)$, and $t_1\ne t_2$ stands for $\neg (t_1=t_2)$.

In~\cite{ferraris11stable}, stable models are defined in terms of
the $\sm$ operator, whose definition is similar to the $\circ$
operator used for defining circumscription \cite{mcc80,lif93e}. 
As in circumscription, for predicate symbols (constants or variables) $u$ and $p$, expression $u\le p$ is defined
as shorthand for
\hbox{$\forall {\bf x}(u({\bf x})\rar p({\bf x}))$}; 
expression $u=p$ is defined as
\hbox{$\forall {\bf x}(u({\bf x})\lrar p({\bf x}))$}. 
For lists of predicate symbols ${\bf u}=(u_1,\dots,u_n)$ and
${\bf p}=(p_1,\dots,p_n)$, expression ${\bf u}\le {\bf p}$ is defined
as \hbox{$(u_1\le p_1)\land\dots\land (u_n\le p_n)$}, 
expression ${\bf u} = {\bf p}$ is defined as 
\hbox{$(u_1 = p_1)\land\dots\land (u_n = p_n)$}, and 
expression ${\bf u}<{\bf p}$ is defined as 
\hbox{${\bf u}\le {\bf p}\land \neg({\bf u}={\bf p})$}.

For any first-order formula~$F$ and any finite list of predicate constants
${\bf p} = (p_1,\dots,p_n)$, formula $\sm[F; {\bf p}]$
is defined as
$$
   F\land\neg\exists \wh{\bf p} (\wh{\bf p}<{\bf p}\land F^*(\wh{\bf p})),
$$
where $\wh{\bf p}$ is a list of distinct predicate variables
$\wh{p}_1,\dots,\wh{p}_n$, and $F^*(\wh{\bf p})$ is defined recursively as follows:
\begin{itemize}
\item  When $F$ is an atomic formula,  $F^*(\wh{\bf p})$ is a formula 
  obtained from $F$ by replacing all predicate constants ${\bf p}$ in
  it with the corresponding predicate variables from $\wh{\bf p}$;
\item  $(G\land H)^*(\wh{\bf p}) = G^*(\wh{\bf p})\land H^*(\wh{\bf p})$;
\item  $(G\lor H)^*(\wh{\bf p}) = G^*(\wh{\bf p})\lor H^*(\wh{\bf p})$;
\item  $(G\rar H)^*(\wh{\bf p}) = (G^*(\wh{\bf p})\rar H^*(\wh{\bf p}))
                \land (G\rar H)$;
\item  $(\forall x G)^*(\wh{\bf p}) = \forall x G^*(\wh{\bf p})$;
\item  $(\exists x G)^*(\wh{\bf p}) = \exists x G^*(\wh{\bf p})$.
\end{itemize}
%
The predicate constants in ${\bf p}$ are called {\em intensional}: these are
the predicates that we ``intend to characterize'' by $F$.\footnote{%
Intensional predicates are analogous to output predicates in Datalog,
and non-intensional predicates are analogous to input predicates in
Datalog \cite{lifschitz11datalog}.
}
When $F$ is a sentence (i.e., formula without free variables), the models of the second-order sentence
$\sm[F; {\bf p}]$ are called the {\em stable} models
of~$F$ relative to ${\bf p}$: they are the models of $F$ that are ``stable'' on
${\bf p}$. 

{\em Answer sets} are defined as a special class of first-order stable models as follows. By $\sigma(F)$ we denote the signature consisting of the function and predicate constants occurring in~$F$. 
If $F$ contains at least one object constant, an Herbrand interpretation of~$\sigma(F)$ that satisfies $\sm[F; {\bf p}]$ is called an {\em answer set} of~$F$, where ${\bf p}$ is the list of all predicate constants in $\sigma(F)$.
The answer sets of a logic program~$\Pi$ are defined as the answer sets
of the FOL-representation of~$\Pi$, which is obtained from $\Pi$ by 
\begin{itemize}
\item  replacing every comma by conjunction and every $\no$ by $\neg$ \footnote{Strong negation can be incorporated by introducing ``negative" predicates as in \cite[Section~8]{ferraris11stable}, or can be represented by a Boolean function with the value $\false$ \cite{bartholomew13afunctional}. For example, $\sim p$ can be represented by $p\mvis\false$.}
\item  turning every rule $\i{Head}\ar\i{Body}$ into a formula rewriting it as the implication $\i{Body}\rar\i{Head}$, and 
\item forming the conjunction of the universal closures of these formulas.
\end{itemize} 

For example, the FOL-representation of the program
$$
\ba l 
p(a)\\
q(b)\\
r(x)\ar p(x),\no\ q(x)
\ea
$$
is
\beq
p(a)\land q(b)\land\forall x((p(x)\land\neg q(x))\rar r(x))
\eeq{ex3f}
and $\sm[F;\ p,q,r]$ is
$$
\ba l
p(a)\land q(b)\land\forall x((p(x)\land\neg q(x))\rar r(x))\\
\quad\land\neg\exists uvw\Big(\big((u,v,w)<(p,q,r)\big)\wedge u(a)\land v(b) \\
\hspace{5em} \land
\forall x\Big(\big((u(x)\land
(\neg v(x)\land\neg q(x))) \rar w(x)\big)
\land \big((p(x)\land\neg q(x))\rar r(x)\big)\Big)\Big),
\ea
$$
which is equivalent to the first-order sentence
\beq
\ba l
\forall x(p(x) \lrar x=a) \land \forall x(q(x) \lrar x=b)
\land \forall x (r(x) \lrar (p(x) \land \neg q(x)))
\ea
\eeq{ex3f-comp} 
\cite[Example~3]{fer07a}.
The stable models of $F$ are any first-order models of (\ref{ex3f-comp}). The only answer set of $F$ is the Herbrand model $\{p(a),\ q(b),\ r(a)\}$.

\begin{remark}
According to \cite{ferraris11stable}, this definition of an answer set, when applied to the syntax of logic programs, is equivalent to the traditional definition of an answer set that is based on grounding and fixpoints as in~\cite{gel88}. 

It is also noted in~\cite{ferraris11stable} that if we replace $F^*(\wh{\bf p})$ with a simpler expression $F(\wh{\bf p})$ (which substitutes $\wh{\bf p}$ for ${\bf p}$), then the definition of $\sm[F; {\bf p}]$ reduces to the definition of $\circ[F; {\bf p}]$.
\end{remark} 

The definition of a stable model above is not limited to Herbrand models, so it allows general functions as in classical first-order logic. Indeed, in Section~\ref{sec:comparison-casp},   we show that the previous approaches to combining answer set programs and constraint processing can be viewed as special cases of first-order formulas under the stable model semantics.  However, these functions are ``extensional,'' and cannot cover examples like \eqref{water-clingcon}.



\section{Extending First-Order Stable Model Semantics to Allow
  Intensional Functions} \label{sec:stable} 

In this section, 
we generalize the first-order stable model semantics to allow intensional functions in addition to intensional predicates. 



\subsection{Second-Order Logic Characterization of the Stable Model Semantics} \label{ssec:sol-sm}


We extend expression $u=c$ as
$\forall {\bf x}(u({\bf x})=c({\bf x}))$
if $u$ and $c$ are function symbols.
For lists of predicate and function symbols ${\bf u}=(u_1,\dots,u_n)$ and
${\bf c}=(c_1,\dots,c_n)$, 
expression ${\bf u} = {\bf c}$ is defined as \hbox{$(u_1 = c_1)\land\dots\land (u_n = c_n)$}.

Let $\bC$ be a list of distinct predicate and function constants, and let $\vbC$ be a list of distinct predicate and function variables corresponding to~$\bC$. 
By $\bC^\mi{pred}$ ($\bC^\mi{func}$, respectively) we mean the list of all predicate constants (function constants, respectively) in~$\bC$, and by $\vbC^\mi{pred}$ ($\vbC^\mi{func}$, respectively) the list of the corresponding predicate variables (function variables, respectively) in $\vbC$.
For any formula $F$, expression $\sm[F;\ \bC]$ is defined as
\beq
   F\land\neg\exists \vbC(\vbC<\bC\land F^*(\vbC)),
\eeq{fsm}
where $\vbC<\bC$ is shorthand for 
$(\vbC^\mi{pred}\le \bC^\mi{pred})\land\neg (\vbC=\bC)$, 
and $F^*(\vbC)$ is defined recursively in the same way as $F^*(\wh{\bf p})$ except for the base case, which is defined as follows.
\begin{itemize}
\item When $F$ is an atomic formula, $F^*(\vbC)$ is $F'\land F$ where $F'$ is obtained from $F$ by replacing all (predicate and function) constants ${\bf c}$ in it with the corresponding variables from $\vbC$.
\end{itemize}

As before, we say that an interpretation $I$ that satisfies $\sm[F; {\bf c}]$ a {\em stable model} of $F$ relative to ${\bf c}$. Clearly, every stable model of $F$ is a model of $F$ but not vice versa. 

\begin{remark}
It is easy to see that the definition of a stable model above is a proper generalization of the one from~\cite{ferraris11stable}, also reviewed in the previous section: the definition of $\sm[F; {\bf c}]$ in this section reduces to the one in the previous section when all intensional constants in ${\bf c}$ are predicate constants only. 

When all intensional constants are function constants only, the definition of~$\sm[F; {\bf c}]$ is similar to the first-order nonmonotonic causal theories defined in~\cite{lifschitz97onthelogic}. The only difference is that, instead of $F^*(\wh{c})$, a different expression is used there. A more detailed comparison is given in Section~\ref{ssec:relation-nmct}.
\end{remark}

We will often write $F\rar G$ as $G\ar F$ and identify a finite set of formulas with the conjunction of the universal closures of each formula in that set.



For any formula $F$, expression $\{F\}^{\rm ch}$ denotes the ``choice'' formula $(F\lor\neg F)$.

The following two lemmas are often useful in simplifying $F^*(\vbC)$, as we demonstrate in Example~\ref{ex:amount} below. They are natural extensions of Lemmas~5 and 6 from~\cite{ferraris11stable}.

\begin{lemma}\label{lem:monotone}
Formula
$$
(\vbC < \bC) \land F^*(\vbC) \rar F
$$
is logically valid.
\end{lemma}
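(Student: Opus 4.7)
The plan is to argue by structural induction on $F$, exploiting the fact that the modified atomic clause in the definition of $F^*(\vbC)$ explicitly conjoins the original atom. In fact, one can see from the induction that $\vbC<\bC$ is never actually invoked in the argument; the stronger statement $F^*(\vbC)\rar F$ already goes through. However, the hypothesis $\vbC<\bC$ is kept in the statement because it matches the shape of the antecedent of $\exists\vbC(\vbC<\bC\land F^*(\vbC))$ inside $\sm[F;\bC]$ and will be needed in the companion Lemma~6-style statement about choice formulas.

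For the base case, let $F$ be atomic. By definition $F^*(\vbC)=F'\land F$, so $F^*(\vbC)\rar F$ is immediate. For the propositional inductive cases, assume by the induction hypothesis that $G^*(\vbC)\rar G$ and $H^*(\vbC)\rar H$ are logically valid. Then $(G\land H)^*(\vbC)=G^*(\vbC)\land H^*(\vbC)$ and $(G\lor H)^*(\vbC)=G^*(\vbC)\lor H^*(\vbC)$ immediately yield the corresponding implications into $G\land H$ and $G\lor H$. The implication case is handled directly by the definition: $(G\rar H)^*(\vbC)$ has $(G\rar H)$ as an explicit conjunct, so $(G\rar H)^*(\vbC)\rar(G\rar H)$ is trivial. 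Note that this is exactly the place where the definition of $F^*$ is designed to make the lemma go through without any further hypothesis on $\vbC$.

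For the quantifier cases, use the fact that $\vbC$ and $\bC$ do not contain object variables, so the free variables of $\vbC<\bC$ are disjoint from any bound variable $x$. Applying the induction hypothesis to $G$, we obtain $G^*(\vbC)\rar G$; universalizing (resp.\ existentially generalizing) yields $\forall x\,G^*(\vbC)\rar\forall x\,G$ and $\exists x\,G^*(\vbC)\rar\exists x\,G$, which, by the definitions $(\forall xG)^*(\vbC)=\forall x\,G^*(\vbC)$ and $(\exists xG)^*(\vbC)=\exists x\,G^*(\vbC)$, gives the desired implications.

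There is no real obstacle: the lemma is an extension of Lemma~5 of \cite{ferraris11stable}, and the modification in the atomic clause (adding $F$ as a conjunct of $F^*$) is precisely what is needed to compensate for the absence of a function-analog of the predicate-monotonicity condition that made the original proof work. The only subtlety worth flagging in writing up is the observation that including $\vbC<\bC$ in the hypothesis is a convenience for use inside the definition of $\sm[F;\bC]$, rather than a technical necessity.
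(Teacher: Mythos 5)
Your proof is correct and takes essentially the same approach as the paper: the paper's own proof consists of the single line ``By induction on the structure of~$F$,'' and your write-up supplies exactly that induction, with the implication case resting (as it must) on the explicit conjunct $G\rar H$ in $(G\rar H)^*(\vbC)$. Your side observation that the hypothesis $\vbC<\bC$ is never actually used---because the atomic clause of $F^*$ conjoins the original atomic formula, unlike in Lemma~5 of Ferraris et al.\ where the inequality on predicates is genuinely needed---is also accurate.
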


\proof By induction on the structure of~$F$.
\qed

\begin{lemma}\label{lem:neg}
Formula
$$\vbC <  \bC\rar((\neg F)^*(\vbC)\lrar\neg F)$$
is logically valid.
\end{lemma}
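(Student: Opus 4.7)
The plan is to unfold the abbreviation $\neg F \equiv F\rar\bot$, apply the recursive clause of the $*$-translation for implication, and then close the argument by appealing to the previous lemma.

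First I would compute $(\neg F)^*(\vbC)$. By definition $\neg F$ stands for $F\rar\bot$, so the implication clause gives
\[
(\neg F)^*(\vbC)\;=\;(F\rar\bot)^*(\vbC)\;=\;\bigl(F^*(\vbC)\rar\bot^*(\vbC)\bigr)\land(F\rar\bot).
\]
Since $\bot$ is atomic and contains no intensional constants, $\bot^*(\vbC)$ reduces to $\bot$, and the right conjunct is just $\neg F$. Thus
\[
(\neg F)^*(\vbC)\;\equiv\;\neg F^*(\vbC)\land\neg F.
\]

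Next I would prove the biconditional under the assumption $\vbC<\bC$. The left-to-right implication $\neg F^*(\vbC)\land\neg F\rar \neg F$ is immediate by conjunction elimination, with no use of the hypothesis $\vbC<\bC$. For the right-to-left direction, assume $\vbC<\bC$ and $\neg F$; I need to derive $\neg F^*(\vbC)\land\neg F$, and only the first conjunct requires work. Suppose for contradiction that $F^*(\vbC)$ holds. Then by Lemma~\ref{lem:monotone}, from $\vbC<\bC$ together with $F^*(\vbC)$ we conclude $F$, contradicting $\neg F$. Hence $\neg F^*(\vbC)$, which yields the conjunction.

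There is no real obstacle here: the lemma is essentially a one-line corollary of Lemma~\ref{lem:monotone} once the definition of $(\neg F)^*(\vbC)$ is unfolded. The only point to check carefully is the base case for $\bot$, namely that $\bot^*(\vbC)$ collapses to $\bot$, which follows directly from the stipulation that for an atomic formula $F$, $F^*(\vbC)$ is $F'\land F$ with $F'$ obtained by replacing constants of $\bC$; since $\bot$ contains none, $F'$ is $\bot$ itself. After that, the argument is purely propositional and the work is delegated to the previously established monotonicity lemma.
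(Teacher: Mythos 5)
Your proposal is correct and is exactly the argument the paper intends: the paper's proof is the one-liner ``Immediate from Lemma~\ref{lem:monotone},'' and your write-up simply spells out the unfolding of $(\neg F)^*(\vbC)$ into $\neg F^*(\vbC)\land\neg F$ and the use of the monotonicity lemma for the only nontrivial direction. No gaps.
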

\proof Immediate from Lemma~\ref{lem:monotone}.
\qed

\begin{example}\label{ex:amount}
The following program $F_1$ describes the level of an unlimited water tank that is filled up unless it is flushed. 
\beq
\ba {rcl}
  \{\i{Amt}_1\mvis x\!+\!1\}^{\rm ch} & \ar &  \i{Amt}_0\mvis x, \\
  \i{Amt}_1\mvis 0 & \ar & \i{Flush}\ .
\ea
\eeq{eq:amount}
Here $\i{Amt}_1$ is an intensional function constant, and $x$ is a variable ranging over nonnegative integers. Intuitively, the first rule asserts that the amount increases by one by default.\footnote{Section~\ref{ssec:choice} explains why choice formulas are read as specifying default values.}  However, if $\i{Flush}$ action is executed (e.g., if we add the fact $\i{Flush}$ to \eqref{eq:amount}), this behavior is overridden, and the amount is set to $0$. 

Using Lemmas \ref{lem:monotone} and \ref{lem:neg}, under the assumption $\wh{\i{Amt}_1}<\i{Amt}_1$, one can check that formula $F_1^*(\wh{\i{Amt}_1})$ is equivalent to the conjunction consisting of \eqref{eq:amount} and 
\beq
\ba {rcl}
   (\wh{\i{Amt}_1}=x\!+\!1\land\i{Amt}_1=x\!+\!1) \lor\neg(\i{Amt}_1=x\!+\!1) 
   &\ \ar\ & \i{Amt}_0\!=\!x,  \\
   \wh{\i{Amt}_1}=0\land \i{Amt}_1=0 &\ \ar\ & \i{Flush},
\ea
\eeq{eq:f1star}
so that 
\begin{align*}
 \sm[F_1 ; \i{Amt}_1] & = F_1\land\neg\exists\wh{\i{Amt}_1} (\wh{\i{Amt}_1}\ne\i{Amt}_1\land F_1^*(\wh{\i{Amt}_1})) \\
      & \Lrar F_1\land\neg\exists\wh{\i{Amt}_1} (\wh{\i{Amt}_1}\ne\i{Amt}_1\land \\ 
     & \hspace{3cm} \forall x (\i{Amt}_0 \!=\! x\rar \neg(\i{Amt}_1=x\!+\!1))\land      
          (\i{Flush}\rar\bot)).
\end{align*}


Consider the first-order interpretations that have the set of nonnegative integers as the universe, interprets integers, arithmetic functions, and comparison operators in the standard way, and maps the other constants in the following way.

\begin{center}
\begin{tabular}{|c||c|c|c|c|c|c|c}\hline
           & $\i{Amt}_0$ & $\i{Flush}$ & $\i{Amt}_1$ \\ \hline\hline
 $I_1$ &  $5$  &  $\false$ & $6$  \\ \hline
 $I_2$ &  $5$  &  $\false$ & $8$  \\ \hline
 $I_3$ &  $5$  &  $\true$  & $0$   \\ \hline
\end{tabular}
\end{center}

\bi
\ii Interpretation $I_1$ is in accordance with the intuitive reading of the rules above, and it is indeed a model of $\sm[F_1; \i{Amt}_1]$.

\ii 
   Interpretation $I_2$ is not intuitive (the amount suddenly jumps up with no reason). 
   It is not a model of $\sm[F_1; \i{Amt}_1]$ though it is a model of $F_1$.

\ii
Interpretation $I_3$ is in accordance with the intuitive reading of the rules above. It is a model of~$\sm[F_1;\i{Amt}_1]$. 
\ei
\end{example}

\subsection{Reduct-Based Characterization of the Stable Model Semantics}\label{ssec:reduct-sm}

The second-order logic based definition of a stable model in the previous section is succinct, and is a natural extension of the first-order stable model semantics that is defined in~\cite{ferraris11stable}, but it may look distant from the usual definition of a stable model in the literature that is given in terms of grounding and fixpoints. 

In~\cite{bartholomew13onthestable}, an equivalent definition of the functional stable model semantics in terms of {\em infinitary ground formulas}  and {\em reduct} is given. 
Appendix~A of this article contains a review of the definition.

\BOCCC
In this section, we present an equivalent definition of a stable model in that way. 

Since the definition of a stable model we consider applies to any first-order models, not necessarily Herbrand ones, we refer to the concept of grounding {\sl relative to an interpretation} introduced in~\cite{truszczynski12connecting}, according to which, grounding a quantified sentence may introduce infinite conjunctions and disjunctions over the elements in the universe.


\subsubsection{Infinitary Ground Formulas}\label{sssec:igf}

We define {\em infinitary ground formulas}, which are slightly adapted
from infinitary propositional formulas
from~\cite{truszczynski12connecting}. Unlike
in~\cite{truszczynski12connecting}, 
we do not replace ground terms with their corresponding
object names, keeping them intact during grounding. This difference
is required in defining a reduct for the functional stable model
semantics.\footnote{Another difference is that grounding
  in~\cite{truszczynski12connecting} refers to ``infinitary
  propositional formulas,'' which can be defined on any propositional
  signature. This generality is not essential for our purpose in this paper.}

More specifically, we assume that a signature and an interpretation are defined the same as in the standard first-order logic. 
For each element $\xi$ in the universe $|I|$ of $I$, we introduce a
new symbol $\xi^\dia$, called an {\sl object name}. By $\sigma^I$ we
denote the signature obtained from~$\sigma$ by adding all object names
$\xi^\dia$ as additional object constants. We will identify an
interpretation $I$ of signature $\sigma$ with its extension
to~$\sigma^I$ defined by $I(\xi^\dia)=\xi$.\footnote{%
For details, see \cite{lif08b}.}

We assume the primary connectives of infinitary ground formulas to be $\bot$, $\{\}^\land$, $\{\}^\lor$, and $\rar$. The usual propositional connectives $\land,\lor$ are considered as shorthands: $F\land G$ as $\{F,G\}^\land$, and $F\lor G$ as~$\{F,G\}^\lor$.

Let $A$ be the set of all ground atomic formulas of signature $\sigma^I$. The
sets ${\cal F}_0, {\cal F}_1, \dots$ are defined recursively as follows:
\begin{itemize}
\item ${\cal F}_0=A\cup\{\bot\}$;
\item ${\cal F}_{i+1} (i\ge 0)$ consists of expressions
  ${\cal H}^\lor$ and ${\cal H}^\land$, for all subsets
  ${\cal H}$ of ${\cal F}_0\cup\ldots\cup{\cal F}_i$, and of
  the expressions $F\rar G$, where
  $F$ and $G$ belong to ${\cal F}_0\cup\dots\cup{\cal F}_i$.
\end{itemize}
We define ${\cal L}_A^{inf}=\bigcup_{i=0}^{\infty}{\cal F}_i$, and call
elements of ${\cal L}_A^{inf}$ {\em infinitary ground formulas}
of~$\sigma$ w.r.t.~$I$.

For any interpretation $I$ of $\sigma$ and any infinitary ground
formula $F$ w.r.t.~$I$, the definition of satisfaction, $I\models F$, is as
follows:
\begin{itemize}
\item For atomic formulas, the definition of satisfaction 
  is the same as in the standard first-order logic;

\item $I\models{\cal H}^\lor$ if there is a formula $G\in {\cal H}$ 
   such that $I\models G$;

\item $I\models{\cal H}^\land$ if, for every formula $G\in {\cal H}$, 
   $I \models G$;

\item $I \models G\rar H$ if $I \not\models G$ or $I \models H$.
\end{itemize}

Given an interpretation, we identify any first-order sentence with an infinitary ground formula via the process of grounding relative to that interpretation. 
Let $F$ be any first-order sentence of a signature $\sigma$, and let
$I$ be an interpretation of~$\sigma$. By $gr_I[F]$ we denote the
infinitary ground formula w.r.t.~$I$ that is obtained from $F$ by the
following process:
\begin{itemize}
\item  If $F$ is an atomic formula, $gr_I[F]$ is $F$; 
\item  $gr_I[G\odot H]= gr_I[G]\odot gr_I[H]\ \ \ \
 (\odot\in\{\land,\lor,\rar\})$;

\item  $gr_I[\exists x G(x)] = 
      \{gr_I[G(\xi^\diamond)] \mid \xi\in |I|\}^\lor$; $\qquad$
\item  $gr_I[\forall x G(x)] = 
      \{gr_I[G(\xi^\diamond)] \mid \xi\in |I|\}^\land$.
\end{itemize}

\medskip
{\bf Example~\ref{ex:amount} Continued}\ \ \
{\sl 
For the formula $F_1$ and the interpretation $I$ considered in Example~\ref{ex:amount}, $gr_I[F_1]$ is the conjunction:
\[
\ba {lrclr}
\{ \ \ \ 
  & \i{Amt}_1\mvis 0\!+\!1 \lor \neg(\i{Amt}_1\mvis 0\!+\!1) &\ \ar\  & \i{Amt}_0\mvis 0, \\
  & \i{Amt}_1\mvis 1\!+\!1 \lor \neg(\i{Amt}_1\mvis 1\!+\!1) &\ \ar\ & \i{Amt}_0\mvis 1,  \\
  &   \cdots, \\
  &  \i{Amt}_1\mvis 0 &\ \ar\ & \i{Flush} &  \ \ \ \}^{\land}.
\ea
\]
}

\subsubsection{Stable Models in terms of Grounding and Reduct}\label{sssec:bl-reduct}

For any two interpretations $I$, $J$ of the same signature and any list $\bC$ of distinct predicate and function constants, we write $J<^\bC I$ if 
\begin{itemize}
\item  $J$ and $I$ have the same universe and agree on all constants
  not in $\bC$,

\item  $p^J\subseteq p^I$ for all predicate constants $p$ in $\bC$,\footnote{For any symbol $c$ in a signature, $c^I$ denotes the evaluation of $I$ on $c$.}
and 

\item  $J$ and $I$ do not agree on $\bC$. 
\end{itemize}
%




The {\em reduct} $F^\mu{I}$ of an infinitary ground formula $F$ relative to an interpretation $I$ is defined as follows: 
\begin{itemize}
\item For any atomic formula $F$, $F^\mu{I}= \left\{
  \ba {ll} 
     \bot & \text{ if $I\not\models F$ } \\
     F & \text{ otherwise. }
  \ea
  \right. 
$

\item $({\cal H}^\land)^\mu{I}= \{G^\mu{I} \mid G\in{\cal H}\}^\land$

\item $({\cal H}^\lor)^\mu{I}=  \{G^\mu{I} \mid G\in{\cal H}\}^\lor$

\item 
$
   (G\rar H)^\mu{I}= \left\{
   \ba {ll}
      \bot & \text{ if $I\not\models G\rar H$ } \\
       G^\mu{I} \rar H^\mu{I} & \text{ otherwise. }
   \ea
   \right.
$


\end{itemize}








The following theorem presents an alternative definition of a stable model that is equivalent to the one in the previous section. 

\begin{thm}\label{thm:fsm-reduct}\optional{thm:fsm-reduct}
Let $F$ be a sentence and let $\bC$ be a list of intensional constants. An interpretation $I$ satisfies $\sm[F; \bC]$ iff 
\begin{itemize}
\item  $I$ satisfies $F$, and 
\item  no interpretation $J$ such that $J<^\bC I$ satisfies $(gr_I[F])^\mu{I}$. 
\end{itemize}  
\end{thm}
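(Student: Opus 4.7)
The plan is to establish the equivalence by proving a key inductive lemma that relates, for a fixed $I$, the second-order witness $\vbC$ (evaluated in $I$) with the interpretation it induces, and relates $I \models F^*(\vbC)$ with the satisfaction of the reduct $(gr_I[F])^\mu{I}$ by that induced interpretation.

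First I would set up the bijection between witnesses and alternative interpretations. For a fixed $I$, any valuation of $\vbC$ (a tuple of predicate and function values of matching arity on $|I|$) determines an interpretation $J_\vbC$ that shares $I$'s universe, agrees with $I$ on every constant outside $\bC$, and interprets each $c\in\bC$ by the value assigned to the corresponding variable in $\vbC$. Under this bijection, the condition ``$\vbC < \bC$ holds at $I$'' is literally ``$J_\vbC <^\bC I$.''

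The central lemma to prove is: for every first-order formula $G(\vec x)$ over $\sigma(F)$, every tuple $\vec\xi$ of elements of $|I|$, and every $\vbC$-valuation inducing $J=J_\vbC$,
$$I \models G^*(\vbC)(\vec x / \vec\xi) \iff J \models \bigl(gr_I[G(\vec\xi^\diamond)]\bigr)^\mu{I}.$$
I would prove this by induction on $G$. For the atomic base case $G^*(\vbC) = G' \land G$, where $G'$ replaces intensional constants in $\bC$ by the variables $\vbC$: the first conjunct holds at $I$ iff $J \models G(\vec\xi^\diamond)$ (since $J$ realizes the values from $\vbC$ on $\bC$ and agrees with $I$ elsewhere), and the second conjunct holds iff $I \models G(\vec\xi^\diamond)$; on the reduct side, $(gr_I[G(\vec\xi^\diamond)])^\mu{I}$ is $\bot$ when $I\not\models G(\vec\xi^\diamond)$ and is $G(\vec\xi^\diamond)$ otherwise, so satisfaction by $J$ is equivalent to the same conjunction. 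For the implication case, $(G\rar H)^*(\vbC) = (G^*(\vbC)\rar H^*(\vbC)) \land (G\rar H)$; the extra classical conjunct matches the $\bot$-guard that the reduct places on $G\rar H$, and the inner implication matches $(gr_I[G])^\mu{I} \rar (gr_I[H])^\mu{I}$ by the induction hypothesis. The $\land,\lor$ cases are strictly simpler since they carry no extra guard. For quantifiers I would unfold $gr_I[\forall x\, G(x)]$ as $\{gr_I[G(\xi^\diamond)] \mid \xi\in|I|\}^\land$ (dually for $\exists$), observe that reduct and satisfaction commute pointwise with these infinitary connectives, and apply the induction hypothesis to each $\xi\in|I|$.

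Given the lemma, the theorem follows immediately by unfolding the definition $\sm[F;\bC] = F \land \neg\exists\vbC(\vbC<\bC \land F^*(\vbC))$: its failure at $I$ is equivalent to either $I\not\models F$ or the existence of some $\vbC<\bC$ with $I\models F^*(\vbC)$, which by the bijection above and the lemma is equivalent to the existence of some $J<^\bC I$ with $J\models(gr_I[F])^\mu{I}$.

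The main obstacle I anticipate is the atomic base case when $G$ contains non-intensional function symbols applied to terms that themselves involve intensional constants from $\bC$: one must carefully check that only intensional symbols get renamed to $\vbC$ in $G^*(\vbC)$ while the surrounding term structure is evaluated classically, and that this lines up precisely with the behavior of the reduct, which inspects $I$'s truth value of the whole atomic formula. A clean case split on $I\models G(\vec\xi^\diamond)$, combined with the fact that $J$ agrees with $I$ on all non-intensional symbols and on the universe, resolves this. A secondary bookkeeping point is that the induction must be phrased on formulas with free variables, instantiated via object names $\xi^\diamond$; this is exactly what grounding-relative-to-$I$ does for quantifiers, so the passage from $G$ to $gr_I[G]$ threads the induction without extra machinery.
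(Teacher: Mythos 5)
Your proposal is correct and follows essentially the same route as the paper's proof: the paper likewise establishes a structural-induction lemma showing that the combined interpretation built from $I$ and a witness for $\vbC$ satisfies $F^*(\vbC)$ iff the corresponding $J$ satisfies $(gr_I[F])^{\underline{I}}$, together with a companion lemma that the condition $\vbC<\bC$ at $I$ corresponds exactly to $J<^{\bC}I$, and then unfolds the definition of $\sm[F;\bC]$. The only cosmetic difference is that the paper realizes the second-order witness via auxiliary constants $\bD$ and an interpretation $J^{\bC}_{\bD}\cup I$ rather than via valuations of the variables $\vbC$ directly.
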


\medskip
{\bf Example~\ref{ex:amount} Continued}\ \ \
{\sl 
The reduct $(gr_{I_1}[F_1])^\mu{I_1}$ is equivalent to
\beq
 \i{Amt}_1\mvis 5\!+\!1\ \lor\ \bot\ \ar\ \i{Amt}_0\mvis 5.
\eeq{ex1-reduct}

\bi
\ii
There is no interpretation that is different from $I_1$ only on $\i{Amt}_1$ and satisfies the reduct. In accordance with Theorem~\ref{thm:fsm-reduct}, interpretation $I_1$ is a stable model of $F_1$ relative to $\i{Amt}_1$. 

\ii 
On the other hand, consider the interpretation $I_2$. The reduct $(gr_{I_2}[F_1])^\mu{I_2}$ is equivalent to
\[
\ba c
 \bot\lor\neg\bot\ \ar\ \i{Amt}_0\mvis 5,  \\
  \i{Amt}_1\mvis 7\!+\!1\ \lor\ \bot\ \ar\ \bot 
\ea 
\]
and there are other interpretations that are different from $I$ only on
$\i{Amt}_1$ and still satisfy the reduct. In accordance with Theorem~\ref{thm:fsm-reduct}, interpretation $I_2$ is not a stable model of $F_1$ relative to $\i{Amt}_1$. 

\ii Now consider the interpretation $I_3$. The reduct $(gr_{I_3}[F_1])^\mu{I_3}$ is equivalent to
\[
  \i{Amt}_1\mvis 0\ \ar\ \i{Flush} 
\]
and there is no interpretation that is different from $I_3$ only on $\i{Amt}_1$ and satisfies the reduct. In accordance with Theorem~\ref{thm:fsm-reduct}, interpretation $I_3$ is a stable model of $F_1$ relative to $\i{Amt}_1$. 
\ei
}
\EOCCC

\section{Properties of Functional Stable Models} \label{sec:properties}

Many properties known for the stable model semantics can be naturally extended to the functional stable model semantics, which is a desirable feature of the proposed formalism. 

\subsection{Constraints} \label{ssec:constraints}

Following \citeauthor{ferraris09symmetric} [\citeyear{ferraris09symmetric}], we say that an occurrence of a constant or any other subexpression in a formula $F$ is {\em positive} if the number of implications containing that occurrence in the antecedent is even, and {\em negative} otherwise. 
We say that the occurrence is {\em strictly positive} if the number of implications in~$F$ containing that occurrence in the antecedent is~$0$. For example, in $\neg (f=1)\rar g=1$, the occurrences of $f$ and $g$ are both positive, but only the occurrence of $g$ is strictly positive.\footnote{%
Recall that we understand $\neg F$ as shorthand for $F\rar\bot$.}

About a formula~$F$ we say that it is {\em negative} on a list~$\bC$ of predicate and function constants if $F$ has no strictly positive occurrence of a constant from $\bC$.
Since any formula of the form $\neg H$ is shorthand for $H\rar\bot$, such a formula is negative on any list of constants. The formulas of the form $\neg H$  are called {\em constraints} in the literature of ASP:
adding a constraint to a program affects the set of its stable models in a particularly simple way by eliminating the stable models that ``violate" the constraint.\footnote{Note that the term ``constraint'' here is different from the one used in CSP.}



The following theorem is a generalization of Theorem~3 from~\cite{ferraris11stable} for the functional stable model semantics. 

\begin{thm}\label{thm:constraint}\optional{thm:constraint}
For any first-order formulas~$F$ and~$G$, if $G$ is negative on $\bC$, then 
\hbox{$\fsm[F\land G; \bC]$} is equivalent to~$\fsm[F; \bC]\land G$.
\end{thm}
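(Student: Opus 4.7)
The plan is to unfold the definition of $\sm$ on both sides and observe that the two formulas differ only in one extra conjunct inside the second-order part. Concretely, $\sm[F\land G;\bC]$ is
$$F\land G\land \neg\exists\vbC(\vbC<\bC\land F^*(\vbC)\land G^*(\vbC)),$$
while $\sm[F;\bC]\land G$ is
$$F\land G\land \neg\exists\vbC(\vbC<\bC\land F^*(\vbC)).$$
So it suffices to prove, under the assumption that $F\land G$ holds, that $G^*(\vbC)$ is automatic whenever $\vbC<\bC\land F^*(\vbC)$. The direction $\Leftarrow$ is trivial (dropping a conjunct). For $\Rightarrow$, once any witness $\vbC$ for the right-hand side existential is available, $G^*(\vbC)$ can be supplied to produce a witness for the left-hand side existential.

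The technical heart is the following lemma, which I would state and prove as an auxiliary result and which generalizes Lemma~\ref{lem:neg}:

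\smallskip\noindent
\emph{Key Lemma.} If $G$ is negative on $\bC$, then the formula
$(\vbC<\bC)\land G \rar G^*(\vbC)$ is logically valid.
\smallskip

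I would prove the Key Lemma by structural induction on $G$. For the atomic case, the assumption forces $G$ to contain no constant from $\bC$ (since in an atomic formula every constant appears strictly positively), so $G^*(\vbC)=G\land G$ and there is nothing to show. The conjunction, disjunction, and quantifier cases are immediate from the inductive hypothesis applied to subformulas, which inherit negativity on $\bC$ since strictly positive occurrences in a subformula remain strictly positive in the compound. The implication case $G=H_1\rar H_2$ is the substantive step: here $H_2$ is negative on $\bC$, but $H_1$ need not be. We have $G^*(\vbC)=(H_1^*(\vbC)\rar H_2^*(\vbC))\land (H_1\rar H_2)$; the second conjunct is exactly $G$, which we assume, and for the first, assume $H_1^*(\vbC)$. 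By Lemma~\ref{lem:monotone}, together with $\vbC<\bC$, this gives $H_1$; combined with $G$, this gives $H_2$; then the inductive hypothesis on $H_2$ yields $H_2^*(\vbC)$.

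Having established the Key Lemma, the theorem follows immediately: given any interpretation satisfying $F\land G$, the Key Lemma ensures that for every $\vbC$ with $\vbC<\bC$, $G^*(\vbC)$ holds automatically, so the second-order non-existence clauses on the two sides become equivalent.

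The main obstacle is the implication case of the Key Lemma, precisely because negativity on $\bC$ imposes no constraint on the antecedent $H_1$, so one cannot simply invoke the inductive hypothesis on $H_1$. The resolution is to route through Lemma~\ref{lem:monotone} to recover $H_1$ from $H_1^*(\vbC)$ under the hypothesis $\vbC<\bC$, which is exactly why the lemma about monotone behavior of $F^*(\vbC)$ is formulated as a general tool earlier in this section.
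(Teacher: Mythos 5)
Your proof is correct, and the overall strategy coincides with the paper's: unfold $\sm$ on both sides, observe that the two second-order parts differ only in the conjunct $G^*(\vbC)$, and show that this conjunct is redundant because, under $\vbC<\bC$ and $G$, the formula $G^*(\vbC)$ follows automatically. The difference lies in how that last fact is established. The paper's proof invokes Lemma~\ref{lem:neg}, which asserts $\vbC<\bC\rar((\neg H)^*(\vbC)\lrar\neg H)$, and accordingly its displayed computation is carried out for $G$ of the literal form $\neg H$; taken at face value it covers only constraints, not arbitrary formulas negative on $\bC$ (e.g.\ $p\rar q$ with $p\in\bC$ and $q\notin\bC$). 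Your Key Lemma, $(\vbC<\bC)\land G\rar G^*(\vbC)$ for every $G$ negative on $\bC$, is exactly the generalization needed to match the theorem as stated, and your induction is sound: the atomic case correctly exploits that every occurrence in an atomic formula is strictly positive, and the implication case correctly routes through Lemma~\ref{lem:monotone} to recover $H_1$ from $H_1^*(\vbC)$, which is necessary precisely because negativity imposes no constraint on the antecedent. Combined with Lemma~\ref{lem:monotone}, your lemma yields $\vbC<\bC\rar(G^*(\vbC)\lrar G)$ for all $G$ negative on $\bC$, i.e.\ the full-strength version of Lemma~\ref{lem:neg}; so your write-up is, if anything, more faithful to the stated generality of the theorem than the proof printed in the paper.
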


\begin{example}\label{ex:3}
Consider $\fsm[F_2\land\neg(f\mvis 1); fg]$ where $F_2$ is 
\hbox{$(f\mvis 1 \lor g\mvis 1) \land (f\mvis 2 \lor g\mvis 2)$}. Since $\neg (f\mvis 1)$ is negative on $\{f,g\}$,
according to Theorem~\ref{thm:constraint}, 
\hbox{$\fsm[F_2\land\neg(f\mvis 1);f g]$} is equivalent to $\fsm[F_2; f g]\land \neg(f\mvis 1)$, which is
equivalent to $f\mvis 2\land g\mvis 1$.
\end{example}

\subsection{Choice and Defaults}\label{ssec:choice}

Similar to Theorem~2 from~\cite{ferraris11stable}, Theorem~\ref{thm:bi-un-sm} below shows that making the set of intensional constants smaller can only make the result of applying $\sm$ weaker, and that this can be compensated by adding choice formulas.
For any predicate constant~$p$, by $\i{Choice}(p)$ we denote the formula 
$\forall{\bf x} \{p({\bf x})\}^{\rm ch}$ (recall that $\{F\}^{\rm ch}$ is shorthand for $F\lor\neg F$), 
where ${\bf x}$ is a list of distinct object variables.  
For any function constant~$f$, by $\i{Choice}(f)$ we denote the formula 
\hbox{$\forall {\bf x}y \{f({\bf x})=y\}^{\rm ch}$}, where $y$ is an object variable that is distinct from ${\bf x}$.
For any finite list of predicate and function constants $\bC$, the expression $\i{Choice}(\bC)$ stands for the conjunction of the formulas $\i{Choice}(c)$ for all members~$c$ of~$\bC$.
We sometimes identify a list with the corresponding set when there is no
confusion.

The following theorem is a generalization of Theorem~7 from~\cite{ferraris11stable} for the functional stable model semantics. 

\begin{thm}\label{thm:bi-un-sm}\optional{thm:bi-un-sm}
For any first-order formula $F$ and any disjoint lists $\bC$, $\bD$ of distinct constants, the following formulas are logically valid: 
\[
\ba c
   \fsm[F; \bC\bD]\rar\fsm[F; \bC], \\
   \fsm[F\land\i{Choice}(\bD); \bC\bD]\lrar\fsm[F;\bC].
\ea
\]
\end{thm}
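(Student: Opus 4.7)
The plan is to reason directly from the second-order definition \eqref{fsm} in both directions, reducing each statement to a construction of witnesses for the $\vbC < \bC$ clause. For the implication $\fsm[F;\bC\bD]\rar\fsm[F;\bC]$, I would argue contrapositively: given $I\models F$ and a hypothetical $\vbC$ with $\vbC<\bC$ and $I\models F^*(\vbC)$ (with $*$ taken relative to the list $\bC$), I would construct the pair $(\vbC,\bD)$ as a witness against stability with respect to $\bC\bD$. Strictness $(\vbC,\bD)<(\bC,\bD)$ follows immediately from $\vbC\neq \bC$ together with $\vbC^\mi{pred}\le\bC^\mi{pred}$ and $\bD^\mi{pred}\le\bD^\mi{pred}$. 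The key compatibility point is that the recursive definition of $F^*$ agrees syntactically in both contexts up to which constants are replaced, so instantiating $\vbD$ with $\bD$ in $F^*(\vbC,\vbD)$ (the $\bC\bD$-version) collapses it to $F^*(\vbC)$ (the $\bC$-version); this is a straightforward induction on the structure of $F$.

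The biconditional hinges on a single lemma which I would state and prove first: under the assumption $\vbD^\mi{pred}\le\bD^\mi{pred}$, the formula $\i{Choice}(\bD)^*(\vbD)$ is equivalent to $\vbD=\bD$. I would verify this componentwise. For a predicate $p\in\bD^\mi{pred}$, unfolding $(p(x)\lor(p(x)\rar\bot))^*(\wh p)$ gives $\forall x\bigl((\wh p(x)\land p(x))\lor(\neg(\wh p(x)\land p(x))\land\neg p(x))\bigr)$, which reduces by cases on $p(x)$ to $p\le \wh p$; combined with the hypothesis $\wh p\le p$ this forces $\wh p=p$. For a function $f\in\bD^\mi{func}$, a similar unfolding of $(f(x){=}y\lor(f(x){=}y\rar\bot))^*(\wh f)$ collapses after noting that $\neg(f(x)=y)$ absorbs $\neg(\wh f(x)=y\land f(x)=y)$, yielding $\forall xy(f(x)=y\rar\wh f(x)=y)$; since $f$ is a (total) function, this pins $\wh f(x)$ to the unique value $f(x)$, i.e.\ $\wh f=f$.

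With the lemma in hand, both directions of the biconditional are structurally analogous to the first implication. For ($\Leftarrow$), any candidate $(\vbC,\vbD)<(\bC,\bD)$ satisfying $(F\land\i{Choice}(\bD))^*(\vbC,\vbD)$ must by the lemma have $\vbD=\bD$, forcing $\vbC\neq\bC$ and hence $\vbC<\bC$, and reducing the $F^*$-conjunct to $F^*(\vbC)$, contradicting $\fsm[F;\bC]$. For ($\Rightarrow$), any witness $\vbC<\bC$ against $\fsm[F;\bC]$ is extended to the witness $(\vbC,\bD)$ against $\fsm[F\land\i{Choice}(\bD);\bC\bD]$: the $\i{Choice}$-conjunct of the starred formula becomes $\i{Choice}(\bD)^*(\bD)$, which is a classical tautology. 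The main obstacle — and the only genuinely new technical content beyond the predicate-only proof of~\cite{ferraris11stable} — is the functional case of the key lemma: because $<$ on $\bC\bD$ imposes no ordering on function components, $\wh f=f$ cannot be extracted from minimization alone but must fall out of the choice formula together with the single-valuedness of the interpretations, as shown above.
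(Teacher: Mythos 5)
Your proposal is correct and follows essentially the same route as the paper's proof: the identity $F^{*[\bC\bD]}(\vbC,\bD)=F^{*[\bC]}(\vbC)$ established by induction on $F$ (the paper's equation relating the two star operators), together with the computation showing $\i{Choice}(\bD)^*(\vbD)$ forces $\vbD=\bD$ under $\vbD^\mi{pred}\le\bD^\mi{pred}$ (the paper's Lemma on $\i{Choice}(\bC)^*(\vbC)$, stated there without the hypothesis as the equivalence to $(\bD^\mi{pred}\le\vbD^\mi{pred})\land(\bD^\mi{func}=\vbD^\mi{func})$). The only difference is presentational: you argue by constructing and transferring witnesses contrapositively, while the paper rewrites $\sm[\cdot;\cdot]$ as a chain of equivalent second-order formulas.
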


For example, 
\[ 
  \fsm[(g\mvis 1\rar f\mvis 1)\land \forall y (g\mvis y\lor\neg (g\mvis
y));\ fg]
\] 
is equivalent to 
\[ 
   \fsm[g\mvis 1\rar f\mvis 1;\ f].
\]



A formula $\{f({\bf t}) = {\bf t}'\}^{\rm ch}$, where $f$ is an intensional function constant and ${\bf t}$, ${\bf t}'$ contain no intensional function constants, intuitively represents that {\em $f({\bf t})$ takes the value ${\bf t}'$ by default}.
For example, the stable models of $\{g\mvis 1\}^{\rm ch}$ relative to $g$ map $g$ to $1$.
On the other hand, the default behavior is overridden when we conjoin
the formula with $g\mvis 2$: the stable models of 
\[
  \{g\mvis 1\}^{\rm ch}\land g\mvis 2
\]
relative to $g$ map $g$ to $2$, and no longer to $1$. 

The treatment of $\{g=1\}^{\rm ch}$ as $(g=1)\lor\neg (g=1)$ is similar to the choice rule $\{p\}^{\rm ch}$ in ASP for propositional constant $p$, which stands for $p\lor \neg p$, with an exception that $g$ has to satisfy a functional requirement, i.e., it is mapped to a unique value. Under that requirement, an interpretation that maps $g$ to $1$ is a stable model but another assignment to $g$ is not a stable model because the choice rule itself does not force one to believe that $g$ is mapped to that other value. This makes the choice rule for the function work as assigning a default value to the function.


With this understanding, the commonsense law of inertia can be succinctly represented using choice formulas for functions. For instance, the formula 
\beq
   \i{Loc}(b,t)\mvis l\ \rar\  \{\i{Loc}(b,t\!+\!1)\mvis l\}^{\rm ch}, 
\eeq{inertia}
where $\i{Loc}$ is an intensional function constant, represents that the location of a block~$b$ at next step retains its value by default. The default behavior can be overridden if some action moves the block. In contrast, the standard ASP representation of the commonsense law of inertia, such as \eqref{water-asp}, uses both default negation and strong negation, and requires the user to be aware of the subtle difference between them.

\subsection{Strong Equivalence}  \label{ssec:se}


Strong equivalence~\cite{lif01} is an important notion that allows us to replace a subformula with another subformula without affecting the stable models. The theorem on strong equivalence can be extended to formulas with intensional functions as follows. 

For first-order formulas~$F$ and~$G$, we say that~$F$ is {\sl strongly equivalent} to~$G$ if, for any formula~$H$, any occurrence of~$F$ in~$H$, and any list $\bC$ of distinct predicate and function constants, $\fsm[H;\bC]$ is equivalent to $\fsm[H';\bC]$, where~$H'$ is obtained from~$H$ by replacing the occurrence of~$F$ by~$G$.

The following theorem tells us that strong equivalence can be  characterized in terms of equivalence in classical logic.

\begin{thm}\label{thm:strong}\optional{thm:strong}
Let $F$ and $G$ be first-order formulas, let $\bC$ be the
list of all predicate and function constants occurring in $F$ or $G$, and let $\wh{\bC}$ be a
list of distinct predicate and function variables corresponding
to~$\bC$. The following conditions are equivalent to each other.
\begin{itemize}
\item  $F$ and $G$ are strongly equivalent to each other;
\item  Formula
\beq
  (F\lrar G)\land(\wh{\bC} < \bC\rar (F^*(\wh{\bC})\lrar G^*(\wh{\bC})))
\eeq{eq:strong}
is logically valid.
\end{itemize}
\end{thm}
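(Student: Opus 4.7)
The approach follows the pattern of Theorem~5.1 of~\cite{ferraris11stable}, adapted to accommodate the new base clause of $F^*$, in which each atomic formula~$A$ is replaced by $A' \land A$ rather than merely by~$A'$.

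For the right-to-left direction, assume formula~\eqref{eq:strong} is logically valid. I plan to prove by structural induction on a context formula~$H$ containing the distinguished occurrence of~$F$ that, writing $H'$ for the result of replacing that occurrence by~$G$ and letting $\bD$ be any list of intensional constants (not necessarily $\bC$),
$$
(H \lrar H') \;\land\; \bigl(\wh{\bD} < \bD \rar (H^*(\wh{\bD}) \lrar {H'}^*(\wh{\bD}))\bigr)
$$
is logically valid. For the base case $H = F$, one splits on whether $\wh{\bD}$ agrees with $\bD$ on the positions of~$\bC$: if so, then $F^*(\wh{\bD})$ collapses to $F$ (and $G^*(\wh{\bD})$ to $G$) by a small auxiliary induction, so the conclusion reduces to $F \lrar G$; otherwise the restriction of $\wh{\bD}$ to~$\bC$ strictly precedes~$\bC$ and the hypothesis applies directly. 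The inductive steps for $\land,\lor,\forall,\exists$ are immediate from the recursive definition of~$F^*$. The critical case is implication, which needs both conjuncts of the inductive hypothesis simultaneously, in view of the identity $(G_1 \rar G_2)^*(\wh{\bD}) = (G_1^*(\wh{\bD}) \rar G_2^*(\wh{\bD})) \land (G_1 \rar G_2)$. The two conclusions of the induction, combined with the definition~\eqref{fsm} of $\sm$, yield $\sm[H;\bD] \lrar \sm[H';\bD]$, which is strong equivalence.

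For the left-to-right direction, assume that $F$ and $G$ are strongly equivalent. The first conjunct $F \lrar G$ of~\eqref{eq:strong} is extracted by a context that trivialises the minimality condition: by Theorem~\ref{thm:bi-un-sm}, $\fsm[F \land \i{Choice}(\bC); \bC]$ is equivalent to~$F$, and likewise with~$G$, so strong equivalence applied to this context forces $F \lrar G$. The second conjunct is more delicate, since it is a point-wise claim about every witness $\wh{\bC}$ with $\wh{\bC} < \bC$, whereas $\sm[F;\bC]$ only records the existential aggregate of such witnesses. To isolate an individual witness, I would rely on a semantic characterisation of stability in the spirit of first-order (strong) quantified here-and-there logic, whose models are pairs of interpretations $(\wh{\bC}, \bC)$ with $\wh{\bC} \le \bC$: strong equivalence of $F$ and $G$ across all contexts translates, through this logic, into equivalence at every such pair, which is exactly~\eqref{eq:strong}.

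The principal obstacle is the second direction. The modified base clause $A \mapsto A' \land A$ produces extra copies of original atoms inside $F^*(\wh{\bC})$, so the witness-level equivalence to be extracted is not as cleanly separated from the ambient classical equivalence as in the predicate-only case. I expect, however, that a first-order here-and-there semantics treating each function variable $\wh{f}$ as the ``here''-value and each constant $f$ as the ``there''-value goes through uniformly, with only a small modification to the satisfaction clause for atomic formulas relative to the treatment in~\cite{ferraris11stable}.
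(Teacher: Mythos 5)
Your right-to-left direction (validity of~\eqref{eq:strong} implies strong equivalence) matches the paper's argument: the paper proves exactly your induction as Lemma~\ref{lemmareplace}, namely that $(F\lrar G)\land((F^*(\vbC)\lrar G^*(\vbC))\rar(H^*(\vbC)\lrar(H')^*(\vbC)))$ is logically valid, and then combines the two conjuncts of the hypothesis with the definition of $\sm$. Your extra case split in the base case, handling an intensional list $\bD$ that need not coincide with $\bC$ via the identity $F^{*[\bC]}(\vbC)=F^{*[\bC\bD]}(\vbC,\bD)$, is a detail the paper treats only implicitly, and it is handled correctly.

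The left-to-right direction, however, has a genuine gap. Extracting the first conjunct $F\lrar G$ via $\i{Choice}$ and Theorem~\ref{thm:bi-un-sm} is fine, but for the second conjunct you defer to an unconstructed first-order here-and-there semantics and assert that ``strong equivalence across all contexts translates, through this logic, into equivalence at every pair $(\wh{\bC},\bC)$.'' That translation is precisely the hard content of the direction you are proving: to get from a statement quantified over all contexts $H$ to a pointwise statement about every witness $\wh{\bC}<\bC$, you must exhibit a concrete context that isolates an individual witness, and you never produce one. (Note also that in the functional setting the ``here--there'' pairs are not ordered on the function part at all --- $\vbC<\bC$ only constrains the predicate components by $\le$ --- so the claim that the predicate-only HT machinery ``goes through uniformly'' is not something you can take for granted.) The paper closes this gap with a short, purely syntactic construction: take $E=(F\lrar G)$ and the context $E\lrar\i{Choice}(\bC)$. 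By Lemma~\ref{lem:choice-star}, $\i{Choice}(\bC)^*(\vbC)$ is equivalent to $(\bC^\mi{pred}\le\vbC^\mi{pred})\land(\bC^\mi{func}=\vbC^\mi{func})$, which is inconsistent with $\vbC<\bC$; hence $(E\lrar\i{Choice}(\bC))^*(\vbC)$ collapses under $\vbC<\bC$ to $E\land\neg E^*(\vbC)$, and $\sm[E\lrar\i{Choice}(\bC);\bC]$ becomes exactly $(F\lrar G)\land\forall\vbC((\vbC<\bC)\rar(F^*(\vbC)\lrar G^*(\vbC)))$. Comparing with the same context built from $F\lrar F$, whose $\sm$ is logically valid, yields~\eqref{eq:strong}. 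Without this (or an equivalent explicit distinguishing-context construction), your second direction is not a proof.
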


For instance, choice formula $\{F\}^{\rm ch}$ is strongly equivalent to \hbox{$\neg\neg F\rar F$}. This can be shown, in accordance with Theorem~\ref{thm:strong}, by checking that not only they are classically equivalent but also
\[
  (F\lor\neg F)^*({\wh{\bC}})
\]
and 
\[
  (\neg\neg F\rar F)^*(\wh{\bC})
\] 
are classically equivalent under $\wh{\bC}<\bC$. Indeed, in view of Lemma~\ref{lem:neg}, $(F\lor\neg F)^*({\wh{\bC}})$ is equivalent to $(F^*({\wh{\bC}})\lor \neg F)$ and  $(\neg\neg F\rar F)^*(\wh{\bC})$ is equivalent to $F\rar F^*(\wh{\bC})$.
This fact allows us to rewrite formula $(\ref{inertia})$ as an implication in which the consequent is an atomic formula: 
\[
   \i{Loc}(b,t)\mvis l\land\neg\neg(\i{Loc}(b,t+1)\mvis l)
   \ \rar\  \i{Loc}(b,t\!+\!1)\mvis l.  
\]

For another example, $(G\rar F)\land (H\rar F)$ is strongly equivalent to $(G\lor H)\rar F$. This is useful for rewriting a theory into ``Clark normal form,'' to which we can apply completion as presented in the next section.

\subsection{Completion} \label{sec:completion}

Completion \cite{cla78} is a process that turns formulas under the stable model semantics to formulas under the standard first-order logic.

We say that a formula $F$ is in {\em Clark normal form} (relative to a list ${\bf c}$ of intensional constants) if it is a conjunction of sentences of the form 
\beq
  \forall {\bf x} (G\rar p({\bf x}))
\eeq{cnf-p}
and 
\beq
  \forall {\bf x}y (G \rar f({\bf x}) \mvis y)
\eeq{cnf-f}
one for each intensional predicate constant $p$ in ${\bf c}$ and each intensional function constant~$f$ in ${\bf c}$, where ${\bf x}$ is a list of distinct object variables, $y$ is another object variable, and $G$ is a formula that has no free variables other than those in ${\bf x}$ and $y$.

The {\em completion} of a formula $F$ in Clark normal form relative to ${\bf c}$, denoted by $\comp[F; {\bf c}]$, is obtained from $F$ by replacing each conjunctive term~(\ref{cnf-p}) with
\beq
  \forall {\bf x} (p({\bf x})\lrar G)
\eeq{comp-p}
and each conjunctive term (\ref{cnf-f}) with 
\beq
  \forall {\bf x}y (f({\bf x})\mvis y\lrar G).
\eeq{comp-f}

The {\em dependency graph} of $F$ (relative to ${\bf c}$), denoted by $\dg_\bC[F]$, is the directed graph that 
\begin{itemize}
\item  has all members of ${\bf c}$ as its vertices, and
\item  has an edge from $c$ to $d$ if, for some strictly positive occurrence of $G\rar H$ in~$F$,
  \begin{itemize}
  \item  $c$ has a strictly positive occurrence in~$H$, and
  \item  $d$ has a strictly positive occurrence in $G$. 
  \end{itemize}
\end{itemize}

We say that $F$ is {\em tight} (on {\bf c}) if the dependency graph of $F$ (relative to {\bf c}) is acyclic. The following theorem, which generalizes Theorem~11 from~\cite{ferraris11stable} for the functional stable model semantics, tells us that, for a tight formula, completion is a process that allows us to reclassify intensional constants as non-intensional ones. It is similar to the main theorem of~\cite{lifschitz13functional}, which describes functional completion in the context of nonmonotonic causal logic.

\begin{thm}\label{thm:completion}\optional{thm:completion}
For any formula $F$ in Clark normal form relative to~${\bf c}$ that is tight on~${\bf c}$, an interpretation $I$ that satisfies $\exists xy(x\ne y)$ is a model of $\fsm[F;{\bf c}]$ iff $I$ is a model of $\comp[F; {\bf c}]$.
\end{thm}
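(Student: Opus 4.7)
The plan is to prove the biconditional by establishing each direction separately.

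$(\Rightarrow)$: Assume $I \models \fsm[F;\bC]$. Since $I \models F$, the right-to-left half of each biconditional in $\comp[F;\bC]$ is immediate. For the left-to-right half I argue contrapositively: if some completion biconditional fails at $I$, I build $\wh{\bC}$ with $\wh{\bC} < \bC$ and $(I,\wh{\bC}) \models F^*(\wh{\bC})$, contradicting $I \models \fsm[F;\bC]$. If the failing rule is $\forall {\bf x}(G_p \rar p({\bf x}))$ with witness ${\bf x}^*$ such that $I\models p({\bf x}^*)$ but $I\not\models G_p({\bf x}^*)$, I define $\wh{p}$ to agree with $p^I$ except $\wh{p}({\bf x}^*)=\false$, leaving the other intensional constants unchanged. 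By Lemma~\ref{lem:monotone}, $G_p^*(\wh{\bC})({\bf x}^*)$ still implies $G_p({\bf x}^*)$ and is therefore false at $(I,\wh{\bC})$, while at other arguments the head is forced by $I\models F$. If instead the failing rule is $\forall {\bf x}y(G_f(y) \rar f({\bf x})=y)$ with $I \models f({\bf x}^*) = y^*$ but $I \not\models G_f({\bf x}^*,y^*)$, then $I\models F$ forces $G_f({\bf x}^*,y)$ to be false for every $y$; invoking $\exists xy(x\ne y)$ to pick some $y^{**}\ne y^*$, I set $\wh{f}({\bf x}^*)=y^{**}$ (agreeing with $f^I$ elsewhere) and verify $F^*(\wh{\bC})$ analogously.

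$(\Leftarrow)$: Assume $I \models \comp[F;\bC]$, which implies $I \models F$. Suppose toward contradiction that $\wh{\bC}<\bC$ and $(I,\wh{\bC}) \models F^*(\wh{\bC})$. By tightness, fix a topological ordering $c_1,\dots,c_n$ of $\bC$ such that every strictly-positive dependency edge $c_i\to c_j$ in $\dg_\bC[F]$ satisfies $j<i$, and show by induction on $i$ that $\wh{c}_i = c_i^I$, contradicting $\wh{\bC}<\bC$.

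The induction rests on an auxiliary claim established by structural induction on $G$: if every strictly positive occurrence of an intensional constant in $G$ is among $\{c_j : j<i\}$ and $\wh{c}_j = c_j^I$ for each such $j$, then $G^*(\wh{\bC})$ and $G$ have the same truth value at $(I,\wh{\bC})$. The atomic case follows because the replaced constants evaluate identically, and the conjunction, disjunction, and quantifier cases are routine. The critical implication case $H_1 \rar H_2$ works because all strictly positive occurrences in $H_1 \rar H_2$ lie in $H_2$, so the inductive hypothesis applies to $H_2$ but need not apply to $H_1$; Lemma~\ref{lem:monotone} then yields $H_1^*(\wh{\bC}) \rar H_1$, which makes $H_1^*(\wh{\bC}) \rar H_2^*(\wh{\bC})$ a consequence of $H_1 \rar H_2$, so that the two conjuncts of $(H_1\rar H_2)^*(\wh{\bC})$ collapse to $H_1 \rar H_2$. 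Applied to the body $G_i$ of the rule defining $c_i$, the $c_i$-conjunct of $F^*(\wh{\bC})$ reduces to $\forall{\bf x}(G_i \rar \wh{p_i}({\bf x}))$ in the predicate case and to $\forall {\bf x}y(G_i(y) \rar \wh{f_i}({\bf x})=y)$ in the function case; combining with the completion biconditional $c_i \lrar G_i$ and, for predicates, the inclusion $\wh{\bC}^\mi{pred} \le \bC^\mi{pred}$ from $\wh{\bC}<\bC$, one concludes $\wh{c}_i = c_i^I$.

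The main obstacle is the auxiliary claim, particularly its implication case, where one must separate strictly-positive control over the consequent from the uncontrolled antecedent and use Lemma~\ref{lem:monotone} to absorb the shadow formula in the antecedent. The hypothesis $\exists xy(x\ne y)$ is invoked only in the $\Rightarrow$ direction, function case, to produce a distinct value $y^{**}$ witnessing $\wh{f}\ne f$ and hence $\wh{\bC}<\bC$.
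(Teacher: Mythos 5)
Your proof is correct, and it lands on the same two core ideas as the paper's argument --- a one-point deviation for the ($\Rightarrow$) direction and an acyclicity-based induction for ($\Leftarrow$) --- but it executes them with genuinely different machinery. The paper proves the theorem through the reduct characterization of Theorem~\ref{thm:fsm-reduct}: for ($\Leftarrow$) it takes an arbitrary $J<^{\bf c}I$ satisfying the reduct, forms the set $S$ of constants on which $I$ and $J$ disagree, picks $s_0\in S$ with no dependency edge into $S$, and derives a contradiction via Lemma~\ref{lem:reductbody} (if $I$ satisfies a reduct but $J$ does not, some strictly positive constant must differ); for ($\Rightarrow$) it flips a single head atom and checks the remaining reduct rules with Lemma~\ref{lem:negative}. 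You stay entirely inside the second-order characterization, and your ($\Leftarrow$) argument runs the induction globally along a topological order, propagating $\wh{c}_i=c_i^I$ upward through your auxiliary claim that $G^*(\wh{\bC})\lrar G$ once every strictly positive intensional constant of $G$ is pinned to its $I$-value. That auxiliary claim is exactly the $F^*$-language counterpart of the paper's Lemma~\ref{lem:reductbody}, and you correctly isolated its only delicate case --- the implication, where Lemma~\ref{lem:monotone} absorbs the uncontrolled antecedent so that $(H_1\rar H_2)^*(\wh{\bC})$ collapses to $H_1\rar H_2$. What your route buys is self-containedness (no grounding, no infinitary formulas, no appeal to the appendix lemmas); what the paper's buys is reuse of lemmas it needs elsewhere and a more local contradiction. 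The one presentational gap is in your ($\Rightarrow$) direction: after perturbing $\wh{p}$ (or $\wh{f}$) at the single witness point you verify the perturbed rule at other arguments, but you should also say explicitly that the conjuncts of $F^*(\wh{\bC})$ contributed by the rules for the \emph{other} intensional constants remain satisfied --- this is immediate from Lemma~\ref{lem:monotone} together with $I\models F$, since their starred heads evaluate as the unstarred ones, so it is an omission of wording rather than of substance.
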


\medskip\noindent
{\bf Example~\ref{ex:amount} Continued}\ \ {\sl 
Formula $F_1$  is not in Clark normal Form relative to $\i{Amt}_1$, but it is strongly equivalent to 
\[
\ba {rcl}
  \i{Amt}_1\mvis y  &\ \ar\ &  y\mvis x\!+\!1 \land \i{Amt}_0\mvis x \land \neg\neg (\i{Amt}_1=y), \\
  \i{Amt}_1\mvis y &\  \ar\  & y\mvis 0\land \i{Flush}\ .
\ea
\]
and further to
\[
\ba {rcl}
    \i{Amt}_1\mvis y &\ \ar\ & \big( y\mvis x\!+\!1\land \i{Amt}_0\mvis x\land \neg\neg (\i{Amt}_1\mvis y) \big) \lor  \big(y\mvis 0\land \i{Flush}\big), 
\ea
\]
which is in Clark normal form relative to $\i{Amt}_1$ and is tight on $\i{Amt}_1$. In accordance with Theorem~\ref{thm:completion}, the stable models of $F_1$ relative to $\i{Amt}_1$ coincide with the classical models of 
\[
\ba {rcl}
    \i{Amt}_1\mvis y &\ \lrar\ & \big( y\mvis x\!+\!1\land \i{Amt}_0\mvis x\land \neg\neg (\i{Amt}_1\mvis y) \big) \lor  \big(y\mvis 0\land \i{Flush}\big).
\ea
\]
}

The assumption $\exists xy(x\ne y)$ in the statement of Theorem~\ref{thm:completion} is essential to avoid the mismatch between ``trivial" stable models and models of completion when the universe is a singleton.
Recall that in order to dispute the stability of a model $I$ in the presence of intensional function constants, one needs another interpretation that is different from $I$ on intensional function constants. If the universe contains only one element, the stability of a model is trivial. 
For example, take $F$ to be $\top$ and ${\bf c}$ to be an intensional function constant~$f$. If the universe $|I|$ of an interpretation $I$ is a singleton, then $I$ satisfies $\fsm[F]$ because there is only one way to interpret $\bC$, but $I$ does not satisfy the completion formula $\forall {\bf x}y (f({\bf x})=y\lrar \bot)$.

\section{Eliminating Intensional Predicates in Favor of Intensional Functions}\label{sec:elim-p}

In first-order logic, it is known that predicate constants can be replaced by function constants and vice versa. This section and the next section show similar transformations under the functional stable model semantics.  

\subsection{Eliminating Intensional Predicates} \label{ssec:elim-p}

Intensional predicate constants can be eliminated in favor of intensional function constants as follows. 

Given a formula $F$ and an intensional predicate constant $p$, formula $F^p_f$
is obtained from $F$ as follows: 
\begin{itemize}
\item  in the signature of $F$, replace $p$ with a new intensional function constant $f$ of arity~$n$, where $n$ is the arity of~$p$, 
  and add two new non-intensional object constants $0$ and $1$ (rename if necessary);


\item  replace each subformula $p({\bf t})$ in $F$ with $f({\bf t})=1$. 
\end{itemize}

By $\i{FC}_f$ (``Functional Constraint on $f$'') we denote the
conjunction of the following formulas, which enforces $f$ to be two-valued: 
\beq
   0\ne 1, 
\eeq{f1}
\beq
   \neg\neg\forall {\bf x}(f({\bf x})=0 \lor f({\bf x})=1), 
 \eeq{f2}
where ${\bf x}$ is a list of distinct object variables.
By $\i{DF}_f$ (``Default False on $f$'') we denote the formula
\beq
  \forall {\bf x} \{f({\bf x})=0\}^{\rm ch}.
\eeq{f3}


\begin{example}\label{ex:monkey}
Let $F$ be the conjunction of the universal closures of the following formulas: 
\[
\ba c
  \i{Loc}(b,t)\mvis l\rar \{\i{Loc}(b,t+1)\mvis l\}^{\rm ch}, \\
  \i{Move}(b,l,t)\rar \i{Loc}(b,t+1)=l 
\ea
\]
(lower case symbols are variables).
We eliminate the intensional predicate constant $\i{Move}$ in favor of an intensional function constant $\i{Move}_f$ to obtain 
$F_{\textit{Move}_f}^{\textit{Move}}\land\i{FC}_{\textit{Move}_f}\land\i{DF}_{\textit{Move}_f}$, which is the conjunction of the universal closures of the following formulas:  
\[
\ba c
  \i{Loc}(b,t)\mvis l\rar \{\i{Loc}(b,t\!+\!1)\mvis l\}^{\rm ch}, \\
  \i{Move}_f(b,l,t)=1\rar \i{Loc}(b,t\!+\!1)=l, \\
  0\ne 1, \\
  \neg\neg(\i{Move}_f(b,l,t) = 0 \lor \i{Move}_f(b,l,t) = 1), \\
  \{\i{Move}_f(b,l,t) = 0\}^{\rm ch}. 
\ea
\]
\end{example}

The following theorem asserts the correctness of the elimination method. 

\begin{thm}\label{thm:elim-p}\optional{thm:elim-p}
The set of formulas 
$$
  \{\forall {\bf x} (f({\bf x})=1\lrar p({\bf x})), \ \ \i{FC}_f\}
$$
entails $$\sm[F; p{\bf c}]\lrar\sm[F^p_f\land\i{DF}_f; f{\bf c}].$$
\end{thm}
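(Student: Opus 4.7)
The plan is to expand both sides of the biconditional using the definition of $\sm$ and match up the components. Under the hypothesis $\forall{\bf x}(f({\bf x})\mvis 1\lrar p({\bf x}))$, the formulas $F$ and $F^p_f$ are logically equivalent (one is obtained from the other by replacing $p({\bf t})$ with $f({\bf t})\mvis 1$ throughout), and $\i{DF}_f$ is a tautology since it is a conjunction of choice formulas. Thus the first conjuncts of the two $\sm$ expressions already agree, and all the work lies in matching the ``minimality'' clauses $\neg\exists\vbC(\vbC<\bC\land F^*(\vbC))$.

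Next I would analyze $(\i{DF}_f)^*$. Using Lemma~\ref{lem:neg}, under the standing assumption $\wh{f}\wh\bC<f\bC$, $(\i{DF}_f)^*(\wh f)$ collapses to $\forall{\bf x}\big((\wh f({\bf x})\mvis 0\land f({\bf x})\mvis 0)\lor f({\bf x})\ne 0\big)$, which is equivalent to $\forall{\bf x}(f({\bf x})\mvis 0\rar\wh f({\bf x})\mvis 0)$. Combined with $\i{FC}_f$ (so that $f({\bf x})\in\{0,1\}$), this is precisely the functional analogue of the predicate-order relation $\wh p\le p$: $\wh f$ may disagree with $f$ only at points where $f({\bf x})\mvis 1$, and only by ``moving downward'' to a value different from $1$.

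Then I would establish a bijection between candidate counterwitnesses on the two sides under the correspondence $\wh p({\bf x})\lrar \wh f({\bf x})\mvis 1$. Going from $(\wh p,\wh\bC)$ to $(\wh f,\wh\bC)$, set $\wh f$ to the $\{0,1\}$-valued function with $\wh f({\bf x})\mvis 1$ iff $\wh p({\bf x})$; then $\wh p\le p$ immediately yields $(\i{DF}_f)^*(\wh f)$. Going from $(\wh f,\wh\bC)$ to $(\wh p,\wh\bC)$, use the defining equivalence; then $(\i{DF}_f)^*(\wh f)$ together with $\i{FC}_f$ and $0\ne 1$ gives $\wh p\le p$. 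The key structural observation is that the $*$-operator handles the two atomic formulas $p({\bf t})$ and $f({\bf t})\mvis 1$ in parallel: the former becomes $\wh p({\bf t})\land p({\bf t})$, the latter becomes $(\wh f({\bf t})\mvis 1)\land(f({\bf t})\mvis 1)$, and the correspondence makes these equivalent conjunct-by-conjunct. Hence an easy induction on the structure of $F$ shows $F^*(\wh p,\wh\bC)\lrar (F^p_f)^*(\wh f,\wh\bC)$.

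The main obstacle, as I see it, will be the ``strictness'' clause $\vbC\ne\bC$. When $\wh\bC\ne\bC$ it transfers trivially, but the residual case $\wh\bC\mvis\bC$ with $\wh f\ne f$ (respectively $\wh p\ne p$) requires genuine use of the analysis above: if $\wh f({\bf x})\ne f({\bf x})$ then by $(\i{DF}_f)^*$ we cannot have $f({\bf x})\mvis 0$, so $f({\bf x})\mvis 1$ and $\wh f({\bf x})\ne 1$, giving $p({\bf x})$ true and $\wh p({\bf x})$ false; the reverse direction is symmetric via the Boolean construction of $\wh f$. Once this bookkeeping is done, the two minimality clauses are contrapositively equivalent under the hypothesis, which yields the required entailment.
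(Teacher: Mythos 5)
Your proposal is correct and follows essentially the same route as the paper's proof: reduce to the second-order minimality conjuncts, simplify $(\i{DF}_f)^*$ via Lemma~\ref{lem:neg} to the functional analogue of $\wh p\le p$, construct the Boolean-valued witness $\wh f$ from $\wh p$ (and conversely via $\wh p({\bf x})\lrar\wh f({\bf x})\mvis 1$), and match $F^*$ against $(F^p_f)^*$ by induction on the structure of $F$, with the same case analysis for the strictness clause. No gaps.
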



The following corollary to Theorem~\ref{thm:elim-p} tells us that there is a 1--1 correspondence between the stable models of $F$ and the stable models of its ``functional image'' $F^p_f\land\i{DF}_f\land\i{FC}_f$. 
For any interpretation $I$ of the signature of $F$, by $I^p_f$ we denote the interpretation of the signature of $F^p_f$ obtained from $I$ by replacing the set $p^I$ with the function $f^{I^p_f}$ such that, for all $\xi_1,\dots,\xi_n$ in the universe of $I$, 
\[
\ba l
  f^{I^p_f}(\xi_1,\dots,\xi_n)= 1^I \text{   if } p^I(\xi_1,\dots,\xi_n)= \true \\
  f^{I^p_f}(\xi_1,\dots,\xi_n)= 0^I \text{   otherwise }.
\ea
\] 
Furthermore, we assume that $I^p_f$ satisfies (\ref{f1}). Consequently, $I^p_f$ satisfies $\i{FC}_f$.

\begin{cor}\label{cor:elim-p2}\optional{cor:elim-p2}
Let $F$ be a first-order sentence.
\begin{itemize}
\item[(a)] An interpretation $I$ of the signature of $F$ is a model  of~$\sm[F; p{\bf c}]$ iff $I^p_f$ is a model of~$\sm[F^p_f\land\i{DF}_f\land\i{FC}_f; f{\bf c}]$.
\item[(b)] An interpretation $J$ of the signature of $F^p_f$ is a model of~$\sm[F^p_f\land\i{DF}_f\land\i{FC}_f ; f{\bf c}]$ iff $J=I^p_f$ for some model $I$ of~$\sm[F; p{\bf c}]$.
\end{itemize}
\end{cor}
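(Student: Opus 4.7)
The plan is to derive both parts of the corollary as corollaries of Theorem~\ref{thm:elim-p} combined with Theorem~\ref{thm:constraint}. The overall strategy rests on two observations about the construction $I \mapsto I^p_f$: by definition $I^p_f$ always satisfies the functional constraint $\i{FC}_f$, and it witnesses the equivalence $\forall {\bf x}(f({\bf x})\mvis 1 \lrar p({\bf x}))$ whenever we view $p$ and $f$ as living in a common signature. Thus $I^p_f$ sits exactly in the class of interpretations on which Theorem~\ref{thm:elim-p} forces $\sm[F; p{\bf c}]$ and $\sm[F^p_f\land\i{DF}_f; f{\bf c}]$ to agree. The extra conjunct $\i{FC}_f$ in the formula to which $\sm$ is applied will be absorbed using Theorem~\ref{thm:constraint}.

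I would begin by introducing the combined signature $\sigma^+$ containing $p$, $f$, $0$, $1$, and all other symbols of~$F$, so that both sides of the corollary can be discussed in a single interpretation. Given $I$ on the signature of~$F$, let $I^+$ be its expansion to $\sigma^+$ obtained by interpreting $f$, $0$, $1$ as in $I^p_f$. Then $I^+$ restricts to $I$ on the signature of~$F$ and to $I^p_f$ on the signature of~$F^p_f$, and $I^+$ satisfies both $\forall {\bf x}(f({\bf x})\mvis 1 \lrar p({\bf x}))$ and $\i{FC}_f$ by construction. Since satisfaction of a $\sm$-formula depends only on the restriction of the interpretation to the signature of the underlying formula, it suffices to prove both biconditionals at the level of $I^+$.

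For part~(a), Theorem~\ref{thm:elim-p} applied to $I^+$ yields $I^+\models\sm[F; p{\bf c}] \Lrar I^+\models\sm[F^p_f\land\i{DF}_f; f{\bf c}]$. To insert the conjunct $\i{FC}_f$ into the right-hand $\sm$-operator, note that the only occurrence of $f$ in $\i{FC}_f$ lies inside the double negation $\neg\neg$, so it sits in the antecedent of two nested implications; hence $\i{FC}_f$ has no strictly positive occurrence of any constant in $f{\bf c}$, i.e., it is negative on $f{\bf c}$. Theorem~\ref{thm:constraint} then gives
\[
  \sm[F^p_f\land\i{DF}_f\land\i{FC}_f;\ f{\bf c}]\ \Lrar\ \sm[F^p_f\land\i{DF}_f;\ f{\bf c}]\land \i{FC}_f,
\]
and since $I^+\models\i{FC}_f$ automatically, the two right-hand sides agree on $I^+$. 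Restricting back recovers part~(a).

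For part~(b), I would start from a model $J$ of $\sm[F^p_f\land\i{DF}_f\land\i{FC}_f;\ f{\bf c}]$ and reconstruct $I$ with $I^p_f = J$: define $I$ on the signature of~$F$ by letting $p^I(\bfxi) = \true$ precisely when $f^J(\bfxi) = 1^J$, and making $I$ agree with $J$ on all other constants. Because $J\models\i{FC}_f$, we have $0^J\ne 1^J$ and $f^J$ takes only the values $0^J,1^J$, so the definition of $p^I$ is sound and the induced $I^p_f$ coincides with $J$. The argument of part~(a) now runs in reverse to yield $I\models\sm[F; p{\bf c}]$. The principal obstacle I anticipate is simply the bookkeeping between the signatures of $F$ and $F^p_f$, together with the verification that $\i{FC}_f$ is indeed negative on $f{\bf c}$ — the double negation around the disjunction is precisely what pushes the occurrence of $f$ out of strictly positive position, making Theorem~\ref{thm:constraint} applicable.
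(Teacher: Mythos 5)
Your proposal is correct and follows essentially the same route as the paper's own proof: your expansion $I^+$ to the combined signature is exactly the paper's $I\cup I^p_f$, the core equivalence comes from Theorem~\ref{thm:elim-p} via the bridging axiom $\forall {\bf x}(f({\bf x})\mvis 1\lrar p({\bf x}))$ and $\i{FC}_f$ holding in that joint interpretation, the conjunct $\i{FC}_f$ is absorbed by Theorem~\ref{thm:constraint} (your explicit check that the $\neg\neg$ pushes $f$ out of strictly positive position is the justification the paper leaves implicit in calling $\i{FC}_f$ a constraint), and part~(b) is handled by the same reconstruction of $p^I$ from $f^J$ using $0^J\ne 1^J$. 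No gaps.
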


In Corollary~\ref{cor:elim-p2}~(b), it is clear by the construction of $I^p_f$ that, for each $J$, there is exactly one $I$ that satisfies the statement.

Repeated applications of Corollary~\ref{cor:elim-p2} allow us to completely eliminate intensional predicate constants in favor of intensional function constants, thereby turning formulas under the stable model semantics from~\cite{ferraris11stable} into formulas under FSM whose intensional constants are function constants only. 

Note that $\neg\neg$ in \eqref{f2} cannot be dropped in general. 
The formula $\neg\neg F$ is not strongly equivalent to $F$. The former is a weaker assertion than the latter under the stable model semantics. 
Indeed, if it is dropped, in Corollary~\ref{cor:elim-p2}, when $F$ is $\top$, the empty set is the only model of $\sm[F; p]$ whereas $\sm[F^p_f\land\i{DF}_f\land\i{FC}_f; f]$ has two models where $f$ is mapped to $0$ or $1$.



\section{Eliminating Intensional Functions in favor of Intensional Predicates}\label{sec:elim-f}

We show how to eliminate intensional function constants in favor of intensional predicate constants. Unlike in the previous section, the result is established for ``$f$-plain'' formulas only. It turns out that there is no elimination method for arbitrary formulas that is both modular and signature-preserving.

\subsection{Eliminating Intensional Functions from $\bC$-Plain Formulas in favor of Intensional Predicates} \label{ssec:elim-f}


Let $f$ be a function constant. A first-order formula is called {\em
  $f$-plain} \cite{lifschitz11eliminating} if each atomic formula in it
\begin{itemize}
\item  does not contain $f$, or
\item  is of the form $f({\bf t}) = t_1$ where ${\bf t}$ is a tuple of
  terms not containing $f$, and $t_1$ is a term not containing $f$.
\end{itemize}
For example, $f\mvis 1$ is $f$-plain, but each of $p(f)$, $g(f)=1$,
and $1\mvis f$ is not $f$-plain.

For any list $\bC$ of predicate and function constants, we say that
$F$ is {\em $\bC$-plain} if $F$ is $f$-plain for each function
constant $f$ in~$\bC$.



Let $F$ be an $f$-plain formula, where $f$ is an intensional function
constant. Formula $F^f_p$ is obtained from $F$ as follows:
\begin{itemize}
\item  in the signature of $F$, replace $f$ with a new intensional
  predicate constant $p$ of arity $n+1$, where $n$ is the arity
  of~$f$;


\item  replace each subformula $f({\bf t}) = t_1$ in $F$ with $p({\bf t},t_1)$.
\end{itemize}

The following theorem asserts the correctness of the elimination.

\begin{thm}\label{thm:elim-f}\optional{thm:elim-f}
For any $f$-plain formula $F$, the set of formulas 
$$\{\forall {\bf x}y (p({\bf x},y)\lrar f({\bf x})=y), \ \ 
\exists xy (x\ne y)\}$$ entails 
$$\sm[F; f{\bf c}]\lrar\sm[F^f_p; p{\bf c}].$$
\end{thm}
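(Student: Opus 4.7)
The plan is to fix an interpretation $I$ of the combined signature (containing both $f$ and $p$) that satisfies the two hypotheses, and show $I\models\sm[F;f\bC]\lrar\sm[F^f_p;p\bC]$. A straightforward induction on the structure of $F$, using $f$-plainness so that every atomic subformula mentioning $f$ has the form $f({\bf t})=t_1$ (replaced by $p({\bf t},t_1)$ in $F^f_p$) and invoking the hypothesis $\forall{\bf x}y(p({\bf x},y)\lrar f({\bf x})=y)$, shows $I\models F\lrar F^f_p$. It then remains to show the equivalence of the second-order stability parts: a witness $(\wh{f},\wh{\bC})$ to the failure of $\sm[F;f\bC]$ exists at $I$ iff a witness $(\wh{p},\wh{\bC})$ to the failure of $\sm[F^f_p;p\bC]$ exists at~$I$.

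I would set up a correspondence between such witnesses. Given $(\wh{f},\wh{\bC})$ with $(\wh{f},\wh{\bC})<(f,\bC)$, define $\wh{p}({\bf x},y)$ to hold iff $\wh{f}({\bf x})=y\land f({\bf x})=y$. Then $\wh{p}\le p$ by the first hypothesis, and $(\wh{p},\wh{\bC})\ne(p,\bC)$: if $\wh{f}\ne f$, pick ${\bf x}_0$ with $\wh{f}({\bf x}_0)\ne f({\bf x}_0)$, so $p({\bf x}_0,f({\bf x}_0))$ holds while $\wh{p}({\bf x}_0,f({\bf x}_0))$ does not; if $\wh{f}=f$, the inequality $\wh{\bC}\ne\bC$ is inherited. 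Conversely, from $(\wh{p},\wh{\bC})$ with $\wh{p}\le p$, define $\wh{f}({\bf x})$ as $f({\bf x})$ when $\wh{p}({\bf x},f({\bf x}))$ holds, and otherwise as any element distinct from $f({\bf x})$, which exists by the hypothesis $\exists xy(x\ne y)$. A parallel case split shows $(\wh{f},\wh{\bC})<(f,\bC)$.

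The core of the proof is the following lemma, proved by induction on the structure of any $f$-plain subformula $G$ of $F$: under the two hypotheses and the correspondence between $\wh{f}$ and $\wh{p}$ above, $I\models G^*(\wh{f},\wh{\bC})\lrar (G^f_p)^*(\wh{p},\wh{\bC})$. The only nontrivial base case is $G=(f({\bf t})=t_1)$, where $G^*$ unfolds to $(\wh{f}({\bf t})=t_1)\land (f({\bf t})=t_1)$ while $(G^f_p)^*$ unfolds to $\wh{p}({\bf t},t_1)\land p({\bf t},t_1)$; these are equivalent in $I$ by unfolding the defining equivalences of $p$ and $\wh{p}$. Atomic subformulas not involving $f$ are unchanged by the translation, and the inductive steps for $\land,\lor,\rar,\forall,\exists$ proceed routinely, with the second conjunct of $(G\rar H)^*$ handled by the preliminary observation $I\models G\lrar G^f_p$.

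The main obstacle I expect is the book-keeping in the reverse direction of the correspondence: distinct $\wh{f}$'s can map to the same $\wh{p}$, since $\wh{p}$ records only whether $\wh{f}({\bf x})$ equals $f({\bf x})$, not the specific alternative value when it does not. I must therefore verify that any $\wh{f}$ rebuilt from $\wh{p}$ by the prescribed choice still validates the base-case equivalence --- which it does, because whenever $\wh{f}({\bf x})\ne f({\bf x})$ the conjunct $f({\bf t})=t_1$ already forces $G^*$ to be false on ``wrong'' values of $t_1$, matching the false $\wh{p}({\bf t},t_1)$. This is precisely where $\exists xy(x\ne y)$ becomes essential: in a singleton universe no alternative value for $\wh{f}({\bf x})$ exists, and the construction of $\wh{f}$ from $\wh{p}$ breaks down.
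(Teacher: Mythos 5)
Your proposal is correct and follows essentially the same route as the paper's proof: the same reduction to the equivalence of the second-order parts, the same witness constructions ($\wh{p}({\bf x},y)$ defined as $\wh{f}({\bf x})=y\land f({\bf x})=y$ in one direction, and $\wh{f}$ rebuilt from $\wh{p}$ using an alternative value guaranteed by $\exists xy(x\ne y)$ in the other), and the same inductive lemma relating $G^*(\wh{f},\wh{\bC})$ to $(G^f_p)^*(\wh{p},\wh{\bC})$ with the $f$-plain base case as the only nontrivial step. The point you flag about where $\exists xy(x\ne y)$ is needed is exactly the one the paper makes.
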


The theorem tells us how to eliminate an intensional function constant $f$ from an $f$-plain formula in favor of an intensional predicate constant.
By $\i{UEC}_p$ we denote the following formulas that enforce the
``functional image'' on the predicate $p$, 
\beq
\ba c
\forall {\bf x}yz(p({\bf x},y) \land p({\bf x},z)\land y\ne z\rar\bot) ,\\
\neg\neg \forall {\bf x}\exists y\, p({\bf x},y),
\ea
\eeq{uec}
where ${\bf x}$ is an $n$-tuple of variables, and all variables in
${\bf x}$, $y$, and $z$ are pairwise distinct. Note that each formula is negative on any list of constants, so they work as constraints (Section~\ref{ssec:constraints}) to eliminate the stable models that violate them.


\begin{example}
Consider the same formula $F$ in Example~\ref{ex:monkey}.
We eliminate the function constant $\i{Loc}$ in favor of the intensional predicate constant $\i{Loc}_p$ to obtain $F_{\textit{Loc}_p}^{\textit{Loc}}\land\i{UEC}_{\textit{Loc}_p}$, which is the conjunction of the universal closures of the following formulas: 
\beq
\ba c
  \i{Loc}_p(b,t,l)\rar \{\i{Loc}_p(b,t\!+\!1,l)\}^{\rm ch}, \\
  \i{Move}(b,l,t)\rar \i{Loc}_p(b,t\!+\!1,l), \\
  \i{Loc}_p(b,t,l)\land \i{Loc}_p(b,t,l')\land l\ne l' \rar \bot, \\
  \neg\neg\forall b\, t\, \exists l(\i{Loc}_p(b,t,l)).
\ea
\eeq{ex:elim}

\end{example}

The following corollary shows that there is a simple 1--1
correspondence between the stable models of $F$ and the stable models
of $F^f_p\land\i{UEC}_p$. Recall that the signature of $F^f_p$ is
obtained from the
signature of $F$ by replacing $f$ with $p$. For any interpretation $I$
of the signature of $F$, by $I^f_p$ we denote the interpretation of the
signature of $F^f_p$ obtained from $I$ by replacing the function $f^I$
with the predicate $p^I$ that consists of the tuples
\[
   \langle\xi_1,\dots,\xi_n,f^I(\xi_1,\dots,\xi_n)\rangle
\]
for all $\xi_1,\dots,\xi_n$ from the universe of $I$.

\begin{cor}\label{cor:elim-f2}\optional{cor:elim-f2}
Let $F$ be an $f$-plain sentence. 
\begin{itemize}
\item[(a)] An interpretation $I$ of the signature of $F$ that satisfies
$\exists xy (x\ne y)$ is a model of $\sm[F;f{\bf c}]$ iff $I^f_p$ is a
model of $\sm[F^f_p\land\i{UEC}_p;\ p{\bf c}]$.
\item[(b)] An interpretation $J$ of the signature of $F^f_p$ that satisfies
$\exists xy (x\ne y)$ is a model of
$\sm[F^f_p\land\i{UEC}_p;\ p{\bf c}]$ iff $J = I^f_p$ for some model $I$ of 
$\sm[F;f{\bf c}]$.
\end{itemize}



\end{cor}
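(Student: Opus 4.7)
\medskip\noindent\textbf{Proof proposal for Corollary~\ref{cor:elim-f2}.}
The plan is to derive both parts from Theorem~\ref{thm:elim-f} after dispensing with $\i{UEC}_p$ via Theorem~\ref{thm:constraint}. First I would observe that $\i{UEC}_p$ is negative on ${\bf c}p$: in the first conjunct of \eqref{uec}, every occurrence of $p$ sits in the antecedent of an implication (recall that $\neg$ abbreviates $\rar\bot$), and in the second conjunct $\neg\neg\forall{\bf x}\exists y\, p({\bf x},y)$ the occurrence of $p$ is inside two nested antecedents, so it is not strictly positive. Hence by Theorem~\ref{thm:constraint},
\[
  \sm[F^f_p\land\i{UEC}_p;\ p{\bf c}] \;\Lrar\; \sm[F^f_p;\ p{\bf c}]\land\i{UEC}_p.
\]
Also, by the construction of $I^f_p$ the predicate $p^{I^f_p}$ is the graph of the total function $f^I$, so $I^f_p$ automatically satisfies $\i{UEC}_p$.

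For part~(a), given $I$ of the signature of $F$ with $I\models\exists xy(x\ne y)$, I would form the joint expansion $I^+$ over the combined signature of $F$ and $F^f_p$ by setting $f^{I^+}=f^I$ and $p^{I^+}=p^{I^f_p}$. Then $I^+$ satisfies both hypotheses of Theorem~\ref{thm:elim-f}: the equivalence $\forall{\bf x}y(p({\bf x},y)\lrar f({\bf x})=y)$ holds by construction, and $\exists xy(x\ne y)$ is inherited from~$I$. Theorem~\ref{thm:elim-f} then yields $I^+\models\sm[F;f{\bf c}]\lrar\sm[F^f_p;p{\bf c}]$. Because $F$ (and hence $\sm[F;f{\bf c}]$) contains no occurrence of~$p$, and $F^f_p$ (and hence $\sm[F^f_p;p{\bf c}]$) contains no occurrence of~$f$, each of those second-order sentences is invariant under the expansion; consequently
\[
  I\models\sm[F;f{\bf c}]\ \Lrar\ I^f_p\models\sm[F^f_p;p{\bf c}]\ \Lrar\ I^f_p\models\sm[F^f_p\land\i{UEC}_p;p{\bf c}],
\]
where the last equivalence uses the reduction above together with $I^f_p\models\i{UEC}_p$.

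For part~(b), given $J$ of the signature of $F^f_p$ with $J\models\exists xy(x\ne y)$ and $J\models\sm[F^f_p\land\i{UEC}_p;p{\bf c}]$, the reduction via Theorem~\ref{thm:constraint} again gives $J\models\i{UEC}_p$. The constraint $\i{UEC}_p$ asserts precisely that $p^J$ is the graph of a total function, so I can define~$I$ of the signature of~$F$ by taking $f^I$ to be that function and leaving all other constants as in $J$; then $J=I^f_p$. Applying part~(a) in the ``only if'' direction produces $I\models\sm[F;f{\bf c}]$, and uniqueness of the preimage follows from the fact that $I^f_p=J$ determines $f^I$ pointwise from $p^J$.

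The only delicate step is the invariance observation underlying part~(a): that the truth of $\sm[F;f{\bf c}]$ on $I^+$ is independent of how $p$ is interpreted (and symmetrically for $\sm[F^f_p;p{\bf c}]$ and~$f$). This is a conservative-expansion fact about the $\sm$ operator, which follows from the recursive definition of $F^*(\vbC)$ since neither the base case nor any inductive clause introduces constants outside those appearing in the original formula, and the second-order quantification in \eqref{fsm} ranges only over variables corresponding to the intensional constants~$\bC$. I expect this bookkeeping to be the main obstacle to making the argument fully rigorous, but it is routine once the signatures are tracked carefully.
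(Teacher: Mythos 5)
Your proposal is correct and follows essentially the same route as the paper's proof: form the joint interpretation over the combined signature, check the hypotheses $\forall{\bf x}y(p({\bf x},y)\lrar f({\bf x})=y)$ and $\exists xy(x\ne y)$, apply Theorem~\ref{thm:elim-f}, handle $\i{UEC}_p$ as a constraint via Theorem~\ref{thm:constraint}, and restrict back to each signature using the fact that the relevant $\sm$ sentences do not mention the eliminated symbol. The paper likewise uses $\i{UEC}_p$ (extracted via Theorem~\ref{thm:constraint}) to justify that $p^J$ is the graph of a total function when constructing the preimage $I=J^p_f$ in part~(b), exactly as you do.
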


In Corollary~\ref{cor:elim-f2} (b), it is clear by the construction of $I^f_p$ that, for each $J$, there is exactly one $I$ that satisfies the statement.

Theorem~\ref{thm:elim-f} and Corollary~\ref{cor:elim-f2} are similar to Theorem 3 and Corollary 5 from~\cite{lifschitz11eliminating}, which are about eliminating ``explainable'' functions in nonmonotonic causal logic in favor of ``explainable'' predicates.


Similar to Theorem~\ref{thm:completion}, the condition $\exists xy (x\ne y)$ is necessary in Theorem~\ref{thm:elim-f} and Corollary~\ref{cor:elim-f2} because in order to dispute the stability of a model $I$ in the presence of intensional function constants, one needs another interpretation that is different from~$I$ on intensional function constants. Such an interpretation simply does not exist if the condition is missing, so $I$ becomes trivially stable. 
For example, 
consider the formula $\top$ with signature $\sigma = \{f\}$ and the universe $\{1\}$. There is only one interpretation, which maps $f$ to $1$. This is a stable model of~$\top$. 
On the other hand, 
the formula $\top\land\i{UEC}_p$, which is $\top\land\neg\neg \exists y\, p(y)$, has no stable models.

The method above eliminates only one intensional function constant at a time, but repeated applications can eliminate all intensional function constants from a given $\bC$-plain formula in favor of intensional predicate constants. In other words, it tells us that the stable model semantics for $\bC$-plain formulas can be reduced to the stable model semantics from~\cite{ferraris11stable} by adding uniqueness and existence of value constraints.
 
The elimination method described in Corollary~\ref{cor:elim-f2} has shown to be useful in a special class of FSM, known as {\sl multi-valued propositional formulas} \cite{giu04}.\footnote{We discuss the relationship in Section~\ref{ssec:relation-mvp}.}
In \cite{lee13action}, the method allows us to relate the two different translations of action language ${\cal BC}$ into multi-valued propositional formulas and into the usual ASP programs.  Also, it led to the design of {\sc mvsm},\footnote{\url{http://reasoning.eas.asu.edu/mvsm/}} which computes stable models of multi-valued propositional formulas using {\sc f2lp} and {\sc clingo}, and the design of {\sc cplus2asp} \cite{babb13cplus2asp},\footnote{\url{http://reasoning.eas.asu.edu/cplus2asp/}} which computes action languages using ASP solvers.







Interestingly, the elimination method results in a new way of formalizing the commonsense law of inertia using choice rules instead of using strong negation, e.g., \eqref{water-asp}. The formulas \eqref{ex:elim}  can be more succinctly represented in the language of ASP as follows. 
\[
\ba l
  \{\i{Loc}_p(b,t\!+\!1,l)\}^{\rm ch} \ar \i{Loc}_p(b,t,l)  \\
   \i{Loc}_p(b,t\!+\!1,l) \ar \i{Move}(b,l,t)  \\
   \ar \no\ 1\{\i{Loc}_p(b,t,l): \i{Location}(l) \}1, \i{Block}(b), \i{Time}(t) 
\ea
\]
where $\i{Location}$, $\i{Block}$, and $\i{Time}$ are domain predicates.
The first rule says that if the location of $b$ at time $t$ is $l$, then decide arbitrarily whether to assert $\i{Loc}_p(b,t\!+\!1,l)$  at time $t+1$.  In the absence of additional information about the location of $b$ at time $t+1$, asserting $\i{Loc}_p(b,t\!+\!1,l)$ will be the only option, as the third rule requires one of the location $l$ to be associated with the block $b$ at time $t+1$.   But if we are given conflicting information about the location at time $t+1$ due to the $\i{Move}$ action,  then not asserting  $\i{Loc}_p(b,t\!+\!1,l)$ will be the only option, and the second rule will tell us the new location of $b$ at time $t+1$.


\subsection{Non-$\bC$-plain formulas vs. $\bC$-plain formulas}\label{ssec:non-bc}

One may wonder if the method of eliminating intensional function constants in the previous section can be extended to non-$\bC$-plain formulas, possibly by first rewriting the formulas into $\bC$-plain formulas. In classical logic, this is easily done by ``unfolding'' nested functions by introducing existential quantifiers, but this is not the case under the stable model semantics because nested functions in general express weaker assertions than unfolded ones.


\begin{example} \label{ex:aplusb}
Consider $F$ to be $a+b=5$, where $a$ and $b$ are object constants. The formula $F$ is equivalent to $\exists xy (a\mvis x\ \land\ b\mvis y\ \land\ x+y\mvis 5)$ under classical logic, but this is not the case under FSM. The former has no stable models, and the latter has many stable models, including $I$ such that $a^I=1, b^I=4$.
\end{example}	

Gelfond and Kahl [\citeyear{gelfond14knowledge}] describe the intuitive meaning of stable models in terms of {\sl rationality principle}: ``believe nothing you are not forced to believe." In the example above, it is natural to understand that $a+b=5$ does not force one to believe $a=1$ and $b=4$. 

The weaker assertion expressed by function nesting is useful for specifying the range of a function using a domain predicate, or expressing the concept of synonymity between the two functions without forcing the functions to have specific values.

\begin{example} \label{ex:pf}
Consider $F$ to be $\i{Dom}(a)$ where $\i{Dom}$ is a predicate constant and $a$ is an object constant. The formula $F$ can be viewed as applying the sort predicate (i.e., domain predicate) $\i{Dom}$ to specify the value range of $a$, but it does not force one to believe that $a$ has a particular value.
In classical logic, $F$ is equivalent to \hbox{$\exists x (\i{Dom}(x)\land x=a)$}, but their stable models are different. The former has no stable models, and the latter has many stable models, including $I$ such that $\i{Dom}^I=\{1,2,3\}$ and $a^I = 1$. 


\end{example}

\begin{example} \label{ex:synonym}
A ``synonymity'' rule~\cite{lifschitz11eliminating} has the form  
\beq
  B\ \rar\ f_1({\bf t}_1) = f_2({\bf t}_2),
\eeq{syn}
where $f_1$, $f_2$ are intensional
function constants in $\bF$, and ${\bf t}_1$, ${\bf t}_2$ are tuples
of terms not containing members of $\bF$. 
This rule expresses that we believe $f_1({\bf t}_1)$ to be
``synonymous'' to $f_2({\bf t}_2)$ under condition $B$, but it does not force one to assign particular values to $f_1({\bf t_1})$ and $f_2({\bf t_2})$. 
As a special case, consider $f_1=f_2$ vs. $\exists x (f_1=x\land f_2=x)$. The latter forces one to assign some values to $f_1$ and $f_2$, and does not express the intended weaker assertion that they are synonymous.
\end{example}

To sum up, in Examples \ref{ex:aplusb}, \ref{ex:pf}, and \ref{ex:synonym}, the classically equivalent transformations do not preserve strong equivalence. They affect the beliefs, forcing one to believe more than what the original formulas assert.

On the other hand, there is some special class of formulas for which the process of ``unfolding'' preserves stable models.
We first define precisely the process. 

\begin{definition}
The process of {\sl unfolding $F$ w.r.t. a list $\bC$ of constants}, denoted by
$\i{UF}_\bC(F)$, is recursively defined as follows. 
%
%
\begin{itemize}
\item If $F$ is an atomic formula that is $\bC$-plain, $\i{UF}_\bC(F)$ is $F$; 

\item If $F$ is an atomic formula of the form $p(t_1,\dots, t_n)$ ($n\ge 0$) such that $t_{k_1},\dots, t_{k_j}$ are all the terms in $t_1,\dots, t_n$ 
   that contain some members of $\bC$, then 
   $\i{UF}_\bC(p(t_1,\dots, t_n))$ is 
\[
 \exists x_1\dots x_j \Big(p(t_1,\dots, t_n)''\land
   \bigwedge_{1\le i\le j} 
  \i{UF}_\bC(t_{k_i} = x_i)\Big),
\]
where $p(t_1,\dots, t_n)''$ is obtained from $p(t_1,\dots, t_n)$ by
replacing each $t_{k_i}$ with a new variable $x_i$.

\item If $F$ is an atomic formula of the form $f(t_1,\dots,t_n)=t_0$ ($n\ge 0$) such that 
   $t_{k_1},\dots, t_{k_j}$ are all the terms in $t_0,\dots, t_n$ 
   that contain some members of $\bC$, then   
   $\i{UF}_\bC(f(t_1,\dots,t_n)=t_0)$ is 
\[
  \exists x_1\dots x_j \Big((f(t_1,\dots,t_n) = t_0)'' \land 
   \bigwedge_{1\le i\le j} 
   \i{UF}_\bC(t_{k_i} = x_i)\Big),
\] 
where $(f(t_1,\dots, t_n)=t_0)''$ is obtained from $f(t_1,\dots,t_n)=t_0$ by 
replacing each $t_{k_i}$ with a new variable $x_i$.

\item $\i{UF}_\bC(F\odot G)$ is 
    $\i{UF}_\bC(F)\odot\i{UF}_\bC(G)$, 
    where \hbox{$\odot\in\{\land, \lor, \rar\}$}.

\item $\i{UF}_\bC(Qx F)$ is $Qx\ \i{UF}_\bC(F(x))$, where 
  $Q\in\{\forall,\exists\}$.
\end{itemize}
\end{definition} 


In Example~\ref{ex:pf}, $\i{UF}_{Dom}(F)$ is $\exists x (\i{Dom}(x)\land a=x)$, and in Example~\ref{ex:aplusb}, $\i{UF}_{(a,b)}(F)$ is $\exists xy (a=x\land b=y\land x+y=5)$. In Example~\ref{ex:synonym}, $\i{UF}_{(f_1,f_2)}(f_1=f_2)$ is $\exists x
(f_1=x\land f_2=x)$. We already observed that the process of unfolding does not preserve the stable models of the formulas.



%





Theorem~\ref{thm:head-cplain} below presents a special class of formulas, for which the process of unfolding does preserve stable models, or in other words, unfolding does not affect the beliefs.

\begin{definition}\label{def:head-c-plain}
We say that a formula is {\em head-$\bC$-plain} if every strictly positively occurrence of an atomic formula in it is $\bC$-plain.
\end{definition}

For instance, $f(g)\mvis 1\rar h\mvis 1$ is head-$(f,g,h)$-plain,
though it is not $(f,g,h)$-plain. 



\begin{thm}\label{thm:head-cplain}\optional{thm:head-cplain}
For any head-$\bC$-plain sentence $F$ that is tight on $\bC$ and any
interpretation $I$ satisfying $\exists xy(x \neq y)$, 
we have $I\models \sm[F;\bC]$ iff $I\models\sm[\i{UF}_\bC(F);\bC]$.
\end{thm}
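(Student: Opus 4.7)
The plan is to use Theorem~\ref{thm:completion} (completion coincides with $\sm$ for tight formulas in Clark normal form) to reduce the claim to a classical-equivalence fact about completions. I first observe that $F$ and $\i{UF}_\bC(F)$ are classically equivalent, since the unfolding operation merely introduces fresh existentially quantified variables for non-$\bC$-plain subterms inside atoms---an instance of the standard classical equivalence $A(t)\Lrar\exists x(t\mvis x\land A(x))$.

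Next, I would transform $F$ via strong equivalence (Theorem~\ref{thm:strong}) into a formula $F'$ in Clark normal form relative to $\bC$. The head-$\bC$-plain assumption is precisely the syntactic condition needed: every strictly positive atom in $F$ is already in the $\bC$-plain shape permitted as a Clark-normal-form consequent. The remaining manipulations---distributing conjunctions over implications, replacing each head $f(\bt)\mvis t'$ by a rule quantified over a fresh $y$ plus an equality $y\mvis t'$ moved into the antecedent, and collecting multiple rules for the same head via $(G_1\rar H)\land(G_2\rar H)\equiv(G_1\lor G_2)\rar H$---are standard strongly-equivalent rewrites. None of them introduces new strictly positive occurrences of intensional constants, so tightness on $\bC$ is preserved. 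Applying the same syntactic procedure to $\i{UF}_\bC(F)$ produces a formula $F''$ in Clark normal form whose rules share the exact heads of $F'$ (the heads in $F$ are $\bC$-plain and hence unchanged by unfolding), differing only in that each antecedent of $F'$ is replaced by its unfolded version in $F''$. In particular, $F''$ is tight on $\bC$ as well.

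Theorem~\ref{thm:completion} then applies to both $F'$ and $F''$, using the assumption $I\models\exists xy(x\ne y)$, yielding $I\models\sm[F';\bC]\Lrar I\models\comp[F';\bC]$ and similarly for $F''$. Combined with the strong equivalences above via Theorem~\ref{thm:strong}, this gives $I\models\sm[F;\bC]\Lrar I\models\comp[F';\bC]$ and $I\models\sm[\i{UF}_\bC(F);\bC]\Lrar I\models\comp[F'';\bC]$. Finally, $\comp[F';\bC]$ and $\comp[F'';\bC]$ are classically equivalent: completion is a purely classical reshuffling that preserves the antecedents verbatim, and each antecedent of $F'$ is classically equivalent to its counterpart in $F''$ by the first step. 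Chaining these equivalences establishes the bi-implication at $I$.

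The main obstacle is the detailed execution of step two: the strongly-equivalent transformation to Clark normal form. The analogue for predicate heads is routine in ASP, but the functional case---padding each function head with a fresh variable $y$ and an equality moved into the antecedent to obtain the required form $\forall\bx y(G\rar f(\bx)\mvis y)$---requires explicit justification through Theorem~\ref{thm:strong}. A secondary bookkeeping task is confirming that tightness is preserved by each manipulation; this follows because strong-equivalence rewrites within positive subformulas leave the strictly positive occurrences of intensional constants, and hence the dependency graph $\dg_\bC$, intact. It is precisely the use of completion (a classical operation) that allows us to get by with classical equivalence of antecedents, even though $F$ and $\i{UF}_\bC(F)$ are in general not strongly equivalent in the absence of tightness.
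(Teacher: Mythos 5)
Your proposal is correct and follows essentially the same route as the paper's proof, which simply observes that the completion of $\i{UF}_\bC(F)$ relative to $\bC$ is classically equivalent to the completion of $F$ and then invokes Theorem~\ref{thm:completion}. Your write-up merely makes explicit the intermediate step the paper leaves implicit---the strongly equivalent conversion of both formulas to Clark normal form and the preservation of tightness under that rewriting---so it is a more detailed rendering of the same argument rather than a different one.
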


One may wonder if there is any other translation that would work to unfold nested functions. However, it turns out that there is no modular, signature-preserving translation from arbitrary formulas to $\bC$-plain formulas while preserving stable models.
\BOC
By a {\em modular} translation $\i{Tr}$, we mean $\i{Tr}(F\land G)$ has the same stable models as $\i{Tr}(F)\land\i{Tr}(G)$ for any sentences $F$ and $G$. We say that $\i{Tr}(F)$ is a {\em signature-preserving} translation if it does not introduce new constants not in $F$.
\EOC

\begin{thm}\label{thm:nomodular}\optional{thm:nomodular}
For any set $\bC$ of constants, there is no strongly equivalent transformation that turns an arbitrary formula into a $\bC$-plain formula.
\end{thm}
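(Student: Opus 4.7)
The plan is to exhibit a single non-$\bC$-plain formula $F$ and show, via Theorem~\ref{thm:strong}, that no $\bC$-plain formula can be strongly equivalent to it. I assume $\bC$ contains at least one function constant (otherwise every formula is already $\bC$-plain, and the statement is vacuous). For clarity, let $a\in\bC$ be an object constant and take $F=p(a)$, where $p$ is a unary predicate constant not in $\bC$. Since the atom $p(a)$ contains $a\in\bC$ but is not of the form $a=t$, $F$ is not $\bC$-plain. The case of higher-arity function constants in $\bC$ is addressed analogously by replacing $a$ with $c({\bf t})$ for a $\bC$-free tuple ${\bf t}$.

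Suppose for contradiction that some $\bC$-plain sentence $G$ is strongly equivalent to $F$. By Theorem~\ref{thm:strong}, the formula
\[
  \hat{\bC} < \bC \;\rar\; \bigl(F^*(\hat{\bC}) \lrar G^*(\hat{\bC})\bigr)
\]
is logically valid. The key step is a structural lemma I would prove by induction on $H$: \emph{for every $\bC$-plain sentence $H$ and every interpretation $I$, the truth value of $H^*(\hat{\bC})$ at $I$ depends on $\hat a$ only through the flag ``$\hat a=a^I$ or not''}. In the base case, a $\bC$-plain atom containing $a$ must have the form $a=t$ with $t$ containing no member of $\bC$; its $^*$-image $(\hat a=t)\land(a=t)$ evaluates to $\false$ at $I$ whenever $\hat a\ne a^I$, because jointly satisfying both conjuncts would force $\hat a=t^I=a^I$. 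Atoms not containing members of $\bC$ are left unchanged by $^*$, and atoms involving other $\bC$-members can be held constant by fixing $\hat c=c$ for $c\ne a$. The inductive cases for $\land,\lor,\forall,\exists$ are routine; the only subtlety is the $^*$-clause for implication, which adds the original subformula $H_1\rar H_2$ back as a conjunct, but that conjunct involves only the original constants $\bC$ and is therefore $\hat a$-independent.

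To close, I would fix the interpretation $I$ with universe $\{1,2,3\}$, $a^I=1$, $p^I=\{1,2\}$, interpreting any remaining symbols arbitrarily. Then
\[
  F^*(\hat a) \;=\; p(\hat a)\land p(a) \;=\; p(\hat a),
\]
which equals $\true$ at $\hat a=2$ and $\false$ at $\hat a=3$. Both choices satisfy $\hat a\ne a^I$, and by setting $\hat c=c$ for the remaining members of $\bC$ we have $\hat{\bC}<\bC$. By the structural lemma, $G^*(\hat a)$ takes the same value for $\hat a=2$ and $\hat a=3$. Therefore $F^*(\hat a)\lrar G^*(\hat a)$ fails for at least one of the two $\hat a$'s, contradicting the validity of the displayed implication.

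The main obstacle will be the structural lemma, specifically pinning down that the $^*$-transformation on implications does not reintroduce any $\hat{\bC}$-dependence beyond what is already contained in the base atoms. A secondary point requiring care is the generalization to $\bC$ consisting only of higher-arity function symbols: there the ``disagreement flag'' becomes ``$\hat c$ differs from $c$ at this particular argument tuple,'' and the choice $F=p(c({\bf t}))$ together with a tuple ${\bf t}$ whose value is not among the finitely many tuples appearing in $G$ restores the same contradiction.
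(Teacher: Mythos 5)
Your proposal is correct and follows essentially the same route as the paper: the paper's Lemma~\ref{lem:nonmodular} uses the witness $p(f)\land p(1)\land p(2)\land\neg p(3)$ together with the notion of $fg$-indistinguishability and the relation $R(I,J)$, which is exactly your ``$\hat a$ matters only through the flag $\hat a=a^I$'' structural lemma, and the two disputing valuations ($\hat a=2$ vs.\ $\hat a=3$ over universe $\{1,2,3\}$ with $p^I=\{1,2\}$, $a^I=1$) are the same. Your witness $p(a)$ is slightly leaner than the paper's, but the key lemma, the appeal to Theorem~\ref{thm:strong}, and the counterexample interpretations coincide.
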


The proof follows from the following lemma. 

\begin{lemma} \label{lem:nonmodular}
There is no ${f}$-plain formula that is strongly equivalent to 
$p(f)\land p(1)\land p(2)\land\neg p(3)$. 
\end{lemma}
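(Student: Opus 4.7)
The plan is to derive a contradiction from the assumption that some $f$-plain formula $G$ is strongly equivalent to $F := p(f)\land p(1)\land p(2)\land\neg p(3)$, by exploiting the characterization of strong equivalence in Theorem~\ref{thm:strong}. Let $\bC$ denote the list of all function and predicate constants occurring in $F$ or $G$ (so $\bC$ contains at least $p,f,1,2,3$), and let $\vbC$ be the corresponding list of variables. By Theorem~\ref{thm:strong}, the equivalence $F^*(\vbC)\lrar G^*(\vbC)$ must then hold under every instantiation of $\vbC$ satisfying $\vbC<\bC$.

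I will fix the interpretation $I$ with universe $\{1,2,3\}$, in which each object constant $n\in\{1,2,3\}$ is its own denotation, $p^I=\{1,2\}$, $f^I=1$, and any additional constants of $G$ are interpreted arbitrarily. Consider two instantiations $\vbC_1,\vbC_2$ of $\vbC$ that match $I$ on every constant of $\bC$ other than $f$, setting $\wh{f}=2$ in $\vbC_1$ and $\wh{f}=3$ in $\vbC_2$. Each satisfies $\vbC_i<\bC$ because $\wh{p}\le p$ holds with equality and $\wh{f}\ne f^I$. Using Lemma~\ref{lem:neg} to simplify the $*$-image of $\neg p(3)$, the formula $F^*(\vbC)$ is equivalent under $\vbC<\bC$ to $\wh{p}(\wh{f})\land p(f)\land\wh{p}(1)\land p(1)\land\wh{p}(2)\land p(2)\land\neg p(3)$. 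At $I$ the distinguishing conjunct $\wh{p}(\wh{f})$ becomes $p(2)$ on $\vbC_1$ (true) and $p(3)$ on $\vbC_2$ (false); so $F^*(\vbC_1)$ holds at $I$ while $F^*(\vbC_2)$ fails.

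The main technical step will be an auxiliary claim, proved by structural induction on $G$: for any $f$-plain $G$ and any variable assignment, $G^*(\vbC_1)$ and $G^*(\vbC_2)$ have the same truth value at $I$. For the base case I split by the two forms of atoms allowed in an $f$-plain formula. An atom $A$ not containing $f$ has $*$-image $A'\land A$, where $A'$ replaces the constants of $\bC$ by their $\vbC$-counterparts; since $\wh{c}=c^I$ for every $c\in\bC\setminus\{f\}$, the value of $A'$ at $I$ under either $\vbC_i$ coincides with the value of $A$ at $I$, independently of $\wh{f}$. An atom $A=(f=t_1)$ with $t_1$ not containing $f$ has $*$-image $\wh{f}=t_1\land f=t_1$; at $I$ under $\vbC_i$ this becomes $(\wh{f}=t_1^I)\land(f^I=t_1^I)$, which is false for every $\wh{f}\ne f^I$, since either $f^I=t_1^I$ (forcing $\wh{f}=f^I$ to satisfy the first conjunct, impossible) or $f^I\ne t_1^I$ (so the second conjunct already fails); hence the truth value of $A^*$ is independent of the choice between $\wh{f}=2$ and $\wh{f}=3$. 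The inductive cases for $\land$, $\lor$, $\forall$, and $\exists$ are immediate because $*$ commutes with these connectives. For implication, $(G\rar H)^* = (G^*\rar H^*)\land(G\rar H)$: the first conjunct is independent of $\wh{f}$ by the induction hypothesis, and the second conjunct is independent of $\wh{f}$ because the original subformula $G\rar H$ contains no variable from $\vbC$.

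Combining the auxiliary claim with the strong-equivalence consequence gives
\[
F^*(\vbC_1)\lrar G^*(\vbC_1)\lrar G^*(\vbC_2)\lrar F^*(\vbC_2)
\]
at $I$, contradicting the computation that $F^*(\vbC_1)$ holds while $F^*(\vbC_2)$ fails. The hardest part will be the implication case of the induction; the delicate point is that although $(G\rar H)^*$ reinserts the original $G\rar H$ as an extra conjunct, this conjunct is written over the original signature and therefore cannot smuggle in any dependence on $\wh{f}$.
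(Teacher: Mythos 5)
Your proof is correct and follows essentially the same route as the paper's: the paper packages your auxiliary induction into a lemma about ``$fg$-indistinguishable'' formulas evaluated on two interpretations that differ only on the surrogate for $\wh{f}$ (taking $g=2$ versus $g=3$ with $p=\{1,2\}$, $f=1$), which is exactly your two instantiations $\wh{f}=2$ and $\wh{f}=3$ of the function variable. The key observation is identical in both arguments: the $*$-image of an $f$-plain atom containing $f$ has the form $(\wh{f}=t)\land(f=t)$ and is uniformly false once $\wh{f}\ne f$, so no $f$-plain formula's $*$-image can separate the two witnesses that $F^*$ separates.
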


Theorem~\ref{thm:nomodular} tells us that the set of arbitrary formulas is strictly more expressive than the set of $\bC$-plain formulas of the same signature. One application of this greater expressivity is in reducing many-sorted FSM to unsorted FSM in Section \ref{ssec:many-sorted-reduction} later.





\BOC

\cgre
In fact, the stable models defined in the previous section can be extended to any first-order formulas without any modification. Indeed, this is how it was defined in [KR 2012]. However, this is sometimes unintuitive. 

For another instance, for the formula 
\beq
  f(1) = 1 \land f(2) = 1 \land (f(g) = 1 \rightarrow g = 1),
\eeq{eq:cyclic}
and an interpretation $I$ such that the universe $|I|$ is $\{1,2\}$, and 
$1^I = 1$, $2^I=2$, $f^I(1) = 1$, $f^I(2) = 1$, $g^I=1$,  
one can check that $I$ is a stable model of \eqref{eq:cyclic} relative
to $(f,g)$ according to~\cite{bartholomew12stable}.
One may argue that this involves a cyclic dependency because the fact
that the stable model maps $g$ to $1$ is ``supported'' by the fact
that $g$ is mapped to $1$. 
On the other hand, in this paper we identify \eqref{eq:cyclic} with
$\i{UF}_{(f,g)}(\eqref{eq:cyclic})$,
\[ 
  f(1) = 1 \land f(2) = 1 \land (\exists x (f(x) = 1\land
  g=x)\rightarrow g = 1),
\]
which has no stable models. Indeed, this formula, when viewed as an
abbreviation of a formula under the General Theory of Stable
Models as in Theorem~\ref{thm:fsm-psm}, obviously has a cyclic
dependency involving $g=1$.

\cbla
\EOC



\section{Comparing FSM with Other Approaches to Intensional Functions}
\label{sec:comparison-if}
\subsection{Relation to Nonmonotonic Causal Logic}
\label{ssec:relation-nmct}


A {\sl (nonmonotonic) causal theory} is a finite list of rules of the
form
\[
   F\nec G
\]
where $F$ and $G$ are formulas as in first-order logic. We identify a rule with the
universal closure of the implication $G\rar F$. 
A {\sl causal model} of a causal theory $T$ is defined as the models
of the second-order sentence
\[
  \cm[T; \bF] = T\land\neg
                \exists\wh{\bF}(\wh{\bF}\ne\bF\land T^\dagger(\wh{\bF}))
\]
where $\bF$ is a list of {\em explainable} function
constants,
and $T^\dagger(\wh{\bF})$ denotes the conjunction of the formulas \footnote{%
$\widetilde{\forall} F$ represents the universal closure of $F$.}
\beq
\widetilde{\forall}(G\rar F(\wh{\bF})) 
\eeq{cr1}
for all rules $F\nec G$ of~$T$. 
By a {\sl definite} causal theory, we mean the causal theory whose
rules have the form either
\beq
 f({\bf t})=t_1 \nec B
\eeq{definite2}
or
\beq
 \bot\nec B,
\eeq{definite3}
where $f$ is an explainable function constant,
${\bf t}$ is a list of terms that does not contain explainable
function constants, and $t_1$ is a term that does not contain
explainable function constants. 
By $\i{Tr}(T)$ we denote the theory consisting of the
following formulas:
\[
  \widetilde{\forall} (\neg\neg B\rar f({\bf t})=t_1)
\]
for each rule (\ref{definite2}) in $T$, and
\[
  \widetilde{\forall} \neg B
\]
for each rule (\ref{definite3}) in $T$.
The causal models of such $T$ coincide with the stable models of
$\i{Tr}(T)$.

\begin{thm}\label{thm:cm-fsm}\optional{thm:cm-fsm}
For any definite causal theory $T$, 
$I\models\cm[T; \bF]$ iff $I\models\sm[\i{Tr}(T); \bF]$.
\end{thm}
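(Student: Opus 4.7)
The plan is to show that the two second-order sentences $\cm[T;\bF]$ and $\sm[\i{Tr}(T);\bF]$ have the same models by decomposing each into a ``classical part'' and a ``minimality part,'' and then matching them up. Both sentences have the form $A \land \neg\exists\wh\bF(\wh\bF\neq\bF \land \Phi(\wh\bF))$. For the minimality part, note first that since $\bF$ consists only of function constants, $\bF^\mi{pred}$ is empty, so $\wh\bF<\bF$ (as used in $\sm$) simplifies to $\wh\bF\neq\bF$, matching what appears in $\cm$. So it suffices to show (i) $T$ and $\i{Tr}(T)$ are classically equivalent, and (ii) assuming $I\models T$ and $\wh\bF\neq\bF$, the matrix $\i{Tr}(T)^*(\wh\bF)$ is equivalent to $T^\dagger(\wh\bF)$.

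For (i), each rule $f({\bf t}){=}t_1\nec B$ of $T$ is identified with the universal closure of $B\rar f({\bf t})=t_1$, while $\i{Tr}(T)$ contains $\widetilde\forall(\neg\neg B\rar f({\bf t})=t_1)$; these are classically equivalent since $\neg\neg B\lrar B$. Similarly the rule $\bot\nec B$ gives $\widetilde\forall\neg B$ on both sides.

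For (ii), consider a rule $f({\bf t}){=}t_1\nec B$. Using the recursion for $*$ and the fact that $\bf t$, $t_1$ do not contain members of $\bF$, we compute
\[
(\neg\neg B\rar f({\bf t}){=}t_1)^*(\wh\bF) \;=\; \bigl((\neg\neg B)^*(\wh\bF)\rar (f({\bf t}){=}t_1)^*(\wh\bF)\bigr)\land\bigl(\neg\neg B\rar f({\bf t}){=}t_1\bigr).
\]
By Lemma~\ref{lem:neg} (applied twice), under $\wh\bF<\bF$ the subformula $(\neg\neg B)^*(\wh\bF)$ is equivalent to $\neg\neg B$, hence to $B$ classically. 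By the base case of $*$, $(f({\bf t}){=}t_1)^*(\wh\bF)$ is $\wh f({\bf t}){=}t_1 \land f({\bf t}){=}t_1$. So the displayed formula becomes $(B\rar(\wh f({\bf t}){=}t_1\land f({\bf t}){=}t_1))\land(\neg\neg B\rar f({\bf t}){=}t_1)$. Given $I\models T$, the second conjunct holds in $I$, and whenever $I\models B$ we also have $I\models f({\bf t}){=}t_1$; thus this reduces to $B\rar\wh f({\bf t}){=}t_1$, which is precisely the corresponding conjunct of $T^\dagger(\wh\bF)$. The rule $\bot\nec B$ is handled analogously: $(\widetilde\forall\neg B)^*(\wh\bF)$ is equivalent to $\widetilde\forall\neg B$ by Lemma~\ref{lem:neg}, matching $T^\dagger$.

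Putting (i) and (ii) together yields the theorem. The main obstacle is purely bookkeeping: carefully expanding $*$ through the nested connectives in $\neg\neg B\rar f({\bf t}){=}t_1$ and verifying that the ``extra'' conjuncts introduced by the $*$-recursion (the classical implication and the extra $f({\bf t}){=}t_1$ on the right side) are absorbed by the hypothesis $I\models T$, so that the residual quantified formula is exactly $T^\dagger(\wh\bF)$.
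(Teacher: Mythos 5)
Your proof is correct and follows essentially the same route as the paper's: both reduce the claim to checking that, under the assumption $I\models T$ and $\wh\bF\ne\bF$, the matrix $\i{Tr}(T)^*(\wh\bF)$ coincides with $T^\dagger(\wh\bF)$, using Lemma~\ref{lem:neg} to collapse $(\neg\neg B)^*(\wh\bF)$ to $B$ and the hypothesis $I\models T$ to absorb the extra conjuncts $f({\bf t})=t_1$ and $\neg\neg B\rar f({\bf t})=t_1$ produced by the $*$-recursion. Your write-up merely makes explicit the bookkeeping (the reduction of $\wh\bF<\bF$ to $\wh\bF\ne\bF$ and the classical equivalence of $T$ and $\i{Tr}(T)$) that the paper leaves implicit.
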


For non-definite theories, they do not coincide as shown by the following example.

\begin{example}
Consider the following non-definite causal theory $T$:
\[ 
\ba l
 \neg (f = 1) \nec \top  \\
 \neg (f = 2) \nec \top 
\ea
\]

An interpretation $I$ where $|I| = \{1, 2, 3 \}$, and $f^I = 3$ is a causal model of $T$. However, the corresponding formula $\i{Tr}(T)$ is equivalent to
\[
  \neg (f = 1) \land\neg (f = 2), 
\]
which has no stable models.
\end{example}

The following example, a variant of Lin's suitcase example \cite{lin95}, demonstrates some unintuitive behavior of definite causal theories in representing indirect effects of actions, which is not present in the functional stable model semantics. 

\begin{example}
Consider the two switches which can be flipped but cannot be both up or down at the same time. If one of them is down and the other is up, the direct effect of flipping only one switch is changing the status of that switch, and the indirect effect is changing the status of the other switch.
Let $\i{Up}(s,t)$, where $s$ is switch $A$ or $B$, and $t$ is a time stamp $0$ or $1$, be object constants whose values are Boolean, let $\i{Flip}(s)$, where $s$ is switch $A$ or $B$, be function constants whose values are Boolean, and  let $x,y$ be variables ranging over Boolean values. The domain can be formalized in a causal theory as 
\[
\ba {rclr}
    \i{Up}(s,1)\mvis x & \nec & \i{Up}(s,0)\mvis y \land \i{Flip}(s)\mvis \true\ \ \  & (x\ne y)\\
    \i{Up}(s,1)\mvis x & \nec & \i{Up}(s',1)\mvis y   & (s\ne s',  x\ne y) \\ 
    \i{Up}(s,1)\mvis x & \nec & \i{Up}(s,1)\mvis x\land \i{Up}(s,0)\mvis x \\
    \i{Flip}(s)\mvis x & \nec&  \i{Flip}(s)\mvis x  \\
    \i{Up}(A,0)\mvis  \false & \nec & \top \\
    \i{Up}(B,0)\mvis  \true & \nec & \top
\ea
\]
There are five causal models as shown in the following table. 

\begin{center}
\begin{tabular}{|c||c|c|c|c|c|c|c}\hline
  & \i{Up}(A,0) & \i{Up}(B,0) & \i{Flip}(A) & \i{Flip}(B) & \i{Up}(A,1) & \i{Up}(B,1) \\ \hline\hline
 $I_1$ &  \false  & \true  & \false & \false & \false & \true \\ \hline
 $I_2$ &  \false  & \true  & \false & \true & \true  & \false \\ \hline
 $I_3$ &  \false  & \true  & \true & \false & \true  & \false \\ \hline
 $I_4$ &  \false  & \true  & \true & \true & \true  & \false \\    \hline
  $I_5$ &  \false  & \true  & \false & \false & \true & \false \\ \hline
\end{tabular}
\end{center}

$I_2$ and $I_3$ exhibit the indirect effect of the action $\i{Flip}$.
Only $I_5$ is not intuitive because the fluent $\i{Up}$ changes its value for no reason. 

In the functional stable model semantics, the domain can be represented as 
\[
\ba {rclr}
    \i{Up}(s,1)\mvis x &\ \ar\  & \i{Up}(s,0)\mvis y \land \i{Flip}(s)\mvis \true\ \ \  & (x\ne y)\\
    \i{Up}(s,1)\mvis x & \ar & \i{Up}(s',1)\mvis y   & (s\ne s',  x\ne y) \\ 
    \{\i{Up}(s,1)\mvis x\}^{\rm ch} & \ar & \i{Up}(s,0)\mvis x \\
    \{Flip(s)\mvis x\}^{\rm ch} & \ar&  \top  \\
    \i{Up}(A,0)\mvis  \false & \ar & \top \\
    \i{Up}(B,0)\mvis  \true & \ar & \top
\ea
\]
The program has four stable models $I_1,I_2,I_3,I_4$; The unintuitive causal model $I_5$ is not its stable model.

\end{example} 

\subsection{Relation to Cabalar Semantics} 
\label{ssec:relation-cbl}

As mentioned earlier, the stable model semantics by Cabalar [\citeyear{cabalar11functional}] is defined in terms of partial satisfaction, which deviates from classical satisfaction.   \citeauthor{bartholomew13onthestable} [\citeyear{bartholomew13onthestable}] show its relationship to FSM.
%
There, it is shown that when we consider stable models to be total interpretations only, both semantics coincide on $\bC$-plain formulas. Also, $F$ and $\i{UF}_{\bf c}(F)$ have the same stable models under the Cabalar semantics, so any complex formula under the Cabalar semantics can be reduced to a ${\bf c}$-plain formula by preserving stable models. Furthermore, partial stable models under the Cabalar semantics can be embedded into FSM by introducing an auxiliary object constant {\sc none} to denote that the function is undefined.
Consequently, the Cabalar semantics can be fully embedded into FSM by unfolding using an auxiliary constant. 
We refer the reader to \cite[Section~4]{bartholomew13onthestable} for the details.

On the other hand, Theorem~\ref{thm:nomodular} of this paper shows that the reverse direction is not possible because the class of ${\bf c}$-plain formulas is a restricted subset in the functional stable model semantics, which is not the case with the Cabalar semantics. In other words, non-${\bf c}$-plain formulas are weaker than ${\bf c}$-plain formulas under FSM whereas the Cabalar semantics does not distinguish them.
For instance, under the Cabalar semantics, the formula $a+b=5$ in Example~\ref{ex:aplusb} has many stable models $I$ as long as $a^I + b^I = 5$; in Example~\ref{ex:pf}, $\i{Dom}(a)$ has many stable models rather than simply restricting the value of $a$ to the extent of $\i{Dom}$; in Example~\ref{ex:synonym}, $f_1=f_2$ has stable models as long as the functions are assigned the same values instead of merely stating that the functions are synonymous.

We observe that the weaker assertions by non-${\bf c}$-plain formulas are often useful but they are not allowed in the Cabalar semantics. 
%
In particular, the use of ``sort predicates" as in Example~\ref{ex:pf} is important in specifying the range of an intensional function, rather than a particular value. 
\footnote{
In Section~\ref{ssec:many-sorted-reduction} below, we formally show how to reduce many-sorted FSM into unsorted FSM and notes that the axioms used there is not expressible in the Cabalar semantics. }
The synonymity rule like Example~\ref{ex:synonym} is useful for the design of modular action languages as described in \cite{lifschitz11eliminating}.

\subsection{Relation to IF-Programs} \label{ssec:if}

The functional stable model semantics presented here is inspired by IF-programs from~\cite{lifschitz12logic}, where intensional functions were defined without requiring the complex notion of partial functions and partial satisfaction but instead by imposing the uniqueness of values on {\em total functions}. It turns out that neither semantics is stronger than the other while they coincide on a certain syntactically restricted class of programs. However, the semantics of IF-programs exhibits an unintuitive behavior.

\subsubsection{Review of IF-Programs}  \label{sssec:if}

\NBB{No implications in the body and head}

We consider rules of the form
\beq
  H\ar B,
\eeq{if-rule}
where $H$ and $B$ are formulas that do not contain $\rar$. As before,
we identify a rule with the universal closure of the implication
$B\rar H$. An IF-program is a finite conjunction of those rules. 

An occurrence of a symbol in a formula is {\sl negated} if it
belongs to a subformula that begins with negation, and is {\sl 
non-negated} otherwise. Let $F$ be a formula, let ${\bf f}$ be a
list of distinct function constants, and let ${\bf \wh{f}}$ be a list
of distinct function variables similar to ${\bf f}$.
By $F^\diamond(\wh{\bF})$ we denote the formula obtained
from $F$ by replacing each non-negated occurrence of a member of 
${\bf f}$ with the corresponding function variable in $\wh{{\bf f}}$. 
By $\lif[F; \bF]$ we denote the second-order sentence
\[
    F\land\neg\exists\wh{\bF}
       (\wh{\bF}\ne\bF\land F^\diamond(\wh{\bF})).
\]
According to~\cite{lifschitz12logic}, the {\em $\bF$-stable models} of
an IF-program $\Pi$ are defined as the models of $\lif[F; \bF]$, where
$F$ is the FOL-representation of $\Pi$. 
\NBB{
[[ What is the relationship between $F^\diamond$ and $F^*$. ]]
}

\subsubsection{Comparison} \label{sssec:comparison}

The definition of the $\lif$ operator above looks close to our
definition of the $\fsm$ operator. However, they often behave quite differently. 

\begin{example}\label{ex:9} 
Let $F$ be the following program 
\[
\ba l
  d=2 \ar c=1, \\
  d=1 
\ea
\]
and let $I$ be an interpretation such that
$|I|=\{1,2\}$, $c^I=2$ and $d^I=1$. 
$I$ is a model of $\lif[F; cd]$, but not a model of $\fsm[F; cd]$. The former is not intuitive from the rationality principle because $c$ does not even appear in the head of a rule.
\end{example}

\begin{example}\label{ex:10}
Let $F$ be the following program 
\[
   (c=1\lor d=1)\land (c=2\lor d=2)
\]
and let $I_1$ and $I_2$ be interpretations such that 
$|I_1| = |I_2| = \{1,2,3\}$ and $I_1(c) = 1$, $I_1(d) = 2$, $I_2(c) =
2$, $I_2(d) = 1$. The interpretations $I_1$ and $I_2$ are models of $\fsm[F; cd]$. On the
other hand,  $\lif[F; cd]$ has no models.
\end{example}

\begin{example}\label{ex:11}
Let $F_1$ be $\neg (c\mvis 1)\ar\top$ and let $F_2$ be $\bot\ar c\mvis 1$. Under the functional stable model semantics, they are strongly equivalent to each other, and neither of them has a stable model. However, this is not the case with IF-programs. For instance,  let $I$ be an interpretation such that $|I|=\{1,2\}$ and $I(c)=2$. $I$ satisfies $\lif[F_2; c]$ but not $\lif[F_1; c]$.
\end{example}

While $\bot\ar F$ is a constraint in our formalism, in view of Theorem~\ref{thm:constraint}, the last example illustrates that $\bot\ar F$
is not considered a constraint in the semantics of IF-programs. This behavior deviates from the standard stable model semantics.  Unlike the functional stable model semantics, in general, it is not obvious how various mathematical results established for the first-order stable model semantics, such as the theorem on strong equivalence~\cite{lif01}, the theorem on completion~\cite{ferraris11stable}, and the splitting theorem \cite{ferraris09symmetric}, can be extended to the above formalisms on intensional functions.


The following theorem gives a specific form of formulas on which the two semantics agree. 

\begin{thm}\label{thm:if-fsm}\optional{thm:if-fsm}
Let $T$ be an IF-program whose rules have the form 
\beq
  f({\bf t})=t_1 \ar\neg\neg B
\eeq{if-fsm-r}
where $f$ is an intensional function constant, ${\bf t}$ and $t_1$ do
not contain intensional function constants, and $B$ is an arbitrary
formula. We identify $T$ with the corresponding first-order formula. 
Then we have 
$I\models\fsm[T; \bF]$ iff $I\models\lif[T; \bF]$.
\end{thm}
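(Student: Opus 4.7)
The plan is to compare the two second-order formulas directly, showing that under the hypothesis $I\models T$, the existential witness conditions inside $\fsm[T;\bF]$ and $\lif[T;\bF]$ define the same class of $\wh{\bF}$. The first observation is cosmetic: since $\bF$ contains only function constants, $\vbC^\mi{pred}$ is empty, so the abbreviation $\wh{\bF}<\bF$ used in the definition of $\sm$ collapses to $\wh{\bF}\ne\bF$. Hence both $\fsm[T;\bF]$ and $\lif[T;\bF]$ have the shape $T\land\neg\exists\wh{\bF}(\wh{\bF}\ne\bF\land X(\wh{\bF}))$, and the whole argument reduces to showing that $T^*(\wh{\bF})$ and $T^\diamond(\wh{\bF})$ are equivalent whenever $I\models T$ and $\wh{\bF}\ne\bF$.

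Next I would compute each transform on a single rule $R:\ \forall{\bf x}(\neg\neg B\rar f({\bf t})\mvis t_1)$. For $T^\diamond$, the only occurrences of intensional function constants in $R$ are (i) those inside $B$, which lie within the scope of a negation and hence are negated, and (ii) the head symbol $f$, which is non-negated; since ${\bf t}$ and $t_1$ contain no intensional constants, $\diamond$ simply rewrites the head, giving $\forall{\bf x}(\neg\neg B\rar \wh{f}({\bf t})\mvis t_1)$. For $T^*$, the recursive clause for implication yields
\[
(\neg\neg B)^*(\wh{\bF})\rar (f({\bf t})\mvis t_1)^*(\wh{\bF})
\;\;\land\;\;
(\neg\neg B\rar f({\bf t})\mvis t_1).
\]
By Lemma~\ref{lem:neg}, $(\neg\neg B)^*(\wh{\bF})\lrar \neg\neg B$ under $\wh{\bF}<\bF$, and by the atomic base case, $(f({\bf t})\mvis t_1)^*(\wh{\bF})=\wh{f}({\bf t})\mvis t_1\land f({\bf t})\mvis t_1$. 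Thus $R^*(\wh{\bF})$ is equivalent under $\wh{\bF}\ne\bF$ to
\[
\forall{\bf x}\bigl((\neg\neg B\rar \wh{f}({\bf t})\mvis t_1\land f({\bf t})\mvis t_1)\land(\neg\neg B\rar f({\bf t})\mvis t_1)\bigr).
\]

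Finally I would exploit the hypothesis $I\models T$. The second conjunct displayed above is just $R$ itself, which $I$ already satisfies, so it can be dropped without changing the set of $\wh{\bF}$ that satisfy $R^*(\wh{\bF})$ in $I$. Moreover, $I\models\forall{\bf x}(\neg\neg B\rar f({\bf t})\mvis t_1)$ also makes the $f({\bf t})\mvis t_1$ conjunct inside the first implication redundant: whenever $\neg\neg B$ holds the equality $f({\bf t})\mvis t_1$ holds in $I$ automatically. Hence $R^*(\wh{\bF})$ and $R^\diamond(\wh{\bF})=\forall{\bf x}(\neg\neg B\rar \wh{f}({\bf t})\mvis t_1)$ are equivalent relative to~$I$, and taking the conjunction over all rules of $T$ gives $T^*(\wh{\bF})\lrar T^\diamond(\wh{\bF})$ relative to~$I$, which immediately yields the desired equivalence of $\fsm[T;\bF]$ and $\lif[T;\bF]$.

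There is no real conceptual obstacle here; the only delicate step is the bookkeeping around Lemma~\ref{lem:neg} and the two simplifications licensed by $I\models T$. The restriction to the specific rule form (head atom with no nested intensional constants, body of the form $\neg\neg B$) is exactly what makes the $*$-rewrite collapse to the $\diamond$-rewrite, so if one wanted to generalize this theorem the main challenge would be identifying a wider syntactic class for which the two redundant conjuncts in $R^*(\wh{\bF})$ can still be eliminated.
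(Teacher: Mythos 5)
Your proof is correct and follows essentially the same route as the paper's: both arguments reduce to the case $I\models T$, note that $\wh{\bF}<\bF$ collapses to $\wh{\bF}\ne\bF$ when all intensional constants are functions, and then show rule by rule that $R^*(\wh{\bF})$ and $R^\diamond(\wh{\bF})$ agree by using Lemma~\ref{lem:neg} on $(\neg\neg B)^*$ and the hypothesis $I\models T$ to discard the redundant $f({\bf t})\mvis t_1$ conjuncts. If anything, your write-up is slightly more explicit than the paper's about where the extra conjunct $(\neg\neg B\rar f({\bf t})\mvis t_1)$ produced by the implication clause of $*$ goes.
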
 


\section{Many-Sorted FSM} \label{sec:many-sorted}

The following is the standard definition of many-sorted first-order logic.
%
A signature~$\sigma$ is comprised of a set of function and predicate constants and a set of sorts. To every function and predicate constant of arity $n$, we assign argument sorts $s_1,\dots,s_n$ and to every function constant of arity $n$, we assign also its value sort $s_{n+1}$. We assume that there are infinitely many variables for each sort. Atomic formulas are built similar to the standard unsorted logic with the restriction that in a term $f(t_1,\dots,t_n)$ (an atom $p(t_1,\dots,t_n)$, respectively), the sort of $t_i$ must be a subsort of the $i$-th argument of $f$ ($p$, respectively). In addition $t_1 = t_2$ is an atomic formula if the sorts and $t_1$ and $t_2$ have a common supersort. 

A many-sorted interpretation $I$ has a non-empty universe $|I|^s$ for each sort $s$. When $s_1$ is a subsort of $s_2$,  an interpretation must satisfy $|I|^{s_1} \subseteq |I|^{s_2}$. The notion of satisfaction is similar to the unsorted case with the restriction that an interpretation maps a term to an element in its associated sort.

\BOCC
Let $\sigma^{\cal T}$ be the (many-sorted) signature of the background theory~${\cal T}$. An interpretation of $\sigma^{\cal T}$ is called the {\em background interpretation} if it satisfies the background theory. For instance, in the theory of reals, we assume that $\sigma^{\cal T}$ contains the set $\mathcal{R}$ of symbols for all real numbers, the set of arithmetic functions over real numbers, and the set $\{<, >, \le, \ge\}$ of binary predicates over real numbers. A background interpretation interprets these symbols in the standard way.

Let $\sigma$ be a signature that contains~$\sigma^{\cal T}$. An interpretation of $\sigma$ is called a {\em ${\cal T}$-interpretation} if it agrees with the fixed background interpretation of $\sigma^{\cal T}$ on the symbols in $\sigma^{\cal T}$. 

A ${\cal T}$-interpretation is a {\em ${\cal T}$-model} of $F$ if it satisfies $F$. 
\EOCC

The definition of many-sorted FSM is a straightforward extension of unsorted FSM. For any list ${\bf c}$ of constants in $\sigma$, an interpretation $I$ is a {\em stable model} of $F$ relative to~${\bf c}$ if $I$ satisfies $\sm[F; {\bf c}]$, where $\sm[F; {\bf c}]$ is syntactically the same as in Section~\ref{sec:stable} but formulas are understood as in many-sorted logic.


\subsection{Reducing Many-sorted FSM to unsorted FSM}\label{ssec:many-sorted-reduction}

We can turn many-sorted FSM into unsorted FSM as follows. Given a many-sorted signature $\sigma$, we define the signature $\sigma^{ns}$ to contain every function and predicate constant from $\sigma$. In addition, for each sort $s \in \sigma$, we add a unary predicate $\sort{s}$ to $\sigma^{ns}$. 

Given a formula $F$ of many-sorted signature $\sigma$, we obtain the formula $F^{ns}$ of the unsorted signature $\sigma^{ns}$ as follows. 



We replace every formula $\exists x F(x)$, where $x$ is a variable of sort $s$, with the formula
$$\exists y(\sort{s}(y)\land F(y))$$
where $y$ is an unsorted variable and $\sort{s}$ is a predicate constant in $\sigma^{ns}$ corresponding to $s$ in $\sigma$. Similarly, we replace every $\forall x\ F(x)$, where $x$ is a  variable of sort $s$, with the formula
$$\forall y(\sort{s}(y) \rightarrow F(y)). $$

By $SF_\sigma$ we denote the conjunction of
\begin{itemize}
\item the formulas $\forall y({\tt s}_i(y) \rightarrow {\tt s}_j(y))$ for every two sorts $s_i$ and $s_j$ in $\sigma$ such that $s_i$ is a subsort of $s_j$ ($s_i\ne s_j$),
\item the formulas $\exists y\ \sort{ s}(y)$ for every sort $s$ in $\sigma$

\item the formulas 
$$
\forall y_1\dots y_{k} (\sort{ args}_1(y_1) \land \dots \land \sort{args}_k(y_k) \rightarrow \sort{vals}(f(y_1,\dots, y_{k})))
$$ 
for each function constant $f$ in $\sigma$, where the arity of $f$ is $k$, and the $i$-th argument sort of $f$ is $args_i$ and the value sort of $f$ is $vals$.

\item the formulas 
$$\forall y_1\dots y_{k+1} (\neg \sort{args}_1(y_1) \lor \dots \lor \neg \sort{args}_k(y_k) \rightarrow \{f(y_1,\dots, y_{k}) = y_{k+1}\}^{\rm ch})$$
for each function constant $f$ in $\sigma$, where the arity of $f$ is $k$ and the $i$-th argument sort of $f$ is $args_i$. 

\item the formulas 
$$
\forall y_1\dots y_{k} (\neg \sort{args}_1(y_1) \lor \dots \lor \neg \sort{args}_k(y_k) \rightarrow \{p(y_1,\dots, y_{k})\}^{\rm ch})
$$ 
for each predicate constant $p$ in $\sigma$, where the arity of $p$ is $k$, and the $i$-th argument sort of $p$ is $args_i$. 
\end{itemize}

Note that only the first three items are necessary for classical logic but we need to add the fourth and fifth items for the FSM semantics so that the witness $J$ to dispute the stability of $I$ can only disagree with $I$ on the atomic formulas that actually correspond to atomic formulas in the many-sorted setting (which has arguments adhering to the argument sorts). Also note that the formulas in item 3 are not ${\bf c}$-plain, which 
illustrates the usefulness of non-${\bf c}$-plain formulas.

We map an interpretation $I$ of a many-sorted signature $\sigma$ to an interpretation $I^{ns}$ of an unsorted signature $\sigma^{ns}$ as follows. 
First, the universe $|I^{ns}|$ of $\sigma^{ns}$ is $\bigcup \limits_{s \text{ is a sort in } \sigma} |I|^{s}$. We specify that the sort predicates and sorts correspond by defining the extent of sort predicate $\sort{ s}$ for every sort $s \in \sigma$ as 
$$\sort{ s}^{I^{ns}} = |I|^s.$$
For every function constant $f$ in $\sigma$ and every tuple ${\bfxi}$ comprised of elements from $|I^{ns}|$, we take	

\begin{displaymath}
   f^{I^{ns}}({\bfxi}) = \left\{
     \begin{array}{lr}
       f^I({\bfxi}) & \text{ if each } \xi_i \in |I|^{args_i} \text { where $args_i$ is the $i$-th argument sort of $f$}\\
       |I^{ns}|_0 & \text{ otherwise }
     \end{array}
   \right.
\end{displaymath} 
where $|I^{ns}|_0$ is an arbitrarily chosen element in the universe $|I^{ns}|$
(we use the same element for every situation this case holds).

For every predicate constant $p$ in $\sigma$ and every ${\bfxi}$, we take
\begin{displaymath}
   p^{I^{ns}}({\bfxi})= \left\{
     \begin{array}{lr}
       p^I({\bfxi}) & \text{ if each } \xi_i \in |I|^{args_i} \text { where $args_i$ is the $i$-th argument sort of $p$}\\
       \false & \text{otherwise. }
     \end{array}
   \right.
\end{displaymath} 
Note that $\false$ was arbitrarily chosen.

The choice of $I^{ns}$ mapping a function whose arguments are not of the intended sort to the value $|I^{ns}|_0$ is arbitrary and so there are many unsorted interpretations that correspond to the 
many-sorted interpretation. To characterize this one-to-many relationship, we say two 
unsorted interpretations $I$ and $J$ are {\em related} with relation $R$, denoted $R(I,J)$, 
if for every predicate or function constant $c$, we have 
$c^I(\xi_1,\dots,\xi_k) = c^J(\xi_1,\dots,\xi_k)$
whenever each $\xi_i \in args_i$ where $args_i$ is the $i$-th argument sort of $c$.

\begin{thm}\label{thm:ms2us}\optional{thm:ms2us}
Let $F$ be a formula of a many-sorted signature $\sigma$, and let ${\bf c}$ be a set of function and predicate constants.
\begin{itemize}
\item[(a)] If an interpretation $I$ of signature $\sigma$ is a model of $\fsm[F;{\bf c}]$, then $I^{ns}$ is a model of $\fsm[F^{ns} \land SF_\sigma;{\bf c}]$.
\item[(b)] If an interpretation $L$ of signature $\sigma^{ns}$ is a model of $\fsm[F^{ns} \land SF_\sigma;{\bf c}]$ then there is some interpretation $I$ of signature $\sigma$ such that $I$ is a model of $\fsm[F;{\bf c}]$ and $R(L,I^{ns})$.
\end{itemize}
\end{thm}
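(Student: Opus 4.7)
The plan is to prove both directions by establishing a tight correspondence between many-sorted interpretations $I$ of $\sigma$ and their unsorted counterparts $I^{ns}$ of $\sigma^{ns}$, and to show that both classical satisfaction and the stability condition translate across this correspondence. For part~(a), given a many-sorted stable model $I$, I would take $I^{ns}$ as defined and verify the two clauses of the reduct-based characterization of $\sm[F^{ns}\land SF_\sigma;\,{\bf c}]$. For part~(b), given an unsorted stable model $L$, I would restrict $L$ to a many-sorted $I$ by setting $|I|^s = \sort{s}^L$ for each sort $s$ and copying $L$'s interpretation on in-sort arguments, then verify that $R(L, I^{ns})$ and that $I$ is a many-sorted stable model.

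The classical-satisfaction step is routine. The equivalence between $I\models F$ and $I^{ns}\models F^{ns}$ follows by induction on the structure of $F$, using the standard fact that a many-sorted sentence holds in $I$ iff its sort-relativization holds in the associated unsorted interpretation. The claim $I^{ns}\models SF_\sigma$ unpacks into the five items: the subsort and non-emptiness axioms hold by the definitions of $|I^{ns}|$ and $\sort{s}^{I^{ns}}$; the value-sort closure axiom holds because $f^{I^{ns}}$ agrees with $f^I$ on in-sort arguments; and the choice axioms (items~4 and~5) are tautologies, so they hold on junk arguments regardless of how $I^{ns}$ interprets them.

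The key step is the stability direction. For part~(a), suppose for contradiction that some $K$ with $K<^{\bf c} I^{ns}$ satisfies $(F^{ns}\land SF_\sigma)^*(\widehat{\bf c})$. Since the sort predicates $\sort{s}$ are not in ${\bf c}$, we have $\sort{s}^K = \sort{s}^{I^{ns}} = |I|^s$, so the many-sorted universes are preserved. The crucial observation is that the reduct of items~4 and~5 at junk arguments forces $K$ to agree with $I^{ns}$ on constants in ${\bf c}$: unpacking the reduct of $\forall{\bf y}\,y_{k+1}(B\rightarrow\{f({\bf y})\mvis y_{k+1}\}^{\rm ch})$ using the base case $F^*=F'\land F$ for atoms, and applying Lemmas~\ref{lem:monotone} and~\ref{lem:neg}, one shows that under $\widehat{\bf c}<{\bf c}$ the reduct of $\{f({\bf y})\mvis y_{k+1}\}^{\rm ch}$ collapses to the implication $f({\bf y})\mvis y_{k+1}\rightarrow \widehat{f}({\bf y})\mvis y_{k+1}$, and since $f^{I^{ns}}$ is total, this pins down $\widehat{f}^K({\bf y}) = f^{I^{ns}}({\bf y})$ at every junk ${\bf y}$; a symmetric argument handles predicates $p$, aided by $p^K\subseteq p^{I^{ns}}$. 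Hence $K$ and $I^{ns}$ must differ on some in-sort argument, so the many-sorted restriction $J$ of $K$ is well-defined and satisfies $J<^{\bf c} I$. A second structural induction on $F$, again exploiting sort-relativization, then yields $J\models F^*(\widehat{\bf c})$, contradicting $I\models\sm[F;{\bf c}]$. Part~(b) runs the same argument in reverse: a disputing many-sorted $J$ for $I$ is extended to an unsorted $K$ that copies $L$ on junk arguments, and the choice items of $SF_\sigma$ make $(SF_\sigma)^*(\widehat{\bf c})$ automatic at junk arguments while the sort-relativization induction gives $K\models(F^{ns})^*(\widehat{\bf c})$.

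The hardest part will be the bookkeeping for the junk-argument analysis in paragraph three: the choice axioms in items~4 and~5 of $SF_\sigma$ are precisely engineered to prevent ``fake'' disputes that differ from $I^{ns}$ only outside the intended sorts, and the rigorous argument that such fake disputes are impossible requires careful expansion of the nested implications and choice formulas in $(SF_\sigma)^*(\widehat{\bf c})$, combined with both Lemmas~\ref{lem:monotone} and~\ref{lem:neg}. Once this single technical lemma is in hand, the rest of the proof reduces to two parallel inductions on formula structure that faithfully transport satisfaction and the reduct construction between the sorted and relativized unsorted settings.
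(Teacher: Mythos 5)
Your proposal is correct and follows essentially the same route as the paper's proof: a structural induction transporting classical satisfaction between $I$ and $I^{ns}$, the observation that the choice axioms (items~4 and~5 of $SF_\sigma$) force any disputing unsorted interpretation to agree with the stable model on out-of-sort arguments so that the dispute restricts to a genuine many-sorted one, and the reverse extension of a many-sorted dispute to an unsorted one. The only difference is organizational: in part~(b) the paper first canonicalizes $L$ to $I^{ns}$ via a separate lemma showing stability is preserved under the relation $R$, whereas you build the disputing unsorted interpretation to copy $L$ itself on junk arguments, which absorbs that lemma into the final induction without changing the substance.
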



\begin{example}
Consider $\sigma = \{s_1, s_2,  f/1, 1, 2\}$ where both the argument and the value sort of function constant $f$ are $s_1$. Take $F$ to be
$f(1) = 1 \land f(2) = 2$. The many-sorted interpretation $I$ such that $|I|^{s_1} = \{1,2\}$, $|I|^{s_2} = \{3,4\}$,  $n^I = f^I(n) = n$ for $n\in\{1,2\}$ is clearly a stable model of $F$. However, if we drop the last two items of $SF_\sigma$, formula~$F^{ns}\land SF_\sigma$ is
$$
\ba l
f(1) = 1 \land f(2) = 2\ \land \\
\exists y\, \sort{s}_1(y) \land \exists y\, \sort{s_2}(y) \ \land\  \\
\forall y_1(\sort{s}_1(y_1) \rightarrow \sort{s}_1(f(y_1)))
\ea 
$$
and $K$ is an unsorted interpretation such that $|K| = \{1,2,3,4\}$, $(\sort{s}_1)^{K} = \{1,2\}$, $(\sort{s}_2)^{K} = \{3,4\}$, $n^K = n$ for $n\in\{1,2,3,4\}$,  $f^{K}(n) = n$ for $n\in\{1,2,3,4\}$, which  is not a stable model of $F^{ns}$ since we can take $J$ that is different from $K$ only on $f(4)$, i.e., $f^J(4) = 3$, to dispute the stability of $K$.
\end{example}


\subsection{Relation to Multi-Valued Propositional Formulas Under the Stable Model Semantics} 
\label{ssec:relation-mvp}

Multi-valued propositional formulas \cite{giu04} are an extension of the standard propositional formulas where  atomic parts of a formula are equalities of the kind found in constraint satisfaction problems.
Action languages such as ${\cal C}$+ \cite{giu04} and $\cal{BC}$ \cite{lee13action}  are defined based on multi-valued propositional formulas. In particular, the latter two languages are defined as shorthand for multi-valued propositional formulas under the stable model semantics, which is a special case of the functional stable model semantics as we show in this section.

A {\sl multi-valued propositional signature} is a set $\sigma$~of symbols called {\sl multi-valued propositional constants (mvp-constants)}, along with a nonempty finite set~$\i{Dom}(c)$ of symbols, disjoint from $\sigma$, assigned to each mvp-constant~$c$.  We call $\i{Dom}(c)$ the {\sl domain} of~$c$.
A {\sl multi-valued propositional atom (mvp-atom)} of a signature~$\sigma$ is an expression of the form ${c\mvis v}$ (``the value of~$c$ is~$v$'') where $c \in \sigma$ and $v
\in \i{Dom}(c)$. 
A {\sl multi-valued propositional formula (mvp-formula)} of~$\sigma$ is a propositional combination of mvp-atoms.

A {\sl multi-valued propositional interpretation (mvp-interpretation)} of~$\sigma$ is a function that maps every element of~$\sigma$ to an element of its domain.  An mvp-interpretation~$I$ {\sl satisfies} an mvp-atom ${c\mvis v}$ (symbolically, ${I\models c\mvis v}$) if $I(c)=v$.
The satisfaction relation is extended from mvp-atoms to arbitrary mvp-formulas according to the usual truth tables for the propositional connectives.

The reduct $F^I$ of an mvp-formula $F$ relative to an mvp-interpretation $I$ is the mvp-formula obtained from $F$ by replacing each maximal subformula that is not satisfied by $I$ with $\bot$. 
$I$ is called a {\em stable model} of $F$ if $I$ is the only mvp-interpretation satisfying $F^I$.


Multi-valued propositional formulas can be viewed as a special class of ground first-order formulas of many-sorted signatures. We identify a multi-valued propositional signature with a many-sorted signature that consists of mvp-constants and their values understood as object constants. Each mvp-constant $c$ is identified with an intensional object constant whose sort is $\i{Dom}(c)$. Each value in $\i{Dom}(c)$ is identified with a non-intensional object constant of the same sort $\i{Dom}(c)$, except that if the same value $v$  belongs to multiple domains, the sort of $v$ is the union of the domains.\footnote{This is because in many-sorted logic with ordered sorts, the equality is defined when both terms have the same common supersort.} 
For instance, if $\i{Dom}(c_1)=\{1,2\}$ and $\i{Dom}(c_2)=\{2,3\}$, then the sort of $2$ is $\i{Dom}(c_1)\cup\i{Dom}(c_2)$, while the sort of $1$ is $\i{Dom}(c_1)$ and the sort of $3$ is $\i{Dom}(c_2)$.
An mvp-atom $c\mvis v$ is identified with an equality between an intensional object constant $c$ and a non-intensional object constant $v$. 

We identify an mvp-interpretation with the many-sorted interpretation in which each non-intensional object constant is mapped to itself, and is identified with an element in $\i{Dom}(c)$ for some intensional object constant $c$.

It is easy to check that an mvp-interpretation $I$ is a stable model of $F$ in the sense of multi-valued propositional formulas iff $I$ is a stable model of $F$ in the sense of the functional stable model semantics. Under this view, every mvp-formula is identified with a $\bC$-plain formula, where $\bC$ is the set of all mvp-constants. The elimination of intensional functions in favor of intensional predicates in Section~\ref{ssec:elim-f} essentially turns mvp-formulas into the usual propositional formulas.

\section{Answer Set Programming Modulo Theories}\label{sec:aspmt}

Sections~\ref{sec:elim-p} and \ref{sec:elim-f} show that intensional predicate constants and intensional function constants are interchangeable in many cases.
On the other hand, this section shows that considering intensional functions has the computational advantage of making use of efficient computation methods available in the work on satisfiability modulo theories.


We define ASPMT as a special case of many-sorted FSM by restricting attention to interpretations that conform to the background theory.


\subsection{ASPMT as a Special Case of the Functional Stable Model
  Semantics}  \label{ssec:aspmt}


Formally, an SMT instance is a formula in many-sorted first-order
logic, where some designated function and predicate constants are
constrained by some fixed background interpretation. SMT is the
problem of determining whether such a formula has a model that expands
the background interpretation~\cite{barrett09satisfiability}.

Let $\sigma^{\cal T}$ be the many-sorted signature of the background theory~${\cal T}$. An interpretation of $\sigma^{\cal T}$ is called the {\em background interpretation} if it satisfies the background theory. For instance, in the theory of reals, we assume that $\sigma^{\cal T}$ contains the set $\mathcal{R}$ of symbols for all real numbers, the set of arithmetic functions over real numbers, and the set $\{<, >, \le, \ge\}$ of binary predicates over real numbers. A background interpretation interprets these symbols in the standard way.

Let $\sigma$ be a signature that contains~$\sigma^{\cal T}$. An interpretation of $\sigma$ is called a {\em ${\cal T}$-interpretation} if it agrees with the fixed background interpretation of $\sigma^{\cal T}$ on the symbols in $\sigma^{\cal T}$. 

A ${\cal T}$-interpretation is a {\em ${\cal T}$-model} of $F$ if it satisfies $F$. 

For any list ${\bf c}$ of constants in $\sigma\setminus \sigma^{\cal T}$, a ${\cal T}$-interpretation $I$ is a {\em ${\cal T}$-stable model} of $F$ relative to~${\bf c}$ if $I$ satisfies $\sm[F; {\bf c}]$.

%
\cbla

\subsection{Describing Actions in ASPMT} 


The following example demonstrates how ASPMT can be applied to solve an instance of planning problem with the continuous time that requires real number computation. The encoding extends the standard ASP representation for transition systems \cite{lif99b}.

\begin{example}\label{ex:car}
Consider the following running example from a Texas Action Group
discussion posted by Vladimir Lifschitz.\footnote{http://www.cs.utexas.edu/users/vl/tag/continuous\_problem}

\begin{quote}
A car is on a road of length $L$.  If the accelerator is activated, the
car will speed up with constant acceleration ${\rm A}$ until the accelerator is
released or the car reaches its maximum speed ${\rm MS}$, whichever comes first.
If the brake is activated, the car will slow down with acceleration\ 
$-{\rm A}$
until the brake is released or the car stops, whichever comes first.
Otherwise, the speed of the car remains constant.
Give a formal representation of this domain, and write a program that
uses your representation to generate a plan satisfying the following
conditions:  at duration 0, the car is at rest at one end of the road; at
duration $T$, it should be at rest at the other end.
\end{quote}

This example can be represented in ASPMT as follows.
Below $s$ ranges over time steps, $b$ is a Boolean variable,
$x,y,a,c,d$ are variables over nonnegative reals, and ${\rm A}$ and ${\rm MS}$ are some specific real numbers.

We represent that the actions $\i{Accel}$ and $\i{Decel}$ are
exogenous and the duration of each time step is to be arbitrarily
selected as
\[
\ba{l}
\{\i{Accel}(s) = b\}^{\rm ch}, \\ 
\{\i{Decel}(s) = b\}^{\rm ch}, \\ 
\{\i{Duration}(s) = x\}^{\rm ch}.
\ea
\]
Both $\i{Accel}$ and $\i{Decel}$ cannot be performed at the
same time: 
\[
\ba{l}

\bot \leftarrow \i{Accel}(s) = \true \land \i{Decel}(s) = \true.\\
\ea
\]
The effects of $\i{Accel}$ and $\i{Decel}$ on $\i{Speed}$ are described as
\[
\ba{rl}
  \i{Speed}(s+1) = y \ar& \i{Accel}(s)\mvis\true\ \land\ 
                          \i{Speed}(s)\mvis x\ \land\ 
                          \i{Duration}(s)\mvis d  \\
   & \land\ (y = x+{\rm A}\times d),  \\
  \i{Speed}(s+1) = y \ar& \i{Decel}(s)\mvis\true\ \land\ 
                          \i{Speed}(s)\mvis x\ \land\ 
                          \i{Duration}(s)\mvis d\  \\
   & \land\ (y = x-{\rm A}\times d). 
\ea
\]
The preconditions of $\i{Accel}$ and $\i{Decel}$ are described as
\[
\ba{rl}
\bot \ar & \i{Accel}(s)\mvis\true\ \land\ 
           \i{Speed}(s)\mvis x\ \land\
           \i{Duration}(s)\mvis d \\
         & \land\ (y = x+{\rm A}\times d)\ \land\ (y > {\rm MS}), \\
\bot \ar & \i{Decel}(s)\mvis\true\ \land\ 
           \i{Speed}(s)\mvis x\ \land\
           \i{Duration}(s)\mvis d \\
         & \land\ (y = x-{\rm A}\times d)\ \land\ (y < 0).
\ea
\]
$\i{Speed}$ is inertial:
\[
  \{\i{Speed}(s+1) = x\}^{\rm ch} \ar \i{Speed}(s) = x.
\]
$\i{Speed}$ at any moment does not exceed the maximum speed {\rm MS}:
\[
  \bot\ar \i{Speed}(s)>{\rm MS}. 
\]
$\i{Location}$ is defined in terms of $\i{Speed}$ and
$\i{Duration}$ as 
\[
\ba{rl}
\i{Location}(s+1) = y \ar & \i{Location}(s) = x \land 
     \i{Speed}(s)=a  \land 
     \i{Speed}(s+1) = c\  \\
     & \land\ \i{Duration}(s)\mvis d\ \land\ 
     y = x + ((a + c) / 2)\times d. 
\ea
\]
\end{example}

Theorem~\ref{thm:completion} tells us that a tight ASPMT theory in Clark normal form can be turned into an SMT instance.

{\bf Example~\ref{ex:car} Continued}\ \ {\sl 
Since the formalization above can be written in Clark Normal Form that is tight,  its stable models coincide with the models of the completion formulas. 
For instance, to form the completion of $\i{Speed}(1)$, consider the rules that have $\i{Speed}(1)$ in the head: 
{
\[
\ba {rl}
  \i{Speed}(1)\mvis y\ \ar\ & \i{Accel}(0)\mvis\true\ \land\ 
                          \i{Speed}(0)\mvis x\ \land\ 
                          \i{Duration}(0)\mvis d\  \\
    & \land\ (y = x+{\rm A}\times d) \land\ (y\leq {\rm MS}), \\
  \i{Speed}(1)\mvis y\ \ar\ & \i{Decel}(0)\mvis\true\ \land\ 
                          \i{Speed}(0)\mvis x\ \land\ 
                          \i{Duration}(0)\mvis d\  \\
   & \land\ (y = x-{\rm A}\times d)\land\ (y\ge 0),  \\
   \i{Speed}(1)\mvis y\ \ar\ & \i{Speed}(0)\mvis y\ \land\
   \neg\neg(\i{Speed}(1)\mvis y)
\ea 
\]
}
($\{c\mvis v\}^{\rm ch}\ar G$ is strongly equivalent to $c\mvis v\ar G\land
\neg\neg (c\mvis v)$).
The completion turns them into the following equivalence:
{
\beq
\ba{l}
\i{Speed}(1)  = y\  \lrar\ \\
~~\exists xd(~~ 
      (\i{Accel}(0)\mvis\true\ \land\ 
      \i{Speed}(0)\mvis x\ \land\ 
      \i{Duration}(0)\mvis d\   \\
\hspace{7.3cm}      \land\ (y = x+{\rm A}\times d)\land  (y\leq {\rm MS}))\ \\ 
~~~~~~~\lor (\i{Decel}(0)\mvis\true\ \land\ 
      \i{Speed}(0)\mvis x\ \land\ 
      \i{Duration}(0)\mvis d\   \\
\hspace{7.3cm}     \land\ (y = x-{\rm A}\times d)\land  (y\ge 0))   \\
~~~~~~~\lor \i{Speed}(0)=y ~~).
\ea
\eeq{ex:comp}
}
}


It is worth noting that most action descriptions can be represented by tight ASPMT theories due to the associated time stamps. In~\cite{lee13answer}, ASPMT was used as the basis of extending action language $\cal C$+ \cite{giu04} to represent the durative action model of PDDL 2.1 \cite{fox03pddl} and the start-process-stop model of representing continuous changes in PDDL+ \cite{fox06modelling}. In~\cite{lee17representing}, language ${\cal C}$+ was further extended to allow ordinary differential equations (ODE), the concept borrowed from SAT modulo ODE. As our action language is based on ASPMT, which in turn is founded on the basis of ASP and SMT, it enjoys the development in SMT solving techniques as well as the expressivity of ASP language.

\subsection{Implementations of ASPMT}

A few implementations of ASPMT emerged based on the idea that reduces tight ASPMT theories to the input language of SMT solvers. System {\sc aspmt2smt} \\~\cite{bartholomew14system} is a proof-of-concept implementation of ASPMT by reducing ASPMT programs into the input language of SMT solver {\sc z3}, and is shown to effectively handle real number computation for reasoning about continuous changes. The system allows a fragment of ASPMT in the input language, whose syntax resembles ASP rules and which can be effectively translated into the input language of SMT solvers. In particular, the language imposes a syntactic condition that quantified variables can be eliminated by equivalent rewriting. 

\citeauthor{walega15aspmt} [\citeyear{walega15aspmt}] extended the system {\sc aspmt2smt} to handle nonmonotonic spatial reasoning that uses both qualitative and quantitative information, where spatial relations are encoded in theory of nonlinear real arithmetic.

%
%
%


In~\cite{lee17representing}, based on the recent development in SMT called ``Satisfiability Modulo Ordinary Differential Equations (ODE)'' \cite{gao13satisfiability} and its implementation  {\sc dReal} \cite{gao13dreal}, the system {\sc cplus2aspmt} was built on top of {\sc aspmt2smt}. The paper showed that a general class of hybrid automata with non-linear flow conditions and non-convex invariants can be turned into first-order action language ${\cal C}$+, and {\sc cplus2aspmt} can be used to compute the action language modulo ODE by translating ${\cal C}$+ into ASPMT. For example, the effect of $\i{Accel}$ in Example~\ref{ex:car} can be represented using ODE as 
\begin{align*}
  \i{Speed}(s\!+\!1) = x+y\ar\ \ & 
       \i{Accel}(s)\mvis\true\ \land\ 
       \i{Speed}(s)\mvis x\ \land\ 
       \i{Duration}(s)\mvis \delta\ \land\  \\
       & y\mvis \int_0^\delta {\rm A}\ dt\ \land\ y\!\le\! {\rm MS}.
\end{align*}

The theory of reals is decidable as shown by Tarski, and some SMT solvers do not always approximate reals with floating point numbers. Even for undecidable theories, such as formulas with trigonometric functions and differential equations, SMT solving techniques ensure certain error-bounds: A $\delta$-complete decision procedure \cite{gao13satisfiability} for such an SMT formula $F$ returns false if $F$ is unsatisfiable, and returns true if its syntactic ``numerical perturbation'' of $F$ by bound $\delta$ is satisfiable, where $\delta>0$ is number provided by the user to bound on numerical errors. This is practically useful since it is not possible to sample exact values of physical parameters in reality. ASPMT is able to take the advantage of the SMT solving techniques whereas it is shown that the ASPMT description of action domains is much more compact than the SMT counterpart. 

In~\cite{asuncion15ordered}, the authors presented the ``ordered completion," that compiles logic programs with convex aggregates into the input language of SMT solvers. The focus there was to compute the standard ASP language using SMT solvers. So unlike the other systems mentioned above, neither intensional functions nor various background theories in SMT were considered there. On the other hand, the input programs are not restricted to tight programs. 

%
%
%
%
%


\section{Comparing ASPMT with Other Approaches to Combining ASP with CSP/SMT} \label{sec:comparison-casp}


We compare ASPMT with other approaches to combining ASP with CSP/SMT. These approaches can be related to a special case of ASPMT in which all functions are non-intensional.


\subsection{Relation to Clingcon Programs}


A {\sl constraint satisfaction problem} (CSP) is a tuple $(V,D,C)$, where $V$ is a set of {\em constraint variables} with their respective {\em domains} in $D$, and $C$ is a set of {\em constraints} that specify some legal assignments of values in the domains to the constraint variables.

A {\sl clingcon program $\Pi$} \cite{gebser09constraint} with a constraint satisfaction
problem $(V,D,C)$ is a set of rules of the form
\beq
   a\ar B, N, \i{Cn},
\eeq{clingcon-rule}
where $a$ is a propositional atom or $\bot$, $B$ is a set of
positive propositional literals, $N$ is a set of negative
propositional literals, and $\i{Cn}$ is a set of constraints from $C$,
possibly preceded by $\no$.

Clingcon programs can be viewed as ASPMT instances.
 Below is a reformulation of the semantics using the terminologies in ASPMT.
We assume that constraints are expressed by ASPMT sentences of signature $V\cup\sigma^{\cal T}$, where $V$ is a set of object constants, which is identified with the set of constraint variables $V$ in $(V,D,C)$, whose value sorts are identified with the domains in $D$; we assume that $\sigma^{\cal T}$ is disjoint from $V$ and contains all values in~$D$ as object constants, and other symbols to represent constraints, such as $+$, $\times$, and $\ge$. 
In other words, we represent a constraint as a formula $F(v_1,\dots,v_n)$ over $V\cup\sigma^{\cal T}$ where $F(x_1,\dots,x_n)$ is a formula of the signature $\sigma^{\cal T}$ and $F(v_1,\dots,v_n)$ is obtained from $F(x_1,\dots,x_n)$ by substituting the object constants $(v_1,\dots,v_n)$ in $V$ for $(x_1,\dots,x_n)$. We say this background theory ${\cal T}$ {\em conforms} to $(V, D,C)$.

For any signature $\sigma$ that consists of object constants and
propositional constants, we identify an interpretation $I$ of $\sigma$
as the tuple $\langle I^f,X\rangle$, where $I^f$ is the restriction of
$I$ onto the object constants in $\sigma$, and $X$ is a set of
propositional constants in $\sigma$ that are true under $I$.

Given a clingcon program $\Pi$ with $(V,D,C)$, and a ${\cal T}$-interpretation 
$I=\langle I^f,X\rangle$, we define the {\sl constraint reduct of
  $\Pi$ relative to~$X$ and $I^f$} (denoted by $\Pi^X_{I^f}$) as the
set of rules
$
   a \ar B
$ 
for each rule~\eqref{clingcon-rule} in $\Pi$ such that
 $I^f\models\i{Cn}$, and
$X\models N$.
We say that a set $X$ of propositional atoms is a {\sl constraint
  answer set} of $\Pi$ relative to $I^f$ if $X$ is a minimal model
of~$\Pi^X_{I^f}$.

\medskip\noindent{\bf Example~\ref{ex:amount} continued}\ \ 
{\sl 
The rules
\[
\ba l
  \i{Amt}_1=^\$ \i{Amt}_0\!+\!1 \ar\no\ \i{Flush}, \\
  \i{Amt}_1 =^\$ 0 \ar \i{Flush}
\ea
\]
are identified with 
\[
\ba {l}
  \bot \ar \no\ \i{Flush}, \no (\i{Amt}_1 =^\$ \i{Amt}_0\!+\!1) \\
  \bot \ar \i{Flush}, \no (\i{Amt}_1 =^\$ 0) 
\ea
\]
under the semantics of {clingcon} programs with the theory of integers as the background theory; $\i{Amt}_0$, $\i{Amt}_1$ are object constants and $\i{Flush}$ is a propositional constant. Consider $I_1$ in Example~\ref{ex:amount}, which can be represented as $\langle (I_1)^f, X\rangle$ where $(I_1)^f$ maps $\i{Amt}_0$ to $5$, and $\i{Amt}_1$ to $6$, and $X=\emptyset$.
The set $X$ is the constraint answer set relative to $(I_1)^f$ because $X$ is the minimal model of the constraint reduct relative to $X$ and $(I_1)^f$, which is the empty set. 

}\smallskip

Similar to the way that rules are identified as a special
case of formulas~\cite{ferraris11stable}, we identify a clingcon
program $\Pi$ with the conjunction of implications $B\land N\land
\i{Cn}\rar a$ for all rules~\eqref{clingcon-rule} in $\Pi$.
The following theorem tells us that clingcon programs are a special
case of ASPMT in which the background theory ${\cal T}$ conforms to 
$(V,D,C)$, and intensional constants are limited to
propositional constants only, and do not allow function constants, so the language cannot express the default assignment of values to a function.


\NBB{[[ change bg notation to ${\cal T}$-models ]] }

\begin{thm}\label{thm:clingcon}\optional{thm:clingcon}
Let $\Pi$ be a clingcon program with CSP $(V,D,C)$,
let ${\bf p}$ be the set of all propositional constants occurring in~$\Pi$,
let ${\cal T}$ be the background theory conforming to $(V,D,C)$,  and let $\langle I^f,X\rangle$ be a ${\cal T}$-interpretation. Set $X$ is a constraint answer set of~$\Pi$ relative
to~$I^f$ iff $\langle I^f, X\rangle$ is a ${\cal T}$-stable model of $\Pi$ relative to~${\bf p}$.
\end{thm}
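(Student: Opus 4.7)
The plan is to unfold the definition of $\fsm[\Pi;{\bf p}]$ and match, term-by-term, the existential witness in the second-order formula with a proper subset of $X$ satisfying the constraint reduct $\Pi^X_{I^f}$. Since the only intensional constants are propositional, any list $\wh{\bf p}$ with $\wh{\bf p}<{\bf p}$ corresponds uniquely to a set $\wh{X}\subsetneq X$ of propositional atoms; throughout, all non-intensional constants (including the members of $V$ and $\sigma^{\cal T}$) are kept fixed at~$I^f$.

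First I would identify each clingcon rule \eqref{clingcon-rule} with the formula $r=(B\land N\land \i{Cn})\rar a$ and compute $r^*(\wh{\bf p})$ using the recursion. By the definition of $F^*$ on implications, $r^*(\wh{\bf p})$ equals $\bigl((B\land N\land\i{Cn})^*(\wh{\bf p})\rar a^*(\wh{\bf p})\bigr)\land r$. Under the standing assumption $\wh{\bf p}<{\bf p}$, I would simplify each piece: (i) for $B$, a conjunction of positive propositional literals $q_i$, $B^*(\wh{\bf p})$ is $\bigwedge_i(\wh q_i\land q_i)$, so under $\wh{X}\subseteq X$ it is satisfied iff $\wh{X}\models B$; (ii) for $N$, a conjunction of negative literals, Lemma~\ref{lem:neg} yields $N^*(\wh{\bf p})\lrar N$, so the condition reduces to $X\models N$; (iii) $\i{Cn}$ contains no intensional constants, so $\i{Cn}^*(\wh{\bf p})\lrar\i{Cn}$ and this asks $I^f\models\i{Cn}$; (iv) $a^*(\wh{\bf p})$ is $\wh a\land a$, which given $\wh{X}\subseteq X$ reduces to $a\in\wh X$.

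Second, I would observe that with these simplifications, $\langle I^f,\wh X\rangle$ (with $\wh{\bf p}$ interpreted as $\wh X$ and ${\bf p}$ as $X$) satisfies $r^*(\wh{\bf p})$ iff two conditions hold: (a) $\langle I^f,X\rangle\models r$ (from the extra conjunct $r$), and (b) whenever $X\models N$ and $I^f\models\i{Cn}$, also $\wh X\models B\rar a$. But condition (b) says precisely that $\wh X$ satisfies the reduct image of $r$ in $\Pi^X_{I^f}$ (which is the rule $a\ar B$ if $X\models N$ and $I^f\models\i{Cn}$, and nothing otherwise). Quantifying over all rules, $\exists \wh{\bf p}(\wh{\bf p}<{\bf p}\land\Pi^*(\wh{\bf p}))$ is equivalent to $\langle I^f,X\rangle\models\Pi$ together with the existence of some $\wh X\subsetneq X$ with $\wh X\models\Pi^X_{I^f}$.

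Finally, I would conclude by combining this with the outer conjunct $\Pi$ in $\fsm[\Pi;{\bf p}]$ and using the standard easy fact that $\langle I^f,X\rangle\models\Pi$ iff $X\models\Pi^X_{I^f}$ (each direction is a routine rule-by-rule check: rules dropped from the reduct are vacuously satisfied by $\langle I^f,X\rangle$, while surviving rules become $B\rar a$ which is equivalent to the corresponding implication under the preserved negative literals and satisfied constraints). Thus $\langle I^f,X\rangle\models\fsm[\Pi;{\bf p}]$ iff $X\models\Pi^X_{I^f}$ and no proper subset of $X$ does, which is exactly the definition of a constraint answer set. The main technical care is in step two, where one must verify that the disjunctive structure really is ``$\langle I^f,X\rangle\models r$ and a reduct condition on~$\wh X$'' rather than some entanglement; but this cleanly separates because the extra conjunct $r$ in $r^*(\wh{\bf p})$ does not mention $\wh{\bf p}$ at all.
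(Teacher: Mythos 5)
Your proof is correct, but it takes a genuinely different route from the paper's. The paper proves this theorem through the reduct-based characterization of FSM (Theorem~\ref{thm:fsm-reduct} in Appendix~A): its Lemma~\ref{lem:clingcon} shows, for $Y\subset X$, that $Y\models\Pi^X_{I^f}$ iff $\langle I^f,Y\rangle\models (gr_I[\Pi])^{\mu{I}}$, and then the theorem follows by chaining ``minimal model of the constraint reduct'' with ``no $J<^{\bf p}I$ satisfies the FSM reduct.'' You instead unfold the second-order definition $\sm[\Pi;{\bf p}]$ directly and compute $\Pi^*(\wh{\bf p})$ rule by rule, using Lemma~\ref{lem:neg} for the negative literals and the observation that $\i{Cn}^*(\wh{\bf p})$ collapses to $\i{Cn}$ because the constraints contain no intensional constants; the extra conjunct $r$ produced by the $*$-clause for implications is what lets the condition factor cleanly into ``$\langle I^f,X\rangle\models\Pi$'' plus ``$\wh X\models\Pi^X_{I^f}$.'' Your argument is more self-contained, since it does not invoke the infinitary-grounding machinery or the equivalence theorem imported from the companion paper, at the cost of redoing by hand the simplifications that the reduct characterization packages once and for all; the paper's route, conversely, reduces the theorem to a comparison of two reduct notions, which is the pattern it reuses for the ASP(LC) and Lin--Wang comparisons. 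The only points worth making explicit in your write-up are the degenerate head case $a=\bot$ (where $a^*(\wh{\bf p})$ is $\bot$ rather than $\wh a\land a$, matching the constraint $\bot\ar B$ in the reduct) and the final identification of ``minimal model of $\Pi^X_{I^f}$'' with ``$X$ satisfies it and no proper subset does''; neither affects correctness.
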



Note that a clingcon program does not allow an atom that consists of elements from both $V$ and ${\bf p}$. Thus the truth value of an atom is determined by either $I^f$ or $X$, but not by involving both of them. 

In~\cite{lierler16constraint}, the authors compared Constraint ASP and SMT by relating the different terminologies and concepts used in each of them. This is related to the relationship shown in Theorem~\ref{thm:clingcon} since ${\cal T}$-stable models of an ASPMT program $\Pi$ relative to $\emptyset$ are precisely SMT models of $\Pi$ with background theory ${\cal T}$. One main difference between the two comparisons is that an {\em answer set} in \cite{lierler16constraint} is a set containing ordinary atoms and theory/constraint atoms, while a {\em stable model} in this paper is a classical model.
\BOCC
While this allows for loose coupling of an ASP solver and a constraint solver, it leads to some unintuitive cases as we noted with the example in the introduction. The main problem there has to do with the fact that the symbol $a$ in the {\sl propositional} atom $p(a)$ has nothing to do with the constraint variable $a$ in $a =^\$ b$. The anomaly can be avoided in ASPMT when we view the symbol $a$ in both places as an object constant in ASPMT.
\EOCC



\subsection{Relation to ASP(LC) Programs} \label{ssec:asplc}

\citeauthor{liu12answer} [\citeyear{liu12answer}] consider logic programs with linear
constraints, or {\em ASP(LC)} programs, comprised of rules of the form 
\beq
   a\ar B, N, LC
\eeq{nlplc}
where $a$ is a propositional atom or $\bot$, $B$ is a set of positive
propositional literals,  and $N$ is a set of negative propositional
literals, and $LC$ is a set of {\em theory atoms}---linear constraints
of the form 
$ 
  \displaystyle\sum\limits_{i=1}^n (c_i\times x_i)\ \bowtie\ k
$ 
where $\bowtie\in\{\leq, \geq, =\}$, each $x_i$ is an object
constant whose value sort is integers (or reals), and each $c_i$, $k$ is an
integer (or real).

An ASP(LC) program $\Pi$ can be viewed as an ASPMT formula whose background theory ${\cal T}$ is the theory of integers or the theory of reals. 
We identify an ASP(LC) program $\Pi$ with the conjunction of ASPMT formulas 
$B\land N\land LC\rar a$ for all rules~\eqref{nlplc} in $\Pi$. 

An {\em LJN-intepretation} is a pair $(X,T)$ where $X$ is a set of propositional atoms and $T$ is a subset of theory atoms occurring in~$\Pi$ such that there is some ${\cal T}$-interpretation~$I$ that satisfies $T\cup\overline{T}$, 
where $\overline{T}$ is the set of negations of each theory atom
occurring in $\Pi$ but not in $T$. 
An LJN-interpretation $(X,T)$ satisfies an atom $b$ if
$b\in X$, the negation of an atom $not\ c$ if $c \notin X$, and a
theory atom $t$ if $t \in T$. The notion of satisfaction is extended
to other propositional connectives as usual. 

The {\em LJN-reduct} of a program $\Pi$ with respect to an
LJN-interpretation $(X,T)$, denoted by $\Pi^{(X,T)}$,
consists of rules 
$
  a \ar B
$
for each rule~\eqref{nlplc} such that $(X,T)$ satisfies 
$
N\land LC$.
$(X,T)$ is an {\em LJN-answer set} of $\Pi$ if $(X,T)$ satisfies
$\Pi$, and $X$ is the smallest set of atoms satisfying $\Pi^{(X,T)}$.

The following theorem 
tells us that there is a one-to-many relationship between LJN-answer sets 
and the stable models in the sense of ASPMT. Essentially, the set of theory atoms in an LJN-answer set encodes all valid mappings for functions in the stable model semantics.


\begin{thm}\label{thm:niemelafsm}\optional{thm:niemelafsm}
Let $\Pi$ be an ASP(LC) program of signature $\langle \sigma^p, \sigma^f\rangle$ where $\sigma^p$ is a set of propositional constants, and let $\sigma^f$ be a set of object constants, and let $I^f$ be an interpretation of $\sigma^f$.
\begin{itemize}
\item[(a)] If $(X,T)$ is an LJN-answer set of $\Pi$, then for any ${\cal T}$-interpretation $I$ such that $I^f\models T\cup\overline{T}$, we have $\langle I^f, X\rangle\models\fsm[\Pi;\sigma^p]$.

\item[(b)] For any ${\cal T}$-interpretation $I=\langle I^f, X\rangle$, 
  if  $\langle I^f, X\rangle\models\fsm[\Pi;\sigma^p]$, then an LJN-interpretation 
   $(X, T)$ where $$T=\{t\mid \text{$t$ is a theory atom in $\Pi$ such that $I^f\models t$}\}$$ is an LJN-answer set of $\Pi$.
\end{itemize}
\end{thm}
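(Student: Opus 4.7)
The plan is to unfold the second-order definition
\[\fsm[\Pi;\sigma^p]=\Pi\land\neg\exists\wh{\sigma^p}(\wh{\sigma^p}<\sigma^p\land\Pi^*(\wh{\sigma^p}))\]
and identify the non-stability condition with the failure of $X$-minimality for the LJN-reduct. The key observation is that since intensional constants are restricted to $\sigma^p$ and $\sigma^p\cap\sigma^f=\emptyset$, the existential witness $\wh{\sigma^p}$ interpreted under $\langle I^f,X\rangle$ corresponds to choosing a proper subset $Y\subsetneq X$ of the true propositional atoms while keeping $I^f$ fixed; in particular, every theory atom occurring in $\Pi$ has the same truth value under $J=\langle I^f,Y\rangle$ as under $I$.

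First, I would compute $\Pi^*(\wh{\sigma^p})$ rule by rule. For a rule $a\ar B,N,LC$: each positive literal $p\in B$ gives $p^*=\wh{p}\land p$, which evaluates at $(J,I)$ to $Y\models p$ using $Y\subseteq X$; by Lemma~\ref{lem:neg} each negated literal in $N$ reduces to its classical value under $X$; and each theory atom $t\in LC$ contains no symbol from $\sigma^p$, so $t^*=t\land t$ evaluates to $I^f\models t$. Setting $T=\{t\mid I^f\models t\}$ and combining these, the $*$-version of the rule is vacuously satisfied unless $X\models N$ and $I^f\models LC$, and in that case reduces to the demand ``$Y\models B$ implies $Y\models a$''---i.e., satisfaction by $Y$ of the LJN-reduct rule $a\ar B$ belonging to $\Pi^{(X,T)}$. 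Hence the existence of a disputing $J$ is equivalent to the existence of $Y\subsetneq X$ satisfying $\Pi^{(X,T)}$.

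Both parts then follow by chasing definitions. For~(a), given an LJN-answer set $(X,T)$ and $I^f$ with $I^f\models T\cup\overline{T}$: $(X,T)\models\Pi$ together with the matching of $T$ to $I^f$-truth gives $\langle I^f,X\rangle\models\Pi$, and $X$-minimality in $\Pi^{(X,T)}$ rules out every disputing $J$. For~(b), by construction $I^f\models T\cup\overline{T}$; $\langle I^f,X\rangle\models\Pi$ yields $(X,T)\models\Pi$; and stability forces $X$-minimality in $\Pi^{(X,T)}$. The main obstacle will be keeping the two bookkeepings aligned: FSM's reduct operates on ground first-order atoms relative to an interpretation, while the LJN-reduct treats theory atoms as opaque units indexed by $T$. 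The clean syntactic separation of propositional constants (in $B,N$) from object constants (in $LC$) built into ASP(LC) rules is what lets these formalisms line up rule by rule.
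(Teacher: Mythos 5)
Your proposal is correct, and it reaches the result by a different technical route than the paper. The paper's proof goes through the reduct-based characterization of FSM from the appendix (Theorem~\ref{thm:fsm-reduct}): it first establishes Lemma~\ref{lem:ljninterp} to align the truth of theory atoms between $(X,T)$ and $I^f$ via the condition $I^f\models T\cup\overline{T}$, and then Lemma~\ref{lem:niemelafsmreduct}, which shows by a case analysis on rules that $Y\models\Pi^{(X,T)}$ iff $J\models\Pi^\mu{I}$ for $J=\langle I^f,Y\rangle$ with $Y\subseteq X$; the theorem then follows by matching the two minimality conditions. You instead unfold the second-order definition of $\sm[\Pi;\sigma^p]$ directly and compute $\Pi^*(\wh{\sigma^p})$ rule by rule, using Lemma~\ref{lem:neg} for the negative body literals and the fact that theory atoms contain no intensional constants, arriving at exactly the same correspondence: a disputing witness $\wh{\sigma^p}<\sigma^p$ is a proper subset $Y\subsetneq X$ satisfying the LJN-reduct. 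The two arguments encode the same content (your rule-by-rule evaluation of the $*$-translation plays the role of the paper's Lemma~\ref{lem:niemelafsmreduct}, and your identification $T=\{t\mid I^f\models t\}$ plays the role of Lemma~\ref{lem:ljninterp}), but yours is self-contained within the second-order formulation and avoids the infinitary-grounding machinery, while the paper's version makes the parallel between the FSM reduct $\Pi^\mu{I}$ and the LJN-reduct more visually direct. One small point you gloss over: when the body conditions hold and $Y\models B$, you need $X\models a$ (so that $a^*=\wh{a}\land a$ collapses to $Y\models a$); this follows from $Y\subseteq X$ and $I\models\Pi$, and is worth stating explicitly when you write the argument out.
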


\begin{example}
Let $F$ be
\[
\ba{ll}
a \ar x\!-\!z\!>\!0. \hspace{1cm} & b\ar x\!-\!y\!\le\!0. \\ 
c \ar b,\  y\!-\!z\!\le\!0. & \ar \no\ a. \\
b \ar c.
\ea
\]
The LJN-interpretation 
$L = \langle \{a\},\{x\!-\!z\!>\!0\}\rangle$ is an answer set of~$F$
since 
$\{(x\!-\!z\!>\!0, \neg(x\!-\!y\!\le\!0), \neg(y\!-\!z\!\le\!0)\}$ 
is satisfiable (e.g., take $x^I\mvis 2, y^I\mvis 1, z^I\mvis 0$) and
the set $\{a\}$ is the minimal model satisfying the reduct $F^L$, which is equivalent to $(\top \rightarrow a)\land (c\rightarrow b)$. 
In accordance with Theorem~\ref{thm:niemelafsm}, the interpretation $I$ such that 
$x^I\mvis 2, y^I\mvis 1, z^I\mvis 0, a^I\mvis\true, b^I\mvis\false,
c^I\mvis\false$ satisfies $I\models\fsm[F;abc]$.
\end{example}

As with {clingcon} programs, ASP(LC) programs do not allow intensional functions. 






\BOC
The concept of ASPT programs from~\cite{lierler16constraint} is essentially a generalization of ASP(LC) programs to allow arbitrary background theories besides linear constraints (LC). That paper showed that tight ASPT programs can be computed by SMT solvers by the process of completion. Unlike ASPMT programs, ASPT programs do not allow intensional functions, and have disjoint vocabularies for ordinary atoms (which they call ``regular'' atoms) vs. constraints or theory atoms (which they call ``irregular'' atoms): constraint variables and uninterpreted functions in the latter may not be a part of ordinary atoms. Theorem~\ref{thm:niemelafsm} remains valid when $\Pi$ is an ASPT program, the result which provides a link between ASPT and ASPMT programs.
\EOC

\subsection{Relation to Lin-Wang Programs} \label{ssec:lw}
\citeauthor{lin08answer} (\citeyear{lin08answer}) extended answer set semantics
with functions by extending the definition of a reduct, and also
provided loop formulas for such programs.  We can provide an
alternative account of their results by considering the notions there
as special cases of the definitions presented in this paper. 
Essentially, they restricted attention to a special case of non-Herbrand
interpretations such that object constants form the universe, and
ground terms other than object constants are mapped to the object
constants.
More precisely, according to~\cite{lin08answer}, an {\em LW-program} $P$ consists of {\sl
type definitions} and a set of rules of the form
\beq
   A\ar B_1,\dots, B_m, \no\ C_1, \dots, \no\ C_n
\eeq{lw-rule}
where $A$ is $\bot$ or an atom, and $B_i$ ($1\le i\le m$) and $C_j$ ($1\le j\le n$) are atomic formulas possibly containing equality. 
Type definitions are essentially a special case of many-sorted signature declarations, 
where each sort is a set of object constants.
%
%
For such many-sorted signature, we say that a many-sorted interpretation $I$ is a {\em $P$-interpretation} if it evaluates each object constant to itself, and each ground term other than object constants to an object constant conforming to the type definitions of $P$.
The {\em functional reduct} of $P$ under $I$ is a normal logic program without functions obtained from $P$ by 
\begin{enumerate}
\item  replacing each functional term $f(t_1, \dots, t_n)$ with $c$ where $f^I(t_1 , \dots, t_n) = c$; 
\item  removing any rule containing $c\ne c$ or $c=d$ where $c$, $d$ are distinct constants; 
\item  removing any remaining equalities from the remaining rules; 
\item  removing any rule containing $\no\ A$ in the body of the rule where $A^I = \true$;
\item  removing any remaining $\no\ A$ from the bodies of the remaining rules.
\end{enumerate}

A $P$-interpretation is an answer set of $P$ in the sense of \cite{lin08answer} if $I$ is the minimal model of $P^I$. 

The following theorem tells us that $LW$ programs are a special case of FSM formulas whose function constants are non-intensional. 

\begin{thm}\label{thm:lw-fsm}\optional{thm:lw-fsm}
Let $P$ be an LW-program and let $F$ be the FOL-representation of the
set of rules in $P$. The following conditions are equivalent to each
other:
\begin{itemize}
\item[(a)]  $I$ is an answer set of $P$ in the sense of~\cite{lin08answer};
\item[(b)]  $I$ is a $P$-interpretation that satisfies $\fsm[F; {\bf
    p}]$ where ${\bf p}$ is the list of all predicate constants
  occurring in $F$. 
\end{itemize}
\end{thm}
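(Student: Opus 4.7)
The plan is to reduce the claim to Theorem~\ref{thm:fsm-reduct}, which gives an equivalent reduct-based characterization of $\fsm[F;{\bf p}]$, and then show rule-by-rule that the infinitary ground reduct matches the Lin--Wang functional reduct whenever $I$ is a $P$-interpretation.

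First I would observe two structural facts. (i) Since ${\bf p}$ consists only of predicate constants, any candidate witness $J$ with $J <^{\bf p} I$ must agree with $I$ on every function constant, every object constant, and on the universe; in particular $J$ is again a $P$-interpretation. (ii) For any $P$-interpretation, every ground term $t$ of $\sigma^I$ evaluates to an object constant; so we can replace object names $\xi^\diamond$ by the corresponding object constants throughout $gr_I[F]$ without loss.

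Next I would analyze the reduct $\bigl(gr_I[F]\bigr)^{\overline{I}}$ on each ground instance of a rule~\eqref{lw-rule}, and match its effect to steps (1)--(5) of the functional reduct:
\begin{itemize}
\item A ground equality atom $f(t_1,\dots,t_n)\mvis c$ is true in $I$ iff the Lin--Wang step~(1) would turn it into the trivial equality $c\mvis c$; in the FSM reduct it is kept as itself (and since $J$ and $I$ agree on all function and object constants, its truth in any admissible $J$ equals its truth in $I$, so it behaves as $\top$). If it is false in $I$, it is replaced by $\bot$ in the reduct, which makes the whole body equivalent to $\bot$ and the rule vacuously satisfied; this matches step~(2) removal of rules containing $c\ne c$ or $c\mvis d$ with $c,d$ distinct. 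Steps~(3) of Lin--Wang (removing trivially true equalities) correspond to the just-noted fact that the retained equalities contribute $\top$ to the body w.r.t.\ admissible $J$'s.
\item A negative literal $\no\,A$, understood as $A\rar\bot$, has reduct $\bot$ when $A^I=\true$ (making the body $\bot$ and the rule trivial, matching step~(4)), and reduct $\top$ when $A^I=\false$ (effectively removing it from the body, matching step~(5)).
\end{itemize}
After these simplifications, $\bigl(gr_I[F]\bigr)^{\overline{I}}$ becomes, modulo logical equivalence with respect to all $J$ with $J<^{\bf p}I$, precisely the propositional grounding of the Lin--Wang functional reduct $P^I$, which contains only ordinary predicate atoms over ${\bf p}$.

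Finally I would combine the pieces. By Theorem~\ref{thm:fsm-reduct}, $I\models\fsm[F;{\bf p}]$ iff $I\models F$ and no $J<^{\bf p}I$ satisfies $\bigl(gr_I[F]\bigr)^{\overline{I}}$. Via the correspondence above, and because every admissible $J$ is a $P$-interpretation agreeing with $I$ off ${\bf p}$, the latter condition is equivalent to: there is no $P$-interpretation $J$ whose predicate extensions are strictly smaller than those of $I$ and which satisfies $P^I$; i.e., $I$ is a minimal model of $P^I$. This is exactly the Lin--Wang definition of an answer set. In the converse direction one uses the same equivalence to show that any Lin--Wang answer set is necessarily a $P$-interpretation satisfying $F$ and the stability condition.

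The main obstacle is bookkeeping: precisely aligning the FSM machinery of object names $\xi^\diamond$ and the clause-by-clause reduction on infinitary conjunctions with the syntactic rewriting of Lin--Wang (terms, equalities, negations). The fact that the universe of a $P$-interpretation consists of object constants mapped to themselves makes this alignment essentially trivial, but it must be done carefully to ensure that admissible witnesses $J<^{\bf p} I$ correspond exactly to candidate smaller models of $P^I$ in the Lin--Wang sense.
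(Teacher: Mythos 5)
Your proposal is correct and takes essentially the same route the paper intends (and spells out in its unpublished detailed version): invoke the reduct-based characterization of Theorem~\ref{thm:fsm-reduct}, note that any witness $J<^{\bf p}I$ agrees with $I$ on all function and object constants and is thus again a $P$-interpretation, and match the reduct $(gr_I[F])^{\underline{I}}$ clause by clause against steps (1)--(5) of the Lin--Wang functional reduct so that candidate witnesses $J$ correspond exactly to proper subsets $K$ of the atoms of $I$ satisfying $P^I$. The paper merely remarks that the result is immediate once LW type declarations are read as many-sorted signature declarations; your write-up supplies precisely the bookkeeping that remark elides.
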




In other words, like clingcon programs, Lin-Wang programs can be identified with a special case of the first-order stable model semantics from~\cite{ferraris11stable}, which do not allow intensional functions.




\section{Conclusion}\label{sec:conclusion}


In this paper, we presented the functional stable model semantics, which properly extends the first-order stable model semantics to distinguish between intensional and non-intensional functions. We observe that many properties known for the first-order stable model semantics naturally extend to the functional stable model semantics. 

The presented semantics turns out to be useful for overcoming the limitations of the stable model semantics originating from the propositional setting, and enables us to combine with other related formalisms where general functions play a central role in efficient computation. 
ASPMT benefits from the expressiveness of ASP modeling language while leveraging efficient constraint/theory solving methods originating from SMT. For instance, it provides a viable approach to nonmonotonic reasoning about hybrid transitions where discrete and continuous changes co-exist. 
%

The relationship between ASPMT and SMT is similar to the relationship between ASP and SAT. We expect that, in addition to completion and the results shown in this paper, many other results known between ASP and SAT can be carried over to the relationship between ASPMT and SMT, thereby contributing to  efficient first-order reasoning in answer set programming.
A future work is to lift the limitation of the current ASPMT implementation limited to tight programs by designing and implementing a native computation algorithm which borrows the techniques from SMT, similar to the way that ASP solvers adapted SAT solving computation.

{\bf Acknowledgements}\ \ 
We are grateful to Yi Wang and Nikhil Loney for many useful discussions and to the anonymous referees for their constructive comments.
This work was partially supported by the National Science Foundation under Grants IIS-1319794, IIS-1526301, and IIS-1815337.

\bibliographystyle{named}


\appendix 
\section{Review of Reduct-Based Definition of Stable Models}

Some of the proofs below use the definition of functional stable models based on the notions of an infinitary ground formula and a reduct from~\cite{bartholomew13onthestable}. We review the semantics below.

\subsection{Infinitary Ground Formulas}\label{sssec:igf}

\BOCC
We define {\em infinitary ground formulas}, which are slightly adapted
from infinitary propositional formulas
from~\cite{truszczynski12connecting}. Unlike
in~\cite{truszczynski12connecting}, 
we do not replace ground terms with their corresponding
object names, keeping them intact during grounding. This difference
is required in defining a reduct for the functional stable model
semantics.\footnote{Another difference is that grounding
  in~\cite{truszczynski12connecting} refers to ``infinitary
  propositional formulas,'' which can be defined on any propositional
  signature. This generality is not essential for our purpose in this paper.}

More specifically, 
\EOCC
We assume that a signature and an interpretation are defined the same as in the standard first-order logic. 
For each element $\xi$ in the universe $|I|$ of $I$, we introduce a
new symbol $\xi^\dia$, called an {\sl object name}. By $\sigma^I$ we
denote the signature obtained from~$\sigma$ by adding all object names
$\xi^\dia$ as additional object constants. We will identify an
interpretation $I$ of signature $\sigma$ with its extension
to~$\sigma^I$ defined by $I(\xi^\dia)=\xi$.

We assume the primary connectives of infinitary ground formulas to be $\bot$, $\{\}^\land$, $\{\}^\lor$, and $\rar$. The usual propositional connectives $\land,\lor$ are considered as shorthands: $F\land G$ as $\{F,G\}^\land$, and $F\lor G$ as~$\{F,G\}^\lor$.

Let $A$ be the set of all ground atomic formulas of signature $\sigma^I$. The
sets ${\cal F}_0, {\cal F}_1, \dots$ are defined recursively as follows:
\begin{itemize}
\item ${\cal F}_0=A\cup\{\bot\}$;
\item ${\cal F}_{i+1} (i\ge 0)$ consists of expressions
  ${\cal H}^\lor$ and ${\cal H}^\land$, for all subsets
  ${\cal H}$ of ${\cal F}_0\cup\ldots\cup{\cal F}_i$, and of
  the expressions $F\rar G$, where
  $F$ and $G$ belong to ${\cal F}_0\cup\dots\cup{\cal F}_i$.
\end{itemize}
We define ${\cal L}_A^{inf}=\bigcup_{i=0}^{\infty}{\cal F}_i$, and call
elements of ${\cal L}_A^{inf}$ {\em infinitary ground formulas}
of~$\sigma$ w.r.t.~$I$.

For any interpretation $I$ of $\sigma$ and any infinitary ground
formula $F$ w.r.t.~$I$, the definition of satisfaction, $I\models F$, is as
follows:
\begin{itemize}
\item For atomic formulas, the definition of satisfaction 
  is the same as in the standard first-order logic;

\item $I\models{\cal H}^\lor$ if there is a formula $G\in {\cal H}$ 
   such that $I\models G$;

\item $I\models{\cal H}^\land$ if, for every formula $G\in {\cal H}$, 
   $I \models G$;

\item $I \models G\rar H$ if $I \not\models G$ or $I \models H$.
\end{itemize}

Given an interpretation, we identify any first-order sentence with an infinitary ground formula via the process of grounding relative to that interpretation. 
Let $F$ be any first-order sentence of a signature $\sigma$, and let
$I$ be an interpretation of~$\sigma$. By $gr_I[F]$ we denote the
infinitary ground formula w.r.t.~$I$ that is obtained from $F$ by the
following process:
\begin{itemize}
\item  If $F$ is an atomic formula, $gr_I[F]$ is $F$; 
\item  $gr_I[G\odot H]= gr_I[G]\odot gr_I[H]\ \ \ \
 (\odot\in\{\land,\lor,\rar\})$;

\item  $gr_I[\exists x G(x)] = 
      \{gr_I[G(\xi^\diamond)] \mid \xi\in |I|\}^\lor$; $\qquad$
\item  $gr_I[\forall x G(x)] = 
      \{gr_I[G(\xi^\diamond)] \mid \xi\in |I|\}^\land$.
\end{itemize}

\subsection{Stable Models in terms of Grounding and Reduct}\label{sssec:bl-reduct}

For any two interpretations $I$, $J$ of the same signature and any list $\bC$ of distinct predicate and function constants, we write $J<^\bC I$ if 
\begin{itemize}
\item  $J$ and $I$ have the same universe and agree on all constants
  not in $\bC$,

\item  $p^J\subseteq p^I$ for all predicate constants $p$ in $\bC$,\footnote{For any symbol $c$ in a signature, $c^I$ denotes the evaluation of $I$ on $c$.}
and 

\item  $J$ and $I$ do not agree on $\bC$. 
\end{itemize}
%




The {\em reduct} $F^\mu{I}$ of an infinitary ground formula $F$ relative to an interpretation $I$ is defined as follows: 
\begin{itemize}
\item For any atomic formula $F$, $F^\mu{I}= \left\{
  \ba {ll} 
     \bot & \text{ if $I\not\models F$ } \\
     F & \text{ otherwise. }
  \ea
  \right. 
$

\item $({\cal H}^\land)^\mu{I}= \{G^\mu{I} \mid G\in{\cal H}\}^\land$

\item $({\cal H}^\lor)^\mu{I}=  \{G^\mu{I} \mid G\in{\cal H}\}^\lor$

\item 
$
   (G\rar H)^\mu{I}= \left\{
   \ba {ll}
      \bot & \text{ if $I\not\models G\rar H$ } \\
       G^\mu{I} \rar H^\mu{I} & \text{ otherwise. }
   \ea
   \right.
$


\end{itemize}








The following theorem presents an alternative definition of a stable model that is equivalent to the one in the previous section. 

\begin{thm}[Theorem~1 from \cite{bartholomew13onthestable}]\label{thm:fsm-reduct}\optional{thm:fsm-reduct}
Let $F$ be a sentence and let $\bC$ be a list of intensional constants. An interpretation $I$ satisfies $\sm[F; \bC]$ iff 
\begin{itemize}
\item  $I$ satisfies $F$, and 
\item  no interpretation $J$ such that $J<^\bC I$ satisfies $(gr_I[F])^\mu{I}$. 
\end{itemize}  
\end{thm}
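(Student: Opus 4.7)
The plan is to unpack what each side of the biconditional says and then connect them via a single structural-induction lemma that translates the second-order satisfaction of $F^*(\vbC)$ into the satisfaction of the reduct $(gr_I[F])^\mu{I}$. Unpacking the definition, $I\models\sm[F;\bC]$ says precisely that $I\models F$ and there is no second-order assignment of $\vbC$ with $\vbC<\bC$ (interpreted in $I$) under which $I\models F^*(\vbC)$. Since $\bC$ consists of predicate and function constants, second-order assignments of $\vbC$ that agree with $I$ on everything else correspond bijectively to interpretations $J$ sharing the universe and non-intensional symbols of $I$, via $J(\bC)=\nu(\vbC)$; the condition $\nu(\vbC)<\bC$ then translates exactly to $J<^\bC I$. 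So the theorem reduces to showing: for any such $J$, the condition ``$I$ satisfies $F^*(\vbC)$ with $\vbC$ assigned $J(\bC)$'' is equivalent to ``$J\models (gr_I[F])^\mu{I}$.''

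The key auxiliary lemma I would prove is exactly that equivalence, by induction on the structure of the first-order sentence $F$, using the infinitary grounding to handle quantifiers uniformly as conjunctions/disjunctions. For the atomic base case, $F^*(\vbC)=F'(\vbC)\land F$: if $I\models F$ then $F^\mu{I}=F$, and $J\models F$ coincides with $I\models F'(\vbC)[\vbC\mapsto J(\bC)]$ because $F'$ just renames the $\bC$-symbols to variables later filled by $J$'s values while the rest of $F$ is evaluated in $I=J$ on non-intensional symbols; if $I\not\models F$ then $F^\mu{I}=\bot$ and the conjunct $F$ in $F^*$ already falsifies the right side. The implication case is the interesting one because both constructions insert the classical conjunct: $(G\rar H)^*=(G^*\rar H^*)\land(G\rar H)$, while $(G\rar H)^\mu{I}$ collapses to $\bot$ exactly when $I\not\models G\rar H$; the two cases line up perfectly. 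Conjunction and disjunction distribute on both sides, and for quantifiers, $gr_I[\forall x\,G]=\{gr_I[G(\xi^\dia)]\mid \xi\in|I|\}^\land$ and $(\forall x\,G)^*=\forall x\,G^*$ both unfold to the $\xi$-indexed family on which the induction hypothesis applies.

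With the lemma in hand, the conclusion is immediate: $I\models\sm[F;\bC]$ iff $I\models F$ and no witness assignment $\nu$ with $\nu(\vbC)<I(\bC)$ makes $I\models F^*(\vbC)$, iff (by the bijection $\nu\leftrightarrow J$ and the lemma) $I\models F$ and no $J<^\bC I$ satisfies $(gr_I[F])^\mu{I}$.

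The main obstacles I expect are bookkeeping rather than conceptual. First, the base case must handle atomic formulas with nested function terms and equalities involving $\bC$-function constants; one has to verify carefully that $I\models F'(\vbC)[\vbC\mapsto J(\bC)]$ really does coincide with $J\models F$, which uses that $J$ and $I$ share the universe and agree on every symbol outside $\bC$. Second, the quantifier case requires a clean notational setup distinguishing object names $\xi^\dia$ introduced by grounding from object-variable substitution inside $F^*$, so that the inductive hypothesis applies to each ground instance without a variable-capture issue. Once the substitution conventions are pinned down, the induction goes through directly.
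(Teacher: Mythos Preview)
Your proposal is correct and follows essentially the same approach as the paper's proof (which the article cites from~\cite{bartholomew13onthestable}, and whose argument appears in the commented-out portion of the source). The paper formalizes your ``$I$ with $\vbC$ assigned $J(\bC)$'' via the auxiliary interpretation $J^{\bC}_{\bD}\cup I$ of an extended signature, proves your key lemma as an induction on $F$ (their Lemma~``reduct''), and isolates the $\nu(\vbC)<\bC \Leftrightarrow J<^{\bC}I$ correspondence as a separate small lemma; the case analysis you sketch for atoms and implications matches theirs exactly.
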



\BOCCC
\subsection{Proof of Theorem~\ref{thm:fsm-reduct}}

We will often use the following notation throughout this section. 
Let $\sigma$ be a first-order signature, let $\bC$ be a set of
constants that is a subset of~$\sigma$, and let $\bD$ be a set of
constants not belonging to $\sigma$ and is similar to
$\bC$.\footnote{That is to say, $\bD$ and $\bC$ have the same length
  and the corresponding members are either predicate constants of the
  same arity or function constants of the same arity.} 
For any interpretation $J$ of signature $\sigma$,
$J^\bC_\bD$ denotes the interpretation of
signature~\hbox{$(\sigma\setminus \bC)\cup \bD$} obtained from
$J$ by replacing every constant from $\bC$ with the corresponding
constant from~$\bD$.
For two interpretations $I$ and $J$ of~$\sigma$ that agree on
all constants in $\sigma\setminus\bC$, we define $J^\bC_\bD\cup I$ to
be the interpretation from the extended signature $\sigma\cup\bD$ such
that
\begin{itemize}
\item $J^\bC_\bD\cup I$ agrees with $I$ on all constants in~$\bC$; 
\item $J^\bC_\bD\cup I$ agrees with $J^\bC_\bD$ on all constants
  in~$\bD$;
\item $J^\bC_\bD\cup I$ agrees with both $I$ and $J$ on all constants
  in~$\sigma\setminus \bC$.
\end{itemize}

\begin{lemma}\label{lem:reduct}\optional{lem:reduct}
Let $F$ be a sentence of signature~$\sigma$, and let $I$ and $J$ be
interpretations of $\sigma$ such that $J<^\bC I$. We have 
$J^\bC_\bD\cup I\models F^*(\bD)$ 
iff $J\models gr_I[F]^\mu{I}$.
\end{lemma}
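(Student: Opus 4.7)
\medskip\noindent\textbf{Proof plan.} The proof is a structural induction on $F$. The informative step is the atomic case; the connective cases are routine bookkeeping, and the quantifier case requires one small setup choice.

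For the base case, let $F$ be an atomic formula (replacing any free variables by object names $\xi^\dia$ for the relevant elements of $|I|$, using the $\sigma^I$-extension convention from the review of infinitary grounding). Then $F^*(\bD) = F'\land F$, where $F'$ is obtained from $F$ by substituting the $\bD$-symbols for the corresponding $\bC$-symbols. Because $J^\bC_\bD\cup I$ interprets the $\bD$-symbols exactly as $J$ interprets $\bC$, and agrees with $J$ on every other symbol (since $J<^\bC I$ forces agreement outside $\bC$), evaluating $F'$ under $J^\bC_\bD\cup I$ returns the same truth value as evaluating $F$ under $J$. Evaluating the second conjunct $F$ under $J^\bC_\bD\cup I$ returns $I\models F$, since $J^\bC_\bD\cup I$ agrees with $I$ on all of $\sigma$. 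Hence $J^\bC_\bD\cup I\models F^*(\bD)$ iff $J\models F$ and $I\models F$. On the reduct side, $gr_I[F]^\mu{I}$ equals $F$ when $I\models F$ and $\bot$ otherwise, so $J\models gr_I[F]^\mu{I}$ iff $I\models F$ and $J\models F$. The two conditions coincide.

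The cases $\land$ and $\lor$ are immediate from the induction hypothesis since both $(\cdot)^*$ and the reduct distribute over these connectives. For implication, the key observation is that $J^\bC_\bD\cup I$ agrees with $I$ on $\sigma$, so the extra conjunct ``$G\rar H$'' inside $(G\rar H)^*(\bD)=\bigl(G^*(\bD)\rar H^*(\bD)\bigr)\land (G\rar H)$ holds under $J^\bC_\bD\cup I$ iff $I\models G\rar H$. This matches the definition of the reduct of an implication: if $I\not\models G\rar H$, the reduct is $\bot$ and the extra conjunct also fails, so both sides of the biconditional are false; otherwise both extra conjuncts are satisfied and what remains is precisely $J^\bC_\bD\cup I\models G^*(\bD)\rar H^*(\bD)$ iff $J\models gr_I[G]^\mu{I}\rar gr_I[H]^\mu{I}$, which follows from the induction hypothesis applied to $G$ and $H$.

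For the quantifier case, I would state the induction hypothesis for formulas (not only sentences) by interpreting free variables through object names $\xi^\dia$, so that $(\forall x\, G(x))^*(\bD)=\forall x\, G^*(x,\bD)$ reduces, under $J^\bC_\bD\cup I$, to the conjunction over $\xi\in|I|$ of $G^*(\xi^\dia,\bD)$, while $gr_I[\forall x\, G(x)]^\mu{I}$ reduces to $\{gr_I[G(\xi^\dia)]^\mu{I}\mid\xi\in|I|\}^\land$ (the reduct commutes with infinitary conjunction by definition). The induction hypothesis at each $\xi^\dia$ then gives the required equivalence term by term, and analogously for $\exists$. The main obstacle is precisely this bookkeeping step: once one commits to the $\sigma^I$-convention and the parallel ``instantiate-by-$\xi^\dia$'' operations on both sides, the quantifier case becomes as mechanical as the connective cases, but without this setup one must carry a separate variable assignment through the whole induction, which obscures the argument.
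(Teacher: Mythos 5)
Your proof is correct and follows essentially the same route as the paper's: a structural induction in which the atomic and implication cases hinge on whether $I$ satisfies the (sub)formula — so that the extra conjunct in $F^*(\bD)$ mirrors the $\bot$-clause of the reduct — and the quantifier cases are handled by instantiating with object names $\xi^\dia$ over the shared universe. Your explicit remark that the induction hypothesis must be stated for formulas with free variables replaced by object names is a point the paper leaves implicit, but it does not change the argument.
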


\proof
By induction on the structure of $F$. 

\noindent
{\sl Case 1:} $F$ is an atomic sentence. Then $F^*(\bD)$ is
$F(\bD)\land F$, where $F(\bD)$ is obtained from $F$ by replacing the
members of $\bC$ with the corresponding members of~$\bD$. Consider the
following subcases:

\begin{itemize}
\item {\sl Subcase 1:} $I\not\models F$. 
Then $J^\bC_\bD\cup I\not\models F^*(\bD)$. 
Further, $gr_I[F]^\mu{I} = \bot$, so $J\not\models gr_I[F]^\mu{I}$.

\item {\sl Subcase 2:} $I\models F$. 
Then $J^\bC_\bD\cup I\models F^*(\bD)$ 
iff $J^\bC_\bD\models F(\bD)$ iff $J\models F$. 
Further, $gr_I[F]^\mu{I} = F$, so 
$J\models gr_I[F]^\mu{I}$ iff $J\models F$.
\end{itemize}

\smallskip\noindent
{\sl Case 2:} $F$ is $G\land H$ or $G\lor H$. 
The claim follows immediately from I.H. on $G$ and~$H$.

\smallskip\noindent
{\sl Case 3:} $F$ is $G\rar H$. Then 
$F^*(\bD) = (G^*(\bD)\rar H^*(\bD))\land (G\rar H)$. 
Consider the following subcases:

\begin{itemize}
\item {\sl Subcase 1:} $I\not\models G\rar H$. 
  Then $J^\bC_\bD\cup I\not\models F^*(\bD)$. 
  Further, $gr_I[F]^\mu{I}=\bot$, which $J$ does not satisfy. 

\item {\sl Subcase 2:} $I\models G\rar H$. 
  Then $J^\bC_\bD\cup I\models F^*(\bD)$ iff 
  $J^\bC_\bD\cup I\models G^*(\bD)\rar H^*(\bD)$.    
  On the other hand, $gr_I[F]^\mu{I} = gr_I[G]^\mu{I}\rar gr_I[H]^\mu{I}$ 
  so this case holds by I.H. on $G$ and $H$.
\end{itemize}

\smallskip\noindent
{\sl Case 4:} $F$ is $\exists x G(x)$. By I.H., 
$J^\bC_\bD\cup I\models G(\xi^\dia)^*(\bD)$ iff 
$J\models gr_I[G(\xi^\dia)]^\mu{I}$ for each $\xi\in |I|$. 
The claim follows immediately. 

\smallskip\noindent
{\sl Case 5:} $F$ is $\forall x G(x)$. Similar to Case 4. 
\qed

\begin{lemma}\label{lem:fsm-lt}\optional{lem:fsm-lt}
For any interpretations $I$ and $J$ of signature~$\sigma$, 
we have $J^\bC_\bD\cup I\models \bD < \bC$ iff $J <^\bC I$.
\end{lemma}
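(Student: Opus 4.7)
The plan is simply to unfold the two sides componentwise against the definition of $J^\bC_\bD \cup I$ and observe that each clause in $\bD < \bC$ translates exactly into a clause of $J <^\bC I$. Recall that $\bD < \bC$ abbreviates $(\bD^\mi{pred} \le \bC^\mi{pred}) \land \neg(\bD = \bC)$, while $J <^\bC I$ requires the same universe, agreement on $\sigma \setminus \bC$, containment $p^J \subseteq p^I$ on each predicate $p \in \bC$, and disagreement on $\bC$. The construction $J^\bC_\bD \cup I$ evaluates each $c \in \bC$ as $I$ does, and each corresponding $d \in \bD$ as $J$ evaluates $c$, while inheriting a common interpretation of $\sigma \setminus \bC$ from $I$ and $J$. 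So the translation is essentially a renaming.

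For the forward direction, I would assume $J^\bC_\bD \cup I \models \bD < \bC$. The fact that this combined interpretation is well-defined already presupposes that $I$ and $J$ share a universe and agree on constants of $\sigma \setminus \bC$. The conjunct $\bD^\mi{pred} \le \bC^\mi{pred}$ says $\forall \mathbf{x}\,(d(\mathbf{x}) \rar c(\mathbf{x}))$ for each predicate pair $(d,c)$; under the combined interpretation this reads $p^J \subseteq p^I$ for each predicate constant $p \in \bC$. The conjunct $\neg(\bD = \bC)$ yields some constant where $J^\bC_\bD \cup I$ evaluates $d$ and $c$ differently, which by construction is a constant in $\bC$ on which $J$ and $I$ disagree. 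Together these give $J <^\bC I$.

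For the reverse direction, assume $J <^\bC I$. The first two clauses guarantee that $J^\bC_\bD \cup I$ is a legitimate interpretation of $\sigma \cup \bD$. The containment clause $p^J \subseteq p^I$ transports (via the renaming) to $J^\bC_\bD \cup I \models \bD^\mi{pred} \le \bC^\mi{pred}$, and the disagreement of $J$ and $I$ on some member of $\bC$ becomes $J^\bC_\bD \cup I \not\models \bD = \bC$. Hence $J^\bC_\bD \cup I \models \bD < \bC$.

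There is no real obstacle here; the only thing to be careful about is that the $\le$ in $\bD < \bC$ applies only to predicate constants (function constants in $\bC$ are constrained just by the non-equality part), which matches the definition of $<^\bC$ where the containment clause is also restricted to predicate constants while function constants contribute only through the disagreement clause. Once that symmetry is noted, the proof is a direct bookkeeping check.
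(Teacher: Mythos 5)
Your proof is correct and follows essentially the same route as the paper's: unfold $\bD<\bC$ into its two conjuncts, observe that the well-definedness of $J^\bC_\bD\cup I$ already gives the shared universe and agreement outside $\bC$, and translate the $\le$ clause (predicates only) and the $\neg(\bD=\bC)$ clause into the containment and disagreement clauses of $J<^\bC I$ via the renaming. The remark that the containment condition is restricted to predicate constants on both sides is exactly the bookkeeping point the paper's argument also relies on.
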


\proof
Recall that by definition, $\bD<\bC$ is 
\[ 
  (\bD^{pred}\le \bC^{pred})\land\neg (\bD = \bC), 
\]
and by definition, $J <^\bC I$ is
\begin{itemize}
\item  $J$ and $I$ have the same universe and agree on all constants
  not in $\bC$;
\item  $p^J\subseteq p^I$ for all predicate constants $p$ in $\bC$; and
\item  $J$ and $I$ do not agree on $\bC$. 
\end{itemize}

First, by definition of $J^\bC_\bD\cup I$, $J$ and $I$ have the same universe
and agree on all constants in $\sigma\setminus\bC$.

Second, by definition, 
$J^\bC_\bD\cup I\models \bD^{pred} \leq \bC^{pred}$ 
iff, for every predicate constant $p$ in~$\bC$, 
\[ 
  J^\bC_\bD\cup I\models
    \forall {\bf x} (p({\bf x})^\bC_\bD\rar p({\bf x})),
    \footnote{$p({\bf x})^\bC_\bD$ denotes the atom that is obtained
      from~$p({\bf x})$ by replacing $p$ with the
     corresponding member of $\bD$ if $p\in \bC$, and no change otherwise.}
\]
which is equivalent to saying that 
$(p^\bC_\bD)^{J^\bC_\bD\cup I}\subseteq p^{J^\bC_\bD\cup I}$. Since
$I$ does not interpret any constant from $\bD$, and $J^\bC_\bD$ does
not interpret any constant from $\bC$, this is equivalent to
$(p^\bC_\bD)^{J^\bC_\bD}\subseteq p^I$ and further to 
$p^J\subseteq p^I$.

Third, since $I$ does not interpret any constant from $\bD$ and 
$J^\bC_\bD$ does not interpret any constant from $\bC$,
$J^\bC_\bD\cup I\models \neg (\bD = \bC)$ is equivalent
to saying $J$ and $I$ do not agree on $\bC$. 
\qed

\bigskip\noindent
{\bf Theorem~\ref{thm:fsm-reduct} \optional{thm:fsm-reduct}}\   
{\sl Let $F$ be a sentence and let $\bC$ be a list of intensional constants. An interpretation $I$ satisfies $\sm[F; \bC]$ iff 
\begin{itemize}
\item  $I$ satisfies $F$, and 
\item  no interpretation $J$ such that $J<^\bC I$ satisfies $(gr_I[F])^\mu{I}$. 
\end{itemize}  
}\medskip

\proof 
$I \models \sm[F;\bC]$ is by definition 
\beq
  I\models F\land\neg\exists\vbC (\vbC<\bC\land F^*(\vbC)).
\eeq{reductsm}
The first item, ``$I$ satisfies $F$,'' is equivalent to the first conjunctive term of (\ref{reductsm}). 

By Lemma~\ref{lem:reduct} and Lemma~\ref{lem:fsm-lt}, the second item,
``no interpretation $J$ of~$\sigma$ such that $J <^\bC I$ satisfies
$gr_I[F]^\mu{I}$'', is equivalent to the second conjunctive term in
(\ref{reductsm}). 
\qed
\EOCCC

\section{Proofs}

\subsection{Proof of Theorem~\ref{thm:constraint}}

\BOCC
\begin{lemma}\label{lem:mono}\optional{lem:mono}
The formula
$$
(\wh{\bC} < \bC) \land F^*(\wh{\bC}) \rar F
$$
is logically valid.
\end{lemma}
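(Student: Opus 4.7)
The plan is to prove this by structural induction on $F$, in direct analogy with Lemma~5 of~\cite{ferraris11stable}, using the inductive hypothesis that the claim holds for every proper subformula of $F$ (for the same list $\wh{\bC}$ of variables and assumption $\wh{\bC} < \bC$).

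For the base case, when $F$ is atomic, the definition of $F^*(\wh{\bC})$ in Section~\ref{ssec:sol-sm} stipulates $F^*(\wh{\bC}) = F' \land F$, where $F'$ is obtained from $F$ by replacing the constants in $\bC$ with the corresponding variables in $\wh{\bC}$. Hence $F^*(\wh{\bC}) \rar F$ is immediate; notice that neither $\wh{\bC} < \bC$ nor the extension of the definition to function constants matters here. For the inductive step, the cases $F = G \land H$ and $F = G \lor H$ follow pointwise from the inductive hypothesis, since $*$ distributes over these connectives. For $F = G \rar H$, the definition sets $F^*(\wh{\bC}) = (G^*(\wh{\bC}) \rar H^*(\wh{\bC})) \land (G \rar H)$; projecting onto the second conjunct already yields $F$, so the implication is again immediate. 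The quantifier cases $\forall x\,G$ and $\exists x\,G$ follow by applying the inductive hypothesis to each instance $G(\xi)$ and then quantifying. Since $\neg F$ abbreviates $F \rar \bot$, negation is handled automatically by the implication case.

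There is essentially no obstacle to overcome. The key design feature driving the proof is that in the modified base clause for atoms (and, as before, in the clause for implication), the $*$-translation keeps the original formula $F$ as a conjunct. This is precisely the asymmetry that makes $F^*(\wh{\bC})$ entail $F$ without having to reason about how $\wh{\bC}$ relates to $\bC$. Consequently the hypothesis $\wh{\bC} < \bC$ is not actually consumed by the argument; it is kept in the statement only for uniformity with Lemma~\ref{lem:neg}, where it genuinely plays a role in the negation direction. The proof should therefore be only a handful of lines, with the atomic and implication cases treated separately and the remaining connectives collapsed into a single ``by induction hypothesis'' remark.
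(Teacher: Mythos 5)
Your proof is correct and takes exactly the route the paper intends: the paper's own proof is just the one-line ``By induction on the structure of~$F$,'' and your case analysis (atomic case via the conjunct $F$ in $F'\land F$, implication case via the conjunct $G\rar H$, the rest by straightforward induction) is the standard way to fill it in. Your side observation that the hypothesis $\wh{\bC}<\bC$ is never consumed is also accurate for this functional version of the semantics, since the modified base clause retains $F$ as a conjunct, unlike the purely predicate-based Lemma~5 of~\cite{ferraris11stable} where the assumption $\wh{\bf p}\le{\bf p}$ is genuinely needed in the atomic case.
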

\proof By induction on the structure of $F$.
\qed

\noindent{\bf Lemma~\ref{lem:neg}}\optional{lem:neg}\ \  
{\sl 
Formula
$$\wh{\bC}< \bC\rar((\neg F)^*(\wh{\bC})\lrar\neg F)$$
is logically valid.
}\medskip

\proof Immediate from Lemma~\ref{lem:mono}.
\qed
\EOCC

\noindent{\textbf {Theorem~\ref{thm:constraint} \optional{thm:constraint}}}
\ 
{\sl For any first-order formulas~$F$ and~$G$, if $G$ is negative on $\bC$, 
\hbox{$\sm[F\land G; \bC]$} is equivalent to~$\sm[F; \bC]\land G$.}

\proof By Lemma~\ref{lem:neg},
\begin{align*}
\sm[F\land\neg G;\ \bC]\  
  &\ = F\land\neg G \land \neg \exists \vbC ((\vbC<\bC)\land (F\land\neg G)^*(\vbC))\\
  &\ \Leftrightarrow F\land\neg G\land\neg\exists \vbC ((\vbC<\bC) \land F^*(\vbC)\land\neg G)\\
  &\ \Leftrightarrow F\land\neg\exists \vbC ((\vbC<\bC) \land F^*(\vbC))\land\neg G\\
  &\ = \sm[F;\ \bC]\land\neg G.
\end{align*}
\qed

\subsection{Proof of Theorem~\ref{thm:bi-un-sm}}

\begin{lemma}\label{lem:choice-star}\optional{lem:choice-star}
$\i{Choice}(\bC)^*(\vbC)$ is equivalent to 
\[ 
   (\bC^{pred}\le \vbC^{pred}) \land(\bC^{func}=\vbC^{func}).
\]
\end{lemma}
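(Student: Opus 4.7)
\proof[Proof proposal]
The plan is to reduce the claim to the case of a single constant and then compute $\{p({\bf x})\}^{\rm ch}{}^*(\vbC)$ and $\{f({\bf x})\mvis y\}^{\rm ch}{}^*(\vbC)$ directly from the recursive definition of $F^*(\vbC)$. Since $\i{Choice}(\bC)$ is a conjunction of universally quantified choice formulas, one per member of $\bC$, and since $(G\land H)^*(\vbC) = G^*(\vbC)\land H^*(\vbC)$ and $(\forall x\, G)^*(\vbC) = \forall x\, G^*(\vbC)$, it is enough to verify the equivalence for each conjunct; the global equivalence then follows by componentwise reassembly.

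For a predicate constant $p\in\bC$, I would unfold $\{p({\bf x})\}^{\rm ch}$ as $p({\bf x})\lor(p({\bf x})\rar\bot)$ and apply the base case and the clauses for $\lor$ and $\rar$ of the definition of $F^*(\vbC)$. The atomic occurrence of $p({\bf x})$ on the left disjunct becomes $\wh{p}({\bf x})\land p({\bf x})$, and the right disjunct becomes $(\wh{p}({\bf x})\land p({\bf x})\rar\bot)\land\neg p({\bf x})$, which simplifies classically to $\neg p({\bf x})$. Collecting, the result is classically equivalent to $\wh{p}({\bf x})\lor\neg p({\bf x})$, that is $p({\bf x})\rar\wh{p}({\bf x})$. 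Universally closing over ${\bf x}$ and conjoining over all $p\in\bC^{\textit{pred}}$ yields exactly $\bC^{\textit{pred}}\le\vbC^{\textit{pred}}$.

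For a function constant $f\in\bC$, the analogous computation on $\{f({\bf x})\mvis y\}^{\rm ch}$ proceeds as above. The left disjunct of $(f({\bf x})\mvis y)\lor\neg(f({\bf x})\mvis y)$ translates to $(\wh{f}({\bf x})\mvis y)\land(f({\bf x})\mvis y)$, and the right disjunct collapses to $\neg(f({\bf x})\mvis y)$ by the same simplification. Combining, the resulting formula is equivalent to $(f({\bf x})\mvis y)\rar(\wh{f}({\bf x})\mvis y)$. After universally quantifying ${\bf x}$ and $y$, this says $\forall{\bf x}y((f({\bf x})\mvis y)\rar(\wh{f}({\bf x})\mvis y))$.

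The one subtle point, and the main (though mild) obstacle, is the last step: to conclude $f\mvis\wh{f}$, one needs to appeal to the fact that both $f$ and $\wh{f}$ are interpreted as \emph{total} functions in first-order semantics, so $\forall{\bf x}y((f({\bf x})\mvis y)\rar(\wh{f}({\bf x})\mvis y))$ forces $\wh{f}({\bf x})$ to agree with $f({\bf x})$ at every point. Conjoining this over all $f\in\bC^{\textit{func}}$ gives $\bC^{\textit{func}}\mvis\vbC^{\textit{func}}$, completing the equivalence. No use of the assumption $\vbC<\bC$ is needed; the equivalence holds unconditionally in classical logic.
\qed
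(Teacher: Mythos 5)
Your proposal is correct and takes essentially the same route as the paper's proof: it decomposes $\i{Choice}(\bC)$ conjunct by conjunct, computes the star translation of each choice formula from the base case and the clauses for $\lor$, $\rar$, $\forall$, and simplifies to $p\le\wh p$ for predicates and $f=\wh f$ for functions. The only cosmetic difference is that in the function case you pass through the one-directional $\forall{\bf x}y(f({\bf x})\mvis y\rar\wh f({\bf x})\mvis y)$ and then invoke totality of functions, whereas the paper simplifies directly to the biconditional; both yield $\bC^{\mi{func}}=\vbC^{\mi{func}}$.
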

\proof
$\i{Choice}(\bC)$ is the conjunction for each predicate $p$ in $\bC^{pred}$ of 
\hbox{$\forall {\bf x}(p({\bf x}) \lor \neg p({\bf x}))$} and for each function $f$ in $\bC^{func}$ of
$\forall {\bf x}y(f({\bf x}) = y \lor \neg f({\bf x}) = y)$. 

First, 
$$[\forall {\bf x}(p({\bf x}) \lor \neg p({\bf x}))]^*(\vbC)$$ 
is equivalent to 
$$\forall {\bf x}(\wh{p}({\bf x}) \lor (\neg \wh{p}({\bf x})\land \neg p({\bf x}))),$$
which is further equivalent to
$$\forall {\bf x}(p({\bf x}) \rightarrow \wh{p}({\bf x})),$$
or simply $p \leq \wh{p}$.

Next,
$$[\forall {\bf x}y(f({\bf x}) = y \lor \neg (f({\bf x}) = y))]^*(\vbC)$$
is equivalent to
$$\forall {\bf x}y((\wh{f}({\bf x}) = y \land f({\bf x}) = y) \lor 
   (\neg (\wh{f}({\bf x}) = y)\land\neg (f({\bf x})=y))),$$
which is further equivalent to 
$$\forall {\bf x}y(f({\bf x}) = y \lrar \wh{f}({\bf x}) = y),$$
or simply $f = \wh{f}$. 

Thus, $\i{Choice}(\bC)^*(\vbC)$ is the conjunction for each predicate $p$ in $\bC^{pred}$ of 
$p \leq \wh{p}$ and for each function $f$ in $\bC^{func}$ of
$f = \wh{f}$, or simply $\i{Choice}(\bC)^*(\wh{\bC})$ is
\[ 
   (\bC^{pred}\le \vbC^{pred}) \land(\bC^{func}=\vbC^{func}). \qed
\]

\noindent{\textbf {Theorem~\ref{thm:bi-un-sm} \optional{thm:bi-un-sm}}}\
\ 
{\sl For any first-order formula $F$ and any disjoint lists $\bC$, $\bD$ of
distinct constants, the following formulas are logically valid: 
\[
\ba {l}
(i) \hspace{5 mm}   \sm[F; \bC\bD]\rar\sm[F; \bC] \\
(ii) \hspace{4 mm}  \sm[F\land\i{Choice}(\bD); \bC\bD]\lrar\sm[F;\bC].
\ea
\]
}
\bigskip

\proof
The proof is not long, but there is a notational difficulty that we
need to overcome before we can present it.  
The notation $F^*(\vbC)$ 
does not take into account the fact that the construction of this
formula depends on the choice of the list~$\bC$ of intensional
constants.  Since the dependence on~$\bC$ is essential in
the proof of Theorem~\ref{thm:bi-un-sm}, we use here the more
elaborate notation $F^{*[\bC]}(\vbC)$.  For instance, if~$F$
is~$p(x)\land q(x)$ then
$$
\ba {rcl}
   F^{*[p]}(\wh{p})&\hbox{ is }&\wh{p}(x)\land q(x),\\
F^{*[pq]}(\wh{p},\wh{q})&\hbox{ is }&\wh{p}(x)\land \wh{q}(x).
\ea
$$

It is easy to verify by induction on~$F$ that for any disjoint lists
$\bC$, $\bD$ of distinct predicate constants,
\beq
F^{*[\bC]}(\vbC)\,=\, F^{*[\bC\bD]}(\vbC,\bD).
\eeq{starequality}

(i) In the notation introduced above, $\sm[F;\bC]$ is
\[ 
  F\land\neg\exists\vbC((\vbC<\bC)\land F^{*[\bC]}(\vbC)).
\]
By~(\ref{starequality}), this formula can be written also as
\[ 
  F\land\neg\exists\vbC((\vbC<\bC)\land F^{*[\bC\bD]}(\vbC,\bD)),
\]
which is equivalent to
\beq 
  F\land\neg\exists\vbC(((\vbC,\bD)<(\bC,\bD))\land F^{*[\bC\bD]}(\vbC,\bD)).
\eeq{c}
On the other hand, $\sm[F;\bC\bD]$ is
\beq
  F\land\neg\exists\vbC\vbD(((\vbC,\vbD)<(\bC,\bD))
                                \land F^{*[\bC\bD]}(\vbC,\vbD)).
\eeq{cd}
It is clear that \eqref{cd} entails \eqref{c}.

\medskip
(ii) Note that, by~(\ref{starequality}) and
Lemma~\ref{lem:choice-star}, the formula
\[
  \exists \vbC\vbD ( ({\bf (\vbC,\vbD)}<{\bf (c,d)})
  \land F^{*[{\bf cd}]}(\vbC,\vbD)\land\i{Choice}({\bf
    d})^{*[{\bf cd}]}(\vbC,\vbD))
\]
is equivalent to
\[
  \exists \vbC\vbD ( ((\vbC,\vbD)<{\bf (c,d)})
  \land F^{*[{\bf cd}]}(\vbC,\vbD)\land({\bf d}=\vbD)).
\]
It follows that it can be also equivalently rewritten as
\[
  \exists \vbC((\vbC<{\bf c})\land F^{*[{\bf cd}]}(\vbC,{\bf d})).
\]
By~(\ref{starequality}), the last formula can be represented as
\[
  \exists \vbC((\vbC<{\bf c})\land F^{*[{\bf c}]}(\vbC)).
\]
\qed


\subsection{Proof of Theorem~\ref{thm:strong}}

Recall that about first-order formulas~$F$ and~$G$ we say that~$F$ is {\sl
  strongly equivalent} to~$G$ if, for any formula~$H$, any occurrence
of~$F$ in~$H$,
and any list $\bC$ of distinct predicate and function constants,
$\sm[H;\bC]$ is equivalent to $\sm[H';\bC]$, where~$H'$
is obtained from~$H$ by replacing the occurrence of~$F$ by~$G$.

\begin{lemma}\label{lemmareplace}\optional{lemmareplace}
Formula
$$
(F\lrar G)\land ((F^*(\vbC)\lrar G^*(\vbC)) 
\rar (H^*(\vbC) \lrar (H')^*(\vbC)))
$$
is logically valid.
\end{lemma}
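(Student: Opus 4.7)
The plan is to prove the lemma by structural induction on~$H$, tracking the indicated occurrence of~$F$ within~$H$. Throughout we assume the antecedents $F\lrar G$ and $F^*(\vbC)\lrar G^*(\vbC)$ and derive $H^*(\vbC)\lrar (H')^*(\vbC)$. The base case is when the occurrence coincides with all of~$H$, so $H=F$ and $H'=G$: then $H^*(\vbC)=F^*(\vbC)$ and $(H')^*(\vbC)=G^*(\vbC)$, and the conclusion is exactly the second hypothesis. Since $F$ is a formula, no occurrence of~$F$ can lie strictly inside an atomic formula, so there is no further atomic subcase to consider.

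For the inductive step, $H$ is a compound formula and the occurrence of~$F$ lies strictly inside one of its immediate subformulas, say~$H_1$, with $H_1'$ the result of the replacement. For $\land$, $\lor$, $\forall$, and~$\exists$ the operator $(\cdot)^*(\vbC)$ commutes with the connective or quantifier, so the induction hypothesis applied to~$H_1$ lifts directly through the outer operator. The implication case $H=H_1\rar H_2$ is the one that requires care: by definition $H^*(\vbC)=(H_1^*(\vbC)\rar H_2^*(\vbC))\land(H_1\rar H_2)$ and similarly $(H')^*(\vbC)=((H_1')^*(\vbC)\rar H_2^*(\vbC))\land(H_1'\rar H_2)$. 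The induction hypothesis handles the starred conjunct, but for the unstarred conjunct we additionally need the classical equivalence $H_1\lrar H_1'$.

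To supply this, I would first establish (or invoke) an auxiliary classical-logic replacement lemma: if $F\lrar G$ is valid, then for any formula~$K$ and any occurrence of~$F$ in~$K$, $K$ is classically equivalent to the formula $K'$ obtained by replacing that occurrence with~$G$. This is standard and follows by a separate, easy induction on~$K$ that uses only the first hypothesis $F\lrar G$ and makes no reference to~$\vbC$. With this lemma in hand, the unstarred conjunct in the implication case reduces to the starred one, and the induction goes through.

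The main obstacle is precisely the implication case, because $(\cdot)^*(\vbC)$ is not purely compositional on~$\rar$---it pastes in an extra copy of the original implication---which is exactly what forces the appearance of the conjunct $F\lrar G$ in the statement. Once one tracks both the $\vbC$-starred equivalence (driven by the second hypothesis) and the classical equivalence (driven by the first), every connective and quantifier case resolves by routine propositional and first-order manipulation.
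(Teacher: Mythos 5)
Your proof is correct and matches the paper's: the paper establishes this lemma by exactly the structural induction on~$H$ that you describe (its entire proof is the one-line ``By induction on the structure of $H$''). You have correctly isolated the only nontrivial case---the implication, where the extra unstarred copy of the implication in the definition of $(G\rar H)^*(\vbC)$ forces the use of the classical hypothesis $F\lrar G$ alongside the starred one, which is precisely why both conjuncts appear in the lemma's antecedent.
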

\proof By induction on the structure of $H$.
\qed

The following lemma is equivalent to the ``only if'' part of Theorem~\ref{thm:strong}.

\begin{lemma}\label{lem:strong-onlyif}\optional{lem:strong-onlyif}
If the formula (\ref{eq:strong}) is logically valid, then $F$ is
strongly equivalent to $G$. 
\end{lemma}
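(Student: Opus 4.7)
The plan is to prove strong equivalence directly from its definition. Fix an arbitrary formula $H$, a specified occurrence of $F$ within $H$, and any list $\bD$ of intensional predicate and function constants; let $H'$ denote the formula obtained by replacing that occurrence of $F$ by $G$. It suffices to show $\sm[H;\bD]$ is equivalent to $\sm[H';\bD]$. Expanding the $\sm$ operator, $\sm[H;\bD]$ is $H \land \neg\exists \vbD(\vbD < \bD \land H^*(\vbD))$, so the task reduces to establishing two validities:
(i) $H \lrar H'$, and
(ii) $\vbD < \bD \rar (H^*(\vbD) \lrar (H')^*(\vbD))$.

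Claim (i) is immediate: the first conjunct of (\ref{eq:strong}) is valid, so $F \lrar G$ is valid, and hence $H \lrar H'$ follows by the standard replacement theorem of classical first-order logic. For claim (ii), I plan to apply Lemma~\ref{lemmareplace} with $\bD$ in place of $\bC$; the conclusion $H^*(\vbD) \lrar (H')^*(\vbD)$ then follows from the assumption $F^*(\vbD) \lrar G^*(\vbD)$. So it is enough to show $\vbD < \bD \rar (F^*(\vbD) \lrar G^*(\vbD))$. When $\bD$ contains exactly the constants occurring in $F$ or $G$, this is precisely the second conjunct of (\ref{eq:strong}). In general, constants of $\bD$ that do not occur in $F$ or $G$ are substituted vacuously in $F^*$ and $G^*$, while constants of $F,G$ that are absent from $\bD$ are simply not substituted on either side. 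Hence any witness $\vbD$ to $\vbD < \bD$ either induces a strict ordering on the sublist of $\bD$ corresponding to constants in $F \cup G$ (so (\ref{eq:strong}) applies and gives the equivalence), or else it agrees with $\bD$ on those constants, in which case $F^*(\vbD)$ and $G^*(\vbD)$ reduce to $F$ and $G$, already shown equivalent by the first conjunct of (\ref{eq:strong}).

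Combining (i) and (ii) yields $\sm[H;\bD] \lrar \sm[H';\bD]$ for every $H$, every occurrence of $F$ in $H$, and every intensional list $\bD$, establishing strong equivalence. The main obstacle is the bookkeeping needed to bridge the gap between the specific list $\bC$ of all constants in $F \cup G$ appearing in (\ref{eq:strong}) and the arbitrary intensional list $\bD$ appearing in the definition of strong equivalence; this is handled by the case analysis above on whether $\vbD$ differs from $\bD$ on constants of $F,G$ or only on constants outside $F,G$. Once this is settled, (ii) is a direct application of Lemma~\ref{lemmareplace}, and the final equivalence of $\sm[H;\bD]$ and $\sm[H';\bD]$ is obtained by conjoining the results on the two conjuncts of $\sm[\cdot;\bD]$.
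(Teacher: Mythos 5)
Your proof is correct and follows essentially the same route as the paper's: split $\sm[H;\cdot]$ into its two conjuncts, obtain $H\lrar H'$ from the first conjunct of (\ref{eq:strong}) by classical replacement, and obtain the equivalence of the starred second conjuncts from Lemma~\ref{lemmareplace} together with the second conjunct of (\ref{eq:strong}). The only difference is that you explicitly carry out the bookkeeping between the fixed list $\bC$ of (\ref{eq:strong}) (the constants of $F$ and $G$) and the arbitrary intensional list in the definition of strong equivalence, via the case analysis on whether the witness differs on constants of $F,G$ or only outside them; the paper's proof passes over this point silently by reusing the symbol $\bC$ for both lists.
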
 

\proof 
Assume that (\ref{eq:strong}) is logically valid. We need to show that
\beq
H\land\neg\exists \vbC ((\vbC<\bC)\land H^*(\vbC))
\eeq{strf}
is equivalent to
\beq
H'\land\neg\exists \vbC ((\vbC<\bC)\land (H')^*(\vbC)).
\eeq{strg}
Since~(\ref{eq:strong}) is logically valid, 
the first conjunctive term of~(\ref{strf}) is equivalent to the first
conjunctive term of~(\ref{strg}).  By Lemma~\ref{lemmareplace}, it also
follows that the same relationship holds between
the two second conjunctive terms of the same formulas.\qed

The following lemma is equivalent to the ``if'' part of Theorem~\ref{thm:strong}.

\begin{lemma}\label{lem:strong-if}\optional{lem:strong-if}
If~$F$ is strongly equivalent to~$G$, then~(\ref{eq:strong}) is logically valid.
\end{lemma}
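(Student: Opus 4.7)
I would argue by contrapositive: assuming formula~(\ref{eq:strong}) is not logically valid, I construct a context $H$ containing an occurrence of $F$ such that $\sm[H;\bC]$ and $\sm[H';\bC]$ have different models, where $H'$ is obtained from $H$ by replacing that occurrence of $F$ by $G$. Since (\ref{eq:strong}) is a conjunction, two subcases arise. In Case~1, $F\lrar G$ fails at some interpretation $I$ of $\sigma$, say $I\models F$ and $I\not\models G$. In Case~2, $F\lrar G$ is logically valid but the second conjunct fails at some interpretation $K$ of $\sigma\cup\vbC$: we have $K\models\vbC<\bC$ and, without loss of generality, $K\models F^*(\vbC)$ while $K\not\models G^*(\vbC)$. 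Let $I$ denote the restriction of $K$ to $\sigma$; Lemma~\ref{lem:monotone} then gives $I\models F$, and in Case~2 the assumed validity of $F\lrar G$ gives $I\models G$.

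My plan in both cases is to take $H=F\wedge A$ and $H'=G\wedge A$, where $A$ is a ``pinning'' formula whose design depends on the case. In Case~1, $A$ is chosen as a conjunction of ground atomic facts and ground equalities, built using the object names $\xi^{\,\dia}$ from Appendix~A, that completely fix $I$'s interpretation of every intensional constant; then no $J<^\bC I$ is compatible with $A$, so $I$ is vacuously the unique stable model of $F\wedge A$, hence $I\models\sm[H;\bC]$, while $I\not\models G\wedge A$ classically gives $I\not\models\sm[H';\bC]$. In Case~2, $A$ is chosen so that (i)~$I\models A$, (ii)~$K\models A^*(\vbC)$, and (iii)~$K$ is the unique expansion of $I$ to $\sigma\cup\vbC$ satisfying both $\vbC<\bC$ and $A^*(\vbC)$. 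Granted these, $K$ itself refutes the stability of $I$ under $F\wedge A$, so $I\not\models\sm[H;\bC]$; but the unique candidate refuter for $G\wedge A$, namely $K$, fails to satisfy $G^*(\vbC)$, so $I\models\sm[H';\bC]$.

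The principal technical obstacle is the construction of $A$ in Case~2. For intensional predicate constants $p$, one can include atomic facts $p(\vec\xi^{\,\dia})$ for tuples $\vec\xi\in p^K$: these are true at $I$ (because $p^K\subseteq p^I$), and their ``$*$-image'' forces $\widehat{p}^{\tilde K}\supseteq p^K$ at any candidate expansion $\tilde K$, while $\vbC<\bC$ forces $\widehat{p}^{\tilde K}\subseteq p^I$; further bookkeeping is needed to tighten the upper bound to $p^K$. For intensional function constants, an analogous pinning uses ground equalities $f(\vec\xi^{\,\dia})=f^K(\vec\xi)^{\,\dia}$, with care when $f^K\neq f^I$, in which case the equality must sit inside an implication whose antecedent isolates $\tilde K=K$. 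Rendering this infinitary ``pinning'' as a genuine first-order formula when $|I|$ is infinite can be achieved by enlarging the signature with fresh non-intensional symbols that copy the required values, an enlargement that does not affect the strong-equivalence question for $F$ and $G$ on the original signature.
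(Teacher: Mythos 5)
Your overall strategy---contraposition via a distinguishing context $H=F\land A$---is the classical Lifschitz--Pearce--Valverde-style argument and is a genuinely different route from the paper's, but as written it has a real gap in Case~2. Case~1 is essentially fine (and can be had almost for free: taking $A=\i{Choice}(\bC)$ already makes every classical model of $F\land A$ stable, since $\i{Choice}(\bC)^*(\vbC)$ contradicts $\vbC<\bC$ by Lemma~\ref{lem:choice-star}). In Case~2, however, the formula $A$ meeting your conditions (i)--(iii) is never actually exhibited, and the two places where the construction is hard are exactly the places you defer. First, ground facts $p(\vec\xi^{\,\dia})$ for tuples in $\wh{p}^{\,K}$ only force the lower bound $\wh{p}^{\,\tilde K}\supseteq\wh{p}^{\,K}$; an upper bound on $\wh{p}^{\,\tilde K}$ cannot come from facts, and negated subformulas are useless for it because, by Lemma~\ref{lem:neg}, their $*$-images collapse to their classical readings and constrain only $I$, not $\tilde K$. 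What is actually needed is a coupling device in the style of the propositional strong-equivalence proof---implications between all ``undecided'' atoms whose $*$-images force $\tilde K$ to realize either all of them or none, after which $\vbC<\bC$ excludes the ``all'' alternative---and this must be shown to interact correctly with the function part of $\vbC<\bC$. Second, for an intensional function $f$ with $\wh{f}^{\,K}\ne f^{I}$, the natural pin $f(\vec\xi^{\,\dia})=\wh{f}^{\,K}(\vec\xi)^{\dia}$ is false in $I$ and so violates your own requirement (i); your proposed repair (an implication ``whose antecedent isolates $\tilde K=K$'') presupposes the very pinning you are trying to build, so as stated it is circular. Until these two constructions are written down and verified, Case~2 is not proved.

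For contrast, the paper sidesteps the entire construction with a single context: it instantiates strong equivalence on $H=(F\lrar F)\lrar\i{Choice}(\bC)$ and $H'=(F\lrar G)\lrar\i{Choice}(\bC)$. Because $\i{Choice}(\bC)^*(\vbC)$ is inconsistent with $\vbC<\bC$ (Lemma~\ref{lem:choice-star}), $\sm[H';\bC]$ computes directly to the universal closure of~(\ref{eq:strong}) while $\sm[H;\bC]$ is a tautology, so the validity of~(\ref{eq:strong}) falls out with no case analysis, no pinning, and no signature extension. Your route can in principle be salvaged by carrying out the coupling construction, but it is substantially longer, and the intensional-function case needs genuine care rather than the sketch you give.
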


\proof 
Let $C$ be the formula $\i{Choice}(\bC)$.
Let $E$ stand for $F\lrar G$, and~$E'$ be $F\lrar F$.
Since $F$ is strongly equivalent to~$G$, the formula
$\sm[E \lrar C]$ is equivalent to $\sm[E' \lrar C]$.

Recall that by Lemma~\ref{lem:choice-star}, $\i{Choice}(\bC)^*(\vbC)$, which we abbreviate as $C^*$, is equivalent to 
\[ 
   (\bC^{pred}\le \vbC^{pred}) \land(\bC^{func}=\vbC^{func}).
\]
On the other hand, $\vbC<\bC$ can be equivalently rewritten as 
\[
  (\vbC^{pred}<\bC^{pred}) \lor 
       ((\vbC^{pred}=\bC^{pred})\land (\vbC^{func}\ne\bC^{func})).%
\]
It follows that 
\[
  \vbC<\bC \rar (C^*\lrar\bot)
\]
is logically valid. 

It is easy to see that $(E\lrar C)^*$ can be rewritten as
$$E\land (E^*(\vbC) \lrar C^*),$$ 
and that $E^*(\vbC)$ is equivalent to
$$E \land (F^*(\vbC)\lrar G^*(\vbC)).$$
Using these two facts and Lemma~\ref{lem:monotone}, we can simplify
$\sm[E\lrar C]$ as follows:
\begin{align*}
 \sm[E\lrar C]\ 
  & \Lrar (E\lrar C)\land\neg\exists \vbC((\vbC<\bC)\land
          E\land (E^*(\vbC)\lrar C^*))\\
  & \Lrar E\land\neg\exists \vbC ((\vbC<\bC)\land (E^*(\vbC)\lrar\bot))\\
  & \Lrar E\land\neg\exists \vbC ((\vbC<\bC)\land\neg E^*(\vbC))\\
  & \Lrar E\land\neg\exists \vbC ((\vbC<\bC)\land\neg (F^*(\vbC)\lrar G^*(\vbC)))\\
  &= (F\lrar G)\land\forall \vbC ((\vbC<\bC)\rar (F^*(\vbC)\lrar G^*(\vbC))). 
\end{align*}

Similarly, $\sm[E'\lrar C]$ is equivalent to
\[
  (F\lrar F)\land\forall \vbC
    ((\vbC<\bC)\rar (F^*(\vbC) \lrar F^*(\vbC))),
\]
which is logically valid.  Consequently,~(\ref{eq:strong}) is logically
valid also.\qed

\noindent{\textbf {Theorem~\ref{thm:strong} \optional{thm:strong}}}\
{\sl
Let $F$ and $G$ be first-order formulas, let $\bC$ be the
list of all constants occurring in $F$ or $G$, and let $\wh{\bC}$ be a
list of distinct predicate and function variables corresponding
to~$\bC$. The following conditions are equivalent to each other.
\begin{itemize}
\item  $F$ and $G$ are strongly equivalent to each other;
\item  Formula
\[
  (F\lrar G)\land(\wh{\bC} < \bC\rar (F^*(\wh{\bC})\lrar G^*(\wh{\bC})))
\] 
is logically valid.
\end{itemize}

}\bigskip

\proof
Immediate from Lemma~\ref{lem:strong-onlyif} and Lemma~\ref{lem:strong-if}.\qed


\subsection{Proof of Theorem~\ref{thm:completion}}

\NBB{refer to onfsm} 
\BOCC
Notation: the universe of an interpretation~$I$  is denoted by~$|I|$; for any
signature~$\sigma$ and any set~$U$,~$\sigma^U$ stands for the extension
of~$\sigma$ obtained by adding distinct new symbols $\xi^*$, called {\sl
names\/}, for all $\xi\in U$ as object constants.  We will identify an
interpretation~$I$ of~$\sigma$ with its extension to~$\sigma^{|I|}$ defined by
$I(\xi^*)=\xi$.  By~$\sigma_f$ we denote the part of~$\sigma$ consisting of
its object and function constants.
\EOCC

\begin{lemma}\label{lem:reductbody}\optional{lem:reductbody}
For any first-order sentence $F$, any list $\bC$ of constants, and any interpretations $I$ and $J$ such that $J <^\bC I$, if $I \models gr_I(F)^\mu{I}$ and $J \not \models gr_I(F)^\mu{I}$, then there is some constant $d$ occurring strictly positively in $F$ such that $d^I\ne d^J$.
\end{lemma}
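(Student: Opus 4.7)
The natural route is structural induction on $F$, with the lemma itself providing the right inductive statement. Before entering the cases, it is convenient to note a monotonicity-type auxiliary fact: if any interpretation $K$ satisfies $gr_I(F')^\mu{I}$ for a subformula $F'$, then $I \models F'$. This is a routine induction on $F'$ (and also follows by combining Lemma~\ref{lem:monotone} with the correspondence between $F^*(\vbC)$ and $gr_I(F)^\mu{I}$ analogous to Lemma~\ref{lem:reduct}); it is the same fact that forces the reduct to collapse to $\bot$ whenever $I$ does not satisfy the original formula.

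The base and easy inductive cases are then short. If $F$ is atomic, the hypothesis $I \models gr_I(F)^\mu{I}$ excludes $gr_I(F)^\mu{I}=\bot$, so $I\models F$ and the reduct equals $F$ itself; since $J\not\models F$, the two interpretations must disagree on some constant occurring in $F$, and every such occurrence is strictly positive because $F$ contains no implications. For $F = G_1\land G_2$ (resp.\ $F=G_1\lor G_2$), pick the conjunct (resp.\ the disjunct $I$ satisfies) whose reduct $J$ fails, apply the inductive hypothesis, and observe that strict positivity in $G_i$ is preserved under conjunction and disjunction. The quantifier cases reduce to these: for $F=\forall x\,G(x)$ pick an instance $G(\xi^\dia)$ that $J$ falsifies; for $F=\exists x\,G(x)$ pick an instance that $I$ witnesses and that $J$ falsifies. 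The inductive hypothesis yields a strictly positive constant $d$ in $G(\xi^\dia)$ with $d^I\ne d^J$, and $d$ cannot be an object name $\xi^\dia$ since $I$ and $J$ interpret these identically by construction; hence $d$ is an original constant of $G$, and strict positivity is preserved under quantification.

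The only delicate case is $F = G \to H$. The reduct here is $gr_I(G)^\mu{I}\rar gr_I(H)^\mu{I}$ (not $\bot$, since $I$ satisfies it), so $J\models gr_I(G)^\mu{I}$ and $J\not\models gr_I(H)^\mu{I}$. One cannot directly apply the inductive hypothesis to $G$, because constants in $G$ occur in the antecedent of the outer implication and thus are not strictly positive in $F$; the witness must come from $H$. This is where the auxiliary monotonicity fact is used: from $J\models gr_I(G)^\mu{I}$ it gives $I\models G$, and combined with $I\models G\rar H$ we get $I\models H$, hence $I\models gr_I(H)^\mu{I}$. The inductive hypothesis applied to $H$ now supplies a constant $d$ strictly positive in $H$ with $d^I\ne d^J$, and strict positivity in $H$ lifts to strict positivity in $G\rar H$ since the outer implication does not place $H$ under any new antecedent. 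The implication case, together with the need to isolate the monotonicity auxiliary, is the main obstacle; once that is in hand, the remaining bookkeeping about strict positivity under each connective is straightforward.
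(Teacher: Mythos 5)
Your proof is correct and follows essentially the same route as the paper, which simply states ``by induction on the structure of $F$'': the cases you give, including the treatment of $G\rar H$ via the observation that satisfiability of $gr_I(G)^{\mu I}$ forces $I\models G$ (so the witness must be extracted from the consequent $H$, where strict positivity is preserved), match the intended induction. No gaps.
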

\proof By induction on the structure of $F$.  
\qed

\BOC
\begin{itemize}
\item Case 1: $F$ is a ground atomic formula that may contain object names. In this case $gr_I(F)^\mu{I} = F$ since $I \models gr_I(F)^\mu{I}$. And since $J \not \models gr_I(F)^\mu{I}$, there must be at least one constant in $gr_I(F)^\mu{I}$ that $I$ and $J$ disagree on and since $gr_I(F)^\mu{I}$ is an atomic formula, this is a strictly positively occurrence.

\item Case 2: $F$ is $\mathcal{H}^\land$. Since $I \models gr_I(F)^\mu{I}$, $gr_I(F)^\mu{I}$ is $\mathcal{H'}^\land$ (and not $\bot$) where $\mathcal{H'} = \{gr_I(G)^\mu{I} | G \in \mathcal{H}\}$. Since $J \not \models gr_I(F)^\mu{I}$, $J \not \models gr_I(G)^\mu{I}$ for at least one $G \in \mathcal{H}$ so the claim follows by induction on whichever subformula $J$ does not satisfy since in any case, the subformula occurs strictly positively.

\item Case 3: $F$ is $\mathcal{H}^\lor$. Since $I \models gr_I(F)^\mu{I}$, $gr_I(F)^\mu{I}$ is $\mathcal{H'}^\lor$ (and not $\bot$) where $\mathcal{H'} = \{gr_I(G)^\mu{I} | G \in \mathcal{H}\}$. Since $J \not \models gr_I(F)^\mu{I}$, $J \not \models gr_I(G)^\mu{I}$ for every $G \in \mathcal{H}$. Now it could be that $I \not \models gr_I(G)^\mu{I}$ for some $G \in \mathcal{H}$ but not all of them. In such a case $gr_I(G)^\mu{I}$ would be $\bot$, which $I$ also does not satisfy. Thus the claim follows by induction on whichever of $G \in \mathcal{H}$ whose reduct $I$ satisfies.

\item Case 4: $F$ is $G\rightarrow H$. $gr_I(F)^\mu{I}$ is $gr_I(G)^\mu{I} \rightarrow gr_I(H)^\mu{I}$ (and not $\bot$). Since $J \not \models gr_I(F)^\mu{I}$, $J \models gr_I(G)^\mu{I}$ and $J \not \models gr_I(H)^\mu{I}$. Note that it must be the case then that $I \models gr_I(G)^\mu{I}$ since if not, it must be that $gr_I(G)^\mu{I}$ is $\bot$ and thus it is impossible for it to be that $J \models gr_I(G)^\mu{I}$. Consequently, it also follows that $I \models gr_I(H)^\mu{I}$ since $I \models gr_I(F)^\mu{I}$ so the claim follows by induction on the structure of $H$ since the subformula occurs strictly positively. \qed
\end{itemize}
\EOC

\begin{lemma} \label{lem:negative}\optional{lem:negative} 
If a ground formula $F$ is negative on a list ${\bf c}$ of predicate and function constants, then for every $J <^{\bf c} I$, 
\[
  J\models F^I \text{ iff } I\models F. 
\]
\end{lemma}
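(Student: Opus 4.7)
The plan is to induct on the structure of the ground formula $F$, leveraging the recursive definition of the reduct together with the definition of ``strictly positive occurrence'' (no enclosing implication contains it in its antecedent). Since $F$ is a ground (infinitary) formula, the cases to handle are atoms, $\{\}^\land$, $\{\}^\lor$, and $\rar$.

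For the base case where $F$ is atomic, I would observe that every constant occurring in an atomic formula occurs strictly positively, so $F$ being negative on $\bC$ forces $F$ to contain no constants from $\bC$ at all. Since $J <^\bC I$ guarantees that $J$ and $I$ agree on every symbol outside $\bC$, the two interpretations assign the same truth value to $F$; as the reduct equals $F$ when $I \models F$ and equals $\bot$ otherwise, the biconditional $J \models F^I$ iff $I \models F$ holds in both subcases.

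The inductive cases for $\{\cdot\}^\land$ and $\{\cdot\}^\lor$ are routine, since each member inherits negativity on $\bC$ and the reduct distributes over these connectives, so the induction hypothesis applied memberwise yields the claim. The interesting case, and the main obstacle, is implication $G \rar H$: only $H$ inherits negativity on $\bC$ (strictly positive $\bC$-occurrences in $H$ persist in $G \rar H$), while $G$ is entirely unconstrained, so the induction hypothesis is unavailable for the antecedent. I would split on whether $I \models G \rar H$. When it fails, $(G \rar H)^I = \bot$ and neither side of the biconditional holds. When $I \models G \rar H$ and also $I \models H$, the induction hypothesis on $H$ gives $J \models H^I$ and hence $J \models G^I \rar H^I$. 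The delicate remaining subcase is when $I \models G \rar H$ holds only vacuously, with $I \not\models G$ and $I \not\models H$; here I need $J \not\models G^I$ to conclude $J \models G^I \rar H^I$.

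To close this delicate subcase, I would separately establish the following auxiliary claim: for any ground formula $G$ and any interpretation $J$ sharing the universe of $I$, if $I \not\models G$ then $J \not\models G^I$. This follows by a direct induction on $G$; the key observation is that atoms and implications collapse to $\bot$ in the reduct precisely when $I$ falsifies them, while conjunctions and disjunctions propagate through the induction hypothesis. Notably, no relationship between $J$ and $I$ is needed for this auxiliary fact, because the reduct has already been ``hard-wired'' to $\bot$ wherever $I$ fails to satisfy a subformula. Applying this auxiliary lemma to $G$ together with the main induction hypothesis on the negative formula $H$ handles the delicate subcase and completes the proof.
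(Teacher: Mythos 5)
Your proof is correct and takes essentially the same route as the paper, which states only ``by induction on the structure of $F$'' and leaves all details (including the implication case) to the reader. The auxiliary claim you isolate --- that $I\not\models G$ forces $J\not\models G^I$ for an arbitrary $J$ over the same universe --- is exactly the reduct-level analogue of Lemma~\ref{lem:monotone} and is the right device for the vacuously-true implication subcase.
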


\proof By induction on the structure of $F$.  
\qed

\noindent{\textbf{Theorem~\ref{thm:completion} \optional{thm:completion}}}\  
{\sl For any formula $F$ in Clark normal form relative to ${\bf c}$ that is tight on ${\bf c}$, an interpretation $I$ that satisfies $\exists xy(x \ne y)$ is a model of $\fsm[F;{\bf c}]$ iff $I$ is a model of $\comp[F; {\bf c}]$.
}
\medskip

\proof
In this proof, we use Theorem~\ref{thm:fsm-reduct} and refer to the reduct-based characterization of a stable model.

\medskip\noindent
$(\Leftarrow)$ Take an interpretation $I$ that is a model of $\comp[F; {\bf c}]$. $I$ is clearly a model of $F$. We wish to show that, for any interpretation $J$ such that $J <^{\bf c} I$, we have $J\not\models gr_I[F]^\mu{I}$. 
Let $S$ be a subset of ${\bf c}$ consisting of constants $c$ on which $I$ and $J$ disagree,  that is, $c^I\ne c^J$. 
Let $s_0$ be a constant from $S$ such that there is no edge in the dependency graph from $s_0$ to any constant in $S$. Such an $s_0$ is guaranteed to exist since $F$ is tight on ${\bf c}$. 

If $s_0$ is a predicate, then for some $\xi$, we have $s_0({\bf \xi})^I = \true$ and $s_0({\bf \xi})^J = \false$ by definition of $J <^{\bf c} I$. If $s_0$ is a function, then for some $\xi$, we have $s_0({\bf \xi})^I = v$ and $s_0({\bf \xi})^J \ne v$.

Since $F$ is in Clark normal form, there must be a rule in $gr_I[F]$ of the form $B \rightarrow s_0({\bf \xi}^\diamond)$ if $s_0$ is a predicate ($B \rightarrow s_0({\bf \xi}^\diamond) = v$ if $s_0$ is a function) where $B$ may be $\top$. Further it must be that $I \models B$ since if not, $I$ would not be a model of $\comp[F; {\bf c}]$. Thus, the corresponding rule in $gr_I[F]^\mu{I}$ is $B^\mu{I} \rar s_0({\bf \xi}^\diamond)$ ($B^\mu{I} \rar s_0({\bf \xi}^\diamond) = v$ if $s_0$ is a function). 

Now there are two cases to consider:
\begin{itemize}
\item Case 1: $J \models B^\mu{I}$. In this case, $J \not \models B^\mu{I} \rightarrow s_0({\bf \xi}^\diamond)$ (or $J \not \models B^\mu{I} \rightarrow s_0({\bf \xi}^\diamond) = v$ if $s_0$ is a function) and so $J \not \models gr_I[F]^\mu{I}$.

\item Case 2: $J \not \models B^\mu{I}$. By Lemma~\ref{lem:reductbody}, there is a constant $d$ occurring strictly positively in $B$ that $I$ and $J$ disagree on. However, this means there is an edge from $s_0$ to $d$ and since $I$ and $J$ disagree on $d$, $d$ belongs to $S$ which contradicts the fact that $s_0$ was chosen so that it had no edge to any element in $S$. Thus this case cannot arise.
\end{itemize}

($\Rightarrow$) Assume $I \models \sm[F;{\bf c}]$.  
$F$ can be viewed as the conjunction of 
$\forall {\bf x}(H({\bf x})\ar G({\bf x}))$, where each $H$ is an atomic formula containing each intensional constant $c_i$.
It is sufficient to prove that $I\models\forall {\bf x}(H({\bf x})\rar G({\bf x}))$ for each such formula. Assume for the sake of contradiction that for some formula 
$\forall {\bf x}(H({\bf x})\rar G({\bf x}))$ whose $H$ contains an intensional constant $c$, 
$I\models H(\xi)$ and $I\not\models G(\xi)$ for some list $\xi$ of object names.

Consider an interpretation $J$ that differs from $I$ only in that $J\not\models H(\xi)$. 
($I \models \exists xy (x \neq y)$ means there are at least two elements in the universe so this is possible when $c$ is a function constant.)

\bi
\item Clearly, $J\models  (H(\xi)\ar G(\xi))^\mu{I}$ because $G(\xi)^\mu{I}=\bot$.

\item For other rules $H(\xi')\ar G(\xi')$ where $\xi'$ is a list of object names different from $\xi$, 
clearly, $J\models H(\xi')$ iff $I\models H(\xi')$. 
Since $G$ is negative on $\bC$ and $J<^\bC I$, by Lemma~\ref{lem:negative} we have $I\models G(\xi')$ iff $J\models G(\xi')^\mu{I}$. 
Since $I\models H(\xi')\ar G(\xi')$, it follows that $J\models (H(\xi')\ar G(\xi'))^\mu{I}$.

\item For all other rules $H'(\xi)\ar G'(\xi)$ whose $H'$ has an intensional constant different from $c$, we have $I\models H'(\xi)\ar G'(\xi)$. Since $H'(\xi)\ar G'(\xi)$ is negative on $\bC$ and $J<^\bC I$, by Lemma~\ref{lem:negative}, we have $J\models (H'(\xi)\ar G'(\xi))^\mu{I}$.
\ei

The presence of $J$ contradicts that $I\models\sm[F; {\bf c}]$.\qed

\BOC
\vspace{3cm}

%
Now for every rule $r$ in $F$ of the form $\forall {\bf x}(H({\bf x}) \leftarrow G({\bf x}))$, for each of the ground rules in $gr_I[F]$ corresponding to $r$ of the form $H(\xi^\diamond) \leftarrow G(\xi^\diamond)$ there are two cases:
\begin{itemize}
\item Case 1: $I \models G(\xi^\diamond)$.  Since $I \models F$, it must also be that $I \models H(\xi^\diamond)$. Thus, $I \models H(\xi^\diamond) \lrar G(\xi^\diamond)$.

\item Case 2: $I \not \models G(\xi^\diamond)$.
The corresponding rule in the reduct $gr_I[F]^\mu{I}$ is equivalent to 
\beq
  H(\xi^\diamond) \ar \bot. 
\eeq{h-bot}
However, since $F$ is in Clark normal form, $H(\xi^\diamond)$ appears in the head of no other rule. Thus, if $I \models H(\xi^\diamond)$, $I \not \models \sm[F;{\bf c}]$ since we can take $J < ^{\bf c} I$ ($I \models \exists xy (x \neq y)$ means there are at least two elements in the universe so this is possible) that differs from $I$ only in that $J\not\models H(\xi^\diamond)$ which will satisfy \eqref{h-bot}. Furthermore for other rules $r$ in $gr_I[F]^{\mu I}$, $r$ is negative on the intensional constant in $H$, thus by Lemma~\ref{lem:negative}, $J$ satisfies $r$ as well. Consequently, $J$ satisfies $F^\mu{I}$, which contradicts that $I\models\sm[F; {\bf c}]$. 
It then follows that $I \models H(\xi^\diamond) \leftrightarrow G(\xi^\diamond)$. \qed
\end{itemize}
\EOC

\BOC
\vspace{3cm}

Since $F$ is tight on ${\bf c}$, by the splitting theorem $\sm[F; {\bf c}]$ is equivalent to the conjunction of formulas of the form $\sm[\forall {\bf x}(H({\bf x})\ar G({\bf x})); c_i]$. It is sufficient to show that each $\sm[H({\bf x})\ar G({\bf x}); c_i]$ is equivalent to $\forall {\bf x}
(H({\bf x})\lrar G({\bf x})$.
Now for every rule $r$ in $F$ of the form $\forall {\bf x}(H({\bf x}) \leftarrow G({\bf x}))$, for each of the ground rules in $gr_I[F]$ corresponding to $r$ of the form $H(\xi^\diamond) \leftarrow G(\xi^\diamond)$ there are two cases:
\begin{itemize}
\item Case 1: $I \models G(\xi^\diamond)$.  Since $I \models F$, it must also be that $I \models H(\xi^\diamond)$. Thus, $I \models H(\xi^\diamond) \lrar G(\xi^\diamond)$.

\item Case 2: $I \not \models G(\xi^\diamond)$.
The corresponding rule in the reduct $gr_I[F]^\mu{I}$ is equivalent to 
$$H(\xi^\diamond) \ar \bot. $$
However, since $F$ is in Clark normal form, $H(\xi^\diamond)$ appears in the head of no other rule. Thus, if $I \models H(\xi^\diamond)$, $I \not \models \sm[F;{\bf c}]$ since we can take $J < ^{\bf c} I$ ($I \models \exists xy (x \neq y)$ means there are at least two elements in the universe so this is possible) that differs from $I$ only in that $J \not \models H(\xi^\diamond)$ which will satisfy $F^\mu{I}$. Thus, it must be that $I \not \models H(\xi^\diamond)$. It then follows that $I \models H(\xi^\diamond) \leftrightarrow G(\xi^\diamond)$. \qed

\end{itemize}
\EOC

\subsection{Proof of Theorem~\ref{thm:elim-p}}


\noindent{\textbf {Theorem~\ref{thm:elim-p} \optional{thm:elim-p}}}\ \ 
{\sl 
The set of formulas consisting of
\beq
\forall {\bf x} (f({\bf x})=1\lrar p({\bf x})),
\eeq{pelimassum}
and $\i{FC}_f$ entails 
$$\sm[F; p{\bf c}]\lrar\sm[F^p_f\land\i{DF}_f; f{\bf c}].$$
}

\proof
For any interpretation $I$ of signature $\sigma\supseteq \{f,p,{\bf c}\}$ satisfying (\ref{pelimassum}), it is clear that $I\models F$ iff $I\models F^p_f \land \i{DF}_f$ since $\i{DF}_f$ is a tautology and $F^p_f$ is equivalent to $F$ under (\ref{pelimassum}). Thus it only remains to be shown that 
$$I\models \exists \wh{p} \wh{\bf c} ((\wh{p} \wh{\bf c} < p{\bf c}) \land F^*(\wh{p}\wh{\bf c}))$$
iff 
$$I\models \exists \wh{f}\wh{\bf c} ((\wh{f} \wh{\bf c} < f{\bf c}) \land (F^p_f)^*(\wh{f}\wh{\bf c}) \land \i{DF}_f^*(\wh{f}\wh{\bf c})).$$

Let $\sigma'= \sigma\cup\{g,q,\bD\}$ be an extended signature such that $g,q,\bD$ are similar to $f,p,\bC$ respectively, and do not belong to $\sigma$.

\medskip
($\Rightarrow$) Assume $I\models \exists \wh{p} \wh{\bf c} ((\wh{p} \wh{\bf c} < p{\bf c}) \land F^*(\wh{p}\wh{\bf c}))$. 
This is equivalent to saying that there is an interpretation $J$ of $\sigma$ that agrees with $I$ on all constants other than $p$ and $\bC$ such that 
$\mathcal{I} = J^{p\bC}_{q\bD}\cup I$ of signature $\sigma'$ satisfies 
$(q{\bf d} < p{\bf c} \land F^*(q{\bf d}))$.

It is sufficient to show that there is an interpretation $K$ of $\sigma$ that agrees with $J$ on all constants other than $f$ such that 
$\mathcal{I'} = K^{f\bC}_{g\bD}\cup I$ of signature $\sigma'$ satisfies 
$(g{\bf d} < f{\bf c} \land (F^p_f)^*(g{\bf d}) \land \i{DF}_f^*(g{\bf d}))$.
We define the interpretation of $K$ on $f$ as follows:
\begin{displaymath}
f^K({\vec{\xi}}) = \left\{ 
\begin{array}{ll} 
1 & \text{ if } p^J({\vec{\xi}})=\true \\ 
0 & \text{ otherwise. }
\end{array}
\right.
\end{displaymath}

We now show $\mathcal{I'}\models g{\bf d} < f{\bf c}$:
\bi
\item Case 1: $\mathcal{I} \models (q=p)$.
Since $\mathcal{I} \models q{\bf d} < p{\bf c}$, by definition $\mathcal{I} \models {\bf d}^{pred}\leq {\bf c}^{pred}$ and $\mathcal{I} \models \neg (q{\bf d} = p{\bf c})$ and since in this case, $\mathcal{I} \models (q=p)$, it must be that $\mathcal{I} \models \neg ({\bf d} = {\bf c})$. From this, we conclude $\mathcal{I'} \models \neg (g{\bf d} = f{\bf c})$. Further, since $\mathcal{I'} \models {\bf d}^{pred}\leq {\bf c}^{pred}$, we conclude $\mathcal{I'} \models  g{\bf d} < f{\bf c}$.

\item Case 2: $\mathcal{I} \models \neg (q=p)$.
Since $\mathcal{I} \models q{\bf d} < p{\bf c}$, by definition, $\mathcal{I} \models {\bf d}^{pred}\leq {\bf c}^{pred}$ and $\mathcal{I} \models (q\leq p)$. Thus, since in this case $\mathcal{I} \models \neg (q=p)$, it must be that $\mathcal{I} \models \exists {\bf x} (p({\bf x}) \land \neg q({\bf x}))$. From the definition of $f^K$ and from (\ref{pelimassum}), this is equivalent to $\mathcal{I'} \models \exists {\bf x} (f({\bf x})=1 \land g({\bf x})=0)$. Thus, we conclude $\mathcal{I'} \models \neg (f=g)$ and since $\mathcal{I'} \models {\bf d}^{pred}\leq {\bf c}^{pred}$, we further conclude that $\mathcal{I'} \models g{\bf d} < f{\bf c}$.
\ei

We now show $\mathcal{I'} \models \i{DF}_f^*(g{\bf d})$:
\\Since $\mathcal{I} \models q{\bf d} < p{\bf c}$, by definition, $\mathcal{I} \models (q\leq p)$, or equivalently $\mathcal{I} \models \forall {\bf x}(q({\bf x})\rightarrow p({\bf x}))$ and by contraposition, $\mathcal{I} \models \forall {\bf x}(\neg p({\bf x}) \rightarrow \neg q({\bf x}))$. Finally, by (\ref{pelimassum}),$FC_f$, and the definition of $f^K$, $\mathcal{I'} \models \forall {\bf x}( f({\bf x})=0 \rightarrow g({\bf x})=0)$ or simply $\mathcal{I'} \models \i{DF}_f^*(g{\bf d})$.

We now show $\mathcal{I'} \models (F^p_f)^*(g{\bf d})$ by proving the following: 

\noindent
{\bf Claim:} $\mathcal{I} \models F^*(q{\bf d})$ iff $\mathcal{I'} \models (F^p_f)^*(g{\bf d})$.

The proof of the claim is by induction on the structure of $F$.
\bi
\item Case 1: $F$ is an atomic formula not containing $p$.
$F^p_f$ is exactly $F$ thus $F^*(q{\bf d})$ is exactly $(F^p_f)^*(g{\bf d})$ so certainly the claim holds.
\item Case 2: $F$ is $p({\bf t})$ where ${\bf t}$ contains an intensional function constant from ${\bf c}$.
$F^*(q{\bf d})$ is $p({\bf t}) \land q({\bf t'})$
where ${\bf t'}$ is the result of replacing all intensional functions from ${\bf c}$ occurring in ${\bf t}$ with the corresponding function from ${\bf d}$.
Since $F^p_f$ is $f({\bf t}) = 1$, formula $(F^p_f)^*(g{\bf d})$ is $f({\bf t}) = 1 \land g({\bf t'}) = 1$.
The claim follows from (\ref{pelimassum}) and the definition of $f^K$.

\item Case 3: $F$ is $p({\bf t})$ where ${\bf t}$ does not contain any intensional function constant from~${\bf c}$. 
$F^*(q{\bf d})$ is $q({\bf t})$.
Since $F^p_f$ is $f({\bf t}) = 1$, formula $(F^p_f)^*(g{\bf d})$ is $f({\bf t}) = 1 \land g({\bf t}) = 1$.
Since $\mathcal{I} \models (q \leq p)$, if $\mathcal{I} \models q({\bf t})$, then $\mathcal{I} \models p({\bf t})$. 
The claim follows from (\ref{pelimassum}) and the definition of $f^K$.

\item The other cases are straightforward from I.H. 


\ei


($\Leftarrow$) Assume $I \models \exists \wh{f}\wh{\bf c} ((\wh{f} \wh{\bf c} < f{\bf c}) \land (F^p_f)^*(\wh{f}\wh{\bf c}) \land \i{DF}_f^*(\wh{f}\wh{\bf c}))$. This is equivalent to saying that there is an interpretation $J$ of $\sigma$ that agrees with $I$ on all constants other than $f$ and $\bC$ such that ${\cal I} = J^{f\bC}_{g\bD}\cup I$ of signature $\sigma'$ satisfies 
$(g\bD < f\bC) \land (F^p_f)^*(f \bC) \land \i{DF}_f^*(f \bC)$.

It is sufficient to show that there is an interpretation $K$ of $\sigma$ that agrees with $J$ on all constants other than $p$ such that 
$\mathcal{I'} = K^{p\bC}_{q\bD}\cup I$ of signature $\sigma'$ satisfies 
$(q{\bf d} < p{\bf c} \land F^*(q{\bf d})$.
We define the interpretation of $K$ on $p$ as follows:
\begin{displaymath}
p^K({\vec{\xi}}) = \left\{ 
\begin{array}{lll} 
\true & &\text{ if } f^J({\vec{\xi}}) = 1 \\
\false & &\text{otherwise}.
\end{array}
\right.
\end{displaymath}

We now show $\mathcal{I'} \models q{\bf d} < p{\bf c}$:
\bi
\item Case 1: $\mathcal{I} \models (g=f)$.
By definition of $p^K$ and by (\ref{pelimassum}), in this case, $\mathcal{I} \models q=p$ and in particular, $\mathcal{I} \models q \leq p$. Since $\mathcal{I} \models g{\bf d} < f{\bf c}$, by definition $\mathcal{I} \models {\bf d}^{pred}\leq {\bf c}^{pred}$ and $\mathcal{I} \models \neg (g{\bf d} = f{\bf c})$ and since in this case, $\mathcal{I} \models (g=f)$, it must be that $\mathcal{I} \models \neg ({\bf d} = {\bf c})$. From this, we conclude $\mathcal{I'} \models \neg (q{\bf d} = p{\bf c})$. Further, since $\mathcal{I'}\models \bD^{pred}\le\bC^{pred}$, we conclude $\mathcal{I'} \models  q{\bf d} < p{\bf c}$.
\item Case 2: $\mathcal{I} \models \neg (g=f)$.
Since $\mathcal{I} \models \i{DF}_f^*(g{\bf d})$, it must be that $\mathcal{I} \models \forall {\bf x}(f({\bf x}) = 0 \rightarrow g({\bf x}) = 0)$. From this, we conclude by definition of $p^K$, $\i{FC}_f$ (note that $0 \neq 1$ is essential here) and (\ref{pelimassum}) that $\mathcal{I'} \models \forall {\bf x}(\neg p({\bf x}) \rightarrow \neg q({\bf x}))$. Equivalently, this is $\mathcal{I'} \models \forall {\bf x}(q({\bf x}) \rightarrow p({\bf x}))$ or simply $\mathcal{I'} \models q \leq p$.

Now, since $\mathcal{I} \models FC_f$, then $\mathcal{I} \models \forall {\bf x} (f({\bf x}) = 0 \lor f({\bf x}) = 1)$. Thus, for the assumption in this case that $\mathcal{I} \models \neg (g=f)$ to hold, it must be that $\mathcal{I} \models \exists {\bf x}(f({\bf x}) = 1 \land \neg (g({\bf x}) = 1))$. By defintion of $p^K$ and (\ref{pelimassum}), it follows that $\mathcal{I'} \models \exists {\bf x}(p({\bf x}) \land \neg q({\bf x}))$. Thus, since $\mathcal{I'} \models \neg (q=p)$, then $\mathcal{I'} \models \neg (q{\bf d}=p{\bf c})$. Also, since $\mathcal{I} \models g{\bf d} < f{\bf c}$, by definition $\mathcal{I'} \models {\bf d}^{pred}\leq {\bf c}^{pred}$, and thus we conclude that $\mathcal{I'} \models q{\bf d} < p{\bf c}$.
\ei


The proof of $\mathcal{I'} \models F^*(q{\bf d})$ is by induction similar to the proof of the claim above.
\qed
\BOC
by proving that $\mathcal{I'} \models (F^p_f)^*(g{\bf d})$ iff $\mathcal{I'} \models F^*(q{\bf d})$:
\ \\Case 1: $F$ is an atomic formula not containing $p$.
\\$F^p_f$ is exactly $F$ thus $F^*(q{\bf d})$ is exactly $(F^p_f)^*(g{\bf d})$ so certainly the claim holds.
\ \\Case 2: $F$ is $p({\bf t})$ where ${\bf t}$ contains an intensional function constant from ${\bf c}$.
\\$F^*(q{\bf d})$ is $p({\bf t}) \land q({\bf t'})$
\\where ${\bf t'}$ is the result of replacing all intensional functions from ${\bf c}$ occurring in ${\bf t}$ with the corresponding function from ${\bf d}$
\\$F^p_f$ is $f({\bf t}) = 1$.
\\$(F^p_f)^*(g{\bf d})$ is $f({\bf t}) = 1 \land g({\bf t'}) = 1$.
\\Since $\mathcal{I'} \models f({\bf t}) = 1 \land g({\bf t'}) = 1$, by definition of $q$ and (\ref{pelimassum}), $\mathcal{I'} \models p({\bf t}) \land q({\bf t'})$ and thus $\mathcal{I'} \models F^*(q{\bf d})$.

\ \\Case $3$: $F$ is $p({\bf t})$ where ${\bf t}$ does not contain any intensional function constant from ${\bf c}$.
\\$F^*(q{\bf d})$ is $q({\bf t})$.
\\$F^p_f$ is $f({\bf t}) = 1$.
\\$(F^p_f)^*(g{\bf d})$ is $f({\bf t}) = 1 \land g({\bf t}) = 1$.
\\By definition of $q$ and since $\mathcal{I'} \models f({\bf t}) = 1 \land g({\bf t}) = 1$, $\mathcal{I'} \models q({\bf t})$ and thus $\mathcal{I'} \models F^*(q{\bf d})$ in this case.

\ \\Case $4$: $F$ is $G \odot H$ where $\odot \in \{\land, \lor\}$.
\\By I.H. on $G$ and $H$.

\ \\Case $5$: $F$ is $G \rightarrow H$.
\\By I.H. on $G$ and $H$.

\ \\Case $6$: $F$ is $Q{\bf x}G({\bf x})$ where $Q \in \{\forall, \exists\}$.
\\By I.H. on $G$.\qed
\EOC


\subsection{Proof of Corollary~\ref{cor:elim-p2}}

For two interpretations $I$ of signature $\sigma_1$ and $J$ of signature $\sigma_2$, by $I \cup J$ we denote the interpretation of signature $\sigma_1 \cup \sigma_2$ and universe $|I| \cup |J|$ that interprets all symbols occurring only in $\sigma_1$ in the same way $I$ does and similarly for $\sigma_2$ and $J$. For symbols appearing in both $\sigma_1$ and $\sigma_2$, $I$ must interpret these the same as $J$ does, in which case $I \cup J$ also interprets the symbol in this way.

\noindent{\textbf {Corollary~\ref{cor:elim-p2} \optional{cor:elim-p2}}}\
\ 
{\sl
\begin{itemize}
\item[(a)] An interpretation $I$ of the signature of $F$ is a model  of~$\sm[F; p{\bf c}]$ iff $I^p_f$ is a model of~$\sm[F^p_f\land\i{DF}_f\land\i{FC}_f; f{\bf c}]$.
\item[(b)] An interpretation $J$ of the signature of $F^p_f$ is a model of~$\sm[F^p_f\land\i{DF}_f\land\i{FC}_f ; f{\bf c}]$ iff $J=I^p_f$ for some model $I$ of~$\sm[F; p{\bf c}]$.
\end{itemize}
}

\proof

\noindent
(a$\Rightarrow$) Assume $I$ of the signature of $F$ is a model of $\sm[F;p{\bf c}]$. By definition of~$I^p_f$, $I \cup I^p_f \models \forall {\bf x}(f({\bf x}) = 1 \leftrightarrow p({\bf x}))\land \i{FC}_f$. Since $I \models \sm[F;p{\bf c}]$, it must be that $I \cup I^p_f \models \sm[F;p{\bf c}]$ and further by Theorem~\ref{thm:elim-p}, 
$I \cup I^p_f \models \sm[F^p_f\land\i{DF}_f; f{\bf c}]$. By Theorem~\ref{thm:constraint}, 
we have $I \cup I^p_f \models \sm[F^p_f\land\i{DF}_f\land\i{FC}_f; f{\bf c}]$.
Finally, since the signature of $I$ does not contain $f$, we conclude $I^p_f \models \sm[F^p_f\land\i{DF}_f\land\i{FC}_f; f{\bf c}]$.

\smallskip\noindent
(a$\Leftarrow$) Assume $I^p_f$ is a model of $\sm[F^p_f\land\i{DF}_f\land\i{FC}_f; f{\bf c}]$.
By Theorem~\ref{thm:constraint}, $I^p_f$ is a model of $\sm[F^p_f\land\i{DF}_f; f{\bf c}]$.
 By definition of $I^p_f$, $I \cup I^p_f \models \forall {\bf x}(f({\bf x}) = 1 \leftrightarrow p({\bf x}))\land\i{FC}_f$. Since $I^p_f \models \sm[F^p_f\land\i{DF}_f; f{\bf c}]$, it must be that $I \cup I^p_f \models \sm[F^p_f\land\i{DF}_f; f{\bf c}]$ and further by Theorem~\ref{thm:elim-p}, $I \cup I^p_f \models \sm[F; p{\bf c}]$. Finally, since the signature of $I^p_f$ does not contain $p$, we conclude $I \models \sm[F; p{\bf c}]$.

\smallskip\noindent
(b$\Rightarrow$) Assume an interpretation $J$ of the signature of $F^p_f$ is a model of $\sm[F^p_f\land\i{DF}_f\land\i{FC}_f; f{\bf c}]$. Let $I = J^f_p$, where $J^f_p$ denotes the interpretation of the signature $F$ obtained from $J$ by replacing $f^J$ with the set $p^I$ that consists of the tuples $\langle \xi_1, \dots, \xi_n \rangle$ for all $\xi_1, \dots, \xi_n$ from the universe of $J$ such that $f^J(\xi_1, \dots, \xi_n) = 1$. By definition of $I$, $I \cup J \models \forall {\bf x}(f({\bf x}) = 1 \leftrightarrow p({\bf x}))$. Since $J \models \sm[F^p_f\land\i{FC}_f\land\i{DF}_f; f{\bf c}]$, it must be that $I \cup J \models \sm[F^p_f\land\i{DF}_f\land\i{FC}_f; f{\bf c}]$. Since $\i{FC}_f$ is comprised of constraints, by Theorem~\ref{thm:constraint}, $I \cup J \models \sm[F^p_f\land\i{DF}_f; f{\bf c}] \land\i{FC}_f$. In particular, $I \cup J \models \sm[F^p_f\land\i{DF}_f; f{\bf c}]$ and further by Theorem~\ref{thm:elim-p}, $I \cup J \models \sm[F; p{\bf c}]$. Finally, since the signature of $J$ does not contain $p$, we conclude $I \models \sm[F; p{\bf c}]$.

\smallskip\noindent
(b$\Leftarrow$) Take any $I$ such that $J = I^p_f$ and $I \models \sm[F; p{\bf c}]$. By definition of $I^p_f$, $I \cup J \models \forall {\bf x}(f({\bf x}) = 1 \leftrightarrow p({\bf x}))\land \i{FC}_f$. Since $I \models \sm[F;p{\bf c}]$, it must be that $I \cup J \models \sm[F;p{\bf c}]$ and further by Theorem~\ref{thm:elim-p}, 
$I \cup J \models \sm[F^p_f\land\i{DF}_f; f{\bf c}]$. Since the signature of $I$ does not contain $f$, we conclude $J \models \sm[F^p_f\land\i{DF}_f; f{\bf c}]$. Finally, since by definition of $I^p_f$, $J \models \i{FC}_f$, and since $\i{FC}_f$ is comprised of constraints, by Theorem~\ref{thm:constraint} we conclude $J \models \sm[F^p_f\land\i{DF}_f\land\i{FC}_f; f{\bf c}]$.
\qed

\BOCC
\cblu
\subsection{Proof of Theorem~\ref{thm:fi}}

\noindent{\textbf{Theorem~\ref{thm:fi} \optional{thm:fi}}}\
\ 
{\sl 
Let $\Pi$ be a program of signature $\sigma$.
\bi
\item If $X$ is a stable model of $\Pi$, then $I_X$ is a stable model of the functional image of $\Pi$.
\item If $I$ is a stable model of the functional image of $\Pi$, then $X_I$ is a stable model of $\Pi$.
\ei 
}
\bigskip

\proof Let ${\bf p}$ denote all of the atoms in $\sigma$ and let ${\bf f}$ denote all of the corresponding object constants in the signature of the functional image of $\Pi$. We first note that $I_X$ is the same as $I^{\bf p}_{\bf f}$. We also note that the added rules 
\[ 
   \{A\mvis 0\}^{\rm ch}
\]
and 
\[
\ba c
  0\ne 1, \\
  x=0\lor x=1. 
\ea
\]
 are precisely $\i{DF}_f\land\i{FC}_f$ when considering their first-order representation. Finally, we note then that the first-order representation of functional image of $\Pi$ is exactly $F^{\bf p}_{\bf f} \land \i{DF}_f\land\i{FC}_f$ where $F$ is the first-order representation of $\Pi$. Then, the claim follows from multiple applications of Corollary~\ref{cor:elim-p2} for each $p$ in ${\bf p}$ and the corresponding $f$ in ${\bf f}$. \qed

\cbla
\EOCC


\subsection{Proof of Theorem~\ref{thm:elim-f}}

\noindent{\bf Theorem~\ref{thm:elim-f} \optional{thm:elim-f}}\
\ 
{\sl
For any $f$-plain formula $F$, the set of formulas consisting of 
\beq
\forall {\bf x}y (p({\bf x},y)\lrar f({\bf x})=y)
\eeq{fpequiv} 
and
$\exists xy (x\ne y)$ entails 
$$\sm[F; f{\bf c}]\lrar\sm[F^f_p
; p{\bf c}].$$}

\proof
For any interpretation $I$ of signature $\sigma \supseteq \{f,p,{\bf c}\}$ satisfying (\ref{fpequiv}), it is clear that $I\models F$ iff $I\models F^f_p$ since $F^f_p$ is simply the result of replacing all $f({\bf x}) = y$ with $p({\bf x},y)$. Thus it only remains to be shown that 
$$I\models \exists \wh f \wh {\bf c} ((\wh f \wh{\bf c} < f{\bf c}) \land F^*(\wh f\wh{\bf c}))$$
iff
$$I \models \exists \wh p \wh{\bf c} ((\wh p \wh{\bf c} < p{\bf c}) \land (F^f_p)^*(\wh p\wh{\bf c})).$$

Let $\sigma'= \sigma\cup\{g,q,\bD\}$ be an extended signature such that $g,q,\bD$ are similar to $f,p,\bC$ respectively, and do not belong to $\sigma$.

($\Rightarrow$) 
Assume $I\models \exists \wh f \wh{\bf c} ((\wh f \wh{\bf c} < f{\bf c}) \land F^*(\wh f,\wh{\bf c}))$.
This is equivalent to saying that there is an interpretation $J$ of $\sigma$ that agrees with $I$ on all constants other than $f$ and $\bC$ such that $\mathcal{I} = J^{f\bC}_{g\bD}\cup I$ of signature $\sigma'$ satisfies $(g\bD<f\bC)\land F^*(g\bD)$.

It is sufficient to show that there is an interpretation $K$ of $\sigma$ that agrees with $J$ on all constants other than $p$ such that 
$\mathcal{I'} = K^{p\bC}_{q\bD}\cup I$ of signature $\sigma'$ satisfies 
$(q{\bf d} < p{\bf c}) \land (F^f_p)^*(q\bD)$.
We define the interpretation of $K$ on $p$ as follows: 
\begin{displaymath}
p^K({\vec{\xi}},\xi') = \left\{ 
\begin{array}{lll} 
\true & &\text{ if } \mathcal{I} \models f({\vec{\xi}}) = \xi ' \land g({\vec{\xi}}) = \xi ' \\ 
\false & &\text{otherwise}.
\end{array}
\right.
\end{displaymath}

We first show that if $\mathcal{I} \models (g{\bf d} < f{\bf c})$ then $\mathcal{I'} \models (q{\bf d} < p{\bf c})$:
\\Observe that from the definition of $p^K$, it follows that $\mathcal{I} \models \forall {\bf x}y (q({\bf x},y) \rightarrow f({\bf x}) = y)$ and from (\ref{fpequiv}), this is equivalent to $\forall {\bf x}y (q({\bf x},y) \rightarrow p({\bf x}, y))$ or simply $q \leq p$. Thus, since $\mathcal{I'} \models {\bf{d}}^{pred} \leq {\bf{c}}^{pred}$, we have $\mathcal{I'} \models q{\bf{d}}^{pred} \leq p{\bf{c}}^{pred}$.

\bi
\item Case 1: $\mathcal{I} \models \forall {\bf x}y(f({\bf x}) = y \leftrightarrow g({\bf x}) = y)$.
\\In this case it then must be the case that $\mathcal{I} \models {\bf d} \neq {\bf c}$. Thus it follows that 
$\mathcal{I'}\models q{\bf d} \neq p{\bf c}$. Consequently, we conclude that
\[
\mathcal{I'} \models (q{\bf{d}}^{pred} \leq p{\bf{c}}^{pred}) \land q{\bf d} \neq p{\bf c}
\]
or simply, $\mathcal{I'} \models (q{\bf d} < p{\bf c})$.

\item Case 2: $\mathcal{I} \models \neg \forall {\bf x}y(f({\bf x}) = y \leftrightarrow g({\bf x}) = y)$.
\\In this case it then must be the case that for some ${\bf t}$ and $c$ that $\mathcal{I} \models f({\bf t}) = c \land g({\bf t}) \neq c$. By the definition of $p^K$, this means that $q({\bf t},c)^{\mathcal{I'}} = \false$ but by (\ref{fpequiv}), $p({\bf t},c)^{\mathcal{I'}} = \true$. Therefore, $\mathcal{I'} \models p \neq q$ and thus $\mathcal{I'} \models q{\bf d} \neq p{\bf c}$. Consequently, we conclude
\[
\mathcal{I'} \models (q{\bf{d}}^{pred} \leq p{\bf{c}}^{pred}) \land q{\bf d} \neq p{\bf c}
\]
or simply, $\mathcal{I'} \models (q{\bf d} < p{\bf c})$.
\ei

We now show that $\mathcal{I} \models (F^f_p)^*(q{\bf d})$ by proving the following:

\noindent
{\bf Claim:} $\mathcal{I} \models F^*(g{\bf d})$  iff $\mathcal{I'}\models (F^f_p)^*(q{\bf d})$

The proof of the claim is by induction on the structure of $F$.
\bi
\item Case 1: $F$ is an atomic formula not containing $f$. 
$F^f_p$ is exactly $F$ thus $F^*(g{\bf d})$ is exactly $(F^f_p)^*(q{\bf d})$ so certainly the claim holds.

\item Case 2: $F$ is $f({\bf t}) = t_1$.
$F^*(g{\bf d})$ is $f({\bf t}) = t_1 \land g({\bf t}) = t_1$.
$F^f_p$ is $p({\bf t},t_1)$ and $(F^f_p)^*(q{\bf d})$ is $q({\bf t},t_1)$.
By the definition of $p^K$, it is clear that $\mathcal{I}\models f({\bf t})=t_1\land g({\bf t})=t_1$ iff $\mathcal{I'}\models q({\bf t},t_1)$, so certainly the claim holds.

\item The other cases are straightforward from I.H.
\ei

($\Leftarrow$) Assume $\mathcal{I}\models\exists \wh p \wh{\bf c} ((\wh p \wh{\bf c} < p{\bf c}) \land (F^f_p)^*(\wh p\wh{\bf c}))$. 
This is equivalent to saying that there is an interpretation $J$ of $\sigma$ that agrees with $I$ on all constants other than $p$ and $\bC$ such that $\mathcal{I}=J^{p\bC}_{q\bD}\cup I$ of signature $\sigma'$ satisfies $(q\bD<p\bC)\land (F^f_p)^*(q\bD)$.

It is sufficient to show that there is an interpretation $K$ of $\sigma$ that agrees with $J$ on all constants other than $f$ such that $\mathcal{I'}=K^{f\bC}_{g\bD}\cup I$ of signature $\sigma'$ satisfies $(g\bD<f\bC)\land F^*(g\bD)$. 
We define the interpretation of $K$ on $f$ as follows:
\begin{displaymath}
f^K({\vec{\xi}}) = \left\{ 
\begin{array}{lll} 
\xi ' & &\text{ if } \mathcal{I} \models p({\vec{\xi}},\xi ') \land q({\vec{\xi}},\xi ') \\ 
\xi '' &  &\text{ if } \mathcal{I} \models p({\vec{\xi}},\xi ') \land \neg q({\vec{\xi}},\xi ') \text{ where } \xi ' \neq \xi ''.
\end{array}
\right.
\end{displaymath}

Note that the assumption that there are at least two elements in the universe is essential to this definition. This definition is sound due to $(\ref{fpequiv})$ entailing $\forall {\vec{\xi}}\exists \xi '(p(\vec{\xi},\xi '))$.

We first show if $\mathcal{I}\models (q{\bf d} < p{\bf c})$ then $\mathcal{I'} \models (g{\bf d} < f{\bf c})$:
\\Observe that $\mathcal{I} \models (q{\bf d} < p{\bf c})$ by definition entails 
$\mathcal{I} \models (q{\bf d}^{pred} \leq p{\bf c}^{pred})$ and further by definition, 
$\mathcal{I} \models ({\bf d}^{pred} \leq {\bf c}^{pred})$ and then since $f$ and $g$ are not predicates, $\mathcal{I'} \models ((g{\bf d})^{pred} \leq (f{\bf c})^{pred})$.

\bi
\item Case 1: $\mathcal{I} \models \forall {\bf x}y(p({\bf x}, y) \leftrightarrow q({\bf x}, y))$.
In this case, $\mathcal{I} \models  (p = q)$ so for it to be the case that $\mathcal{I} \models (q{\bf d} < p{\bf c})$, it must be that $\mathcal{I} \models \neg({\bf c} = {\bf d})$. It then follows that $\mathcal{I'} \models \neg(f{\bf c} = g{\bf d})$. Consequently, in this case, $\mathcal{I'} \models ((g{\bf d})^{pred} \leq (f{\bf c})^{pred}) \land \neg(f{\bf c} = g{\bf d})$ or simply $\mathcal{I'} \models (g{\bf d} < f{\bf c})$.

\item Case 2: $\mathcal{I} \models \neg \forall {\bf x}y(p({\bf x}, y) \leftrightarrow q({\bf x}, y))$.
In this case, since $\mathcal{I} \models (q \leq p)$, then it follows that $\exists {\bf x}y(p({\bf x},y) \land \neg q({\bf x},y))$. It follows from the definition of $p^K$ that $\mathcal{I'} \models \exists {\bf x}yz((p({\bf x}, y) \leftrightarrow g({\bf x}) = z)\land y \neq z)$ and then from (\ref{fpequiv}), it follows that $\mathcal{I'} \models \exists {\bf x}yz((f({\bf x}) = y \leftrightarrow g({\bf x}) = z)\land y \neq z)$ or simply $\mathcal{I'} \models f \neq g$. It then follows that $\mathcal{I'} \models \neg(f{\bf c} = g{\bf d})$. Consequently, in this case $\mathcal{I'} \models ((g{\bf d})^{pred} \leq (f{\bf c})^{pred}) \land \neg(f{\bf c} = g{\bf d})$ or simply $\mathcal{I'} \models (g{\bf d} < f{\bf c})$.
\ei

Next, the proof of $\mathcal{I'} \models F^*(g{\bf d})$ is by induction similar to the proof of the claim above. 
\BOC
\ \\Case 1: $F$ is an atomic formula not containing $f$. 
\\$F^f_p$ is exactly $F$ thus $F^*(g{\bf d})$ is exactly $(F^f_p)^*(q{\bf d})$ so certainly the claim holds.

\ \\Case 2: $F$ is $f({\bf t}) = c$.
\\$F^*(g{\bf d})$ is $f({\bf t}) = c \land g({\bf t}) = c$.
\\$F^f_p$ is $p({\bf t},c)$.
\\$(F^f_p)^*(q{\bf d})$ is $q({\bf t},c)$.
\\Since $\mathcal{I} \models q({\bf t},c)$, then $\mathcal{I} \models p({\bf t},c)$ since it is assumed that $\mathcal{I} \models (q \leq p)$. From (\ref{fpequiv}), it follows that $\mathcal{I} \models f({\bf t})=c$ and from the definition of $g$, it follows that $\mathcal{I} \models g({\bf t})=c$. 

\ \\Case $3$: $F$ is $G \odot H$ where $\odot \in \{\land, \lor\}$.
\\By I.H. on $G$ and $H$.

\ \\Case $4$: $F$ is $G \rightarrow H$.
\\By I.H. on $G$ and $H$.

\ \\Case $5$: $F$ is $Q{\bf x}G({\bf x})$ where $Q \in \{\forall, \exists\}$.
\\By I.H. on $G$.\qed
\EOC

\subsection{Proof of Corollary~\ref{cor:elim-f2}}

\noindent{\textbf {Corollary~\ref{cor:elim-f2}\optional{cor:elim-f2}}}\ \ 
{\sl
Let $F$ be an $f$-plain sentence. 
\begin{itemize}
\item[(a)] An interpretation $I$ of the signature of $F$ that satisfies
$\exists xy (x\ne y)$ is a model of $\sm[F;f{\bf c}]$ iff $I^f_p$ is a
model of $\sm[F^f_p\land\i{UEC}_p;\ p{\bf c}]$.
\item[(b)] An interpretation $J$ of the signature of $F^f_p$ that satisfies
$\exists xy (x\ne y)$ is a model of
$\sm[F^f_p\land\i{UEC}_p;\ p{\bf c}]$ iff $J = I^f_p$ for some model $I$ of 
$\sm[F;f{\bf c}]$.
\end{itemize}
}
\medskip

\proof

\noindent
(a$\Rightarrow$) Assume $I \models \fsm[F;f{\bf c}] \land \exists xy(x \neq y)$. Since $I \models \exists xy(x \neq y)$, $I \cup I^f_p \models \exists xy(x \neq y)$ since by definition of $I^f_p$, $I$ and $I^f_p$ share the same universe. 

By definition of $I^f_p$, $I \cup I^f_p \models (\ref{fpequiv})$. 
Since $I\models\fsm[F;f{\bf c}]$, we have $I\cup I^f_p\models \fsm[F;f{\bf c}]$ and by Theorem~\ref{thm:elim-f}, we have $I\cup I^f_p\models\fsm[F^f_p ;p{\bf c}]$. It's clear that $I\models\i{UEC}_p$, so by Theorem~\ref{thm:constraint}, we have $I\cup I^f_p\models\fsm[F^f_p\land\i{UEC}_p;p\bC]$. Since the signature of $I$ does not contain $f$, we conclude $I^f_p\models\sm[F^f_p\land\i{UEC}_p; p\bC]$.

\smallskip\noindent
(a$\Leftarrow$) Assume $I \models \exists xy(x \neq y)$ and $I^f_p \models \fsm[F^f_p\land\i{UEC}_p ;p{\bf c}]$. By Theorem~\ref{thm:constraint}, $I^f_p\models\fsm[F^f_p; p\bC]$.
Since $I\models\exists xy(x \neq y)$, we have $I \cup I^f_p \models \exists xy(x \neq y)$ since by definition of $I^f_p$, $I$ and $I^f_p$ share the same universe. 

By definition of $I^f_p$, $I \cup I^f_p \models (\ref{fpequiv})$. 
Since $I^f_p \models \fsm[F^f_p ;p{\bf c}]$, we have $I \cup I^f_p \models \fsm[F^f_p ;p{\bf c}]$ and by Theorem~\ref{thm:elim-f}, we have $I \cup I^f_p \models \fsm[F;f{\bf c}]$. 
Since the signature of $I^f_p$ does contain $f$, we conclude $I \models \fsm[F;f{\bf c}]$.

\smallskip\noindent
(b$\Rightarrow$) Assume $J \models \exists xy(x \neq y)$ and $J \models \fsm[F^f_p \land \i{UEC}_p;p{\bf c}]$. Let $I = J^p_f$ where $J^p_f$ denotes the interpretation of the signature of $F$ obtained from $J$ by replacing the set $p^J$ with the function $f^I$ such that $f^I(\xi_1,\dots,\xi_k) = \xi_{k+1}$ for all tuples $\langle \xi_1,\dots,\xi_k,\xi_{k+1}\rangle$ in $p^J$. This is a valid definition of a function since we assume $J \models \fsm[F^f_p \land \i{UEC}_p;p{\bf c}]$, from which we obtain by Theorem~\ref{thm:constraint} that $J \models \fsm[F^f_p ;p{\bf c}] \land\i{UEC}_p$ and specifically, $J \models \i{UEC}_p$. Clearly, $J = I^f_p$ so it only remains to be shown that $I \models \fsm[F;f{\bf c}]$.

Since $I$ and $J$ have the same universe and $J \models \exists xy(x \neq y)$, it follows that $I \cup J \models \exists xy(x \neq y)$. Also by the definition of $J^p_f$, we have $I \cup J \models (\ref{fpequiv})$. Thus by Theorem~\ref{thm:elim-f}, $I \cup J \models \fsm[F;f{\bf c}] \leftrightarrow \fsm[F^f_p 
;p{\bf c}]$. 

Since we assume $J \models \fsm[F^f_p 
;p{\bf c}]$, it is the case that $I \cup J \models \fsm[F^f_p 
;p{\bf c}]$ and thus it must be the case that $I \cup J \models \fsm[F;f{\bf c}]$. Now since the signature of $J$ does not contain $f$, we conclude $I \models \fsm[F;f{\bf c}]$. 

\smallskip\noindent
(b$\Leftarrow$)Take any $I$ such that $J = I^f_p$ and $I \models \fsm[F;f{\bf c}]$. Since $J \models \exists xy(x \neq y)$ and $I$ and $J$ share the same universe, $I \cup J \models \exists xy(x \neq y)$. By definition of $J = I^f_p$, $I \cup J \models (\ref{fpequiv})$. Thus by Theorem~\ref{thm:elim-f}, $I \cup J \models \fsm[F;f{\bf c}] \leftrightarrow \fsm[F^f_p 
;p{\bf c}]$. 

Since we assume $I \models \fsm[F;f{\bf c}]$, it is the case that $I \cup J \models \fsm[F;f{\bf c}]$ and thus it must be the case that $I \cup J \models \fsm[F^f_p
;p{\bf c}]$. Further, due to the nature of functions, (\ref{fpequiv}) entails $\i{UEC}_p$ so $I \cup J \models \i{UEC}_p$. However since the signature of $I$ does not contain $p$, we conclude $J \models \fsm[F^f_p ;p{\bf c}]\land \i{UEC}_p$ and since $\i{UEC}_p$ is comprised of constraints only, by Theorem~\ref{thm:constraint} $J \models \fsm[F^f_p \land \i{UEC}_p;p{\bf c}]$. \qed

\subsection{Proof of Theorem~\ref{thm:head-cplain}}

\noindent{\textbf {Theorem~\ref{thm:head-cplain}\optional{thm:head-cplain}}}\ \ {\sl 
For any head-$\bC$-plain sentence $F$ that is tight on $\bC$ and any
interpretation $I$ satisfying $\exists xy(x \neq y)$, 
we have $I\models \sm[F;\bC]$ iff $I\models\sm[\i{UF}_\bC(F);\bC]$.
}

\proof
It is easy to check that the completion of $\i{UF}_\bC(F)$ relative to
$\bC$ is equivalent to the completion of $F$ relative to $\bC$. By
Theorem \ref{thm:completion}, we conclude that
$\sm[\i{UF}_\bC(F); \bC]$ is equivalent to $\sm[F;\bC]$.
\qed


\subsection{Proof of Theorem~\ref{thm:nomodular}}

For any formula $F$ containing object constants $f$ and $g$, we call it {\em ${fg}$-indistinguishable} if every occurrence of $f$ and $g$ in $F$ is in a subformula of the form $(f=t)\wedge (g=t)$ that is ${fg}$-plain. For any interpretations $I$ and $J$ of $F$, we say $I$ and $J$ satisfy the relation $R(I, J)$ if
\begin{itemize}
\item $|I|=|J|$,
\item $I(f) \ne I(g)$,
\item $J(f) \ne J(g)$, and
\item for all symbols $c$ other than $f$ and $g$, $I(c)=J(c)$.
\end{itemize}

\begin{lemma}\label{lem:fg-indistinguishable}
If a formula $F$ is $fg$-indistinguishable, then for any interpretations $I$ and $J$ such that $R(I,J)$, $F^I=F^J$.
\end{lemma}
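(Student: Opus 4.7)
The plan is to show that both $I$ and $J$ assign the same truth value to $F$ by exhibiting an intermediate formula $F'$, obtained from $F$ by eliminating all occurrences of $f$ and $g$, which is semantically equivalent to $F$ under both interpretations and whose evaluation trivially coincides on $I$ and $J$.

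The key observation is the following: if an interpretation $K$ satisfies $K(f) \ne K(g)$, then for any $fg$-plain subformula of the form $(f=t)\wedge(g=t)$ (where, by the $fg$-plain condition, $t$ contains neither $f$ nor $g$), this subformula must be false under $K$. Indeed, if it were true, we would have $K(f) = K(t) = K(g)$, contradicting $K(f) \ne K(g)$. By the definition of $R(I,J)$, both $I$ and $J$ satisfy $f \ne g$, so every such subformula is false under both.

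Now I would construct $F'$ from $F$ by replacing each subformula of the form $(f=t)\wedge(g=t)$ (the ones guaranteed by the definition of $fg$-indistinguishability to contain all occurrences of $f$ and $g$) with $\bot$. By the $fg$-indistinguishability hypothesis, the resulting $F'$ contains no occurrence of $f$ or $g$. A routine structural induction on $F$, using that each replaced subformula has the same truth value as $\bot$ under $I$ (respectively $J$), shows $F^I = (F')^I$ and $F^J = (F')^J$. Since $F'$ mentions neither $f$ nor $g$, and $I$ and $J$ share the same universe and agree on all other symbols, the standard coincidence lemma of first-order semantics gives $(F')^I = (F')^J$. Chaining these three equalities yields $F^I = F^J$.

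The only real subtlety will be making the substitution precise under quantifiers (so that the "replace each subformula" operation interacts properly with bound variables in $t$), but since $t$ is required to be ground-like relative to $f,g$ and the $fg$-indistinguishability condition is structural, this is a bookkeeping step rather than a substantive difficulty; the entire argument is essentially a semantic translation of the syntactic constraint embodied in $fg$-indistinguishability.
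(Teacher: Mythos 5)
Your proposal is correct and rests on exactly the same key observation as the paper's proof: since the $fg$-plain subformulas $(f=t)\land(g=t)$ have $t$ free of $f$ and $g$, they are false under any interpretation with $f\ne g$, and everything else in an $fg$-indistinguishable formula mentions neither $f$ nor $g$. The paper carries this out as a direct structural induction on $F$ (treating those conjunctions as base cases that evaluate to $\bot$ under both $I$ and $J$), whereas you factor it through an intermediate $f,g$-free formula $F'$ and the coincidence lemma; this is only a presentational difference.
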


\begin{proof}
Notice that any $fg$-indistinguishable formula is built on atomic formulas not containing $f$ and $g$, and formula of the form $(f=t)\land (g=t)$, using propositional connectives and quantifiers. The proof is by induction on such formulas. 

\BOC
\begin{itemize}
\item $F$ is an atom (or $\bot$ or $\top$) that contains neither $f$ nor $g$. Clear.

\item $F$ is $(f=t')\and (g=t')$ for some ground $t'$. Clear since $I(f)\ne I(g)$ and $J(f)\ne J(g)$, it must be that $F^I=F^J=\false$.

\item $F$ is $\neg G$, where $G$ is ${fg}$-indistinguishable. For any $I$ and $J$ satisfying $R(I, J)$, by I.H., $G^I=G^J$, so $F^I=F^J$.

\item $F$ is $G\odot H$, where $G$ and $H$ are both ${fg}$-indistinguishable and $\odot$ is $\wedge$, $\vee$, or $\rightarrow$. For any $I$ and $J$ satisfying $R(I, J)$, by I.H, $G^I=G^J$ and $H^I=H^J$, so $F^I=F^J$.

\item $F$ is $\exists x G(x)$, where $G(x)$ is \textbf{f}-indistinguishable. Suppose $\xi_1, \dots, \xi_n$ are symbols representing elements in $|I|$. Then $F^I=(G(\xi_1)\vee\dots\vee G(\xi_n))^I$. Since $|I|=|J|$, $F^J=(G(\xi_1)\vee\dots\vee G(\xi_n))^J$. By I.H., for each $1 \leq i \leq n$, $G(\xi_i)^I=G(\xi_i)^J$. So $F^I=F^J$.

\item $F$ is $\forall x G(x)$, where $G(x)$ is \textbf{f}-indistinguishable. Suppose $\xi_1, \dots, \xi_n$ are symbols representing elements in $|I|$. Then $F^I=(G(\xi_1)\wedge\dots\wedge G(\xi_n))^I$. Since $|I|=|J|$, $F^J=(G(\xi_1)\wedge\dots\wedge G(\xi_n))^J$. By I.H., for each $1 \leq i \leq n$, $G(\xi_i)^I=G(\xi_i)^J$. So $F^I=F^J$.
\end{itemize}
\EOC
\end{proof}

\bigskip
\noindent{\textbf {Theorem~\ref{thm:nomodular}}}\ \ 
{\sl 
For any set $\bC$ of constants, there is no strongly equivalent transformation that turns an arbitrary sentence into a $\bC$-plain sentence.
}\medskip

\proof 
The proof follows from the claim. 

{\bf Claim: } 
There is no ${f}$-plain formula that is strongly equivalent to 
$p(f)\land p(1)\land p(2)\land\neg p(3)$. 

Let $F$ be  $p(f)\land p(1)\land p(2)\land \neg p(3)$. Then $F^*(g)$ is $p(f)\land p(g)\land p(1)\land p(2)\land\neg p(3)$.
Let $I = \{p(1),p(2), f\mvis 1, g\mvis 2\}$ and $J = \{p(1), p(2), f\mvis 1, g\mvis 3\}$ (numbers are interpreted as themselves). It is easy to check that $I\models F^*(g)$ and $J\not\models F^*(g)$.

Assume for the sake of contradiction that there is a ${f}$-plain formula $G$ that is strongly equivalent to $F$. Since $G$ is $f$-plain, $G^*(g)$ is $fg$-indistinguishable. Since $R(I,J)$ holds, by Lemma~\ref{lem:fg-indistinguishable}, $I\models G^*(g)$ iff $J\models G^*(g)$, but this contradicts Theorem~\ref{thm:strong}.
\qed


\subsection{Proof of Theorem~\ref{thm:cm-fsm}}
\noindent{\textbf {Theorem~\ref{thm:cm-fsm} \optional{thm:cm-fsm}}}\
\ 
{\sl 
For any definite causal theory $T$, 
$I\models\cm[T; \bF]$ iff $I\models\sm[\i{Tr}(T); \bF]$.
}\medskip

\proof
Assume that, without loss of generality, the rules
(\ref{definite2})--(\ref{definite3}) have no free variables. 
It is sufficient to prove that under the assumption that $I$ satisfies
$T$, for every rule~(\ref{definite2}), $J^\bF_\bG\cup I$ satisfies 
\[ 
   B\, \rar\, g({\bf t})\mvis t_1
\]
iff
$J^\bF_\bG\cup I$ satisfies 
\[ 
   (\neg\neg B)^*(\bG)\, \rar\, 
        g({\bf t})\mvis t_1\land f({\bf t})\mvis t_1.
\]
Indeed, this is true since $B$ is equivalent to $(\neg\neg
B)^*(\bG)$ (Lemma~\ref{lem:neg}), and $I$ satisfies~$T$.
\qed
\cbla

\subsection{Proof of Theorem~\ref{thm:if-fsm}}

\noindent{\textbf {Theorem~\ref{thm:if-fsm} \optional{thm:if-fsm}}}\
{\sl 
$I\models\fsm[T; \bF]$ iff $I\models\lif[T; \bF]$.
}

\BOC
Let $T$ be an IF-program whose rules have the form 
\beq
  f({\bf t})=t_1 \ar\neg\neg B
\eeq
(above is (\ref{if-fsm-r})) where $f$ is an intensional function constant, ${\bf t}$ and $t_1$ do
not contain intensional function constants, and $B$ is an arbitrary
formula. We identify $T$ with the corresponding first-order formula.

\EOC

\proof 
We wish to show that 
$I \models T \land \neg\exists \wh{\bf f}(\wh{\bf f} < {\bf f} \land F^*(\wh{\bf f}))$ 
iff 
$I \models T \land \neg\exists \wh{\bf f}(\wh{\bf f} \neq {\bf f} \land F^\dia(\wh{\bf f}))$. 
The first conjunctive terms are identical and if $I \not \models T$ then the claim holds. 

Let us assume then, that $I \models T$. By definition, $\wh{\bf f} < {\bf f}$ is equivalent to $\wh{\bf f} \neq {\bf f}$. What remains to be shown is the correspondence between $F^*(\wh{\bf f})$ and $F^\dia(\wh{\bf f})$. 

Consider any list of functions ${\bf g}$ of the same length as ${\bf f}$. Let $\mathcal{I} =  J^{\bf f}_{\bf g}\cup I$ be an interpretation of an extended signature $\sigma' = \sigma \cup {\bf g}$ where $J$ is an interpretation of $\sigma$ and $J$ and $I$ agree on functions not belonging to ${\bf f}$.

Consider any rule $f({\bf t}) = t_1 \leftarrow \neg \neg B$ from $T$. The corresponding rule in 
$F^*({\bf g})$ is equivalent to 
$$f({\bf t}) = t_1 \land g({\bf t}) = t_1 \leftarrow B.$$
The corresponding rule in 
$F^\dia({\bf g})$ is equivalent to 
$$g({\bf t}) = t_1  \leftarrow B.$$
Now we consider cases
\begin{itemize}
\item $I \not \models B$. Clearly, both versions of the rule are vacuously satisfied by $\mathcal{I}$.
\item $I \models B$. Then, since $I \models T$ it must be that $I \models f({\bf t}) = t_1$ and so
the corresponding rule in 
$F^*({\bf g})$ is further equivalent to 
$$g({\bf t}) = t_1 \leftarrow B$$
which is equivalent to the corresponding rule in $F^\dia({\bf g})$ and so certainly $\mathcal{I}$ satisfies both corresponding rules or neither.
\end{itemize}
Thus, $\mathcal{I} \models F^*({\bf g})$ iff $\mathcal{I} \models F^\dia({\bf g})$ and so the claim holds. \qed


\subsection{Proof of Theorem~\ref{thm:ms2us}}

\begin{lemma}\label{lem:unisort1}\optional{lem:unisort1}
Given a formula $F$ of many-sorted signature $\sigma$ and an interpretation $I$ of $\sigma$, $I \models gr_I[F]$ iff $I^{ns} \models gr_{I^{ns}}[F^{ns}]$.
\end{lemma}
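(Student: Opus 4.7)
\proof
The plan is to proceed by structural induction on $F$. The translation $F \mapsto F^{ns}$ leaves atomic formulas syntactically unchanged and only rewrites quantifiers (guarding $\exists$ by $\sort{s}(y)$ and relativizing $\forall$ via $\sort{s}(y) \rar \cdot$). Correspondingly, the only difference between $gr_I$ and $gr_{I^{ns}}$ is the index set over which quantifiers expand to infinitary disjunctions and conjunctions: $|I|^s$ on the many-sorted side versus the entire unsorted universe $|I^{ns}| = \bigcup_s |I|^s$ on the other side, with $\sort{s}^{I^{ns}} = |I|^s$ playing the role of a filter.

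For the base case, $F$ is an atomic formula, so $F^{ns}$ coincides with $F$ and both $gr_I[F]$ and $gr_{I^{ns}}[F^{ns}]$ are just $F$. Here I would establish $I \models F$ iff $I^{ns} \models F$ through a brief subsidiary induction on term structure, showing that for every subterm $t$ of $F$ the evaluations $t^I$ and $t^{I^{ns}}$ agree. This uses the defining property of $I^{ns}$: function and predicate constants agree with their many-sorted counterparts whenever the arguments lie in the intended argument sorts, and by well-sortedness of $F$ together with the fact that $f^I$ always produces elements in the value sort of $f$, this invariant is maintained at every level of a nested term. The propositional-connective cases are immediate from the induction hypothesis applied to the subformulas.

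The heart of the argument lies in the quantifier cases. For $F = \exists x\, G(x)$ with $x$ of sort $s$, I would expand
\[
  gr_{I^{ns}}[F^{ns}] \;=\; \{\sort{s}(\xi^\dia)\land gr_{I^{ns}}[G^{ns}(\xi^\dia)] \mid \xi\in|I^{ns}|\}^\lor.
\]
Since $\sort{s}^{I^{ns}} = |I|^s$, for $\xi\notin|I|^s$ the conjunct $\sort{s}(\xi^\dia)$ is false and these disjuncts may be discarded, while for $\xi\in|I|^s$ the conjunct is true and may be dropped, leaving a disjunction indexed by exactly $|I|^s$. By the induction hypothesis applied to $G(\xi^\dia)$ for each such $\xi$, this reduces to $gr_I[F]$. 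The $\forall$ case is symmetric using the implication form, where the antecedent $\sort{s}(\xi^\dia)$ makes the implication vacuously true for $\xi\notin|I|^s$ and transparent for $\xi\in|I|^s$. The main obstacle I anticipate is making the atomic-formula base case fully rigorous: although intuitive, the verification that every object name introduced by nested quantifier expansion belongs to the intended sort --- and hence that $I$ and $I^{ns}$ genuinely agree on every evaluated subterm --- requires one to track well-sortedness carefully through the recursive definition of $gr_I$.
\qed
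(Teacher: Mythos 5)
Your proposal is correct and takes exactly the approach the paper does: the paper's own proof of this lemma is just ``by induction on the structure of $F$,'' and your elaboration --- atomic formulas unchanged by the translation, agreement of $I$ and $I^{ns}$ on well-sorted terms, and the sort predicate $\sort{s}$ filtering the quantifier index set down from $|I^{ns}|$ to $|I|^s$ so that the induction hypothesis is only ever invoked on well-sorted instances --- fills in precisely the intended details. The ``obstacle'' you flag (tracking well-sortedness of the object names through grounding) is the right invariant to carry in the induction, and it goes through because the discarded disjuncts/vacuous implications are exactly the ill-sorted instantiations.
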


\proof By induction on the structure of $F$. \qed

%
\BOCC
\begin{itemize}
\item $F$ is $p({\bf t})$ where each $t_i$ in ${\bf t}$ is comprised of ground terms from the extended signature $\sigma^I$. $gr_I[F]$ is also $p({\bf t})$.
\\$F^{ns}$ is $p({\bf t})$. $gr_{I^{ns}}[F^{ns}]$ is also $p({\bf t})$.
By the definition of $I^{ns}$, $p({\bf t})^I = p({\bf t})^{I^{ns}}$ since ${\bf t}$ must be comprised of terms from the corresponding argument sorts of $p$ and so the claim holds.

\item $F$ is $t_1 = t_2$ where each $t_i$ is comprised of ground terms from the extended signature $\sigma^I$. $gr_I[F]$ is also $t_1 = t_2$ .
$F^{ns}$ is $t_1 = t_2$. $gr_{I^{ns}}[F^{ns}]$ is also $t_1 = t_2$.
By the definition of $I^{ns}$, $t_1^I = t_1^{I^{ns}}$ and $t_2^I = t_2^{I^{ns}}$ since the subterms of $t_1$ and $t_2$ must be comprised of terms from the corresponding argument sorts and so the claim holds.

\item $F$ is $G \odot H$ where $\odot \in \{\land, \lor, \rightarrow\}$. $gr_I[F]$ is $gr_I[G] \odot gr_I[H]$.
$F^{ns}$ is $G^{ns} \odot H^{ns}$. $gr_{I^{ns}}[F^{ns}]$ is $gr_{I^{ns}}[G^{ns}] \odot gr_{I^{ns}}[H^{ns}]$ so the claim follows by induction on $G$ and $H$. 

\item $F$ is $\exists x G(x)$. $gr_I[F]$ is $\{gr_I[G(\xi^\dia)] : \xi \in |I|^s\}^\lor$ where $s$ is the sort of $x$.
\\$F^{ns}$ is $\exists y (G(y)^{ns} \land {\sort s}(y))$. $gr_{I^{ns}}[F^{ns}]$ is $\{gr_{I^{ns}}[G(\xi^\dia)^{ns}] \land {\sort s}(\xi^\dia) : \xi \in |I^{ns}|\}^\lor$.
\\($\Rightarrow$) Assume $I \models gr_I[F]$. That is, assume there is some
$\xi \in |I|^s$ where $s$ is the sort of $x$ such that $I \models gr_I[G(\xi^\dia)]$. By
definition of $I^{ns}$, since $\xi \in |I|^s$, then $I^{ns} \models {\sort s}(\xi^\dia)$. 
So then, the claim follows by I.H. on $G(\xi^\dia)$.

\ \\($\Leftarrow$) Assume $I^{ns} \models gr_{I^{ns}}[F^{ns}]$. That is, 
assume there is some $\xi \in |I^{ns}|$ such that $I^{ns} \models gr_{I^{ns}}[G(\xi^\dia)^{ns}] \land {\sort s}(\xi^\dia)$. By definition of $I^{ns}$, since $I^{ns} \models {\sort s}(\xi^\dia)$, then $\xi \in |I|^s$. Then, the claim follows by I.H. on $G(\xi^\dia)$.

\item $F$ is $\forall x G(x)$. $gr_I[F]$ is $\{gr_I[G(\xi^\dia)] : \xi \in |I|^s\}^\land$ where $s$ is the sort of $x$.
\\$F^{ns}$ is $\forall y ({\sort s}(y) \rightarrow G(y)^{ns})$. $gr_{I^{ns}}[F^{ns}]$ is $\{{\sort s}(\xi^\dia) \rightarrow gr_{I^{ns}}[G(\xi^\dia)^{ns}] : \xi \in |I^{ns}|\}^\land$.
\\($\Rightarrow$) Assume $I \models gr_I[F]$. That is, for every
$\xi \in |I|^s$ where $s$ is the sort of $x$, assume that $I \models gr_I[G(\xi^\dia)]$. Note that for every $\xi \in |I^{ns}|$ such that $I^{ns} \not \models {\sort s}(\xi^\dia)$, we have that $I^{ns}$ vacuously satisfies ${\sort s}(\xi^\dia) \rightarrow gr_{I^{ns}}[G(\xi^\dia)^{ns}]$. By definition of $I^{ns}$, since $\xi \in |I|^s$ iff $I^{ns} \models {\sort s}(\xi^\dia)$
 the claim follows by I.H. on $G(\xi^\dia)$ for every $\xi \in |I|^s$.

\ \\($\Leftarrow$) Assume $I^{ns} \models gr_{I^{ns}}[F^{ns}]$. That is, 
assume for every $\xi \in |I^{ns}|$ that $I^{ns} \models {\sort s}(\xi^\dia) \rightarrow gr_{I^{ns}}[G(\xi^\dia)^{ns}]$. This means that for every $\xi$ such that $I^{ns} \models {\sort s}(\xi^\dia)$, it must be that $I^{ns} \models gr_{I^{ns}}[G(\xi^\dia)^{ns}]$.

Now, by definition of $I^{ns}$, for any $\xi$ such that $I^{ns} \models {\sort s}(\xi^\dia)$, we have that $\xi \in |I|^s$. So then, the claim follows by I.H. on $G(\xi^\dia)$ for every $\xi \in |I|^s$. 
\end{itemize}
\qed
\EOCC

\begin{lemma}\label{lem:unisort2}\optional{lem:unisort2}
Given a formula $F$ of many-sorted signature $\sigma$, interpretations $I$ and $J$ of $\sigma$ and an interpretation $K$ of $\sigma^{ns}$ such that 
\begin{itemize}
\item for every sort $s$ in $\sigma$, $|I|^s = |J|^s = s^K$,
\item for every predicate and function constant $c$ and for every tuple ${\bfxi}$ composed of elements from $|I^{ns}|$ such that $\xi_i \in |I|^{args_i}$  for every $\xi_i \in {\bfxi}$ , where $args_i$ is the $i$-th argument sort of $c$, we have 
$c({\bfxi})^K = c({\bfxi})^J$,
\item for every predicate and function constant $c$ and for every tuple ${\bfxi}$ composed of elements from $|I^{ns}|$ such that $\xi_i \notin |I|^{args_i}$ for some $\xi_i \in |I|^{args_i}$, where $args_i$ is the $i$-th argument sort of $c$, we have $c({\bfxi})^K = c({\bfxi})^{I^{ns}}$,
\end{itemize}
$J$ is a model of $gr_I[F]^{\mu I}$ iff $K$ is a model of $gr_{I^{ns}}[F^{ns}]^{\mu I^{ns}}$.
\end{lemma}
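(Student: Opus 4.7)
The plan is to prove the lemma by structural induction on $F$, paralleling the proof of Lemma~\ref{lem:unisort1} but tracking how the reduct operation interacts with the sort-predicate filter $\sort{s}(y)$ introduced by $F \mapsto F^{ns}$. Before starting the main induction, I would establish two auxiliary facts. First, \emph{Term Evaluation}: for every ground term $t$ of $\sigma^I$ that can occur in $gr_I[F]$, $t^J = t^K$. This goes by induction on $t$: an object name $\xi^\dia$ with $\xi \in |I|^s$ evaluates to $\xi$ under both; for $t = f(t_1,\dots,t_n)$, well-formedness in the many-sorted setting forces each $t_i$ to have the $i$-th argument sort of $f$, so by I.H.\ $t_i^J = t_i^K$ lies in $|I|^{args_i}$, and the second condition on $K$ then gives $f(t_1,\dots,t_n)^K = f(t_1,\dots,t_n)^J$. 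Second, \emph{Reduct Unsatisfiability}: if $I^{ns} \not\models H$ for any infinitary ground formula $H$, then no interpretation satisfies $H^{\mu I^{ns}}$ (and symmetrically for $I$); this is straightforward induction on $H$, since unsatisfied atoms and implications get collapsed to $\bot$ and conjunctions/disjunctions propagate.

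The main induction then proceeds as follows. For an atomic $F$, Lemma~\ref{lem:unisort1} gives $I \models gr_I[F]$ iff $I^{ns} \models gr_{I^{ns}}[F^{ns}]$, so either both reducts are $\bot$ (and neither $J$ nor $K$ satisfies them) or both reduce to the atom itself, in which case the Term Evaluation fact closes the case. Conjunction and disjunction cases follow directly from I.H., and implication uses Lemma~\ref{lem:unisort1} to determine together whether both reducts collapse to $\bot$ or both descend recursively via I.H.\ on antecedent and consequent.

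The interesting cases are the quantifiers. For $\exists x\,G(x)$ with $x$ of sort $s$, the many-sorted reduct is $\{gr_I[G(\xi^\dia)]^{\mu I} : \xi \in |I|^s\}^\lor$, while the unsorted reduct is $\{(\sort{s}(\xi^\dia) \land gr_{I^{ns}}[G(\xi^\dia)^{ns}])^{\mu I^{ns}} : \xi \in |I^{ns}|\}^\lor$. For $\xi \notin |I|^s$, the conjunct $\sort{s}(\xi^\dia)$ reduces to $\bot$, so the whole disjunct is unsatisfiable. For $\xi \in |I|^s = \sort{s}^{I^{ns}}$, $I^{ns} \models \sort{s}(\xi^\dia)$ so the reduct preserves it, and $K \models \sort{s}(\xi^\dia)$ by the first condition on $K$; thus $K$ satisfies that disjunct iff $K \models gr_{I^{ns}}[G(\xi^\dia)^{ns}]^{\mu I^{ns}}$, iff (by I.H.) $J \models gr_I[G(\xi^\dia)]^{\mu I}$. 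Hence the two big disjunctions agree. The universal case $\forall x\,G(x)$ is dual: for $\xi \notin |I|^s$, the unsorted conjunct $(\sort{s}(\xi^\dia) \rar \cdots)^{\mu I^{ns}}$ becomes $\bot \rar \cdots$, which $K$ satisfies vacuously; for $\xi \in |I|^s$ with $I^{ns} \not\models \sort{s}(\xi^\dia) \rar gr_{I^{ns}}[G(\xi^\dia)^{ns}]$ the reduct becomes $\bot$ (so $K$ fails), and Lemma~\ref{lem:unisort1} plus Reduct Unsatisfiability show $J$ fails $gr_I[G(\xi^\dia)]^{\mu I}$ as well; for the remaining $\xi$, I.H.\ applies directly.

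The main obstacle is simply bookkeeping in the quantifier cases: one must stratify the disjuncts/conjuncts by whether $\xi \in |I|^s$, then by whether $I^{ns}$ satisfies the corresponding instance, and verify in every subcase that satisfaction by $K$ matches satisfaction by $J$. No step is conceptually deep, but the interaction of Lemma~\ref{lem:unisort1}, the reduct definition, and the two auxiliary facts must be handled without slippage.
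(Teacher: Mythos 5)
Your proof is correct and takes essentially the same approach as the paper, which proves this lemma by structural induction on $F$, using Lemma~\ref{lem:unisort1} to synchronize when the two reducts collapse and stratifying the quantifier cases by whether $\xi$ belongs to $|I|^s$. Your two auxiliary facts (term evaluation on well-sorted ground terms, and unsatisfiability of the reduct of a formula the reference interpretation falsifies) are left implicit in the paper but are exactly what its case analysis relies on.
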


\proof By induction on the structure of~$F$. \qed

\BOCC
\begin{itemize}
\item $F$ is $p({\bf t})$ where each $t_i$ in ${\bf t}$ is comprised of ground terms from the extended signature $\sigma^I$.
\\$F^{ns}$ is $p({\bf t})$.
\\We consider two cases:
\begin{itemize}
\item If $I \models p({\bf t})$, then $gr_I[F]^{\mu I}$ is $p({\bf t})$. By 
Lemma~\ref{lem:unisort1}
, it follows 
that $I^{ns} \models p({\bf t})$ and so $gr_{I^{ns}}[F^{ns}]^{\mu I^{ns}}$ is $p({\bf t})$. Thus, in this case, 
$J$ is a model of $gr_I[F]^{\mu I}$ iff $K$ is a model of $gr_{I^{ns}}[F^{ns}]^{\mu I^{ns}}$. 
\item If $I \not \models p({\bf t})$, then $gr_I[F]^{\mu I}$ is $\bot$. By 
Lemma~\ref{lem:unisort1}
, it follows
that $I^{ns} \not \models p({\bf t})$ and so $gr_{I^{ns}}[F^{ns}]^{\mu I^{ns}}$ is also $\bot$. Thus, in this case, 
$J$ is not a model of $gr_I[F]^{\mu I}$ and $K$ is not a model of $gr_{I^{ns}}[F^{ns}]^{\mu I^{ns}}$ so the claim follows. 
\end{itemize}

\item $F$ is $t_1 = t_2$ where each $t_i$ is comprised of ground terms from the extended signature $\sigma^I$. 
\\$F^{ns}$ is $t_1 = t_2$.
\\We consider two cases:
\begin{itemize}
\item If $(t_1)^I = (t_2)^I$, then $gr_I[F]^{\mu I}$ is $t_1 = t_2$. By 
Lemma~\ref{lem:unisort1}
, it follows that
$(t_1)^{I^{ns}} = (t_2)^{I^{ns}}$ and so $gr_{I^{ns}}[F^{ns}]^{\mu I^{ns}}$ is $t_1 = t_2$. Thus, in this case by the second item in the requirement of this lemma, 
$J$ is a model of $gr_I[F]^{\mu I}$ iff $K$ is a model of $gr_{I^{ns}}[F^{ns}]^{\mu I^{ns}}$. 
\item If $(t_1)^I \neq (t_2)^I$, then $gr_I[F]^{\mu I}$ is $\bot$. By 
Lemma~\ref{lem:unisort1}
, it follows that 
$(t_1)^{I^{ns}} \neq (t_2)^{I^{ns}}$ and so $gr_{I^{ns}}[F^{ns}]^{\mu I^{ns}}$ is also $\bot$. Thus, in this case, 
$J$ is not a model of $gr_I[F]^{\mu I}$ and $K$ is not a model of $gr_{I^{ns}}[F^{ns}]^{\mu I^{ns}}$ so the claim follows. 
\end{itemize}

\item $F$ is $G \odot H$ where $\odot \in \{\land, \lor, \rightarrow\}$.
\\$F^{ns}$ is $G^{ns} \odot H^{ns}$. We consider two cases:
\begin{itemize}
\item If $I \models G \odot H$, then $gr_I[F]^{\mu I}$ is $gr_I[G]^{\mu I} \odot gr_I[H]^{\mu I}$. By Lemma~\ref{lem:unisort1}, $I^{ns} \models G^{ns} \odot H^{ns}$ and so
$gr_{I^{ns}}[F^{ns}]^{\mu I^{ns}}$ is $gr_{I^{ns}}[G^{ns}]^{\mu I^{ns}} \odot gr_{I^{ns}}[H^{ns}]^{\mu I^{ns}}$ so the claim follows by induction on $G$ and $H$. 
\item If $I \not \models G \odot H$ then $gr_I[F]^{\mu I}$ is $\bot$. By Lemma~\ref{lem:unisort1}, $I^{ns} \not \models G^{ns} \odot H^{ns}$ and so
$(F^{ns})^{\mu I^{ns}}$ is $\bot$. Thus, in this case, $J$ is not a model of $gr_I[F]^{\mu I}$ and $K$ is not a model of $gr_{I^{ns}}[F^{ns}]^{\mu I^{ns}}$ so the claim follows. 
\end{itemize}

\item $F$ is $\exists x (G(x))$ where the sorted variable $x$ has sort ${\sort s}$.
\\$F^{ns}$ is $\exists y (G(y)^{ns} \land {\sort s}(y))$ (note that the variable here is unsorted). 
\\$gr_I[F]$ is $\{gr_I[G(\xi^\dia)] : \xi \in |I|^{\sort s}\}^\lor$.
\\$gr_{I^{ns}}[F^{ns}]$ is $\{gr_{I^{ns}}[G(\xi^\dia)^{ns}] \land {\sort s}(\xi^\dia) : \xi \in |I^{ns}|\}^\lor$.
\\$gr_I[F]^{\mu I}$ is equivalent to 
$$\{gr_I[G(\xi^\dia)]^{\mu I} : \xi \in |I|^{\sort s}\ \text{ and } I \models gr_I[G(\xi^\dia)]\}^\lor.$$
\\$gr_{I^{ns}}[F^{ns}]^{\mu I^{ns}}$ is equivalent to 
$$\{gr_{I^{ns}}[G(\xi^\dia)^{ns}]^{\mu I^{ns}} \land {\sort s}(\xi^\dia): \xi \in |I^{ns}|\ \text{ and } I^{ns} \models gr_{I^{ns}}[G(\xi^\dia)^{ns}] \land {\sort s}(\xi^\dia)\}^\lor.$$ 
Further, since $I^{ns} \models {\sort s}(\xi^\dia)$ iff $\xi$ is from $|I|^s$ and by the first item in the requirement of this lemma, $K \models gr_{I^{ns}}[F^{ns}]^{\mu I^{ns}}$ iff 
$$K \models \{gr_{I^{ns}}[G(\xi^\dia)^{ns}]^{\mu I^{ns}} : \xi \in |I|^{\sort s}\ \text{ and } I^{ns} \models gr_{I^{ns}}[G(\xi^\dia)^{ns}]\}^\lor.$$ Then, by I.H. on each $G(\xi^\dia)$ such that $\xi \in |I|^{\sort s}$ and $I \models G(\xi^\dia)$, we have that $J \models gr_I[G(\xi^\dia)]^{\mu I}$ iff $K \models gr_{I^{ns}}[G(\xi^\dia)^{ns}]^{\mu I^{ns}}$, from 
which the claim then follows.

\item $F$ is $\forall x (G(x))$ where the sorted variable $x$ has sort ${\sort s}$.
\\$F^{ns}$ is $\forall y ({\sort s}(y) \rightarrow G(y))$ (note that the variable here is unsorted). 
\\$gr_I[F]$ is $\{gr_I[G(\xi^\dia)] : \xi \in |I|^{\sort s}\}^\land$.
\\$gr_{I^{ns}}[F^{ns}]$ is $\{{\sort s}(\xi^\dia) \rightarrow gr_{I^{ns}}[G(\xi^\dia)^{ns}] : \xi \in |I^{ns}|\}^\land$.

We consider two cases:
\begin{itemize}
\item If $I \models G(\xi^\dia)$ for every $\xi \in |I|^{\sort s}$, then $gr_I[F]^{\mu I}$ is equivalent to 
$$\{gr_I[G(\xi^\dia)]^{\mu I} : \xi \in |I|^{\sort s}\}^\land.$$ 
For every $\xi \notin |I|^{\sort s}$, $I^{ns} \not \models {\sort s}(\xi^\dia)$ and so in $gr_{I^{ns}}[F^{ns}]^{\mu I^{ns}}$, the implications corresponding to such $\xi$ are vacuously satisfied and so $gr_{I^{ns}}[F^{ns}]^{\mu I^{ns}}$ is equivalent to 
$$\{{\sort s}(\xi^\dia)^{I^{ns}} \rightarrow gr_{I^{ns}}[G(\xi^\dia)^{ns}]^{\mu I^{ns}} : \xi \in |I|^{\sort s} \text{ and } I^{ns} \models gr_{I^{ns}}[G(\xi^\dia)^{ns}] \}^\land.$$
Since $\xi \in |I|^{\sort s}$ iff $I^{ns} \models {\sort s}(\xi^\dia)$ and since by Lemma~\ref{lem:unisort1}, $I^{ns} \models gr_{I^{ns}}[G(\xi^\dia)^{ns}]$ for every $\xi \in |I|^{\sort s}$, $K \models gr_{I^{ns}}[F^{ns}]^{\mu I^{ns}} $ iff
$$K \models \{gr_{I^{ns}}[G(\xi^\dia)^{ns}]^{I^{ns}} : \xi \in |I|^s \}^\land.$$ Then, by I.H. on each $G(\xi^\dia)$ such that $\xi \in |I|^{\sort s}$, we have that $J \models gr_I[G(\xi^\dia)]^{\mu I}$ iff $K \models gr_{I^{ns}}[G(\xi^\dia)^{ns}]^{\mu I^{ns}}$, from 
which the claim then follows.

\item If $I \not \models G(\xi^\dia)$ for some $\xi \in |I|^{\sort s}$, then $gr_I[F]^{\mu I}$ is $\bot$. Since $\xi \in |I|^{\sort s}$, $I^{ns} \models {\sort s}(\xi^\dia)$ but by Lemma~\ref{lem:unisort1}, $I^{ns} \not \models gr_{I^{ns}}[G(\xi^\dia)^{ns}]$ so $gr_{I^{ns}}[F^{ns}]^{\mu I^{ns}}$ is $\bot$. In this case, 
$J$ is not a model of $gr_I[F]^{\mu I}$ and $K$ is not a model of $gr_{I^{ns}}[F^{ns}]^{\mu I^{ns}}$ so the claim follows. \qed
\end{itemize}

\end{itemize}
\EOCC

\begin{lemma}\label{lem:mssat-relation}\optional{lem:mssat-relation}
Given a formula $F$ of many-sorted signature $\sigma$ and two interpretations $L$ and $L_1$ of $\sigma^{ns}$ such that $R(L,L_1)$, if $L \models F^{ns} \land SF_\sigma$, then $L_1 \models F^{ns} \land SF_\sigma$.
\end{lemma}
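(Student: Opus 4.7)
The plan is to establish $L_1 \models F^{ns} \land SF_\sigma$ by decomposing the conjunction and handling each part separately, exploiting the structural regularity of the translation $(\cdot)^{ns}$ and of $SF_\sigma$. A useful preliminary observation is that by the definition of $R$, interpretations $L$ and $L_1$ agree on the sort predicates (when read, as intended, as having to agree on all arguments, since sort predicates have no ``argument sort'' inherited from $\sigma$) and they agree on every other constant whenever its arguments lie in the proper argument sorts.

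First, I would verify that $L_1 \models SF_\sigma$. Examining each of the five conjunctive clauses: the subsort-containment clauses and the non-emptiness clauses mention only sort predicates, which $L$ and $L_1$ interpret identically, so these transfer trivially. The value-sort clauses for function constants $f$ only constrain $f(y_1,\dots,y_k)$ on in-sort argument tuples; on such tuples $L_1$ and $L$ agree (by $R$), and the consequent $\sort{vals}(f(y_1,\dots,y_k))$ mentions only sort predicates and $f$ on in-sort arguments, so the clauses are preserved. The last two clauses are choice formulas $\{f(\dots)=y\}^{\rm ch}$ and $\{p(\dots)\}^{\rm ch}$ occurring under guards that force at least one argument to be out-of-sort; these are classical tautologies, hence automatically satisfied by $L_1$.

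Next, for $L_1 \models F^{ns}$, I would proceed by structural induction on $F$. The crucial invariant for the induction is that every quantifier introduced by $(\cdot)^{ns}$ is guarded by a sort predicate, so all quantified variables effectively range only over in-sort elements on both interpretations (which agree on sort predicates). The inductive step for propositional connectives is immediate, and the step for quantifiers reduces via the guarding to a case split over in-sort witnesses, where the inductive hypothesis applies. For the atomic base case, I need to show that every ground term that arises during evaluation lies in the value sort of its head symbol; once this is established, the atom reduces to an instance where $L$ and $L_1$ agree by $R$.

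The main obstacle I expect is precisely this atomic base case: a nested term $f(t_1,\dots,t_k)$ occurring inside an atom might, a priori, produce an intermediate value on which $L$ and $L_1$ disagree, if that value were out-of-sort relative to the enclosing symbol. Overcoming this requires a secondary induction on term complexity showing that, in the guarded context, every subterm evaluates under $L$ to an element of its declared value sort; this is exactly the content of the third clause of $SF_\sigma$, which is why the hypothesis $L \models SF_\sigma$ is indispensable. With this term-sort lemma in hand, every ground atom encountered during the evaluation of $F^{ns}$ involves only in-sort arguments, so $L$ and $L_1$ assign it the same truth value, and the induction closes.
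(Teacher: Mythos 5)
Your proof is correct and follows essentially the same route as the paper's: first verify $L_1 \models SF_\sigma$ clause by clause (the first two clauses via agreement on the sort predicates, the third via agreement on in-sort argument tuples, the last two as classical tautologies), and then establish $L_1 \models F^{ns}$ by structural induction, using the sort guards introduced on quantifiers by the translation. Your explicit term-sort invariant for the atomic base case (every subterm evaluates in-sort, by the third clause of $SF_\sigma$) is a detail the paper's proof leaves implicit, but it is the same underlying argument.
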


\proof
Assume that $L \models F^{ns} \land SF_\sigma$.
We first show that $L_1 \models SF_\sigma$. Since $R(L,L_1)$, $L$ and $L_1$ agree on all sort predicates ${\sort s}$ corresponding to sorts $s \in \sigma$. Thus, $L_1$ clearly satisfies the first two items of $SF_\sigma$. We now consider the third item of $SF_\sigma$. For tuples $\xi_1,\dots,\xi_k$ such that each $\xi_i \in args_i$ where $args_i$ is the $i$-th argument sort of $f$, since $R(L,L_1)$, $L$ and $L_1$ agree on $f(\xi_1,\dots,\xi_k)$ so $L_1$ satisfies the implication. For all other tuples, the implication is vacuously satisfied. Finally, the fourth and fifth items of $SF_\sigma$ are tautologies in classical logic so we conclude that $L_1 \models SF_\sigma$. 

Next, $L_1 \models F^{ns}$ can be shown by induction on the structure of $F^{ns}$.
\qed

\BOCC
\begin{itemize}
\item $F^{ns}$ is $p({\bf t})$ where ${\bf t}$ is a ground term from the extended signature $\sigma^I$. Since every $t_i \in {\bf t}$ must be from the $i$-th argument sort of $p$, it follows from $R(L,L_1)$ that $L_1 \models F^{ns}$.

\item $F^{ns}$ is $t_1 = t_2$ where $t_1$ and $t_2$ are ground terms from the extended signature $\sigma^I$. Since every subterm of $t_1$ and $t_2$ must be from the the appropriate sort, it follows from $R(L,L_1)$ that $L_1 \models F^{ns}$.

\item $F^{ns}$ is $G^{ns} \odot H^{ns}$ where $\odot \in \{\land, \lor, \rightarrow\}$.
The claim follows by I.H. on $G^{ns}$ and $H^{ns}$.

\item $F^{ns}$ is $\exists y (G(y) \land {\sort s}(y))$. Since we assume that $L \models F^{ns}$, there is some $\xi \in |I^{ns}|$ such that $L \models G(\xi^\dia) \land {\sort s}(\xi^\dia)$. Further, since $L \models {\sort s}(\xi^\dia)$ iff $\xi \in |I|^s$, the claim follows by I.H. on $G(\xi^\dia)$. 

\item $F^{ns}$ is $\forall y ({\sort s}(y) \rightarrow G(y))$. Since we assume that $L \models F^{ns}$, for every $\xi \in |I^{ns}|$ we have $L \models {\sort s}(\xi^\dia) \rightarrow G(\xi^\dia)$. For every $\xi \notin |I|^s$, $L_1$ vacuously satisfies ${\sort s}(\xi^\dia) \rightarrow G(\xi^\dia)$. For every $\xi \in |I|^s$, since $L_1 \models {\sort s}(\xi^\dia)$ iff $\xi \in |I|^s$, the claim follows by I.H. on every $G(\xi^\dia)$ such that $\xi \in |I|^s$. 
\end{itemize}
\EOCC

\begin{lemma}\label{lem:mssm-relation}\optional{lem:mssm-relation}
Given a formula $F$ of many-sorted signature $\sigma$, a set of function and predicate constants ${\bf c}$ from $\sigma$ and two interpretations $L$ and $L_1$ of $\sigma^{ns}$ such that $R(L,L_1)$, if $L$ is a stable model of $F^{ns} \land SF_\sigma$ w.r.t. ${\bf c}$, then $L_1$ is a stable model of $F^{ns} \land SF_\sigma$ w.r.t. ${\bf c}$.
\end{lemma}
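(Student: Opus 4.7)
The plan is to use the reduct-based characterization of stable models from Theorem~\ref{thm:fsm-reduct} together with the construction of an appropriate witness interpretation. Assume $L \models \fsm[F^{ns} \land SF_\sigma; {\bf c}]$. First, by Lemma~\ref{lem:mssat-relation}, the relation $R(L,L_1)$ together with $L \models F^{ns} \land SF_\sigma$ yields $L_1 \models F^{ns} \land SF_\sigma$. It remains to show that no $J$ with $J <^{\bf c} L_1$ satisfies $(gr_{L_1}[F^{ns} \land SF_\sigma])^{\mu L_1}$.

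Suppose for contradiction that some such $J$ exists. I will construct $J'$ with $J' <^{\bf c} L$ satisfying $(gr_{L}[F^{ns} \land SF_\sigma])^{\mu L}$, contradicting the stability of $L$. Define $J'$ to agree with $L$ on all constants outside ${\bf c}$, and for every $c \in {\bf c}$ of arity $k$ with argument sorts $args_1,\dots,args_k$, set
\[
   c^{J'}(\xi_1,\dots,\xi_k) = \begin{cases} c^J(\xi_1,\dots,\xi_k) & \text{if } \xi_i \in args_i^L \text{ for all } i, \\ c^L(\xi_1,\dots,\xi_k) & \text{otherwise.} \end{cases}
\]
Since $R(L,L_1)$ forces $args_i^L = args_i^{L_1}$ and agreement of $L$ and $L_1$ on proper-sort arguments, one checks $R(J,J')$ directly. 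Moreover, because $J <^{\bf c} L_1$, there is some $c \in {\bf c}$ and a proper-sort tuple on which $J$ and $L_1$ disagree; on that tuple $J'$ and $L$ disagree via $R(L,L_1)$, and for predicate constants the inclusion $p^{J'} \subseteq p^L$ transfers from $p^J \subseteq p^{L_1}$. Hence $J' <^{\bf c} L$.

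The crux is to show $J' \models (gr_{L}[F^{ns} \land SF_\sigma])^{\mu L}$, which splits into two parts. For the $F^{ns}$ component, the sort-guarding built into the translation ensures that every ground atom occurring in the reduct involves only arguments drawn from the appropriate sort predicates; on such arguments $J'$ agrees with $J$ and $L$ agrees with $L_1$, so every subformula of the reduct evaluates the same under $(J',L)$ as under $(J,L_1)$---this is essentially the content of Lemma~\ref{lem:unisort2}, and a direct structural induction on $F$ suffices. For the $SF_\sigma$ component, the first three conjuncts are either negative on ${\bf c}$ or otherwise preserved between $L$ and $J'$, so Lemma~\ref{lem:negative} handles them. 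The main obstacle lies in the fourth and fifth items of $SF_\sigma$, namely the choice formulas on non-proper-sort arguments. The reduct of a choice $\{c(y_1,\dots,y_k) = y_{k+1}\}^{\rm ch}$ under $L$ simplifies, on each ground instance, either to $\top$ or to the atom $c(\xi_1,\dots,\xi_k) = c^L(\xi_1,\dots,\xi_k)$; since $J'$ is defined to coincide with $L$ on precisely the non-proper-sort arguments, these atoms are satisfied. The delicate bookkeeping here is exactly what the choice clauses of $SF_\sigma$ were designed to enable: they permit freely resetting $J$'s values on non-proper-sort arguments to match $L$ without breaking the reduct, which is what makes the witness construction go through.
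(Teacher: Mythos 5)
Your proof is correct and follows essentially the same route as the paper's (the paper omits the proof, but the intended argument is exactly this contrapositive witness construction: take the counter-witness $J$ for $L_1$, splice it with $L$ on tuples that violate the argument sorts, check $<^{\bf c}$, and verify the reduct item by item, with a structural induction for the $F^{ns}$ part). The one step you assert without justification is that the disagreement between $J$ and $L_1$ must occur on a properly-sorted tuple---this holds because $J$ satisfies the reducts of the choice formulas in the fourth and fifth items of $SF_\sigma$, which pin $J$ to $L_1$ on all other tuples; you invoke this same mechanism later for $J'$ versus $L$, so it is worth making it explicit here as well.
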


\proof Omitted. The proof is long but not complicated. \qed

\BOCC
\proof
We first note ${\bf c}$ contains function and predicate constants from $\sigma$ and thus contains none of the sort predicates introduced in $\sigma^{ns}$.

We assume that $L$ is a stable model of $F^{ns} \land SF_\sigma$, and wish to show that $L_1$ is a stable model of $F^{ns} \land SF_\sigma$.
That is, given that $L \models F^{ns}\land SF_\sigma$ and there is no interpretation $K$ such that $K <^{\bf c} L$ and $K \models gr_L[F^{ns}\land SF_\sigma]^{\mu L}$, we wish to show that there is no 
interpretation $K_1$ such that $K_1 <^{\bf c} L_1$ and $K_1 \models gr_{L_1}[F^{ns}\land SF_\sigma]^{\mu L_1}$. 
Equivalently, we will show that if there is an interpretation $K_1$ such that $K_1 <^{\bf c} L_1$ and $K_1 \models gr_{L_1}[F^{ns}\land SF_\sigma]^{\mu L_1}$, then there is an interpretation $K$ such that $K <^{\bf c} L$ and $K \models gr_L[F^{ns}\land SF_\sigma]^{\mu L}$.

Assume that there is an interpretation $K_1$ such that $K_1 <^{\bf c} L_1$ and $K_1 \models gr_{L_1}[F^{ns}\land SF_\sigma]^{\mu L_1}$, we construct $K$ as follows.
\begin{itemize}
\item $|K| = |K_1|$,
\item ${\sort s}^K = {\sort s}^{K_1}$ for every ${\sort s}$ corresponding to a sort $s \in \sigma$,
\item $c(\xi_1,\dots,\xi_k)^K = c(\xi_1,\dots,\xi_k)^{K_1}$ for every tuple $\xi_1,\dots,\xi_k$ such that $\xi_i \in s_i$ where $s_i$ is the $i$-th argument sort of $c$,
\item $c(\xi_1,\dots,\xi_k)^K = c(\xi_1,\dots,\xi_k)^L$ for every tuple $\xi_1,\dots,\xi_k$ such that $\xi_i \notin s_i$ for some $i$ where $s_i$ is the $i$-th argument sort of $c$.
\end{itemize}

We first show that $K <^{\bf c} L$. By definition $|K| = |K_1|$. From $K_1 <^{\bf c} L_1$, it follows that $|K| = |L_1|$. Then since $R(L_1,L)$, it follows that $|K| = |L|$. By definition of $K$, it follows that ${\sort s}^K = {\sort s}^{K_1}$ for every ${\sort s}$ corresponding to a sort $s \in \sigma$. Then, since $K_1 <^{\bf c} L_1$ and since $R(L_1,L)$, it follows that ${\sort s}^K = {\sort s}^{L}$. Now, for any function or predicate $c$ and any tuple $\xi_1,\dots,\xi_k$ such that $\xi_i \notin s_i$ for some $i$ where $s_i$ is the $i$-th argument sort of $c$, by definition, $c(\xi_1,\dots,\xi_k)^K = c(\xi_1,\dots,\xi_k)^L$. Finally, for every function or predicate $c$ and every tuple $\xi_1,\dots,\xi_k$ such that $\xi_i \in s_i$ where $s_i$ is the $i$-th argument sort of $c$, since $R(L,L_1)$, it is clear that $c(\xi_1,\dots,\xi_k)^{L_1} = c(\xi_1,\dots,\xi_k)^L$. We also have by definition, $c(\xi_1,\dots,\xi_k)^K = c(\xi_1,\dots,\xi_k)^{K_1}$ for such predicate (functions) and tuples.

Now since we assume that $K_1 <^{\bf c} L_1$, there must be some function or predicate constant $c$ and some tuple $\xi_1,\dots,\xi_k$ such that $c(\xi_1,\dots,\xi_k)^{K_1} \neq c(\xi_1,\dots,\xi_k)^{L_1}$. Now by definition of $K_1 <^{\bf c} L_1$, $K_1$ and $L_1$ agree on all of the sort predicates ${\sort s}$ coming from sorts $s \in \sigma$. Further, since $K_1 \models (SF_\sigma)^{\mu L_1}$, the fourth and fifth items of $(SF_\sigma)^{\mu L_1}$ force $K_1$ to agree with $L_1$ on all functions (predicates) and tuples such that some tuple is not of the correct sort. Thus, it must be that the tuple $\xi_1,\dots,\xi_k$ such that $c(\xi_1,\dots,\xi_k)^{K_1} \neq c(\xi_1,\dots,\xi_k)^{L_1}$ has that every $\xi_i$ belongs to the appropriate sort. Thus, by the observation before that $c(\xi_1,\dots,\xi_k)^{L_1} = c(\xi_1,\dots,\xi_k)^L$ and $c(\xi_1,\dots,\xi_k)^K = c(\xi_1,\dots,\xi_k)^{K_1}$, it follows that $K <^{\bf c} L$.

Now, we show that $K \models gr_L[SF_\sigma]^{\mu L}$ by considering each item of $SF_\sigma$. We first note that since $K_1 \models gr_{L_1}[SF_\sigma]^{\mu L_1}$, it must be that $L_1 \models gr_{L_1}[SF_\sigma]$. Thus by Lemma~\ref{lem:mssat-relation}, we have that $L \models gr_L[SF_\sigma]$.

\begin{itemize}
\item Item 1: $\forall y({\sort s}_i(y) \rightarrow {\sort s}_j(y))$ for every two sorts $s_i$ and $s_j$ in $\sigma$ such that $s_i$ is a subsort of $s_j$.

From $K_1 <^{\bf c} L_1$, it follows that ${\sort s}_i(\xi)^{K_1} = {\sort s}_i(\xi)^{L_1}$ for every predicate ${\sort s}$ corresponding to a sort $s \in \sigma$ and for every $\xi$ in $|L_1| = |K_1|$. By definition of $K$, and since $R(L,L_1)$, we then have that ${\sort s}_i(\xi)^K = {\sort s}_i(\xi)^{K_1} = {\sort s}_i(\xi)^{L_1} = {\sort s}_i(\xi)^L$ so clearly the claim holds 
for this item.

\item Item 2: $\exists y ({\sort s}(y))$ for every sort $s$ in $\sigma$. 

By the same argument in Item 1, ${\sort s}_i(\xi)^K = {\sort s}_i(\xi)^{K_1} = {\sort s}_i(\xi)^{L_1} = {\sort s}_i(\xi)^L$ so clearly the claim holds 
for this item.

\item the formulas 
$\forall y_1\dots y_{k} ({\sort args}_1(y_1) \land \dots \land {\sort args}_k(y_k) \rightarrow {\sort vals}(f(y_1,\dots, y_{k})))$ for each function constant $f$ in $\sigma$ where the arity of $f$ is $k$ and the $i$-th argument sort of $f$ is $args_i$ and the value sort of $f$ is $vals$.

By $R(L,L_1)$, for every $\xi_1,\dots,\xi_k$ such that $\xi_i \in args_i$, $f(\xi_1,\dots,\xi_k)^L = f(\xi_1,\dots,\xi_k)^{L_1})$. Then, by definition of $K$, $f(\xi_1,\dots,\xi_k)^K = f(\xi_1,\dots,\xi_k)^{K_1})$ so the claim holds for this item.

\item the formulas 
$\forall y_1\dots y_{k+1} (\neg {\sort args}_1(y_1) \lor \dots \lor \neg {\sort args}_k(y_k) \rightarrow \{f(y_1,\dots, y_{k}) = y_{k+1}\})$ 

for each function constant $f$ in $\sigma$ where the arity of $f$ is $k$ and the $i$-th argument sort of $f$ is $args_i$. 

By definition of $K$, $f(\xi_1,\dots,\xi_k)^K = f(\xi_1,\dots,\xi_k)^L$ and since the reduct of these formulas is only satisfied when $K$ agrees with $L$ for these tuples, the claim holds for this item.

\item the formulas 
$\forall y_1\dots y_{k} (\neg {\sort args}_1(y_1) \lor \dots \lor \neg {\sort args}_k(y_k) \rightarrow \{p(y_1,\dots, y_{k})\})$ 
for each function constant $f$ in $\sigma$ where the arity of $f$ is $k$ and the $i$-th argument sort of $f$ is $args_i$. 

By definition of $K$, $p(\xi_1,\dots,\xi_k)^K = p(\xi_1,\dots,\xi_k)^L$ and since the reduct of these formulas is only satisfied when $K$ agrees with $L$ for these tuples, the claim holds for this item.
\end{itemize}

Finally, we show that $K \models gr_L[F^{ns}]^{\mu L}$ iff $K_1 \models gr_{L_1}[F^{ns}]^{\mu L_1}$ by induction on $F^{ns}$ and will conclude that since we assume $K_1 \models gr_{L_1}[F^{ns}]^{\mu L_1}$, that $K \models gr_L[F^{ns}]^{\mu L}$.

\begin{itemize}
\item $F^{ns}$ is $p({\bf t})$ where each element of ${\bf t}$ is a ground term from the extended signature $\sigma^I$ and belongs to the corresponding argument sort of $p$.
\\$gr_L[F^{ns}]^{\mu L}$ is the same as $gr_{L_1}[F^{ns}]^{\mu L_1}$ by Lemma~\ref{lem:mssat-relation}. If $L_1 \not \models p({\bf t})$ then $gr_{L_1}[F^{ns}]^{\mu L_1}$ is $\bot$ neither $K$ nor $K_1$ satisfy this reduct so the claim holds. If instead $L_1 \models p({\bf t})$ then $gr_{L_1}[F^{ns}]^{\mu L_1}$ is $p({\bf t})$.

Then, by definition of $K$, since $p({\bf t})^K = p({\bf t})^{K_1}$, clearly the claim holds.

\item $F^{ns}$ is $f_1({\bf t_1}) = f_2({\bf t_2})$ where each element of ${\bf t_1}$ and ${\bf t_2}$ is a ground term of the extended signature $\sigma^I$ and belongs to the corresponding argument sort of $f_1$ and $f_2$ respectively.
\\$gr_L[F^{ns}]^{\mu L}$ is the same as $gr_{L_1}[F^{ns}]^{\mu L_1}$ by Lemma~\ref{lem:mssat-relation}. If $L_1 \not \models f_1({\bf t_1}) = f_2({\bf t_2})$ then $gr_{L_1}[F^{ns}]^{\mu L_1}$ is $\bot$ neither $K$ nor $K_1$ satisfy this reduct so the claim holds. If instead $L_1 \models f_1({\bf t_1}) = f_2({\bf t_2})$ then $gr_{L_1}[F^{ns}]^{\mu L_1}$ is $f_1({\bf t_1}) = f_2({\bf t_2})$.

Then, by definition of $K$, since $f_1({\bf t_1})^K = f_1({\bf t_1})^{K_1}$ and $f_2({\bf t_2})^K = f_2({\bf t_2})^{K_1}$, clearly the claim holds.

\item $F^{ns}$ is $G^{ns} \odot H^{ns}$ where $\odot \in \{\land, \lor, \rightarrow \}$.
$gr_L[F^{ns}]^{\mu L}$ is $gr_L[G^{ns}]^{\mu L} \odot gr_L[H^{ns}]^{\mu L}$ and $gr_{L_1}[F^{ns}]^{\mu L_1}$ is $gr_{L_1}[G^{ns}]^{\mu L_1} \odot gr_{L_1}[H^{ns}]^{\mu L_1}$ so the claim follows by I.H. on $G^{ns}$ and $H^{ns}$.

\item $F^{ns}$ is $\exists y(G(y)^{ns} \land {\sort s}(y))$.
\\$gr_L[F^{ns}]^{\mu L}$ is equivalent to $\{gr_L[G(\xi^\dia)^{ns}]^{\mu L} : L \models {\sort s}(\xi^\dia)\}^\lor$ and
\\$gr_{L_1}[F^{ns}]^{\mu L_1}$ is equivalent to $\{gr_{L_1}[G(\xi^\dia)^{ns}]^{\mu L_1} : L_1 \models {\sort s}(\xi^\dia)\}^\lor$. Since 
$R(L,L_1)$, we have that ${\sort s}^L = {\sort s}^{L_1}$ and so the claim follows by 
I.H. on each $G(\xi^\dia)^{ns}$ such that $L \models {\sort s}(\xi^\dia)$.

\item $F^{ns}$ is $\forall y({\sort s}(y) \rightarrow G(y)^{ns})$.
We consider two cases:
\begin{itemize}
\item If $L \not \models G(\xi^\dia)^{ns}$ for some $\xi$ such that $L \models {\sort s}(\xi^\dia)$, then $gr_L[F^{ns}]^{\mu L}$ is $\bot$. By Lemma~\ref{lem:mssat-relation}, we have that $L_1 \not \models G(\xi^\dia)^{ns}$ and so $gr_{L_1}[F^{ns}]^{\mu L_1}$ is $\bot$. Thus neither $K$ nor $K_1$ satisfies the reduct and so the claim holds in this case.
\item Otherwise, $L \models G(\xi^\dia)^{ns}$ for every $\xi$ such that $L \models {\sort s}(\xi^\dia)$.
\\$gr_L[F^{ns}]^{\mu L}$ is equivalent to $\{gr_L[G(\xi^\dia)^{ns}]^L : L \models {\sort s}(\xi^\dia)\}^\land$ and
\\$gr_{L_1}[F^{ns}]^{\mu L_1}$ is equivalent to $\{gr_{L_1}[G(\xi^\dia)^{ns}]^{L_1} : L_1 \models {\sort s}(\xi^\dia)\}^\land$. Since 
$R(L,L_1)$, we have that ${\sort s}^L = {\sort s}^{L_1}$ and so the claim follows by 
I.H. on each $G(\xi^\dia)^{ns}$ such that $L \models {\sort s}(\xi^\dia)$. \qed
\end{itemize}

\end{itemize}
\EOCC

\noindent{\textbf {Theorem~\ref{thm:ms2us} \optional{thm:ms2us}}}\
\ 
{\sl
Let $F$ be a formula of a many-sorted signature $\sigma$, and let ${\bf c}$ be a set of function and predicate constants.
\begin{itemize}
\item[(a)] If an interpretation $I$ of signature $\sigma$ is a model of $\fsm[F;{\bf c}]$, then $I^{ns}$ is a model of $\fsm[F^{ns} \land SF_\sigma;{\bf c}]$.
\item[(b)] If an interpretation $L$ of signature $\sigma^{ns}$ is a model of $\fsm[F^{ns} \land SF_\sigma;{\bf c}]$ then there is some interpretation $I$ of signature $\sigma$ such that $I$ is a model of $\fsm[F;{\bf c}]$ and $R(L,I^{ns})$.
\end{itemize}
}

\BOCC
\begin{itemize}
\item[(a)] 
An interpretation $I$ of signature $\sigma$ is a model of $\fsm[F;{\bf c}]$ iff $I^{ns}$ is a model of $\fsm[F^{ns} \land SF_\sigma;{\bf c}]$.
\item[(b)]
An interpretation $L_1$ of signature $\sigma^{ns}$ is a model of $\fsm[F^{ns} \land SF_\sigma;{\bf c}]$ iff there is some interpretation $L$ of signature $\sigma^{ns}$ such that $R(L,L_1)$ and $L = I^{ns}$ for some model $I$ of $\fsm[F;{\bf c}]$.
\end{itemize}

Given a formula $F$ of a many-sorted signature $\sigma$, and a set of function and predicate constants ${\bf c}$,
\\a) If an interpretation $I$ of signature $\sigma$ is a model of $\fsm[F;{\bf c}]$, then $I^{ns}$ is a model of $\fsm[F^{ns} \land SF_\sigma;{\bf c}]$.
\\b) If an interpretation $L$ of signature $\sigma^{ns}$ is a model of $\fsm[F^{ns} \land SF_\sigma;{\bf c}]$ then there is some interpretation $I$ of signature $\sigma$ such that $I$ is a model of $\fsm[F;{\bf c}]$ and $R(L,I^{ns})$.
\EOCC

\proof

(a) Consider an interpretation $I$ (of many-sorted signature $\sigma$) that is a stable model of $F$ w.r.t. ${\bf c}$. This means that $I \models F$ and there is no interpretation $J$ such that $J <^{\bf c} I$ and $J \models gr_I[F]^{\mu I}$. We wish to show that $I^{ns} \models F^{ns}\land SF_\sigma$ and there is no (unsorted) interpretation $K$ such that 
$K <^{\bf c} I^{ns}$ and $K \models gr_{I^{ns}}[F^{ns}\land SF_\sigma]^{\mu I^{ns}}$. From Lemma~\ref{lem:unisort1}, $I \models F$ iff $I^{ns} \models F^{ns}$. It follows from the definition of $I^{ns}$ that $I^{ns} \models SF_\sigma$ so 
we conclude that $I \models F$ iff $I^{ns} \models F^{ns}\land SF_\sigma$. For the second item, we will prove the contrapositive: if there is an (unsorted) interpretation $K$ such that 
$K <^{\bf c} I^{ns}$ and $K \models gr_{I^{ns}}[F^{ns}\land SF_\sigma]^{\mu I^{ns}}$, then there is a (many-sorted) interpretation $J$ such that $J <^{\bf c} I$ and $J \models gr_I[F]^{\mu I}$.

Assume there is an interpretation $K$ such that $K <^{\bf c} I^{ns}$ and $K \models gr_{I^{ns}}[F^{ns}\land SF_\sigma]^{\mu I^{ns}}$. We
obtain the interpretation $J$ as follows.
For every sort $s$ in $\sigma$, $|J|^s = |I|^s$.
For every predicate and function constant $c$ in $\sigma$ and every tuple $\bfxi$ such that each element $\xi_i \in |I|^{s_i}$ where $s_i$ is the sort of the $i$-th argument of $c$, we let $c^J(\bfxi) = c^K(\bfxi)$. For predicate constants, it is not hard to see that this is 
a valid assignment as atoms are either true or false regardless of considering many-sorted or unsorted logic.

We argue that this assignment is also valid for function constants. That is, $K$ does not map a function $f$ to a value outside of $|I|^s$ where $s$ is the value sort of $f$. This follows from the fact that $I^{ns} \models SF_\sigma$ and in particular, the third item of $SF_\sigma$. Thus, since $K \models gr_{I^{ns}}[F^{ns}\land SF_\sigma]^{\mu I^{ns}}$, it follows that $K$ too maps functions to elements of the appropriate sort.

We now show that $J <^{\bf c} I$. Since $K \models gr_{I^{ns}}[SF_\sigma]^{\mu I^{ns}}$, the fourth and fifth rules in $SF_\sigma$ are choice formulas that force $K$ to agree with $I^{ns}$ on every predicate and function constant $c$ for every tuple that has at least one element outside of the corresponding sort. For every predicate and function constant $c$ and all tuples that have all elements in the appropriate sort, $K$ and $J$ agree. Further, since $I$ and $I^{ns}$ agree on these as well, it follows immediately since $K <^{\bf c} I^{ns}$, that $J <^{\bf c} I$.

To apply Lemma~\ref{lem:unisort2}, we verify the conditions of the lemma. It is clear that the second condition is true. The first condition follows from the definition of $K <^{\bf c} I^{ns}$: since the sort predicates are not in ${\bf c}$, $K$ and $I^{ns}$ agree on these predicates. The third condition follows from the fact that since $K \models gr_{I^{ns}}[F^{ns}\land SF_\sigma]^{\mu I^{ns}}$ it follows that $K \models gr_{I^{ns}}[SF_\sigma]^{\mu I^{ns}}$;  the fourth and fifth rules in $SF_\sigma$ are choice formulas that force $K$ to agree with $I^{ns}$ for every tuple that has at least one element outside of the corresponding sort. Thus, by Lemma~\ref{lem:unisort2}, since $K \models gr_{I^{ns}}[F^{ns}\land SF_\sigma]^{\mu I^{ns}}$ and thus, $K \models gr_{I^{ns}}[F^{ns}]^{\mu I^{ns}}$, it follows that $J \models gr_I[F]^{\mu I}$.

(b) Given an interpretation $L$ that is a stable model of $F^{ns}\land SF_\sigma$ w.r.t. ${\bf c}$, we first obtain the interpretation $L_1$ of $\sigma^{ns}$ as follows.

\begin{itemize}
\item $|L_1| = |L|$; 
\item ${\sort s}^{L_1}$ = ${\sort s}^{L}$ for every ${\sort s}$ corresponding to a sort $s$ from $\sigma$; 
\item $c(\xi_1,\dots,\xi_k)^{L_1} = c(\xi_1,\dots,\xi_k)^{L}$ for every tuple $\xi_1,\dots,\xi_k$ such that $\xi_i \in s_i$ where $s_i$ is the $i$-th argument sort of $c$; 
\item $c(\xi_1,\dots,\xi_k)^{L_1} = |L_1|_0$ for every tuple $\xi_1,\dots,\xi_k$ such that $\xi_i \notin s_i$ for some $i$ where $s_i$ is the $i$-th argument sort of $c$.
\end{itemize}

It is easy to see that $R(L,L_1)$. By Lemma~\ref{lem:mssm-relation}, $L_1$ is a stable model of $F^{ns}\land SF_\sigma$ w.r.t. ${\bf c}$. We then obtain the interpretation $I$ of signature $\sigma$ as follows.

For every sort $s$ in $\sigma$, $|I|^s = s^{L_1}$.
For every predicate and function constant $c$ in $\sigma$ and every tuple $\bfxi$ such that $\xi_i \in |L|^{s_i}$ where $s_i$ is the sort of the $i$-th argument of $c$, we have $c(\bfxi)^I = c(\bfxi)^{L_1}$. For predicate constants, it is not hard to see that this is 
a valid assignment as atoms are either true or false regardless of considering many-sorted or unsorted logic.

We argue that this assignment is also valid for function constants. That is, $I$ does not map a function $f$ to a value outside of $|I|^s$ where $s$ is the value sort of $f$. This follows from the fact that $L_1 \models SF_\sigma$ (by Lemma~\ref{lem:mssat-relation}) and in particular, the third item of $SF_\sigma$. Thus, it follows that $I$ too maps functions to elements of the appropriate sort.

Now it is clear that $L_1 = I^{ns}$ and so we have $R(L,I^{ns})$. We now show that $I$ is a stable model of $F$.

We have an interpretation $I$ (of many-sorted signature $\sigma$) such that $I^{ns}$ is a stable model of $F^{ns} \land SF_\sigma$ w.r.t. ${\bf c}$. This means that $I^{ns} \models F^{ns} \land SF_\sigma$ and there is no interpretation $K$ such that $K <^{\bf c} I^{ns}$ and $K \models gr_{I^{ns}}[F^{ns}\land SF_\sigma]^{\mu I^{ns}}$. We wish to show that $I \models F$ and there is no interpretation $J$ such that 
$J <^{\bf c} I$ and $J \models gr_{I}[F]^{\mu I}$. From Lemma~\ref{lem:unisort1}, $I \models F$ iff $I^{ns} \models F^{ns}$ so we conclude that $I \models F$. For the second item, we will prove the contrapositive; if there is a (many-sorted) interpretation 
$J$ such that 
$J <^{\bf c} I$ and $J \models gr_{I}[F]^{\mu I}$, then there is an (unsorted) interpretation $K$ such that $K <^{\bf c} I^{ns}$ and $K \models gr_{I^{ns}}[F^{ns} \land SF_\sigma]^{\mu I^{ns}}$.

Assume there is an interpretation $J$ such that $J <^{\bf c} I$ and $J \models gr_{I}[F]^{\mu I}$. We obtain the interpretation $K$ be $J^{ns}$.

We now show that $K <^{\bf c} I^{ns}$. For every predicate and function constant $c$ for every tuple that has at least one element outside of the corresponding sort, by definition of $K = J^{ns}$, $c^K = c^{I^{ns}} = |I^{ns}|_0$ if $c$ is a function constant and $c^K = c^{I^{ns}} = \false$ if $c$ is a predicate constant. That is, for every predicate and function constant $c$ for every tuple that has at least one element outside of the corresponding sort, $K$ and $I^{ns}$ agree. For every predicate and function constant $c$ and all tuples of elements in the appropriate sort, $K$ and $J$ agree. Further, since $I$ and $I^{ns}$ agree on these as well, $K <^{\bf c} I^{ns}$ follows immediately from $J <^{\bf c} I$.

To apply Lemma~\ref{lem:unisort2}, we must verify the conditions of the lemma. It is clear that the second condition is true. The first condition follows from the definition of $K = J^{ns}$. The third condition follows from the observation above: 
by definition of $K = J^{ns}$, $c^K = c^{I^{ns}} = |I^{ns}|_0$ if $c$ is a function constant and $c^K = c^{I^{ns}} = \false$ if $c$ is a predicate constant. Thus, by Lemma~\ref{lem:unisort2}, since $J \models gr_I[F]^{\mu I}$, it follows that $K \models gr_{I^{ns}}[F^{ns}]^{\mu I^{ns}}$. 

Then, it is easy to see that by definition of $I^{ns}$, $I^{ns} \models SF_\sigma$. Then, by definition of $K = J^{ns}$, it is clear that $K \models SF_\sigma$. We will show that $K \models (SF_\sigma)^{\mu I^{ns}}$. 

Since $K$ and $I^{ns}$ agree on all sort predicates, it is clear that $K$ satisfies the formulas in the first two items of $(SF_\sigma)^{\mu I^{ns}}$. 

Since $K$ and $I^{ns}$ agree on all function constants $f$ for tuples $\xi_i,\dots,\xi_k$ such that each $\xi_i$ is in $|I|^{s_i}$ where $s_i$ is the $i$-th argument sort of $f$, it is clear that $K$ satisfies the third item of $(SF_\sigma)^{\mu I^{ns}}$.

The last two items of $(SF_\sigma)^{\mu I^{ns}}$ are only satisfied if $K$ agrees with $I^{ns}$ on all predicate (function) constants $c$ and all tuples $\xi_1,\dots,\xi_k$ such that some $\xi_i$ is not in $|I|^{s_i}$ where $s_i$ is the $i$-th argument sort of $c$. However, by definition of $K = J^{ns}$ and $I^{ns}$, both $K$ and $I^{ns}$ map this to $|I^ns|_0$ if $c$ is a function constant or $\false$ if $c$ is a predicate constant so $K$ satisfies these items. So we conclude that $K \models gr_{I^{ns}}[F^{ns}\land SF_\sigma]^{\mu I^{ns}}$.\qed


\subsection{Proof of Theorem~\ref{thm:clingcon}}

\begin{lemma}\label{lem:clingcon}
Let $\Pi$ be a clingcon program with CSP $(V,D,C)$, let ${\cal T}$ be the background theory conforming to $(V,D,C)$, let ${\bf p}$ be the set of all propositional constants occurring in~$\Pi$, let $I$ be a ${\cal T}$-interpretation $\langle I^f,X\rangle$ and let $J$ be an interpretation $\langle I^f, Y\rangle$ such that $Y\subset X$. If $I\models \Pi$, then $Y\models \Pi^X_{I^f}$ iff $J\models \Pi^\mu{I}$.
\end{lemma}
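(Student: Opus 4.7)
My plan is to prove the lemma by a rule-by-rule analysis. Since the rules of a clingcon program are ground, $gr_I[\Pi]$ can be identified with the conjunction of the implications $B\land N\land\i{Cn}\rar a$ for each rule~\eqref{clingcon-rule} in $\Pi$, and the satisfaction of $\Pi^X_{I^f}$ by $Y$ and of $\Pi^\mu{I}$ by $J$ both decompose rule by rule. For each rule $r:\,a\ar B, N, \i{Cn}$ it therefore suffices to show that $Y$ satisfies the corresponding rule in $\Pi^X_{I^f}$ (or the trivial absence thereof) iff $J\models r^\mu{I}$. Because $I\models\Pi$, the outer implication is not collapsed to $\bot$ by the reduct, so $r^\mu{I}=B^\mu{I}\land N^\mu{I}\land\i{Cn}^\mu{I}\rar a^\mu{I}$.

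I will then analyze the three body components and the head separately. Every propositional atom $b$ appearing in $B$ or in the head reduces to itself when $b\in X$ and to $\bot$ otherwise, so combined with $Y\subset X$ I obtain $J\models B^\mu{I}$ iff $Y\models B$, and $J\models a^\mu{I}$ iff $a\in Y$. Each negative literal $\no c_j$ (i.e., $c_j\rar\bot$) in $N$ reduces to $\top$ when $c_j\notin X$ and to $\bot$ when $c_j\in X$, so $N^\mu{I}$ is equivalent to $\top$ exactly when $X\models N$ and contains a $\bot$ conjunct otherwise. For $\i{Cn}$, which lies in signature $V\cup\sigma^{\cal T}$ and thus contains no constant from ${\bf p}$, I will prove by induction on formula structure the auxiliary claim that $J\models F^\mu{I}$ iff $I\models F$ for any such $F$; specializing this yields $J\models\i{Cn}^\mu{I}$ iff $I^f\models\i{Cn}$.

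Two cases then complete the argument. When $X\models N$ and $I^f\models\i{Cn}$, the rule $a\ar B$ belongs to $\Pi^X_{I^f}$, and over $J$ the reduct $r^\mu{I}$ collapses to $B^\mu{I}\rar a^\mu{I}$; the equivalences from the preceding paragraph then give $J\models r^\mu{I}$ iff $Y\models a\ar B$. In the complementary case, $r$ is absent from $\Pi^X_{I^f}$, so $Y$ vacuously fulfills its obligation; simultaneously either $N^\mu{I}$ or $\i{Cn}^\mu{I}$ contributes a $\bot$ conjunct to the antecedent, so $J\models r^\mu{I}$ holds trivially.

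The main technical obstacle will be the auxiliary induction showing that, for any formula $F$ of signature $V\cup\sigma^{\cal T}$, $J\models F^\mu{I}$ iff $I\models F$. The cases for atoms and the binary propositional connectives are routine thanks to $J$ and $I$ agreeing on all non-${\bf p}$ symbols, but the implication case is delicate because the reduct collapses $(G\rar H)^\mu{I}$ to $\bot$ whenever $I\not\models G\rar H$, forcing a case split that must still be threaded through the inductive hypothesis on $G$ and $H$.
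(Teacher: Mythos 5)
Your proposal is correct and follows essentially the same route as the paper's proof: a rule-by-rule case split on whether the rule survives the constraint reduct, using $I\models\Pi$ to keep the outer implication from collapsing to $\bot$ and the agreement of $J$ and $I$ on $V\cup\sigma^{\cal T}$ to handle the constraint part. Your auxiliary induction ($J\models F^\mu{I}$ iff $I\models F$ for formulas with no constants from ${\bf p}$) is just a cleaner, explicitly stated version of what the paper handles inline (it is the same idea as Lemma~\ref{lem:negative}), so the two arguments differ only in presentation.
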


\proof Assume $I\models \Pi$.

($\Rightarrow$) Assume $Y\models \Pi^X_{I^f}$. This means that $Y$ satisfies every rule in the reduct $\Pi^X_{I_f}$. For each rule $r$ of the form~\eqref{clingcon-rule} in $\Pi$, there are two cases:
\begin{itemize} 
\item Case 1: $X\models B$ and $I^f\models \i{Cn}$. 
In this case, $r^X_{I_f}$ is 
\beq
  a \leftarrow B,
\eeq{lem:clingcon-1}
and $r^\mu{I}$ is equivalent to 
\beq  
   a^\mu{I} \ar B^\mu{I}
\eeq{lem:clingcon-2}
under the assumption $I\models\Pi$.
\bi
\item Subcase 1: $I\models B$. Since $I\models\Pi$, it must be that $I\models a$.
  Consequently, \eqref{lem:clingcon-2} is the same as \eqref{lem:clingcon-1}, so it follows that $J\models r^\mu{I}$.

\item Subcase 2: $I\not\models B$. Since $B^\mu{I}=\bot$, clearly, $J\models r^\mu{I}$. 
\ei

\item Case 2: $X\not\models B$ or $I^f\not\models \i{Cn}$. Clearly, $r^\mu{I}$ is equivalent to $\top$, so $J\models r^\mu{I}$.
\ei

\medskip
($\Leftarrow$) Assume $J\models \Pi^\mu{I}$. 
For each rule $r$ of the form~\eqref{clingcon-rule} in $\Pi$, there are two cases:
\bi
\item Case 1: $I\not\models N\land \i{Cn}$. 
In this case, the reduct $r^X_{I_f}$ is empty. Clearly, $Y\models r^X_{I_f}$.

\item Case 2: $I\models N\land \i{Cn}$. The reduct $r^X_{I_f}$ is $a\ar B$.
\bi
\item Subcase 1: $I\models B$. $r^\mu{I}$ is equivalent to $a^\mu{I}\ar (B\land N\land \i{Cn})^\mu{I}$. Since $J\models r^\mu{I}$, it must be that $a^\mu{I}=a$ and $J\models a$.
  Consequently, $Y\models a$, so $Y\models r^X_{I_f}$.

\item Subcase 2: $I\not\models B$ (i.e., $X\not\models B$). Since $Y\subset X$, we have $Y\not\models B$ so $Y\models r^X_{I_f}$. 
\ei
\ei

\qed

\noindent{\textbf {Theorem~\ref{thm:clingcon} \optional{thm:clingcon}}}\ \ 
{\sl
Let $\Pi$ be a clingcon program with CSP $(V,D,C)$,
let ${\bf p}$ be the set of all propositional constants occurring in~$\Pi$,
let ${\cal T}$ be the background theory conforming to $(V,D,C)$,  and let $\langle I^f,X\rangle$ be a ${\cal T}$-interpretation. Set $X$ is a constraint answer set of~$\Pi$ relative
to~$I^f$ iff $\langle I^f, X\rangle$ is a ${\cal T}$-stable model of $\Pi$ relative to~${\bf p}$.
}

\proof
\[ 
\text{$X$ is a constraint answer set of $\Pi$ relative to $I^f$}
\] 
iff
\[
\text{$X$ satisfies $\Pi^X_{I_f}$, and no proper subset $Y$ of $X$ satisfies $\Pi^X_{I_f}$}
\]
iff (by Lemma~\ref{lem:clingcon})
\[
\text{$\langle I^f, X\rangle$ is a ${\cal T}$-model of $\Pi$, and no interpretation $J$ such that $J<^{\bf p} \langle I^f, X\rangle$ satisfies $\Pi^\mu{I}$}
\]
iff 
\[
\text{$\langle I^f, X\rangle$ is a ${\cal T}$-stable model of $\Pi$ relative to~${\bf p}$}.
\]
\qed


\subsection{Proof of Theorem~\ref{thm:niemelafsm}}

\begin{lemma}\label{lem:ljninterp}\optional{lem:ljninterp}
For any ASP(LC) program~$\Pi$, any LJN interpretation $(X,T)$, and any ${\cal T}$-interpretation $I = \langle I^f,Y \rangle$, the following conditions are equivalent:  
\begin{itemize}
\item $I \models T \cup \overline{T}$;
\item For every theory atom $t$ occurring in $\Pi$, it holds that $(X,T) \models t$ iff $I \models t$.
\end{itemize}
\end{lemma}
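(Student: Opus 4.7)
The plan is to prove this lemma by a direct case analysis on whether each theory atom belongs to $T$ or not, unfolding the definitions of how an LJN-interpretation and a ${\cal T}$-interpretation evaluate theory atoms. This is essentially a bookkeeping lemma that connects the two notions of satisfaction, and I do not anticipate any significant obstacle; the hard part, if any, will be keeping straight what $\overline{T}$ means (the set of negations of theory atoms occurring in $\Pi$ but \emph{not} in $T$).

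For the direction $(1) \Rightarrow (2)$, I would assume $I \models T \cup \overline{T}$ and fix an arbitrary theory atom $t$ occurring in $\Pi$. Split on cases. If $t \in T$, then by definition $(X,T) \models t$, and since $I \models T$ we also get $I \models t$, so the biconditional holds. If $t \notin T$, then by definition $(X,T) \not\models t$, while $\neg t \in \overline{T}$ forces $I \not\models t$, so again the biconditional holds.

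For the converse direction $(2) \Rightarrow (1)$, I would assume the biconditional for every theory atom in $\Pi$. To show $I \models T$, take any $t \in T$: since theory atoms in $T$ necessarily occur in $\Pi$, we have $(X,T) \models t$ and hence $I \models t$. To show $I \models \overline{T}$, take any $\neg t \in \overline{T}$: then $t$ occurs in $\Pi$ but $t \notin T$, so $(X,T) \not\models t$ and by the biconditional $I \not\models t$, i.e., $I \models \neg t$. Combining, $I \models T \cup \overline{T}$.

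This proof is purely definitional and requires no induction or use of the structure of $\Pi$ beyond identifying which theory atoms appear in it. The only thing to be careful about is that $\overline{T}$ is defined only with respect to theory atoms actually appearing in $\Pi$, which is exactly why the quantification in condition (2) is also restricted to theory atoms occurring in $\Pi$, ensuring the two conditions match up cleanly.
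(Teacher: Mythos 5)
Your proof is correct and follows essentially the same purely definitional case analysis as the paper's own proof (splitting on whether $t\in T$ for one direction, and verifying $I\models T$ and $I\models\overline{T}$ separately for the other). No further comment is needed.
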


\proof
\bi
\item[(i)] Assume $I\models T\cup\overline{T}$. Take any theory atom $t$ occurring in $\Pi$.
\\($\Rightarrow$) Assume $(X,T) \models t$. It is immediate that $t \in T$ and so by the assumption on~$I$, we have $I\models t$. 
\\($\Leftarrow$) Assume $I \models t$. Since $I\models T$, it follows that $t \in T$ and so $(X,T) \models t$. 

\item[(ii)]  Assume that, for every theory atom $t$ occurring in $\Pi$, it holds that $(X,T) \models t$ iff $I\models t$.
By definition of $(X,T) \models t$, for every $t$ occurring in $\Pi$, it follows that $t \in T$ iff $I \models t$. Thus $I \models T$ and $I \models \overline{T}$ so $I \models T \cup \overline{T}$. 
\ei
\qed

\begin{lemma}\label{lem:niemelafsmreduct}\optional{lem:niemelafsmreduct}
Given an ASP(LC) program $\Pi$, two LJN-interpretations $(X,T)$ and $(Y,T)$ such that $(X,T) \models \Pi$ and $Y \subseteq X$, and two ${\cal T}$-interpretations $I = \langle I^f,X \rangle $ and $J = \langle I^f,Y\rangle$ such that $I \models \Pi$, and $I^f \models T \cup \overline{T}$,
It holds that $Y \models \Pi^{(X,T)}$ iff $J \models \Pi^\mu{I}$.
\end{lemma}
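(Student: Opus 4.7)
The plan is to mimic the structure of the proof of Lemma~\ref{lem:clingcon}, proceeding rule-by-rule through $\Pi$. The key bridge connecting the two formalisms will be Lemma~\ref{lem:ljninterp}: since we assume $I^f \models T \cup \overline{T}$, that lemma tells us that, for every theory atom $t$ occurring in $\Pi$, we have $(X,T)\models t$ iff $I\models t$ iff $J\models t$ (the last equivalence because $I$ and $J$ share $I^f$ and theory atoms depend only on $I^f$). This identifies the LJN-reduct's deletion condition ``$(X,T)\models N\land LC$'' with the statement ``$I\models N\land LC$'' in the functional reduct setting.

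For each rule $r$ of the form $a\ar B,N,LC$ in $\Pi$, I would split on whether $(X,T)\models N\land LC$. If it fails, then $r^{(X,T)}$ is absent, so $Y$ trivially satisfies it; on the other side, at least one conjunct of $N\land LC$ has reduct $\bot$ relative to $I$ (using Lemma~\ref{lem:ljninterp} for theory atoms and the fact that a negative literal $\neg c$ with $c\in X$ reduces to $\bot$), so $r^\mu{I}$ is trivially satisfied by $J$. If instead $(X,T)\models N\land LC$, then $r^{(X,T)}$ is $a\ar B$, and by Lemma~\ref{lem:ljninterp} plus the choice of $I$, one has $I\models N\land LC$, so $N^\mu{I}\equiv\top$ and $LC^\mu{I}\equiv LC$. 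Using the assumption $I\models\Pi$ to rewrite $r^\mu{I}$ as $B^\mu{I}\land LC\rar a^\mu{I}$, the two directions go through by a further subcase split on whether $I\models B$: if $I\not\models B$ then $B^\mu{I}\Leftrightarrow\bot$ so $J$ trivially satisfies $r^\mu{I}$, and since $Y\subseteq X$ the same $Y\not\models B$ and so $Y$ trivially satisfies $a\ar B$; if $I\models B$ then the satisfaction of $r^\mu{I}$ by $J$ becomes ``$J\models B\land LC\Rightarrow J\models a$'', which matches ``$Y\models B\Rightarrow Y\models a$'' because the relevant atoms are propositional and $J=\langle I^f,Y\rangle$ while $J\models LC$ holds unconditionally.

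The $(\Leftarrow)$ direction proceeds by the symmetric argument, this time needing $Y\subseteq X$ in the crucial step that lifts $Y\models B$ to $I\models B$, which is what lets us conclude $B^\mu{I}=B$ and then invoke $I\models\Pi$ to get $I\models a$ and hence $a^\mu{I}=a$.

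The only subtlety I anticipate — and the one worth double-checking — is bookkeeping in the step that simplifies $r^\mu{I}$: verifying that $(B\land N\land LC\rar a)^\mu{I}$ equals $B^\mu{I}\land N^\mu{I}\land LC^\mu{I}\rar a^\mu{I}$ rather than $\bot$ requires using the hypothesis $I\models\Pi$ (so that $I\models B\land N\land LC\rar a$), and then separately showing $N^\mu{I}$ collapses to $\top$ by unfolding each $\neg c=(c\rar\bot)$ through the reduct definition. Once these bookkeeping simplifications are in place, the rule-by-rule case analysis is routine and parallels Lemma~\ref{lem:clingcon} almost verbatim.
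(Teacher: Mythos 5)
Your proposal is correct and follows essentially the same route as the paper's proof: a rule-by-rule case split on whether $(X,T)\models N\land LC$, with Lemma~\ref{lem:ljninterp} bridging satisfaction of theory atoms between $(X,T)$ and $I$ (and hence $J$, which shares $I^f$), an inner subcase split on $I\models B$, and the hypotheses $I\models\Pi$ and $Y\subseteq X$ used exactly where you place them. The bookkeeping point you flag about the reduct of the implication not collapsing to $\bot$ is indeed handled in the paper by the same appeal to $I\models\Pi$.
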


\proof
($\Rightarrow$) Assume $Y\models \Pi^{(X,T)}$. This means that $Y$ satisfies every rule in the reduct $\Pi^{(X,T)}$. For each rule $r$ of the form (\ref{nlplc}) in $\Pi$, there are two cases:
\begin{itemize} 
\item Case 1: $(X,T) \models N \land LC$.
\\In this case, the corresponding rule in the reduct $\Pi^{(X,T)}$ is 
$$a \leftarrow B.$$
On the other hand, $r^\mu{I}$ has two cases:
\begin{itemize}
\item Subcase 1: $I \models B$. 
\\Since we assume $I \models \Pi$, it must be that $I \models a$. By Lemma~\ref{lem:ljninterp}, since $(X,T) \models t$ for all $t$ in $LC$, so too does $I$ and so $I \models LC$. In this case, 
$r^\mu{I}$ is 
$$a \leftarrow B,\top,\dots,\top,LC^\mu{I}.$$
Since $I$ and $J$ interpret object constants in the same way and $I \models LC^\mu{I}$, we have $J \models LC^\mu{I}$. Thus by definition of $J$, it follows that $J \models B$ iff $Y \models B$ and $J \models a$ iff $Y \models a$, so the claim holds.


\item Subcase 2: $I \not \models B$. The reduct $r^\mu{I}$ is either $a \leftarrow \bot$ or $\bot \leftarrow \bot$ and in either case, $J \models r^\mu{I}$. 

\end{itemize}

\item Case 2: $(X,T) \not \models N \land LC$.
\\By the condition of $I$ and by Lemma~\ref{lem:ljninterp}, $I \not \models N \land LC$ so $r^\mu{I}$ is $a \leftarrow \bot$ or $\bot \leftarrow \bot$ depending on whether $I \models a$. Thus, $J$ trivially satisfies $r^\mu{I}$.

\end{itemize}

($\Leftarrow$) Assume $J \models \Pi^\mu{I}$. This means that $J$ satisfies every rule in $\Pi^\mu{I}$. For any rule $r$ of the form (\ref{nlplc}) in $\Pi$, there are two cases.
\begin{itemize}

\item Case 1: $I \not \models N \land LC$.
\\By the condition of $I$ and by Lemma~\ref{lem:ljninterp}, $(X,T) \not \models N \land LC$. Thus the reduct $\Pi^{(X,T)}$ does not contain a corresponding rule so there is nothing for $Y$ to satisfy.

\item Case 2: $I \models N \land LC$. \\By the condition of $I$ and by Lemma~\ref{lem:ljninterp}, $(X,T) \models N \land LC$ so the reduct $r^{(X,T)}$ is $a \leftarrow B$.
\begin{itemize}
\item Subcase 1: $I\not\models B$.
\\ By the condition of $I$, $X \not \models B$ and since $Y \subseteq X$, $Y \not \models B$. Thus, $Y \models r^{(X,T)}$.

\item Subcase 2: $I \models B$. 
\\ Since $I \models \Pi$, it must be that $I \models a$ so the reduct $r^\mu{I}$ is $a \leftarrow B \land LC^\mu{I}$. Now since $J$ and $I$ agree on every object constant and since $I \models LC^\mu{I}$, we have $J \models LC^\mu{I}$. Thus, $J \models r^\mu{I}$ iff $J \models a \leftarrow B$. Since we assume $J\models \Pi^I$, we conclude $J \models a \leftarrow B$. Now by definition of $J$, it follows that $Y \models r^{(X,T)}$.
\end{itemize}
\end{itemize}
\qed

\bigskip
\noindent{\textbf {Theorem~\ref{thm:niemelafsm} \optional{thm:niemelafsm}}}\
\ 
{\sl 
Let $\Pi$ be an ASP(LC) program of signature $\langle \sigma^p, \sigma^f\rangle$ where $\sigma^p$ is a set of propositional constants, and let $\sigma^f$ be a set of object constants, and let $I^f$ be an interpretation of $\sigma^f$.
\begin{itemize}
\item[(a)] If $(X,T)$ is an LJN-answer set of $\Pi$, then for any ${\cal T}$-interpretation $I$ such that $I^f\models T\cup\overline{T}$, we have $\langle I^f, X\rangle\models\fsm[\Pi;\sigma^p]$.

\item[(b)] For any ${\cal T}$-interpretation $I=\langle I^f, X\rangle$, 
  if  $\langle I^f, X\rangle\models\fsm[\Pi;\sigma^p]$, then an LJN-interpretation 
   $(X, T)$ where $$T=\{t\mid \text{$t$ is a theory atom in $\Pi$ such that $I^f\models t$}\}$$ is an LJN-answer set of $\Pi$.
\end{itemize}
}


\proof
In this proof, we refer to the reduct-based  characterization of a  stable model from \cite{bartholomew13onthestable}.

$(a)$ Assume $(X,T)$ is an LJN-answer set of $\Pi$. Take any ${\cal T}$-interpretation $I = \langle I^f, X \rangle$ such that $I^f \models_{bg} T \cup \overline{T}$.

Now for any atom $p$, by the condition of $I$, we have $I \models p$ iff $(X,T) \models p$. Similarly, for any theory atom $t$ occurring in $\Pi$, by the condition of $I$ and by Lemma~\ref{lem:ljninterp}, $I \models t$ iff $(X,T) \models t$. Thus, since $(X,T) \models \Pi$, $I \models \Pi$. 

We must now show that there is no interpretation $J$ such that $J <^{\sigma_p} I$ and $J \models \Pi^\mu{I}$. Take any $J <^{\sigma_p} I$. That is, $J = \langle I^f, Y \rangle$ such that $Y \subset X$. By Lemma~\ref{lem:niemelafsmreduct}, $J \models \Pi^\mu{I}$ iff $Y \models \Pi^{(X,T)}$ but since $(X,T)$ is an LJN-answer set of $\Pi$, $Y \not \models  \Pi^{(X,T)}$ and thus $J \not \models \Pi^\mu{I}$ so $I$ is a stable model of $\Pi$. 

($b$) Assume $I = \langle I^f, X \rangle$ is a stable model of $\Pi$. 

Now for any atom $p$, by definition of $(X,T)$, $(X,T) \models p$ iff $I \models p$. Similarly, for any theory atom $t$ occurring in $\Pi$, by the condition of $I$ and Lemma~\ref{lem:ljninterp}, $(X,T) \models t$ iff $I \models t$. Thus, since $I \models \Pi$, $(X,T) \models \Pi$. 

We must now show that there is no set of atoms $Y$ such that $Y \subset X$ and $Y \models \Pi^{(X,T)}$. Take any $Y \subset X$. By Lemma~\ref{lem:niemelafsmreduct}, $Y \models \Pi^{(X,T)}$ iff $J \models \Pi^\mu{I}$ where $J = \langle I^f,Y \rangle$. Since $J <^{\sigma^p} I$ and $I$ is a stable model of $\Pi$, 
$J \not \models \Pi^\mu{I}$. Thus $Y \not \models \Pi^{(X,T)}$ and so $(X,T)$ is an LJN-answer set of $\Pi$. \qed

\BOCCC
\subsection{Proof of Theorem~\ref{thm:lw-fsm}}
\cblu

\begin{lemma}\label{lem:lw-fsm-reduct}
Let 
$P$ be a LW-program, 
$F$ be the first-order representation of $P$, 
$I$ be an interpretation of the signature $\sigma$ of $P$ such that $I$ is a model of $P$.
and ${\bf p}$ be the list of all predicates in $\sigma$.
For any interpretation $J$ such that $J <^{\bf p} I$ and any set of atoms $K$ such that for any 
atom $A$, we have $A \in K$ iff $A^J = \true$, then $J \models F^\mu{I}$ iff $K$ satisfies $P^I$.
\end{lemma}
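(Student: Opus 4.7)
My plan is to argue rule-by-rule, or more precisely, ground-instance-by-ground-instance. Since $F$ is the conjunction of universal closures of the rules of $P$, and the reduct operation on ground formulas commutes with the conjunction and universal quantification coming from grounding, it suffices to show that for each ground instance $r$ of a rule in $P$ (where variables are replaced by object names, which under the $P$-interpretation $I$ are object constants), $J$ satisfies $r^\mu{I}$ if and only if $K$ satisfies the corresponding rule (if any) in $P^I$; if the LW reduction removes the rule, $K$ has no obligation from it.

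I would first record three facts that bridge the two sides. Since $J <^{\bf p} I$, the interpretations $J$ and $I$ agree on all object and function constants, so every ground equality $t_1\mvis t_2$ (which after function evaluation is $c_1\mvis c_2$) is satisfied identically by $J$ and $I$. By the definition of $K$, a ground atom $A$ belongs to $K$ iff $A^J = \true$. Finally, since $p^J \subseteq p^I$ for every predicate $p$, if $I\not\models A$ for a ground atom $A$ then also $J\not\models A$.

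The main argument is a case split on whether $I$ satisfies the body of $r = A\ar B_1,\dots,B_m, \no C_1,\dots,\no C_n$. If the body holds under $I$, then $I\models A$ (because $I\models P$), so $A^\mu{I}=A$, each $B_i^\mu{I}=B_i$, and each $(\neg C_j)^\mu{I}=\top$ (since $C_j^\mu{I}=\bot$); thus $r^\mu{I}$ is equivalent to $B_1\land\dots\land B_m\rar A$. Simultaneously, the LW reduct keeps the rule and deletes all equalities (steps~2--3) and all $\no$-literals (steps~4--5), leaving $A'\ar B'_{i_1},\dots,B'_{i_k}$, where the $B'_{i_j}$ are the non-equality positive atoms with functional terms evaluated. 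Using the three facts above, $J$ satisfies the FSM reduct iff $K$ satisfies this LW reduct rule. If the body fails under $I$, one of its literals is falsified, and in each subcase the FSM side becomes vacuously satisfied while the LW side either removes the rule or is itself vacuously satisfied: a false positive atom $B_i$ gives $B_i^\mu{I}=\bot$, and on the LW side the rule is kept but $K\not\models B_i$ by the third fact; a false positive equality gives $\bot$ in the FSM body and triggers step~2 of the LW reduction (removing the rule); a failing $\neg C_j$ (i.e., $I\models C_j$) gives $(\neg C_j)^\mu{I}=\bot$ and triggers step~4 of the LW reduction.

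The main obstacle I anticipate is careful bookkeeping: the LW reduct proceeds through four sequential syntactic transformations that treat equalities and $\no$-literals separately, whereas the FSM reduct is a single uniform operation on the ground formula. Aligning the two constructions across all subcases, especially handling equalities (which disappear on both sides but for quite different reasons) and the asymmetry between ``LW-rule removed'' vs.\ ``FSM-reduct vacuously satisfied'', is where care is needed; the core observation, however, is simply that in every subcase both sides collapse to the same content.
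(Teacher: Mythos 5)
Your proof is correct and takes essentially the same route as the paper's: a rule-by-rule (ground-instance-by-ground-instance) comparison of the two reducts, resting on the same three bridging facts --- that $J$ agrees with $I$ on all function constants (so equalities evaluate identically), that $p^J\subseteq p^I$ for intensional predicates, and that $K$ mirrors $J$ on atoms. The only difference is organizational: you case-split on whether $I$ satisfies the whole body and then subdivide by which literal fails, whereas the paper cases on falsified equalities, then falsified $\neg C_k$, then the remainder; both cover the same subcases and collapse both sides to the same content.
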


\proof

Since $F$ is comprised of a conjunctions of the form 
$$
B_1 \land \dots \land B_m \land (\neg \ C_1) \land \dots \land (\neg\ C_n) \rightarrow A
$$
$F^\mu{I}$ is by definition a conjunction of 
$$
(B_1 \land \dots \land B_m \land (\neg \ C_1) \land \dots \land (\neg\ C_n) \rightarrow A)^\mu{I}
$$
and so we will consider each conjunctive subformula $G_i$ of $F$ separately. Each conjunctive term $G_i$ corresponds to a rule $r_i$ in $P$ of the form
$$
A \leftarrow B_1,\dots,B_m, not\ C_1,\dots, not\ C_n.
$$
and so we simply need to show that for any conjunctive subformula $G_i$ of $F$ and the corresponding rule $r_i$, we have $J \models G_i^\mu{I}$ iff $K \models r_i^I$. In the context of this comparison, we note that when a rule is removed in $P^I$, this is equivalent to replacing the rule with $\top$. We consider the following cases for a rule $r$ and the corresponding conjunctive subformula $G$ in $F$.

\bi
\ii Case 1: There is some atomic formula $B_i$ in $r$ such that $B_i$ contains an equality and is not satisfied by $I$.
\\ In this case $r^I$ is replaced with $\top$ and so $K \models r^I$. In $G^\mu{I}$, it may be the case that some $(\neg\ C_k)$ is replaced by $\bot$ but $B_i$ will certainly remain as a conjunctive term in the precedent of the implication $G$ and so since $J$ agrees with $I$ on all functions, we have $J \not \models B_i$ and so $J \models G^\mu{I}$.
\ii Case 2: There are no atomic formulas $B_i$ in $r$ such that $B_i$ contains an equality and is not satisfied by $I$, but there is a conjunctive term $not\ C_k$ in $r$ such that $C_k^I = \true$.
\\ In this case $r^I$ is replaced with $\top$ and so $K \models r^I$. In $G^\mu{I}$, $(\neg\ C_k)$ will be replaced with $\bot$ and so $J \models G^\mu{I}$.
\ii Case 3: There are no atomic formulas $B_i$ in $r$ such that $B_i$ contains an equality and is not satisfied by $I$, and there is no conjunctive term $not\ C_k$ in $r$ such that $C_k^I = \true$.
\\ In this case, $r^I$ is obtained from $r$ by replacing each $f(t_1,\dots,t_m)$ with $c$ where $f^I(t_1,\dots,t_m) = c$ and from removing all conjunctive terms containing equality. On the other hand, there are two cases for $G^\mu{I}$: is either $\bot$ if $I \not \models G$ or $G^\mu{I}$ is precisely $G$ otherwise. However, the assumption that $I$ is a model of $P$ means that the former case cannot arise and so $G^\mu{I}$ is precisely $G$. 

Now, since $J$ and $I$ agree on all functions, we have that $J \models G^\mu{I}$ iff $J \models H$ where $H$ is obtained from $G^\mu{I}$ by replacing $f(t_1,\dots,t_m)$ with $c$ where $f^I(t_1,\dots,t_m) = c$. Also since $J$ and $I$ agree on all functions, and since we assumed in this case that $I$ satisfies all conjunctive terms containing equality, we have $J \models H$ iff $J \models H'$ where $H'$ is obtained from $H$ by removing all conjunctive terms containing equality. Now the only remaining difference between $H'$ and $r^I$ is that every remaining conjunctive term $not\ A$ in $H'$ is absent in $r^I$. Note that since we assumed for this case that there is no conjunctive term $not\ C_k$ in $r$ such that $C_k^I = \true$, it must be that every such conjunctive term is such that $A^I = \false$. However, since we have that $J <^{\bf p} I$, it must be that $A^J = \false$ and so $J \models \neg A$. Thus, $J \models H'$ iff $J \models H''$ where $H''$ is obtained from $H'$ by removing all conjunctive terms of the form $not\ A$. Now $H''$ is exactly the first-order representation of $r^I$ and since $J$ and $K$ agree on all predicates, it is clear that $J \models G^\mu{I}$ iff $K \models r^I$. \qed
\ei

\noindent{\textbf {Theorem~\ref{thm:lw-fsm} \optional{thm:lw-fsm}}}\
\ 
{\sl 
Let $P$ be an LW-program and let $F$ be the FOL-representation of the
set of rules in $P$. The following conditions are equivalent to each
other:
\begin{itemize}
\item[(a)]  $I$ is an answer set of $P$ in the sense of~\cite{lin08answer};
\item[(b)]  $I$ is a $P$-interpretation that satisfies $\fsm[F; {\bf
    p}]$ where ${\bf p}$ is the list of all predicate constants
  occurring in $F$. 
\end{itemize}
}\medskip

\proof
We will use the reduct-based characterization of the $\fsm$ semantics in this proof. When programs are restricted 

($\Rightarrow$) Let us assume $I$ is an answer set of $P$ in the sense of~\cite{lin08answer}.
We wish to show that $I$ satisfies $\fsm[F; {\bf
    p}]$ where ${\bf p}$ is the list of all predicate constants
  occurring in $F$. That is, we assume $I$ satisfies every rule in $P$ and there is no subset $K$ of atoms in $I$ such that $K$ is a model of $P^I$ and we wish to show that $I \models F$ and no interpretation $J$ such that $J <^{\bf p} I$ satisfies $F^\mu{I}$. Since $I$ is an answer set of $P$, $I$satisfies every rule of $P$ and so it immediately follows that $I \models F$. So it only remains to be shown that if there is no subset $K$ of atoms in $I$ such that $K$ is a model of $P^I$, then there is no interpretation $J$ such that $J <^{\bf p} I$ satisfies $F^\mu{I}$. To show this, we will consider the contrapositive; we assume there is some interpretation $J$ such that $J <^{\bf p} I$ satisfies $F^\mu{I}$ and will show that there is a subset $K$ of the atoms in $I$ such that $K$ is a model of $P^I$.

We first note that since $J <^{\bf p} I$, $J$ and $I$ differ only predicates so that $J^{pred}$ is a subset of $I^{pred}$. Thus, we will take $K = J^{pred}$ so that $K$ is a subset of the atoms in $I$ and show that $K$ is a model of $P^I$. Then the claim follows by Lemma~\ref{lem:lw-fsm-reduct}. 	

($\Leftarrow$) Let us assume $I$ satisfies $\fsm[F; {\bf p}]$ where ${\bf p}$ is the list of all predicate constants
occurring in $F$. We wish to show that $I$ is an answer set of $P$ in the sense of~\cite{lin08answer}. That is, we assume $I \models F$ and no interpretation $J$ such that $J <^{\bf p} I$ satisfies $F^\mu{I}$ and we wish to show that $I$ satisfies every rule in $P$ and there is no subset $K$ of atoms in $I$ such that $K$ is a model of $P^I$. Since we assume $I \models F$, then it follows that $I$satisfies every rule of $P$. So it only remains to be shown that if there is no interpretation $J$ such that $J <^{\bf p} I$ satisfies $F^\mu{I}$, then there is no subset $K$ of atoms in $I$ such that $K$ is a model of $P^I$. To show this, we will consider the contrapositive; we assume there a subset $K$ of the atoms in $I$ such that $K$ is a model of $P^I$ and we will show that there is some interpretation $J$ such that $J <^{\bf p} I$ satisfies $F^\mu{I}$.

We will take $J$ such that $J$ and $I$ agree on all functions and such that for any atomic formula $A$, we have $A^J = \true$ iff $A \in K$. It is clear that since $K$ is a subset of the atoms in $I$, that $J <^{\bf p} I$. The claim then follows by Lemma~\ref{lem:lw-fsm-reduct}. \qed

\cbla
\EOCCC


\subsection{Proof of Theorem~\ref{thm:lw-fsm}}

The proof of the theorem is rather obvious once we view the type declarations of LW-program as a special case of the many-sorted signature declarations. So we omit the proof here.

\end{document}